\newtheorem{theorem}{Theorem}[section]
\newtheorem{remark}[theorem]{Remark}
\newtheorem{corollary}[theorem]{Corollary}
\theoremstyle{definition}
\newtheorem{definition}[theorem]{Definition}
\newtheorem{assumption}{Assumption}
\title{Conformal confidence sets for biomedical image segmentation}
\author{Samuel Davenport}
\date{\today}
\begin{document}
	
\maketitle

\begin{abstract}
	We develop confidence sets which provide spatial uncertainty guarantees for the output of a black-box machine learning model designed for image segmentation. To do so we adapt conformal inference to the imaging setting, obtaining thresholds on a calibration dataset based on the distribution of the maximum of the transformed logit scores within and outside of the ground truth masks. We prove that these confidence sets, when applied to new predictions of the model, are guaranteed to contain the true unknown segmented mask with desired probability. We show that learning appropriate score transformations on a learning dataset before performing calibration is crucial for optimizing performance. We illustrate and validate our approach on a polpys tumor dataset. To do so we obtain the logit scores from a deep neural network trained for polpys segmentation and show that using distance transformed scores to obtain outer confidence sets and the original scores for inner confidence sets enables tight bounds on tumor location whilst controlling the false coverage rate. 
\end{abstract}

\section{Introduction}

Deep neural networks promise to significantly enhance a wide range of important tasks in biomedical imaging. However these models, as typically used, lack formal uncertainty guarantees on their output which can lead to overconfident predictions and critical errors \citep{Guo2017, Gupta2020}. Misclassifications or inaccurate segmentations can lead to serious consequences, including misdiagnosis, inappropriate treatment decisions, or missed opportunities for early intervention \citep{Topol2019}. Without uncertainty quantification, medical professionals cannot rely on deep learning models to provide accurate information and predictions which can limit their use in practical applications \citep{Jungo2020}. 

In order to address this problem, conformal inference, a robust framework for uncertainty quantification, has become increasingly used as a means of providing prediction guarantees, offering reliable, distribution-free confidence sets for the output of neural networks which have finite sample validity. This approach, originally introduced in \cite{Papadopoulos2002, Vovk2005}, has become increasingly popular due to its ability to provide rigorous statistical guarantees without making strong assumptions about the underlying data distribution or model architecture. Conformal prediction methods, in their most commonly used form - split conformal inference - work by calibrating the predictions of the model on a held-out dataset in order to provide sets which contain the output with a given probability, see \cite{Shafer2008} and \cite{Angelopoulos2021} for good introductions.

In the context of image segmentation, we have a decision to make at each pixel/voxel of an image which can lead to a large multiple testing problem. Traditional conformal methods, typically designed for scalar outputs, require adaptation to handle multiple tests and their inherent spatial dependencies. To do so \cite{Angelopoulos2021LTT} applied conformal inference pixelwise and performed multiple testing correction on the resulting $p$-values, however this approach does not account for the complex dependence structure inherent in the images. To take advantage of this structure, in an approach analogous to the FDR control of \citep{Benjamini1995}, \cite{Bates2021} and \cite{Angelopoulos2022} sought to control the expected risk of a given loss function over the image and used a conformal approach to produce outer confidence sets for segmented images which control the expected proportion of false negatives. Other work considering conformal inference in the context of multiple dependent hypotheses includes \cite{Marandon2024} and \cite{Blanchard2024} who established conformal FDR control when testing for the presence of missing links in graphs.

In this work we argue that bounding the segmented outcome with guarantees in probability rather than on the proportion of discoveries is more informative, avoiding errors at the borders of potential tumors. This is analogous to the tradeoff between FWER and FDR/FDP control in the multiple testing literature in which there is a balance between power and coverage rate, the distinction being that in medical image segmentation making mistakes can have potentially serious consequences. Under-segmentation might cause part of the tumor to be missed, potentially leading to inadequate treatment \citep{Jalalifar2022}. Over-segmentation, on the other hand, could result in unnecessary interventions, increasing patient risk and healthcare costs \citep{Gupta2020, Patz2014}. Confidence sets are instead guaranteed to contain the outcome with a given level of certainty. Since the guarantees are more meaningful the problem is more difficult and existing work on conformal uncertainty quantification for images has thus often focused on producing sets with guarantees on the proportions of discoveries or pixel level inference rather than coverage (\cite{Bates2021}, \cite{Wieslander2020}, \cite{Mossina2024}) which is a stricter error criterion. 

In order to obtain confidence sets we use a split-conformal inference approach in which we learn appropriate cutoffs, with which to threshold the output of an image segmenter, from a calibration dataset. These thresholds are obtained by considering the distribution of the maximum logit (transformed) scores provided by the model within and outside of the ground truth masks. This approach allows us to capture the spatial nature of the uncertainty in segmentation tasks, going beyond simple pixel-wise confidence measures. By applying these learned thresholds to new predictions, we can generate inner and outer confidence sets that are guaranteed to contain the true, unknown segmented mask with a desired probability. As we shall see, naively using the original model scores to do so can lead to rather large and uninformative outer confidence sets but these can be greatly improved using distance transformations. 

\section{Theory}
\subsection{Set up}
Let $\mathcal{V} \subset \mathbb{R}^m$, for some dimension $m \in \mathbb{N}$, be a finite set corresponding to the domain which represents the pixels/voxels/points at which we observe imaging data. Let $\mathcal{X} = \lbrace g: \mathcal{V} \rightarrow \mathbb{R}\rbrace$ be the set of real functions on $\mathcal{V}$ and let $\mathcal{Y} = \lbrace g: \mathcal{V} \rightarrow \lbrace 0,1 \rbrace \rbrace$ be the set of all functions on $\mathcal{V}$ taking the values 0 or 1. We shall refer to elements of $\mathcal{X}$ and $\mathcal{Y}$ as images. Suppose that we observe a calibration dataset $(X_i, Y_i)_{i = 1}^n$ of random images, where $X_i: \mathcal{V} \rightarrow \mathbb{R}$ represents the $i$th observed calibration image and $Y_i:\mathcal{V} \rightarrow \lbrace 0, 1\rbrace$ outputs labels at each $v \in \mathcal{V}$ giving 1s at the true location of the objects in the image $X_i$ that we wish to identify and 0s elsewhere. Let $\mathcal{P}(\mathcal{V})$ be the set of all subsets of $\mathcal{V}$. Given a function $f:\mathcal{X} \rightarrow \mathcal{X}$, we shall write $f(X,v)$ to denote $f(X)(v)$ for all $v \in \mathcal{V}$. 

Let $s:\mathcal{X}  \rightarrow \mathcal{X} $ be a score function - trained on an independent dataset - such that given an image pair $(X,Y) \in \mathcal{X}\times \mathcal{Y}$, $s(X)$ is a score image in which $s(X,v) $ is intended to be higher at the $v \in \mathcal{V}$ for which $Y(v) = 1$. The score function can for instance be the logit scores obtained from applying a deep neural network image segmentation method to the image $X$. Given $X \in \mathcal{X}$, let $\hat{M}(X) \in \mathcal{Y}$ be the predicted mask given by the model which is assumed to be obtained using the scores $s(X)$.

In what follows we will use the calibration dataset to construct confidence functions $I,O:  \mathcal{X}  \rightarrow \mathcal{P}(\mathcal{V})$ such that for a new image pair $(X,Y)$, given error rates $\alpha_1, \alpha_2 \in (0,1)$ we have
\begin{equation}\label{eq:probstat1}
	\mathbb{P}\left( I(X) \subseteq \lbrace v\in \mathcal{V}: Y(v) = 1 \rbrace  \right) \geq 1 - \alpha_1, 
\end{equation}
\begin{equation}\label{eq:probstat3}
	\text{ and } 	\mathbb{P}\left( \lbrace v\in \mathcal{V}: Y(v) = 1 \rbrace \subseteq O(X)  \right) \geq 1 - \alpha_2.
\end{equation}
Here $I(X)$ and $O(X)$ serve as inner and outer confidence sets for the location of the true segmented mask. Their interpretation is that, up to the guarantees provided by the probabilistic statements \eqref{eq:probstat1} and \eqref{eq:probstat3}, we can be sure that for each $v\in I(X)$, $Y(v) = 1$ or that for each $v \not\in O(X)$, $Y(v) = 0$. Joint control over the events can also be guaranteed, either via sensible choices of $\alpha_1$ and $\alpha_2$ or by using the joint distribution of the maxima of the logit scores - see Section \ref{SS:joint}. 

In order to establish conformal confidence results we shall require the following exchangeablity assumption. 
\begin{assumption}\label{ass:ex}
		Given a new random image pair, $(X_{n+1},Y_{n+1})$, suppose that $(X_i, Y_i)_{i = 1}^{n+1}$ is an exchangeable sequence of random image pairs in the sense that 
	\begin{equation*}
		\left\lbrace (X_1,Y_1), \dots, (X_{n+1}, Y_{n+1}) \right\rbrace =_d \left\lbrace (X_{\sigma(1)}, Y_{\sigma(1)}), \dots, (X_{\sigma(n+1)}, Y_{\sigma(n+1)}) \right\rbrace
	\end{equation*}
	for all permutations $\sigma \in S_{n+1}$. Here $=_d$ denotes equality in distribution and $S_{n+1} $ is the group of permutations of the integers $\lbrace1, \dots, n+1\rbrace$.
\end{assumption}
Exchangeability or a variant is a standard assumption in the conformal inference literature \citep{Angelopoulos2021} and facilitates coverage guarantees. It holds for instance if we assume that the collection $(X_i, Y_i)_{i = 1}^{n+1}$ is an i.i.d. sequence of image pairs but is more general and in principle allows for other dependence structures. 

\subsection{Marginal confidence sets}\label{SS:MCS}
In order to construct conformal confidence sets let $f_I, f_O:\mathcal{X} \rightarrow \mathcal{X}$ be inner and outer transformation functions and for each $1\leq i \leq n +1 $, let $\tau_i = \max_{v \in \mathcal{V}: Y_i(v) = 0} f_I(s(X_i), v)$ and $\gamma_i = \max_{v \in \mathcal{V}: Y_i(v) = 1} -f_O(s(X_i), v)$  be the maxima of the function transformed scores over the areas at which the true labels equal 0 and 1 respectively. We will require the following assumption on the scores and the transformation functions.
\begin{assumption}\label{ass:indep}
	(Independence of scores) $(X_i, Y_i)_{i = 1}^{n+1}$ is independent of the functions $s, f_O, f_I$. 
\end{assumption}

Given this we construct confidence sets as follows.
\begin{theorem}\label{thm:inner}
	(Marginal inner set)
	Under Assumptions \ref{ass:ex} and \ref{ass:indep}, given $\alpha_1 \in (0,1)$, let 
	\begin{equation*}
		\lambda_I(\alpha_1) = \inf\left\lbrace \lambda: \frac{1}{n} \sum_{i = 1}^n 1\left[ \tau_i\leq \lambda \right] \geq \frac{\lceil (1-\alpha_1)(n+1) \rceil}{n}\right\rbrace,
	\end{equation*}
	and define $I(X) = \lbrace v \in \mathcal{V}: f_I(s(X), v) >\lambda_I(\alpha_1)  \rbrace $. Then,
	\begin{equation}\label{eq:probstat}
		\mathbb{P}\left( I(X_{n+1}) \subseteq\lbrace v\in \mathcal{V}: Y_{n+1}(v) = 1 \rbrace \right) \geq 1 - \alpha_1.
	\end{equation}
\end{theorem}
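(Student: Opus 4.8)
The plan is to reduce the spatial containment event to a single scalar quantile inequality and then invoke the standard split-conformal argument. First I would unpack the definition of the inner set. The containment $I(X_{n+1}) \subseteq \lbrace v \in \mathcal{V}: Y_{n+1}(v) = 1 \rbrace$ \emph{fails} precisely when some $v$ with $Y_{n+1}(v) = 0$ satisfies $f_I(s(X_{n+1}), v) > \lambda_I(\alpha_1)$. Taking the maximum over the background region, this is equivalent to $\tau_{n+1} > \lambda_I(\alpha_1)$. Hence the event in \eqref{eq:probstat} coincides exactly with $\lbrace \tau_{n+1} \leq \lambda_I(\alpha_1) \rbrace$, and it suffices to prove $\mathbb{P}\left( \tau_{n+1} \leq \lambda_I(\alpha_1) \right) \geq 1 - \alpha_1$. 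This reduction is the conceptual crux: it collapses the multiple-testing problem over all of $\mathcal{V}$ into a one-dimensional statement about the single scalar $\tau_{n+1}$.

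Next I would establish exchangeability of the scalars $\tau_1, \dots, \tau_{n+1}$. Each $\tau_i$ is a common deterministic function of the pair $(X_i, Y_i)$ together with the fixed maps $s$ and $f_I$; by Assumption \ref{ass:indep} these maps are independent of the data, so after conditioning on them the $\tau_i$ arise by applying one and the same measurable map to each pair. Assumption \ref{ass:ex} then transfers the exchangeability of $(X_i, Y_i)_{i=1}^{n+1}$ directly to $(\tau_i)_{i=1}^{n+1}$.

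Finally I would identify $\lambda_I(\alpha_1)$ with an empirical order statistic and apply the conformal quantile lemma. Writing $k = \lceil (1-\alpha_1)(n+1) \rceil$, the infimum defining $\lambda_I(\alpha_1)$ is the $k$-th smallest value of $\tau_1, \dots, \tau_n$ whenever $k \leq n$; if instead $k > n$ (that is, $\alpha_1 < 1/(n+1)$) the defining set is empty, $\lambda_I(\alpha_1) = +\infty$, so $I(X_{n+1}) = \emptyset$ and the containment holds trivially. In the main case, $\tau_{n+1} \leq \tau_{(k)}$ is equivalent to the rank of $\tau_{n+1}$ among all $n+1$ values being at most $k$; exchangeability makes this rank (sub-)uniform on $\lbrace 1, \dots, n+1 \rbrace$, so the probability is at least $k/(n+1) \geq (1-\alpha_1)(n+1)/(n+1) = 1 - \alpha_1$, as required.

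I expect the main obstacle to be bookkeeping rather than conceptual: handling ties among the $\tau_i$, where the rank is no longer exactly uniform but the event $\lbrace \tau_{n+1} \leq \tau_{(k)} \rbrace$ only becomes \emph{more} likely so the inequality is preserved, and verifying the degenerate edge case $k > n$ separately. The entire substantive content lies in the first reduction step; everything after it is the textbook split-conformal quantile argument.
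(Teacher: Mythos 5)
Your proposal is correct and follows essentially the same route as the paper's proof: reduce the containment event to $\lbrace \tau_{n+1} \leq \lambda_I(\alpha_1)\rbrace$, transfer exchangeability from the image pairs to the scalars $(\tau_i)_{i=1}^{n+1}$ using Assumptions \ref{ass:ex} and \ref{ass:indep}, and conclude via the conformal quantile bound. The only difference is that the paper cites Lemma 1 of \cite{Tibshirani2019} for the quantile step, whereas you prove it inline with the rank/order-statistic argument (including the $k>n$ edge case and ties), which is a correct expansion of the same lemma rather than a different approach.
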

\begin{proof}
	Under Assumptions \ref{ass:ex} and \ref{ass:indep}, exchangeability of the image pairs implies exchangeability of the sequence $(\tau_i)_{i = 1}^{n+1}$. In particular, $\lambda_I(\alpha_1)$ is the upper $\alpha_1$ quantile of the distribution of $(\tau_i)_{i = 1}^{n} \cup \lbrace \infty \rbrace $ and so, by Lemma 1 of \cite{Tibshirani2019}, it follows that 
	\begin{equation*}
	\mathbb{P}\left(\tau_{n+1} \leq \lambda_I(\alpha_1) \right) \geq 1 - \alpha_1. 
	\end{equation*}
	Now consider the event that $\tau_{n+1}\leq \lambda_I(\alpha_1)$. On this event, $ f_I(s(X_{n+1}),v) \leq \lambda_I(\alpha_1) $
	for all $v \in \mathcal{V}$ such that $Y_{n+1}(v) = 0$. As such, given $u \in \mathcal{V}$ such that $ f_I(s(X_{n+1}), u) > \lambda_I(\alpha_1) $, we must have $Y_{n+1}(u) = 1$ and so $I(X_{n+1}) \subseteq \lbrace v\in \mathcal{V}: Y_{n+1}(v) = 1 \rbrace  $. It thus follows that
	\begin{equation*}
	\mathbb{P}\left( I(X_{n+1}) \subseteq \lbrace v\in \mathcal{V}: Y_{n+1}(v) = 1 \rbrace  \right) \geq \mathbb{P}\left(\tau_{n+1} \leq \lambda_I(\alpha_1) \right) \geq 1 - \alpha_1. 
\end{equation*}
\end{proof}
\noindent For the outer set we have the following analogous result.
\begin{theorem}\label{thm:outer}
	(Marginal outer set)
	Under Assumptions \ref{ass:ex} and \ref{ass:indep}, given $\alpha_2 \in (0,1)$, let 
	\begin{equation*}
		\lambda_O({\alpha_2})= \inf\left\lbrace \lambda: \frac{1}{n} \sum_{i = 1}^n 1\left[ \gamma_i\leq \lambda \right] \geq \frac{\lceil (1-\alpha_2)(n+1) \rceil}{n} \right\rbrace,
	\end{equation*}
	and define $O(X) = \lbrace v \in \mathcal{V}: -f_O(s(X), v) \leq \lambda_O(\alpha_2)  \rbrace $. Then,
	\begin{equation}\label{eq:probstat}
		\mathbb{P}\left( \lbrace v\in \mathcal{V}: Y_{n+1}(v) = 1 \rbrace \subseteq O(X_{n+1}) \right) \geq 1 - \alpha_2.
	\end{equation}
\end{theorem}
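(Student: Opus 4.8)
The plan is to follow the template of the inner-set argument essentially verbatim, adapting it to the sign-flipped maxima $\gamma_i$. First I would observe that, under Assumptions \ref{ass:ex} and \ref{ass:indep}, the exchangeability of the image pairs $(X_i, Y_i)_{i=1}^{n+1}$ transfers to the scalar sequence $(\gamma_i)_{i=1}^{n+1}$: each $\gamma_i$ is obtained by applying the same fixed (score- and transformation-independent) map $(X,Y) \mapsto \max_{v: Y(v)=1} -f_O(s(X),v)$ to the $i$th pair, so permuting the pairs permutes the $\gamma_i$ identically in distribution. With exchangeability of $(\gamma_i)$ in hand, I would note that $\lambda_O(\alpha_2)$ is, by construction, the upper $\alpha_2$ empirical quantile of the augmented sample $(\gamma_i)_{i=1}^n \cup \lbrace \infty \rbrace$, and invoke Lemma 1 of \cite{Tibshirani2019} to conclude
\begin{equation*}
    \mathbb{P}\left( \gamma_{n+1} \leq \lambda_O(\alpha_2) \right) \geq 1 - \alpha_2.
\end{equation*}

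Next I would carry out the deterministic containment step on the event $\lbrace \gamma_{n+1} \leq \lambda_O(\alpha_2) \rbrace$. By definition $\gamma_{n+1} = \max_{v: Y_{n+1}(v)=1} -f_O(s(X_{n+1}),v)$, so on this event every $v \in \mathcal{V}$ with $Y_{n+1}(v)=1$ satisfies $-f_O(s(X_{n+1}),v) \leq \gamma_{n+1} \leq \lambda_O(\alpha_2)$, which is exactly the membership criterion for $O(X_{n+1})$. Hence $\lbrace v: Y_{n+1}(v)=1 \rbrace \subseteq O(X_{n+1})$ on this event, and monotonicity of probability gives
\begin{equation*}
    \mathbb{P}\left( \lbrace v: Y_{n+1}(v)=1 \rbrace \subseteq O(X_{n+1}) \right) \geq \mathbb{P}\left( \gamma_{n+1} \leq \lambda_O(\alpha_2) \right) \geq 1 - \alpha_2,
\end{equation*}
as required.

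The only points requiring care --- and the analogue of the single genuine step in the inner proof --- are bookkeeping ones about the direction of the inequality. Whereas the inner set controls false positives by taking the maximum over the $Y=0$ region and using a strict inequality $f_I > \lambda_I$ in the definition of $I$, the outer set controls false negatives by taking the maximum over the $Y=1$ region of the negated transformed score, and correspondingly uses the non-strict inequality $-f_O \leq \lambda_O$ in the definition of $O$. I would double-check that this choice of $\leq$ (rather than $<$) is precisely the one that makes the event $\lbrace \gamma_{n+1} \leq \lambda_O \rbrace$ imply containment, so that no strictness mismatch opens a gap; it does, because the quantile bound delivers $\gamma_{n+1} \leq \lambda_O$ and membership in $O$ is defined by the same weak inequality. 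No new probabilistic ideas beyond those already used for the inner set are needed, so I anticipate no substantive obstacle.
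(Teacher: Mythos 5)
Your proposal is correct and follows essentially the same route as the paper: exchangeability of $(\gamma_i)_{i=1}^{n+1}$ plus Lemma 1 of \cite{Tibshirani2019} to get $\mathbb{P}(\gamma_{n+1} \leq \lambda_O(\alpha_2)) \geq 1-\alpha_2$, then the deterministic containment on that event (the paper phrases this step via the contrapositive, showing $O(X)^C \subseteq \lbrace v : Y_{n+1}(v) = 0\rbrace$, but this is the same argument). Your remark on the weak versus strict inequality is a fair piece of bookkeeping and is consistent with the paper's definitions.
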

\begin{proof}
	Arguing as in the proof of Theorem \ref{thm:inner}, it follows that $\mathbb{P}\left(\gamma_{n+1} \leq \lambda_O(\alpha_2) \right) \geq 1 - \alpha_2.$
	Now on the event that $\gamma_{n+1}\leq \lambda_O(\alpha_2)$ we have $ -f_O(s(X_{n+1}),v) \leq \lambda_O(\alpha_2) $ for all $v \in \mathcal{V}$ such that $Y_{n+1}(v) = 1$. As such, given $u \in \mathcal{V}$ such that $ -f_O(s(X_{n+1}),u) > \lambda_I(\alpha) $, we must have $Y_{n+1}(u) = 0$ and so $	O(X)^C  \subseteq \lbrace v\in \mathcal{V}: Y_{n+1}(v) = 0 \rbrace  $. The result then follows as above.
\end{proof}
\begin{remark}\label{rmk:max}
	We have used the maximum over the transformed scores in order to combine score information on and off the ground truth masks. The maximum is a natural combination function in imaging and is commonly used in the context of multiple testing \citep{Worsley1992}. However the theory above is valid for any increasing combination function. We show this in Appendix \ref{A:CF} where we establish generalized versions of these results.
\end{remark}
\begin{remark}
	Inner and outer coverage can also be viewed as a special case of conformal risk control with an appropriate choice of loss function. We can thus instead establish coverage results as a corollary to risk control, see Appendix \ref{risk2con} for details. This amounts to an alternative proof of the results as the proof of the validity of risk control is different though still strongly relies on exchangeability.
\end{remark}


%
%

\subsection{Joint confidence sets}\label{SS:joint}
Instead of focusing on marginal control one can instead spend all of the $\alpha$ available to construct sets which have a joint probabilistic guarantees. This gain comes at the expense of a loss of precision. The simplest means of constructing jointly valid confidence sets is via the marginal sets themselves.
\begin{corollary}\label{cor:weighting}
	(Joint from marginal) Assume Assumptions \ref{ass:ex} and \ref{ass:indep} hold and given $\alpha \in (0,1)$ and $\alpha_1, \alpha_2 \in (0,1)$ such that $\alpha_1 + \alpha_2 \leq \alpha$, define $I(X)$ and  $O(X)$ as in Theorems \ref{thm:inner} and \ref{thm:outer}. Then 
	\begin{equation}
		\mathbb{P}\left( I(X_{n+1}) \subseteq \lbrace v\in \mathcal{V}: Y_{n+1}(v) = 1 \rbrace \subseteq O(X_{n+1})  \right) \geq  1-\alpha. 
	\end{equation}
\end{corollary}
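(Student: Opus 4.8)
The plan is to observe that the joint containment event is precisely the intersection of the two marginal events already controlled by Theorems \ref{thm:inner} and \ref{thm:outer}, and then to bound the probability of its complement by a union bound. Concretely, I would define the inner-coverage event
\begin{equation*}
	A = \left\lbrace I(X_{n+1}) \subseteq \lbrace v\in \mathcal{V}: Y_{n+1}(v) = 1 \rbrace \right\rbrace
\end{equation*}
and the outer-coverage event
\begin{equation*}
	B = \left\lbrace \lbrace v\in \mathcal{V}: Y_{n+1}(v) = 1 \rbrace \subseteq O(X_{n+1}) \right\rbrace .
\end{equation*}
The chain of inclusions in the statement holds exactly on $A \cap B$, so it suffices to lower bound $\mathbb{P}(A \cap B)$.

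Next I would apply the marginal guarantees directly. Since $I(X)$ and $O(X)$ are constructed exactly as in Theorems \ref{thm:inner} and \ref{thm:outer} with levels $\alpha_1$ and $\alpha_2$, those theorems give $\mathbb{P}(A) \geq 1 - \alpha_1$ and $\mathbb{P}(B) \geq 1 - \alpha_2$, equivalently $\mathbb{P}(A^C) \leq \alpha_1$ and $\mathbb{P}(B^C) \leq \alpha_2$. The final step is De Morgan together with subadditivity:
\begin{equation*}
	\mathbb{P}(A \cap B) = 1 - \mathbb{P}(A^C \cup B^C) \geq 1 - \mathbb{P}(A^C) - \mathbb{P}(B^C) \geq 1 - \alpha_1 - \alpha_2 \geq 1 - \alpha,
\end{equation*}
where the last inequality uses the assumption $\alpha_1 + \alpha_2 \leq \alpha$.

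The notable feature, rather than any real obstacle, is that $A$ and $B$ are highly dependent: both are functions of the same test pair $(X_{n+1}, Y_{n+1})$ and of the same calibration data through the thresholds $\lambda_I(\alpha_1)$ and $\lambda_O(\alpha_2)$. The union bound is chosen precisely because it sidesteps any need to quantify this dependence, which is why the result is stated as the simplest construction of jointly valid sets and why it is conservative. I would remark that this conservativeness is the ``loss of precision'' alluded to before the corollary, and that a sharper, less wasteful alternative is to work directly with the joint distribution of the maxima $(\tau_{n+1}, \gamma_{n+1})$, as developed in Section \ref{SS:joint}.
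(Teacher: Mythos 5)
Your proof is correct and is exactly the intended argument: the paper leaves this corollary unproved precisely because it follows from Theorems \ref{thm:inner} and \ref{thm:outer} by the union bound (Bonferroni) together with $\alpha_1 + \alpha_2 \leq \alpha$, which is what you have written. Your closing remark about the conservativeness of ignoring the dependence between the two coverage events, and the alternative of using the joint distribution of $(\tau_{n+1},\gamma_{n+1})$, matches the discussion surrounding Theorem \ref{thm:joint}.
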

Alternatively joint control can be obtained using the joint distribution of the maxima of the transformed logit scores as follows.
\begin{theorem}\label{thm:joint}
	(Joint coverage) Assume that Assumption \ref{ass:ex} and \ref{ass:indep}  hold. Given $\alpha \in (0,1)$, define 
	\begin{equation*}
		\lambda(\alpha) = \inf\left\lbrace \lambda: \frac{1}{n} \sum_{i = 1}^n 1\left[ \max(\tau_i, \gamma_i) \leq \lambda \right] \geq \frac{\lceil (1-\alpha)(n+1) \rceil}{n} \right\rbrace.
	\end{equation*}
 Let $O(X) = \lbrace v \in \mathcal{V}: -f_O(s(X),v) \leq \lambda(\alpha) \rbrace $ and $I(X) = \lbrace v \in \mathcal{V}: f_I(s(X),v) >	\lambda(\alpha) \rbrace $. Then,
\begin{equation}\label{eq:probstat}
	\mathbb{P}\left( I(X_{n+1}) \subseteq \lbrace v\in \mathcal{V}: Y_{n+1}(v) = 1 \rbrace \subseteq O(X_{n+1}) \right) \geq 1 - \alpha.
\end{equation}
\end{theorem}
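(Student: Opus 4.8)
The plan is to mirror the two marginal arguments (Theorems \ref{thm:inner} and \ref{thm:outer}) but to control a single scalar calibration statistic, $\max(\tau_i,\gamma_i)$, whose common threshold $\lambda(\alpha)$ simultaneously governs both inclusions. The whole proof is essentially a synthesis of the earlier two: the only new structural observation is that one event on the maximum controls both set inclusions at once.

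First I would establish exchangeability of the sequence $\left( \max(\tau_i,\gamma_i) \right)_{i=1}^{n+1}$. For each $i$, both $\tau_i = \max_{v: Y_i(v)=0} f_I(s(X_i),v)$ and $\gamma_i = \max_{v: Y_i(v)=1} -f_O(s(X_i),v)$ are fixed measurable functions of the pair $(X_i,Y_i)$ once $s, f_I, f_O$ are held fixed, so $\max(\tau_i,\gamma_i)$ is as well. Under Assumption \ref{ass:indep} the scores and transformations are independent of the data, and under Assumption \ref{ass:ex} the pairs $(X_i,Y_i)_{i=1}^{n+1}$ are exchangeable; since applying the same deterministic map coordinatewise preserves exchangeability, the sequence $\left( \max(\tau_i,\gamma_i) \right)_{i=1}^{n+1}$ is exchangeable. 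I would then note that $\lambda(\alpha)$ is, by construction, precisely the upper $\alpha$ quantile of the empirical distribution of $\left( \max(\tau_i,\gamma_i) \right)_{i=1}^{n} \cup \lbrace \infty \rbrace$, so that Lemma 1 of \cite{Tibshirani2019} yields
\[
\mathbb{P}\left( \max(\tau_{n+1},\gamma_{n+1}) \leq \lambda(\alpha) \right) \geq 1 - \alpha.
\]

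The key step is then to show that the single event $\lbrace \max(\tau_{n+1},\gamma_{n+1}) \leq \lambda(\alpha) \rbrace$ entails the full nested inclusion. On this event we have both $\tau_{n+1} \leq \lambda(\alpha)$ and $\gamma_{n+1} \leq \lambda(\alpha)$. The first inequality gives, exactly as in Theorem \ref{thm:inner}, that $f_I(s(X_{n+1}),v) \leq \lambda(\alpha)$ for every $v$ with $Y_{n+1}(v)=0$, so any $u \in I(X_{n+1})$ (i.e.\ with $f_I(s(X_{n+1}),u) > \lambda(\alpha)$) must satisfy $Y_{n+1}(u)=1$, whence $I(X_{n+1}) \subseteq \lbrace v: Y_{n+1}(v)=1 \rbrace$. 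The second inequality gives, exactly as in Theorem \ref{thm:outer}, that $-f_O(s(X_{n+1}),v) \leq \lambda(\alpha)$ for every $v$ with $Y_{n+1}(v)=1$, so $\lbrace v: Y_{n+1}(v)=1 \rbrace \subseteq O(X_{n+1})$. Combining the two inclusions and invoking monotonicity of probability gives
\[
\mathbb{P}\left( I(X_{n+1}) \subseteq \lbrace v: Y_{n+1}(v)=1 \rbrace \subseteq O(X_{n+1}) \right) \geq \mathbb{P}\left( \max(\tau_{n+1},\gamma_{n+1}) \leq \lambda(\alpha) \right) \geq 1 - \alpha.
\]

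I expect no serious obstacle, since each implication has already been verified in the marginal proofs and exchangeability transfers immediately to the maximum. The only genuinely new conceptual point — and the reason this is worth stating separately from Corollary \ref{cor:weighting} — is the recognition that thresholding \emph{both} sets at the common cutoff $\lambda(\alpha)$ lets a single quantile event on $\max(\tau_{n+1},\gamma_{n+1})$ control both inclusions simultaneously. This uses the joint behaviour of the two calibration statistics directly rather than paying a union-bound split of $\alpha$ across the two error events, which is what distinguishes it from the marginal-combination route.
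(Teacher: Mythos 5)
Your proposal is correct and follows essentially the same route as the paper: exchangeability of the pairs transfers to $\bigl(\max(\tau_i,\gamma_i)\bigr)_{i=1}^{n+1}$, the quantile lemma of \cite{Tibshirani2019} bounds $\mathbb{P}\left(\max(\tau_{n+1},\gamma_{n+1})\leq\lambda(\alpha)\right)$, and on that single event both inclusions hold by the arguments of Theorems \ref{thm:inner} and \ref{thm:outer}. The paper states this more tersely, but the content is identical.
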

\begin{proof}
	Exchangeability of the image pairs implies exchangeability of the sequence $(\tau_i, \gamma_i)_{i = 1}^{n+1}$. Moreover on the event that $\max(\tau_{n+1}, \gamma_{n+1}) \leq \lambda(\alpha)$ we have $\tau_{n+1} \leq \lambda(\alpha)$ and $\gamma_{n+1} \leq \lambda(\alpha)$ so the result follows via a proof similar to that of Theorems \ref{thm:inner} and \ref{thm:outer}.
\end{proof}

\begin{remark}
	The advantage of Corollary \ref{cor:weighting} is that the resulting inner and outer sets provide pivotal inference - not favouring one side or the other - which can be important when the distribution of the score function is asymmetric. Moreover the levels $\alpha_1$ and $\alpha_2$ can be used to provide a greater weight to either inner or outer sets whilst maintaining joint coverage. Theorem \ref{thm:joint} may instead be useful when there is strong dependence between $\tau_{n+1}$ and $\gamma_{n+1}$. However, when this dependence is weak, scale differences in the scores can lead to a lack of pivotality. This can be improved by appropriate choices of the score transformations $f_I$ and $f_O$ however in practice it may be simpler to construct joint sets using Corollary \ref{cor:weighting}. 
\end{remark}

\subsection{Optimizing score transformations}
The choice of score transformations $f_I$ and $f_O$ is extremely important and can have a large impact on the size of the conformal confidence sets. The best choice depends on both the distribution of the data and on the nature of the output of the image segmentor used to calculate the scores. We thus recommend setting aside a learning dataset independent from both the calibration dataset, used to compute the conformal thresholds, and the test dataset. This approach was used in \cite{Sun2024} to learn the best copula transformation for combining dependent data streams.

In order to make efficient use of the data available, the learning dataset can in fact contain some or all of the data used to train the image segmentor. This data is assumed to be independent of the calibration and test data and so can be used to learn the best score transformations without compromising subsequent validity. The advantage of doing so is that less additional data needs to be set aside or collected for the purposes of learning a score function. Moreover it allows for additional data to be used to train the model resulting in better segmentation performance. The disadvantage is that machine learning models typically overfit their training data meaning that certain score functions may appear to perform better on this data than they do in practice. The choice of whether to include training data in the learning dataset thus depends on the quantity of data available and the quality of the segmentation model.

A score transformation that we will make particular use of in Section \ref{SS:res} is based on the distance transformation which we define as follows. Given $\mathcal{A} \subseteq \mathcal{V}$, let $E(\mathcal{A})$ be the set of points on the boundary of $\mathcal{A}$ obtained using the marching squares algorithm \citep{Maple2003}. Given a distance metric $\rho$ define the distance transformation $d_{\rho}: \mathcal{P}(\mathcal{V}) \times \mathcal{V}\rightarrow \mathbb{R}$, which sends $\mathcal{A} \in \mathcal{P}(\mathcal{V})$ and $v\in \mathcal{V}$ to
\begin{equation*}
	d_{\rho}(\mathcal{A}, v) = \text{sign}(\mathcal{A}, v)\min\lbrace \rho(v, e): e \in E(\mathcal{A})\rbrace, 
\end{equation*}
where $ \text{sign}(\mathcal{A}, v) = 1 $ if $v\in \mathcal{A}$ and equals $-1$ otherwise. The function $d_{\rho}$ is an adapation of the distance transform of \cite{Borgefors1986} which provides positive values within the set $\mathcal{A}$ and negative values outside of $\mathcal{A}$.

\subsection{Constructing confidence sets from bounding boxes}
Existing work on conformal inner and outer confidence sets, which aim to provide coverage of the entire ground truth mask with a given probability, has primarily focused on bounding boxes \citep{De2022, Andeol2023, Mukama2024}. These papers adjust for multiple comparisons over the 4 edges of the bounding box, doing so conformally by comparing the distance between the predicted bounding box and the bounding box of the ground truth mask. These approaches provide box-wise coverage by aggregrating the predictions over all objects within all of the calibration images, often combining multiple bounding boxes per image. However, as observed in Section 5 of \cite{De2022}, doing so violates exchangeability which is needed for valid conformal inference, as there is dependence between the objects within each image. Instead image-wise coverage can be provided without violating exchangeability by treating the union of the boxes as the ground truth image \citep{De2022, Andeol2023}.

We establish the validity of a version of the image-wise max-additive method of \cite{Andeol2023} (adapted to provide confidence sets) as a corollary to our results, see Appendix \ref{AA:BBtheory}. In this approach we define bounding box scores based on the chessboard distance transformation to the inner and outer predicted masks and use these scores to provide conformal confidence sets. Validity then follows as a consequence of the results above as we show in Corollaries \ref{thm:boxinnergen} and \ref{thm:boxgenouter}. We compare to this approach in our experiments below. Targeting bounding boxes does not directly target the mask itself and so the resulting confidence sets are typically conservative.

\section{Application to Polpys tumor segmentation}\label{SS:res}
In order to illustrate and validate our approach we consider the problem of polpys tumor segmentation. To do so we use the same dataset as in \cite{Angelopoulos2022} in which 1798 poplys images, with available ground truth masks were combined from 5 open-source datasets (\cite{KVASIR2017}, \cite{Hyperkvasir2020} \cite{Bernal2012}, \cite{Silva2014}). Logit scores were obtained for these images using the parallel reverse attention network (PraNet) model \citep{PraNet2020}.

\subsection{Choosing a score transformation}\label{SS:learn}
\begin{figure}
	\centering
	\includegraphics[width=0.32\textwidth]{../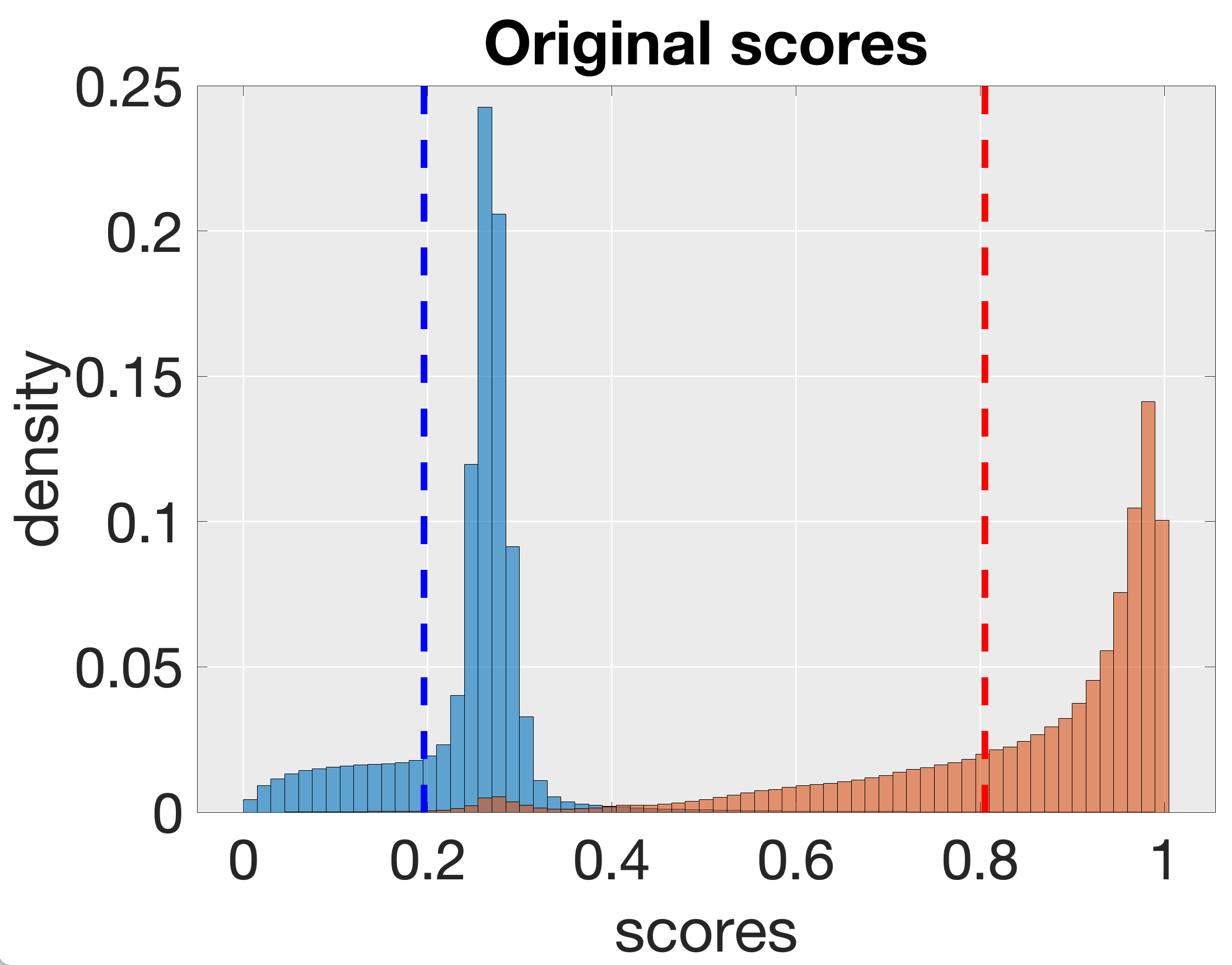}
	\includegraphics[width=0.32\textwidth]{../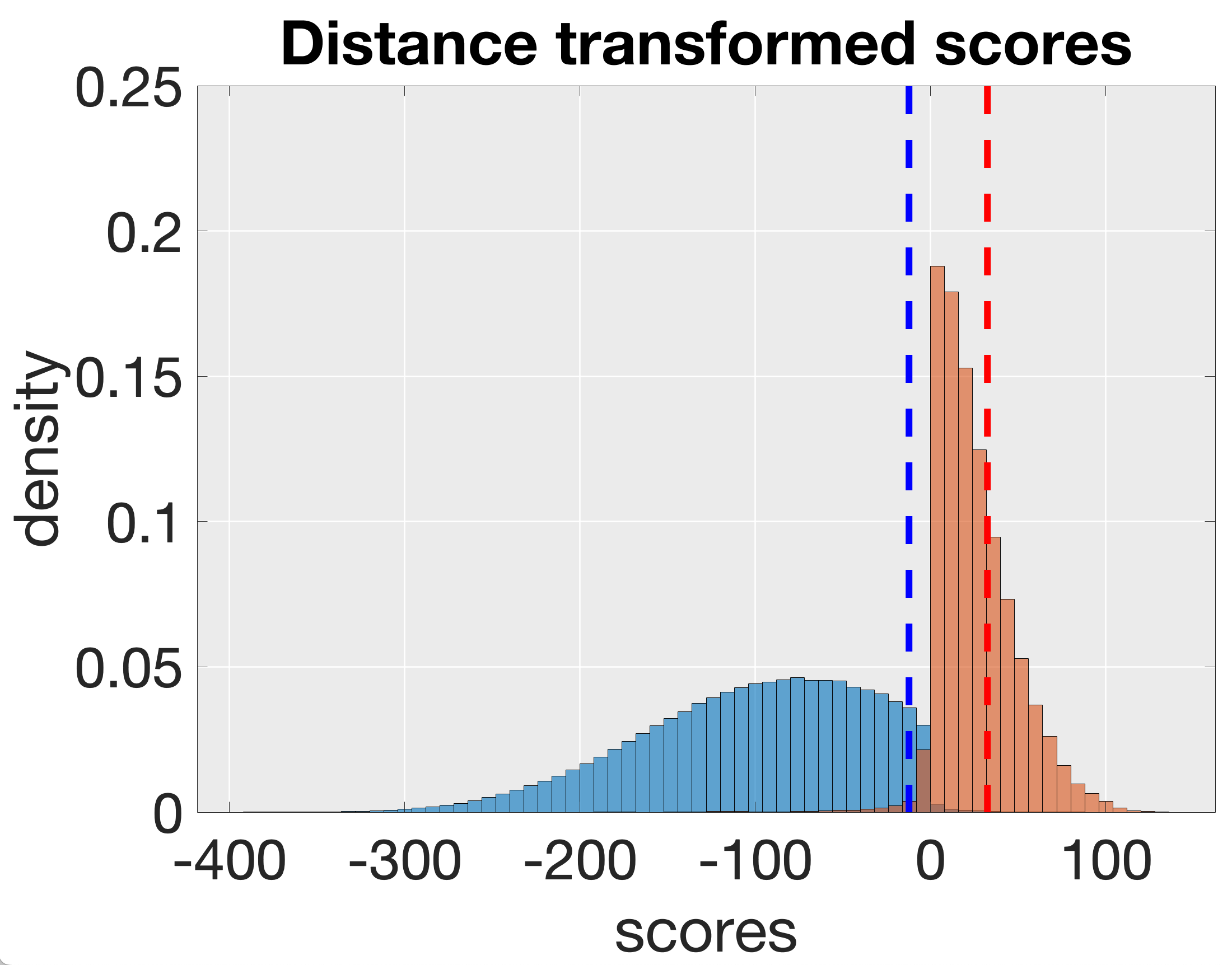}
	\includegraphics[width=0.32\textwidth]{../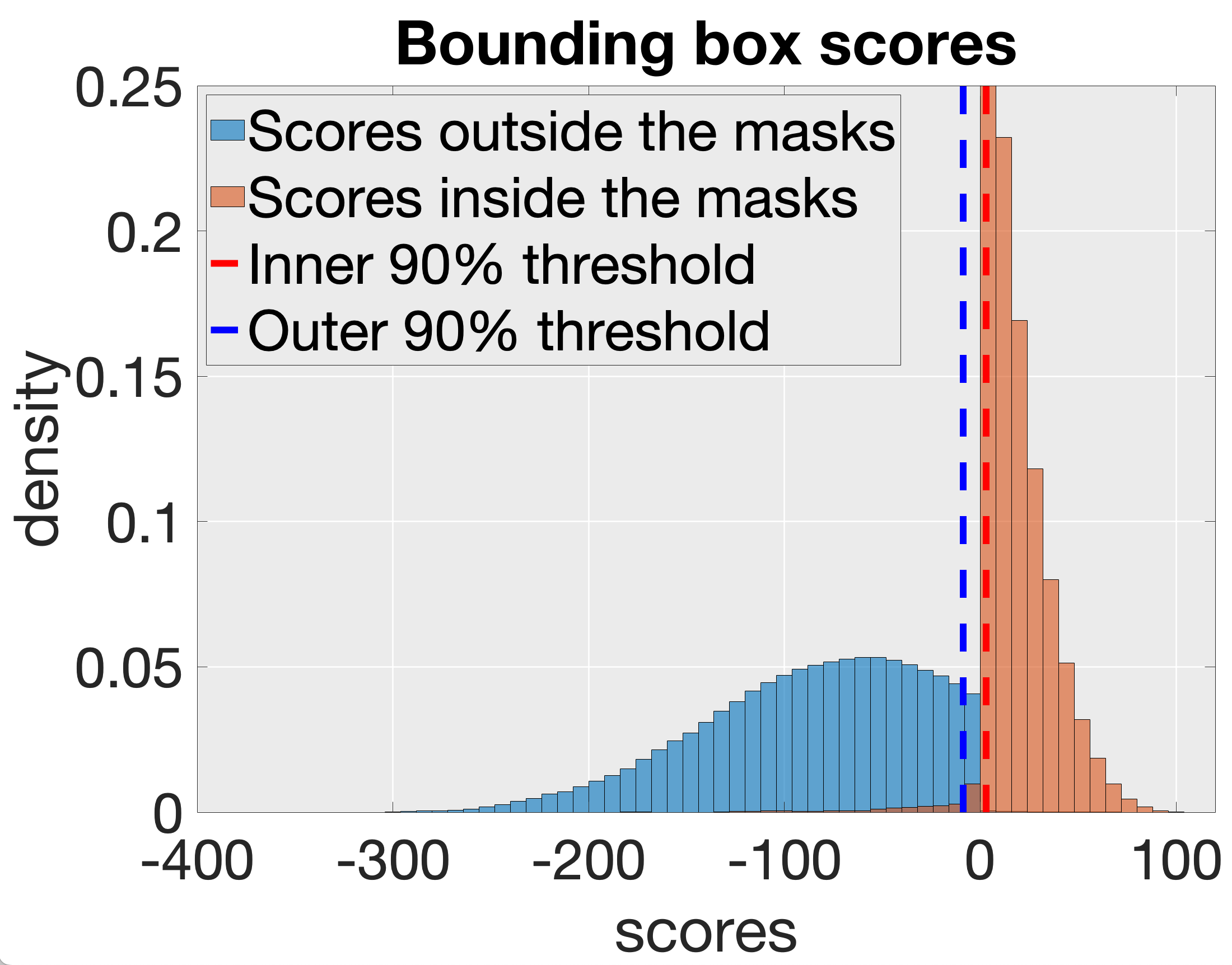}
	\caption{Histograms of the distribution of the scores over the whole image within and outside the ground truth masks. Thresholds obtained for the marginal $90\%$ inner and outer confidence sets, obtained based on quantiles of the distribution of $(\tau_i)_{i = 1}^n$ and $(\gamma_i)_{i = 1}^n$, are displayed in red and blue.}
	\label{scorehists}
\end{figure}
In order to optimize the size of our confidence sets we set aside 298 of the 1798 polpys images to form a learning dataset on which to choose the best score transformations. Importantly as the learning dataset is independent of the remaining 1500 images set-aside, we can study it as much as we like without compromising the validity of the follow-up analyses in Sections \ref{SS:val}. In particular in this section we shall use the learning dataset to both calibrate and study the results, in order to maximize the amount of important information we can learn from it.

The score transformations we considered were the identity (after softmax transformation) and distance transformations of the predicted masks:  taking $f_I(s(X), v) = f_O(s(X), v) = d_\rho(\hat{M}(X), v)$, where $\rho$ is the Euclidean metric. We also compare to the results of using the bounding box transformations $f_I = b_I$ and $f_O = b_O$ which correspond to transforming the predicted bounding box using a distance transformation based on the chessboard metric and are defined formally in Appendix \ref{AA:BBtheory}. For the purposes of plotting we used the combined bounding box scores defined in Definition \ref{dfn:BBS}.

From the histograms in Figure \ref{scorehists} we can see that thresholding the original scores at the inner threshold well separates the data. However this is not the case for the outer threshold for which the data is better separated using the distance transformed and bounding box scores. Figure \ref{fig:learning} shows PraNet scores for 2 typical examples, along with surface plots of the transformed scores and corresponding $90\%$ marginal confidence regions (with thresholds obtained from calibrating over the learning dataset). From these we see that PraNet typically assigns a high softmax score to the polpys regions which decreases in the regions directly around the  boundary of the tumor before returning to a higher level away from the polpys. This results in tight inner sets but large outer sets as the model struggles to identify where the tumor ends. Instead the distance transformed and bounding box scores are much better at providing outer bounds on the tumor, with distance transformed scores providing a tighter outside fit. Additional examples are shown in Figures \ref{fig:learning2} and \ref{fig:learning3} and have the same conclusion.

Based on the results of the learning dataset we decided to combine the best of the approaches for the inner and outer sets respectively for the inference in Section \ref{SS:val}, taking $f_I$ to be the identity and $f_O$ to be the distance transformation of the predicted mask in order to optimize performance. We can also use the learning dataset to determine how to weight the $\alpha$ used to obtain joint confidence sets. A ratio of 4 to 1 seems appropriate here in light of the fact that in this dataset identifying where a given tumor ends appears to be more challenging than identifying pixels where we are sure that there is a tumor. To achieve joint coverage of $90\%$ this involves taking $\alpha_1 = 0.02$ and $\alpha_2 = 0.08$.

\begin{figure}
\begin{center}
	\includegraphics[width=0.24\textwidth]{../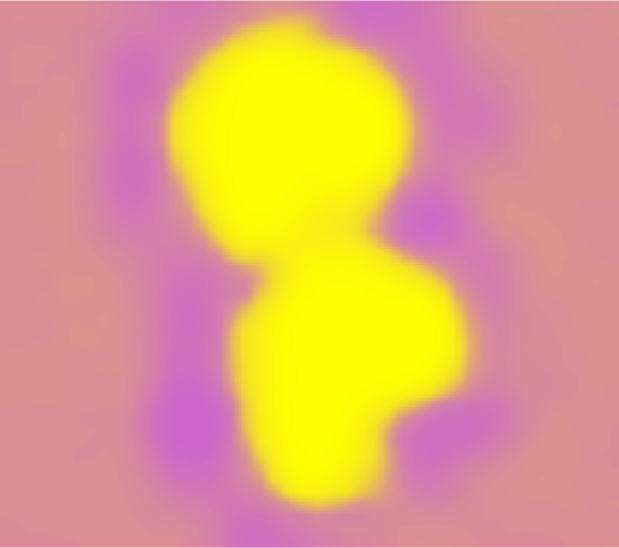}
		\includegraphics[width=0.24\textwidth]{../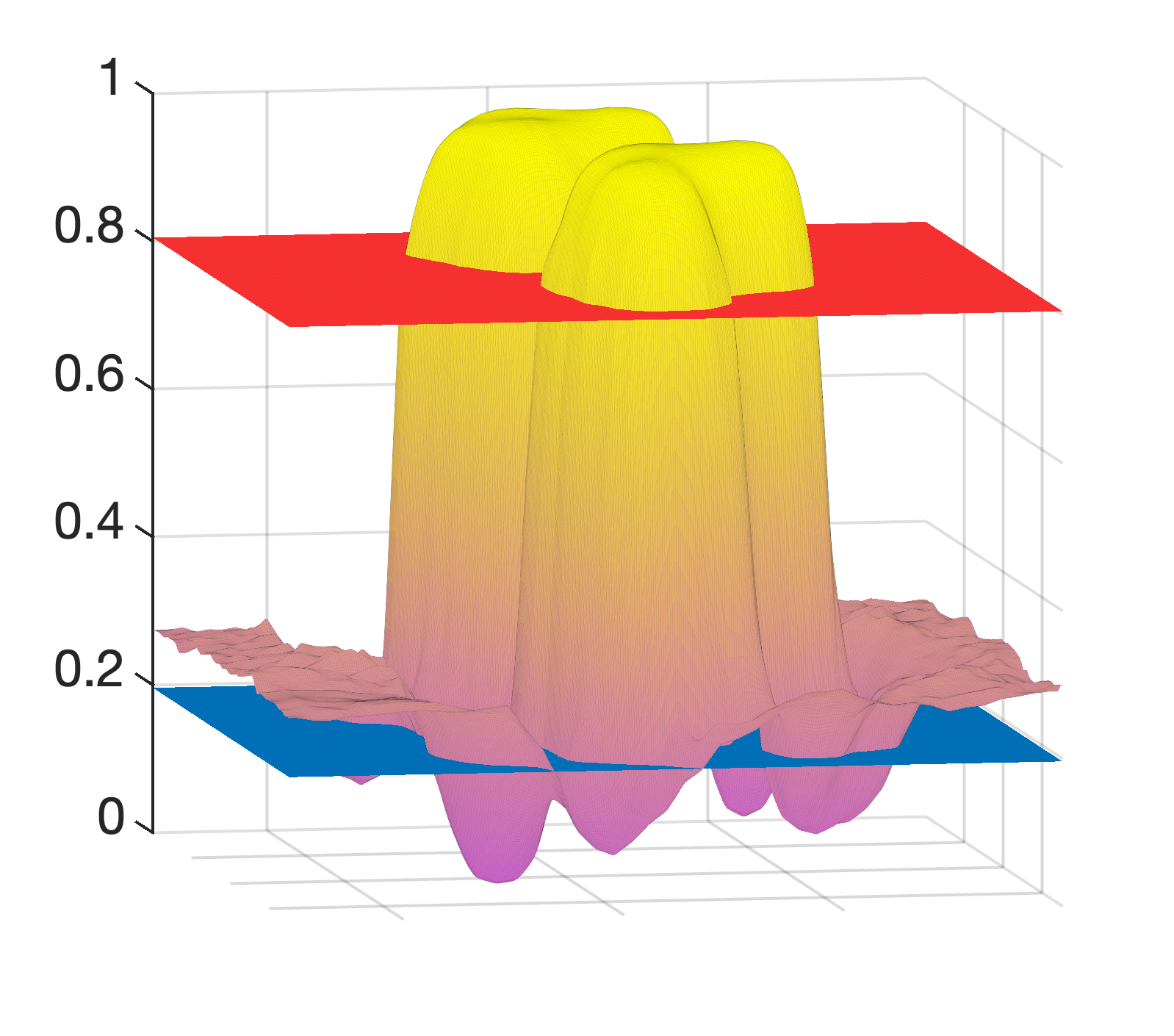}	\includegraphics[width=0.24\textwidth]{../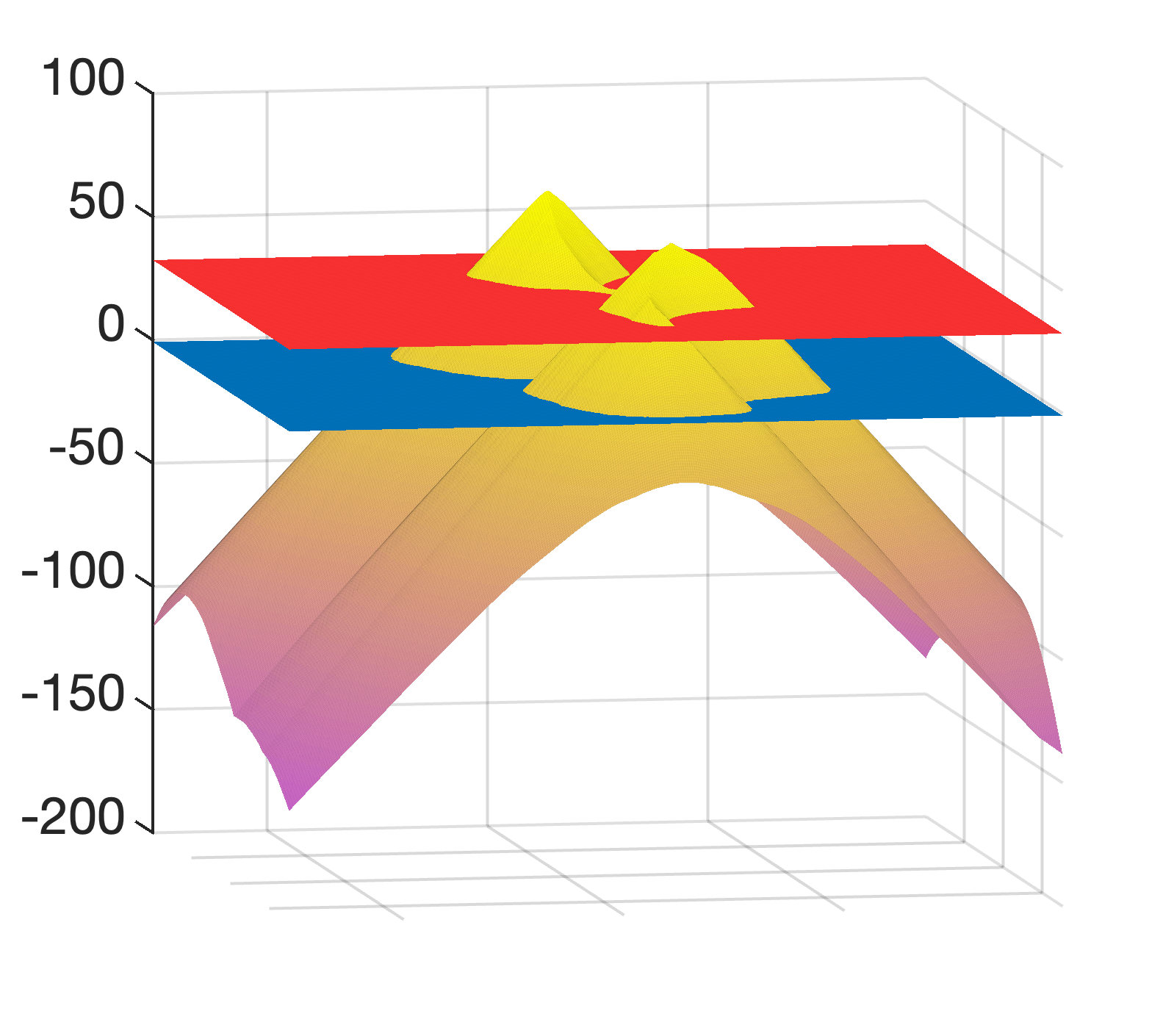}
		\includegraphics[width=0.24\textwidth]{../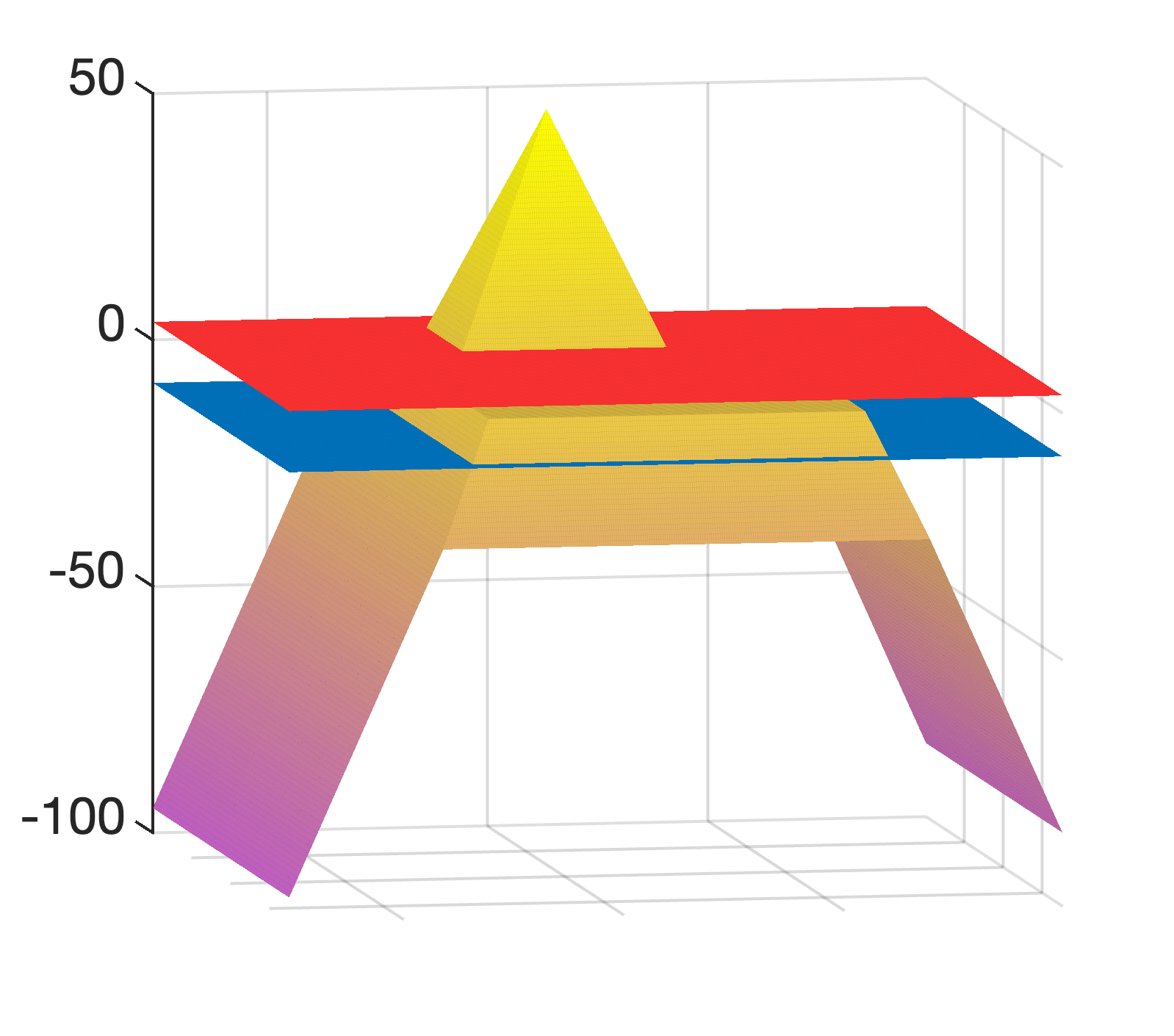}\\
		\includegraphics[width=0.24\textwidth]{../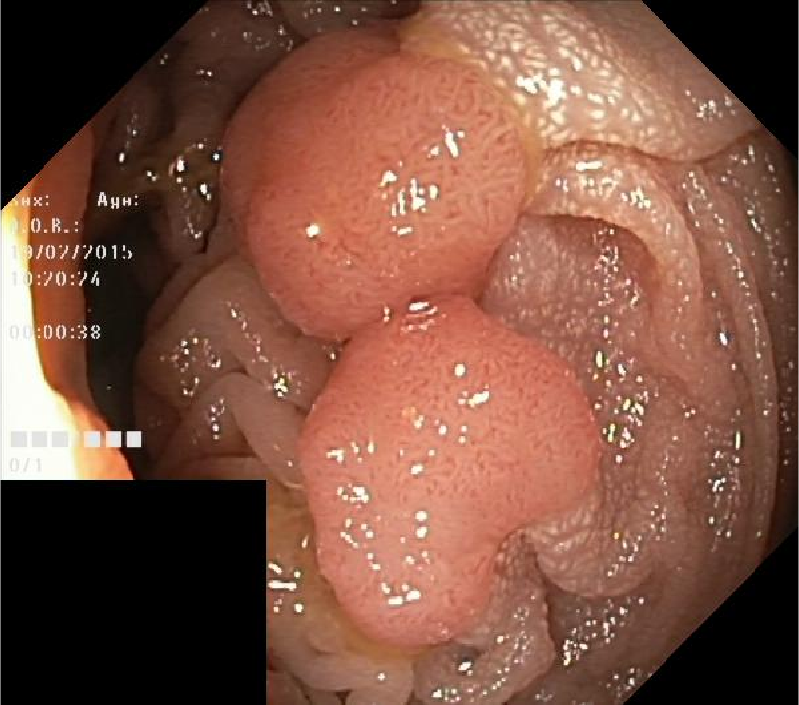}
	\includegraphics[width=0.24\textwidth]{../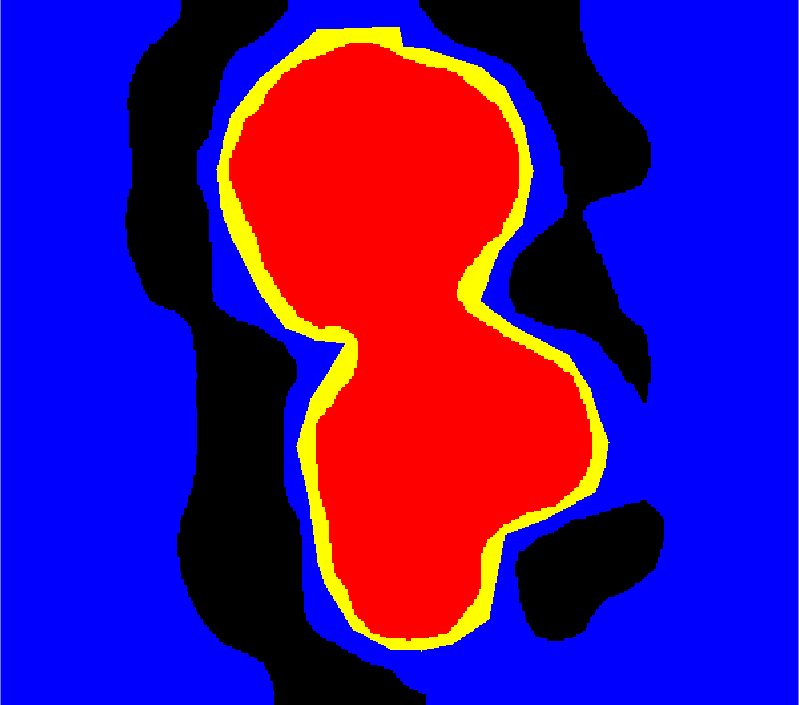}
	\includegraphics[width=0.24\textwidth]{../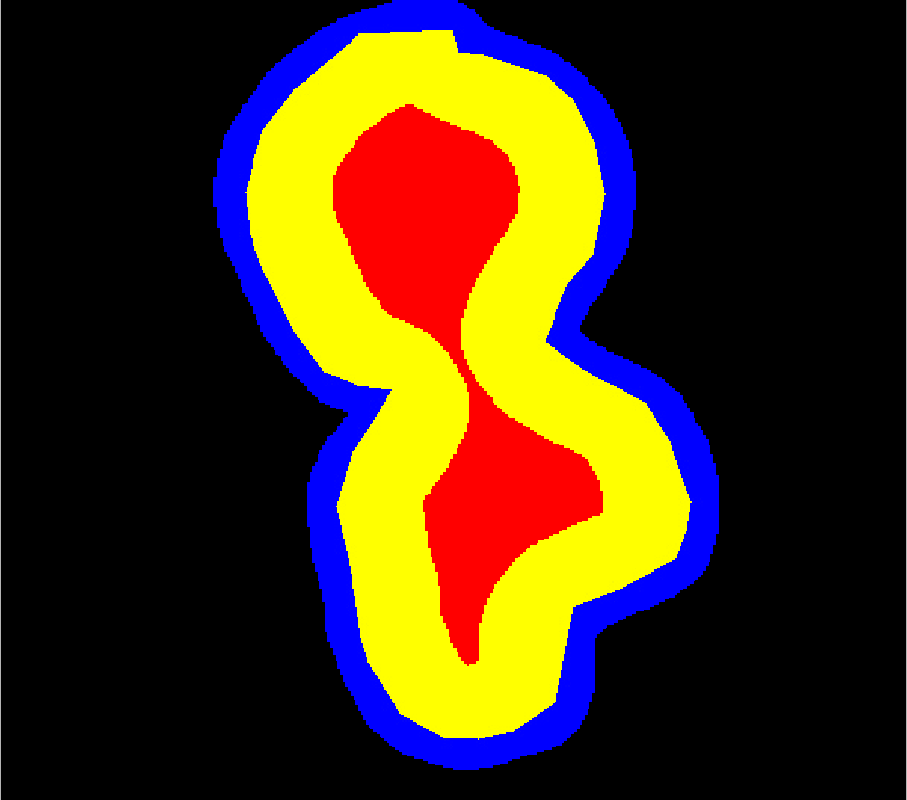}
	\includegraphics[width=0.24\textwidth]{../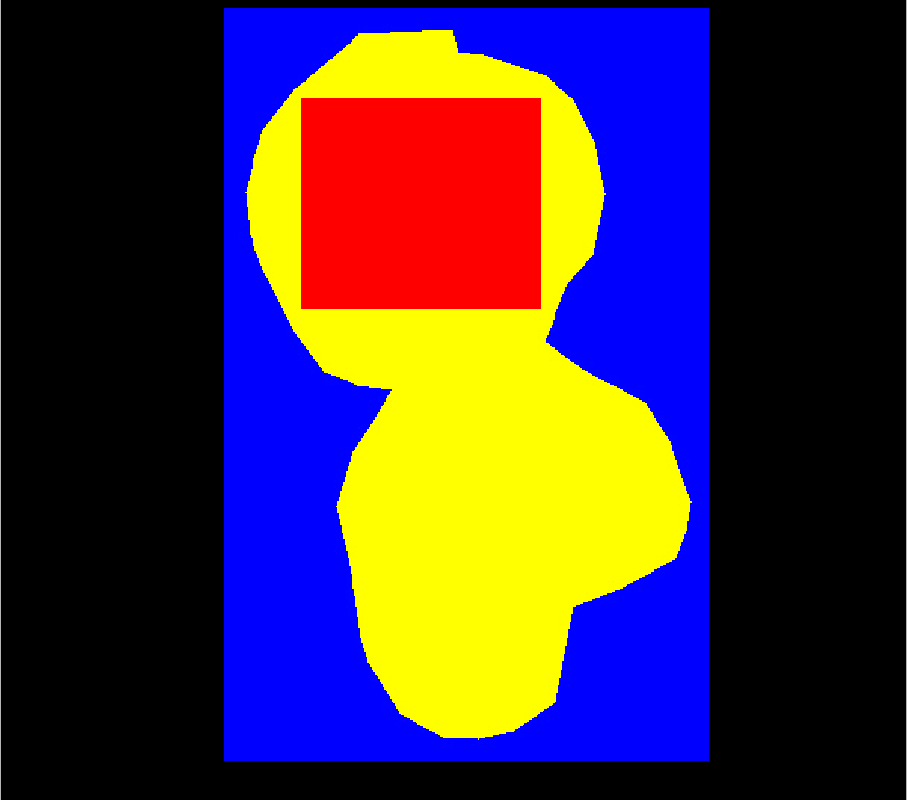}\\
\vspace{0.5cm}
		\includegraphics[width=0.24\textwidth]{../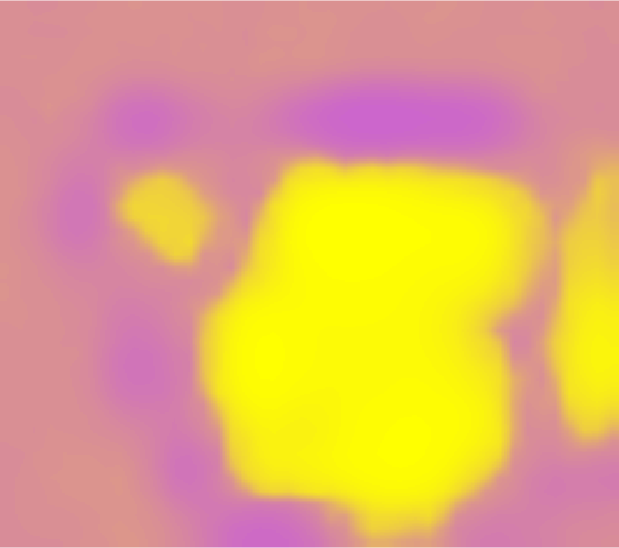}
	\includegraphics[width=0.24\textwidth]{../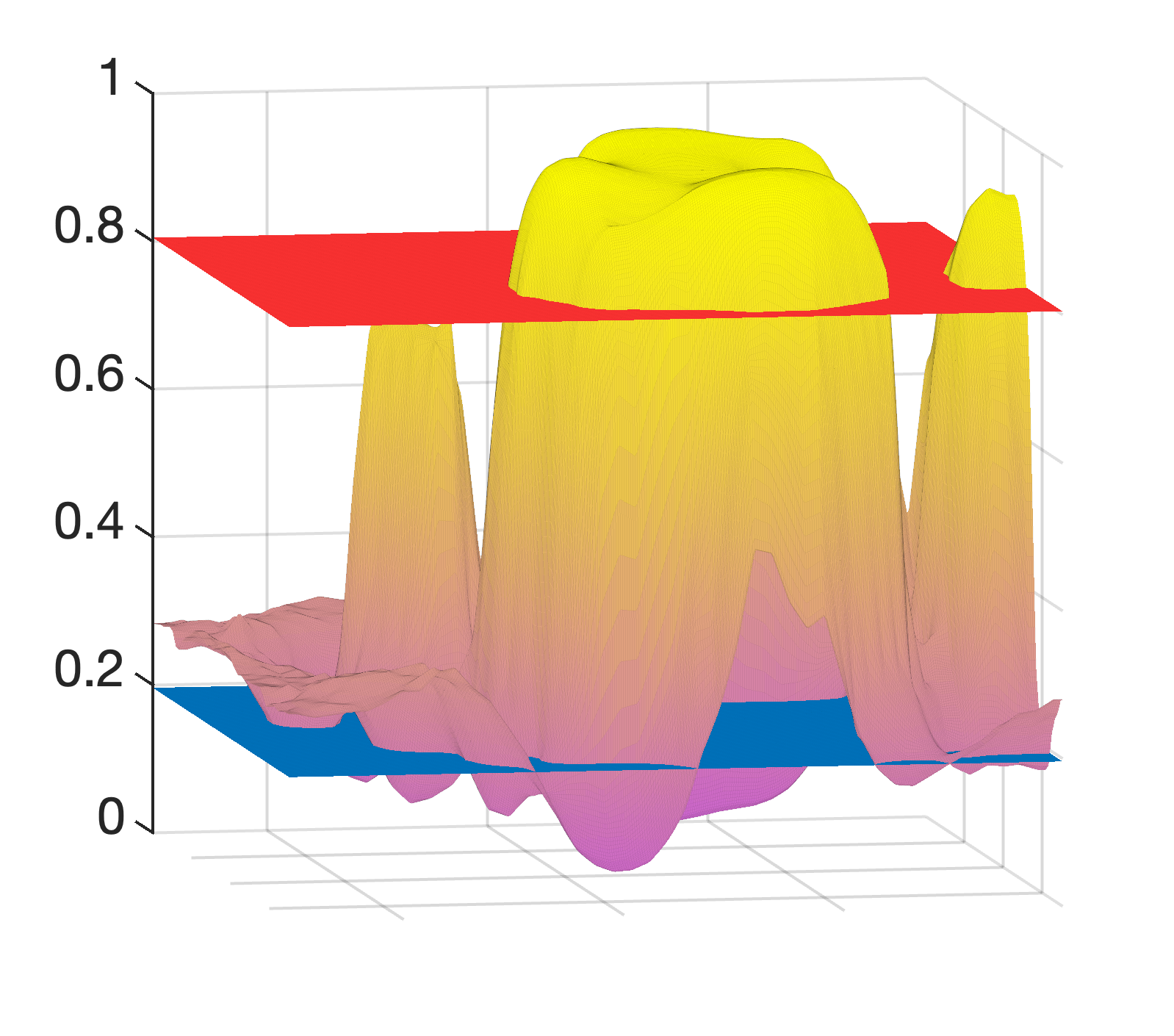}	\includegraphics[width=0.24\textwidth]{../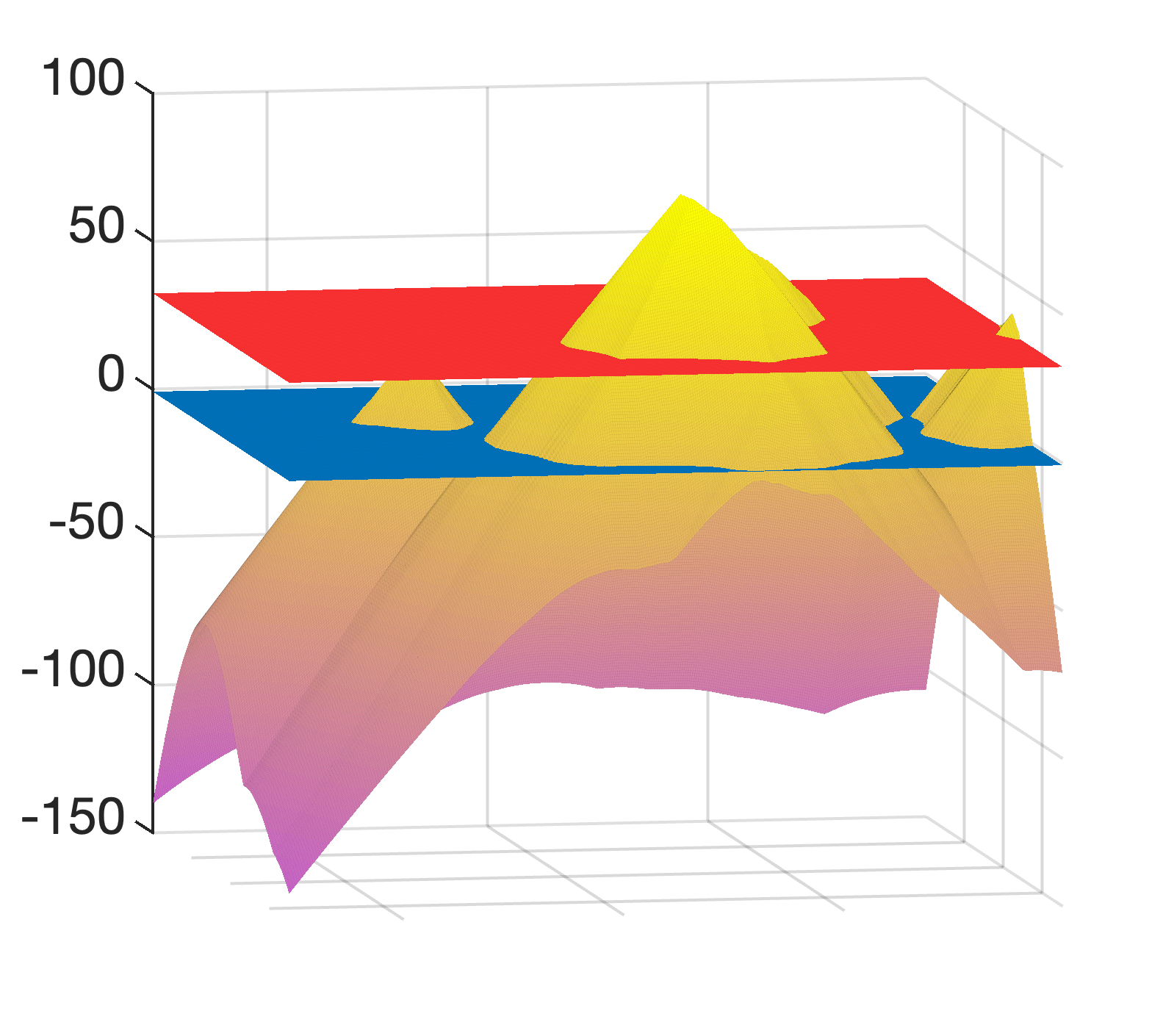}
	\includegraphics[width=0.24\textwidth]{../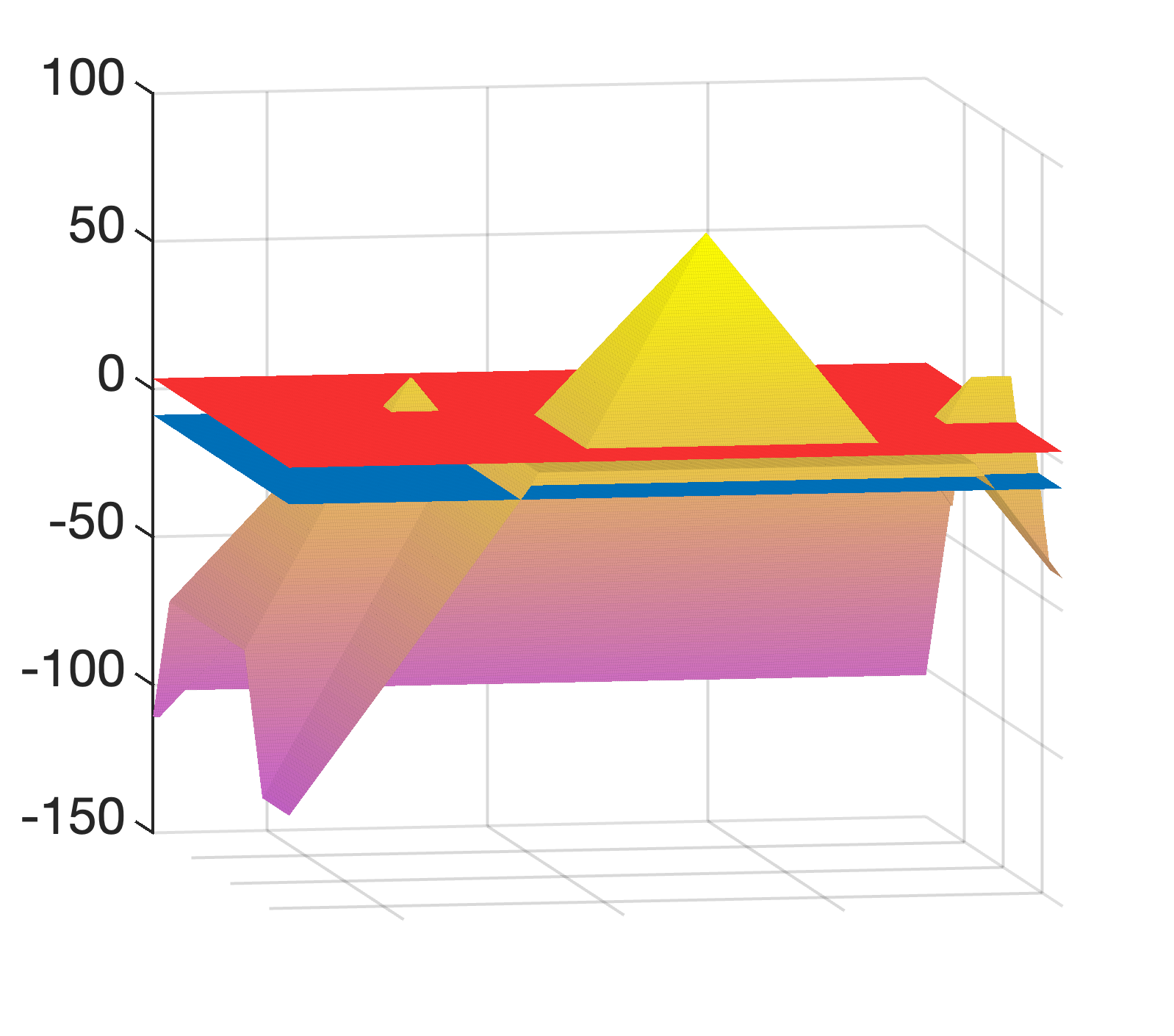}\\
	\includegraphics[width=0.24\textwidth]{../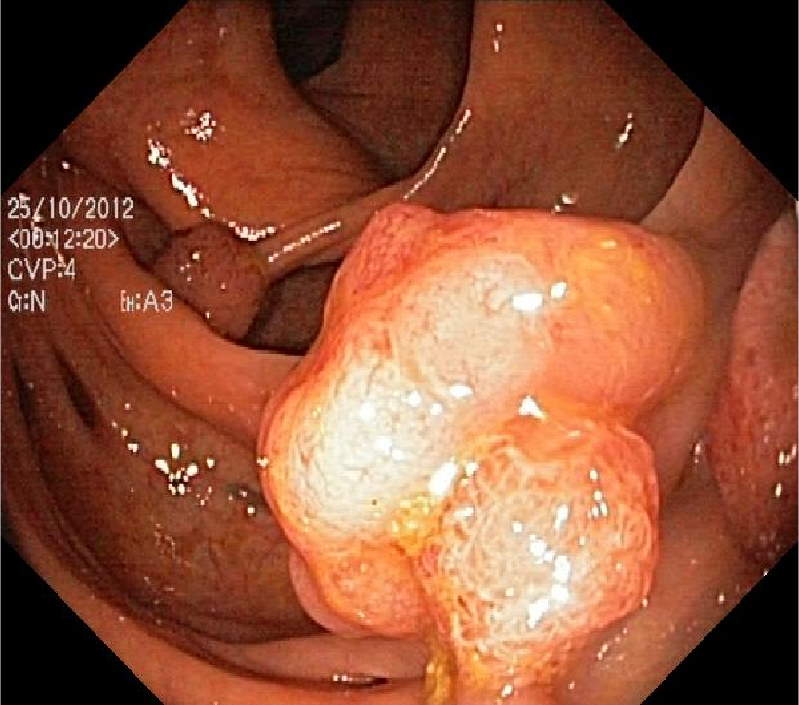}
	\includegraphics[width=0.24\textwidth]{../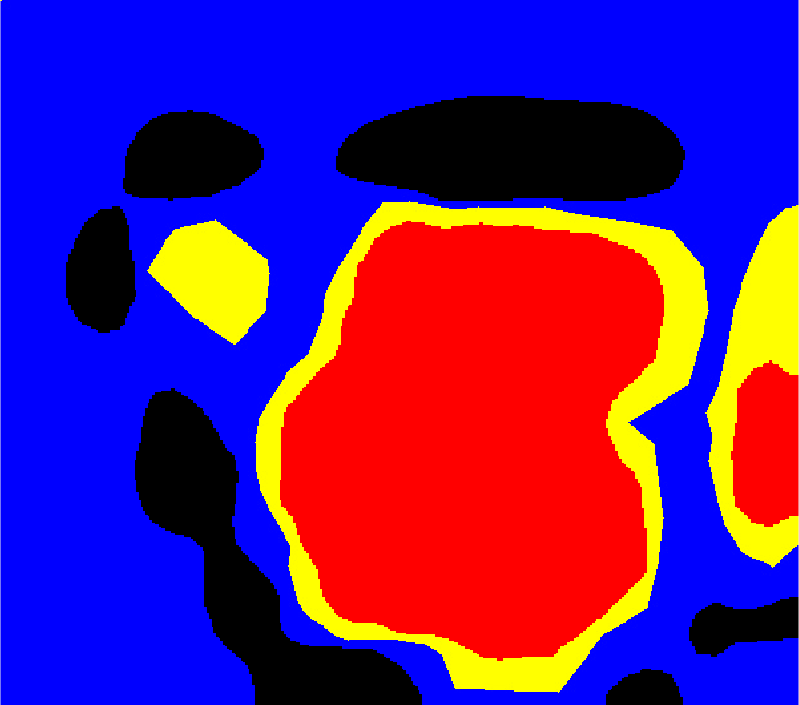}
	\includegraphics[width=0.24\textwidth]{../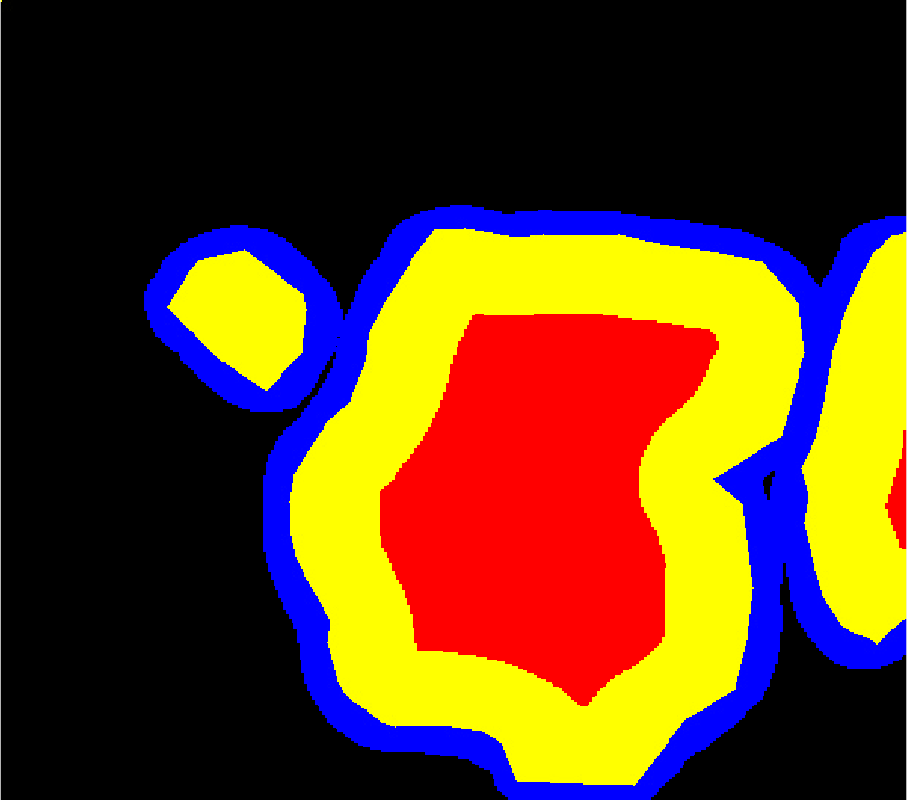}
	\includegraphics[width=0.24\textwidth]{../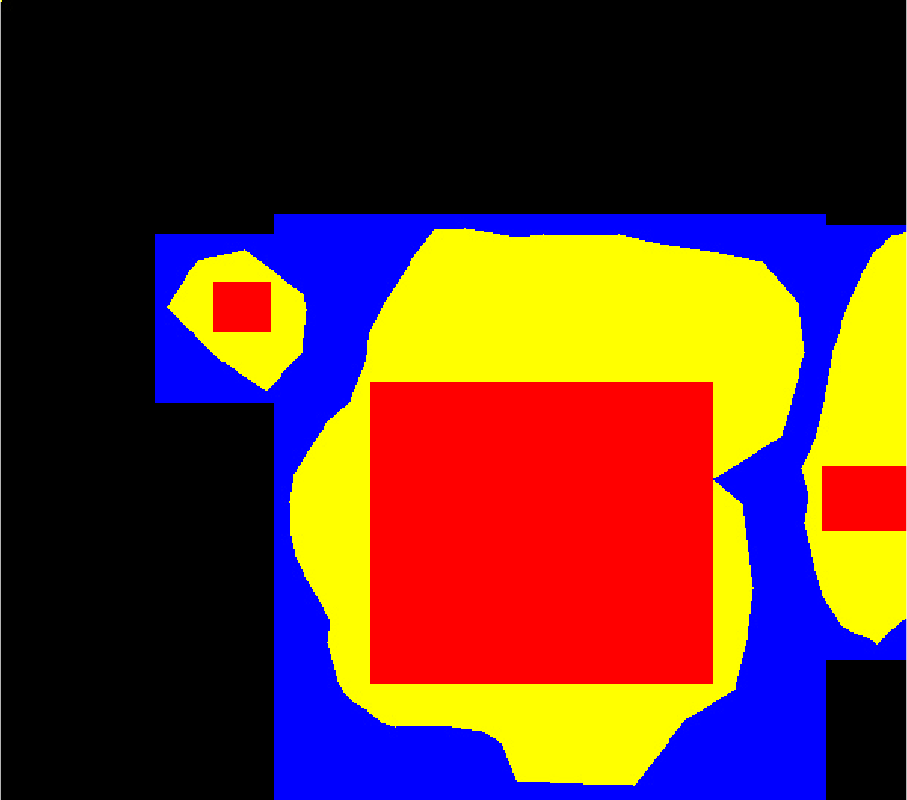}
\end{center}
	\caption{Illustrating the performance of the different score transformations on the learning dataset. We display 2 example tumors and present the results of each in 8 panels. These panels are as follows. Bottom left: the original image of the polpys tumor. Top Left: an intensity plot of the scores obtained from PraNet with purple/yellow indicating areas of lower/higher assigned probability. For the remaining panels, 3 different score transformations are shown which from left to right are the original scores, distance transformed scores $d_\rho(\hat{M}(X), v)$ and bounding box scores (obtained using the combined bounding box score $b_M$ defined in Definition \ref{dfn:BBS}). In each of the panels on the top row a surface plot of the transformed PraNet scores is shown, along with the conformal thresholds which are used to obtain the marginal 90\% inner and outer confidence sets.  These thresholds are illustrated via red and blue planes respectively and are obtained over the learning dataset. The panels on the bottom row of each example show the corresponding conformal confidence sets. Here the inner set is shown in red, plotted over the ground truth mask of the polyps, shown in yellow, plotted over the outer set which is shown in blue. The outer set contains the ground truth mask which contains the inner set in all examples. From these figures we see that the original scores provide tight inner confidence sets and the distance transformed scores instead provide tight outer confidence sets. The conclusion from the learning dataset is therefore that it makes sense to combine these two score transformations.}
	\label{fig:learning}
\end{figure}


\subsection{Illustrating the performance of conformal confidence sets}\label{SS:val}
In order to illustrate the full extent of our methods in practice we divide the set aside 1500 images at random into 1000 for conformal calibration, and 500 for testing. The resulting conformal confidence sets for 10 example images from the test dataset are shown in Figure \ref{fig:res}, with inner sets obtained using the original scores and outer sets using the distance transformed scores. The inner sets are shown in red and represent regions where we can have high confidence of the presence of polyps. The outer sets are shown in blue and represent regions in which the polpys may be. The ground truth mask for each polpys is shown in yellow and can be compared to the original images. In each of the examples considered the ground truth is bounded from within by the inner set and from without by the outer set. 
\begin{figure}[h!]
	\begin{subfigure}{0.19\textwidth}
		\centering
		\includegraphics[width=\textwidth]{../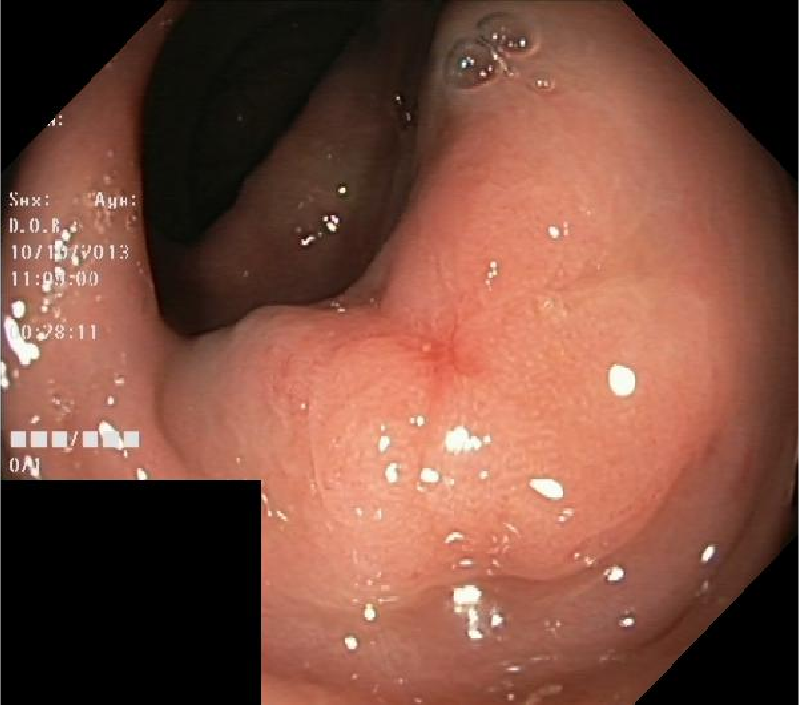}
		\label{fig:1}
	\end{subfigure}
	\begin{subfigure}{0.19\textwidth}
		\centering
		\includegraphics[width=\textwidth]{../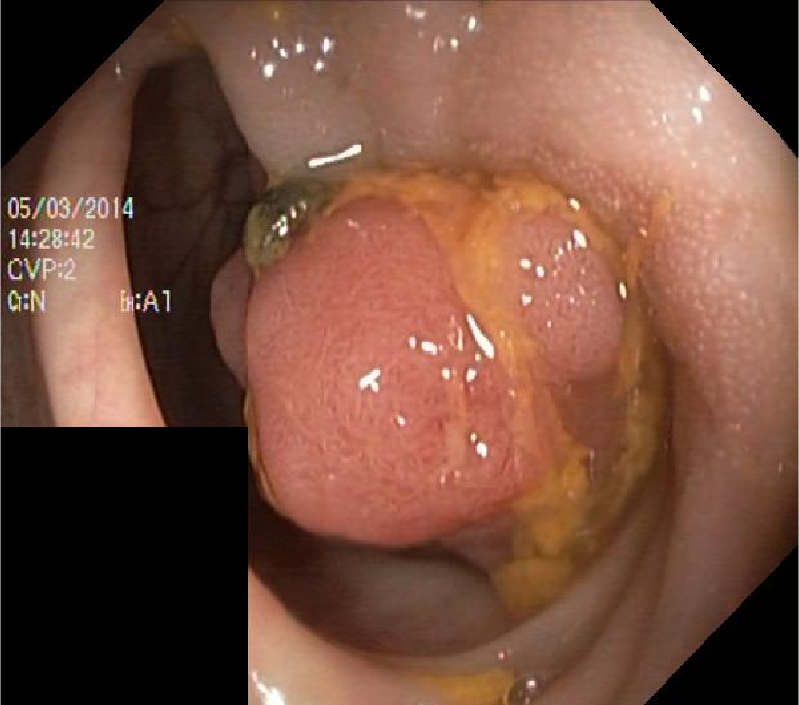}
		\label{fig:1}
	\end{subfigure}
	\begin{subfigure}{0.19\textwidth}
		\centering
		\includegraphics[width=\textwidth]{../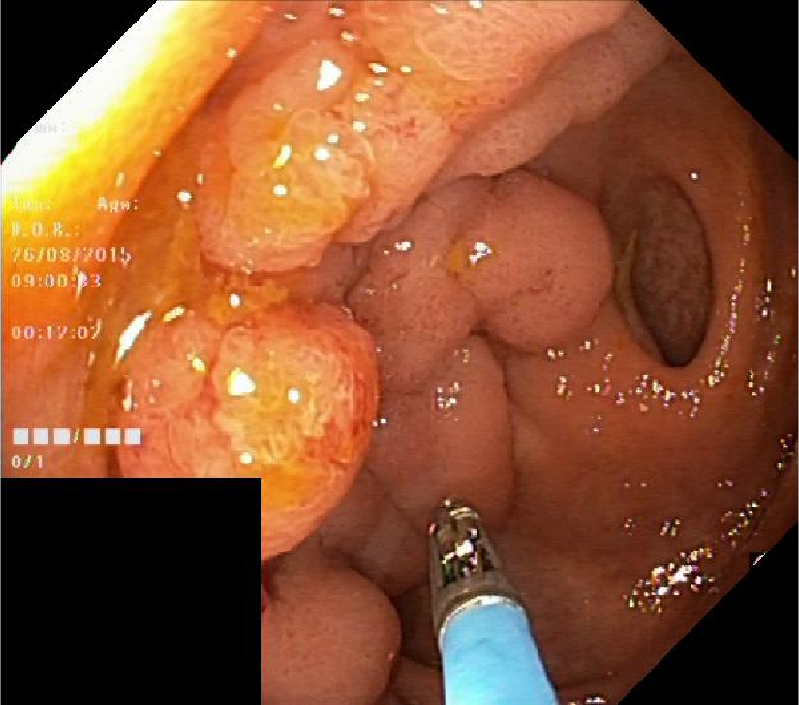}
		\label{fig:1}
	\end{subfigure}
	\begin{subfigure}{0.19\textwidth}
		\centering
		\includegraphics[width=\textwidth]{../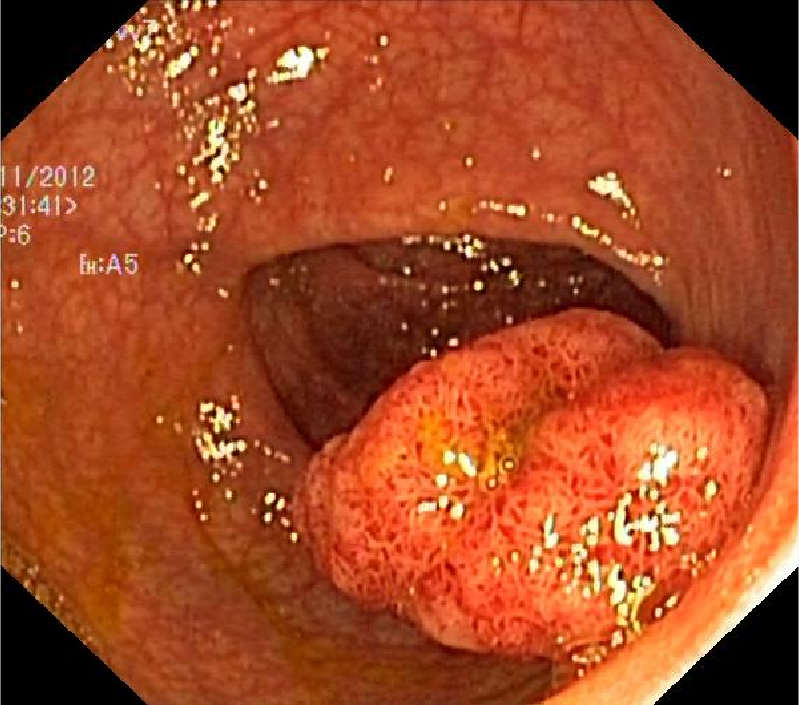}
		\label{fig:1}
	\end{subfigure}
	\begin{subfigure}{0.19\textwidth}
		\centering
		\includegraphics[width=\textwidth]{../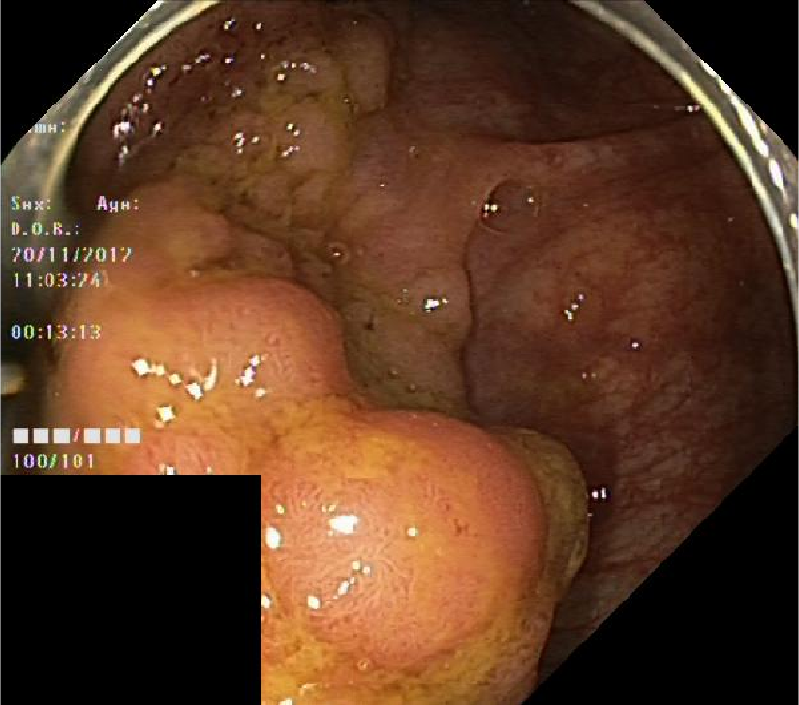}
		\label{fig:1}
	\end{subfigure}
	\vspace{-0.35cm}
	\\
	\begin{subfigure}{0.19\textwidth}
		\centering
		\includegraphics[width=\textwidth]{../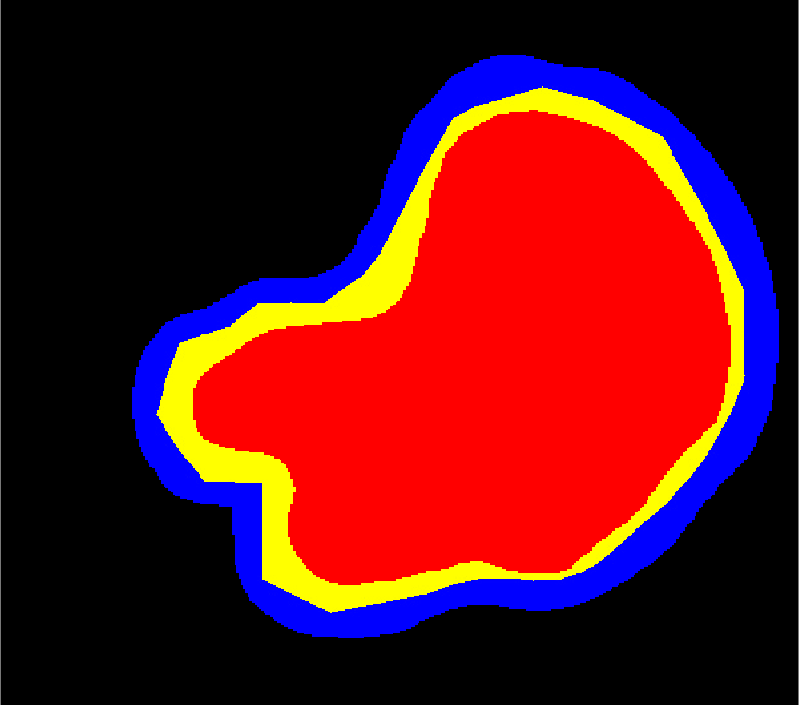}
		\label{fig:1}
	\end{subfigure}
	\begin{subfigure}{0.19\textwidth}
		\centering
		\includegraphics[width=\textwidth]{../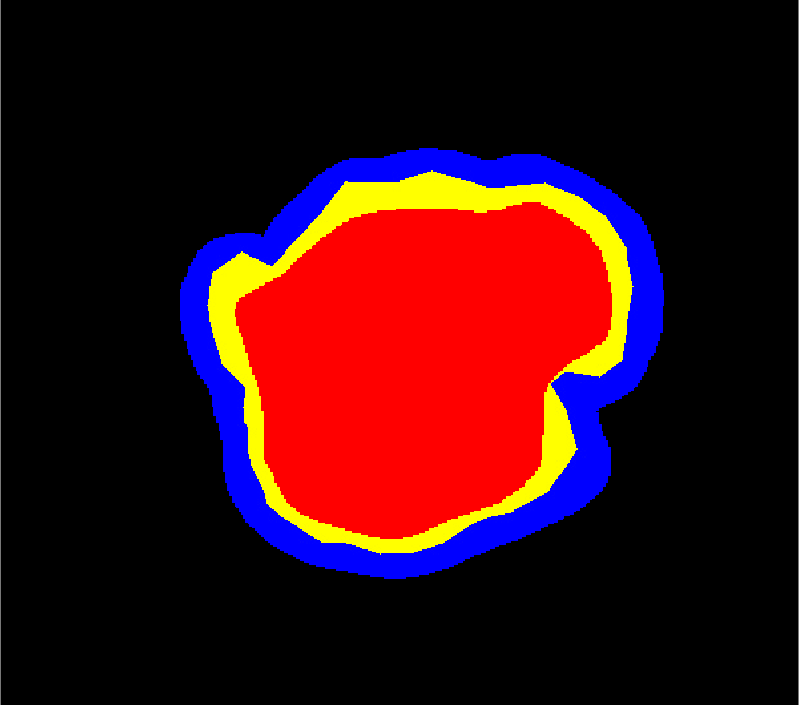}
		\label{fig:1}
	\end{subfigure}
	\begin{subfigure}{0.19\textwidth}
		\centering
		\includegraphics[width=\textwidth]{../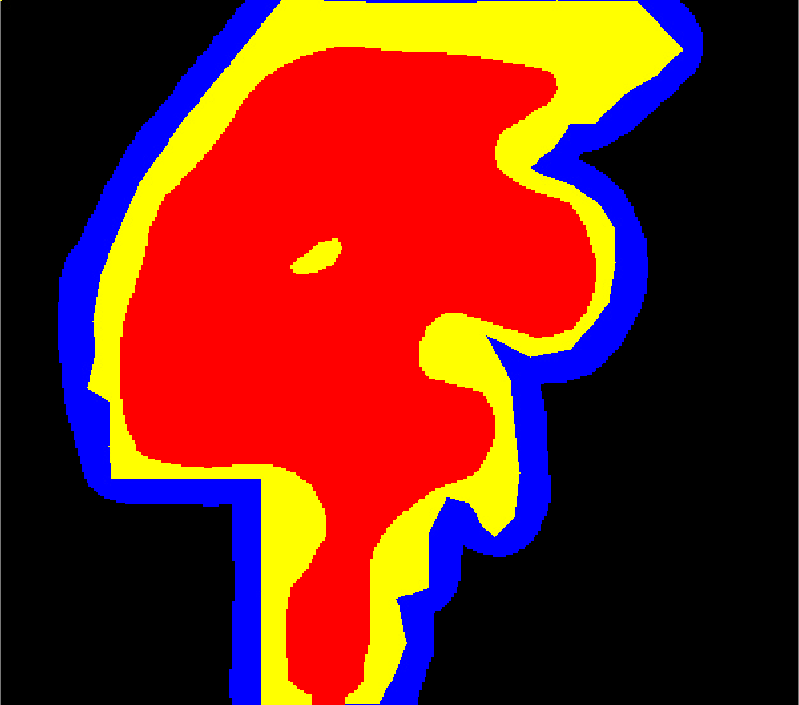}
		\label{fig:1}
	\end{subfigure}
	\begin{subfigure}{0.19\textwidth}
		\centering
		\includegraphics[width=\textwidth]{../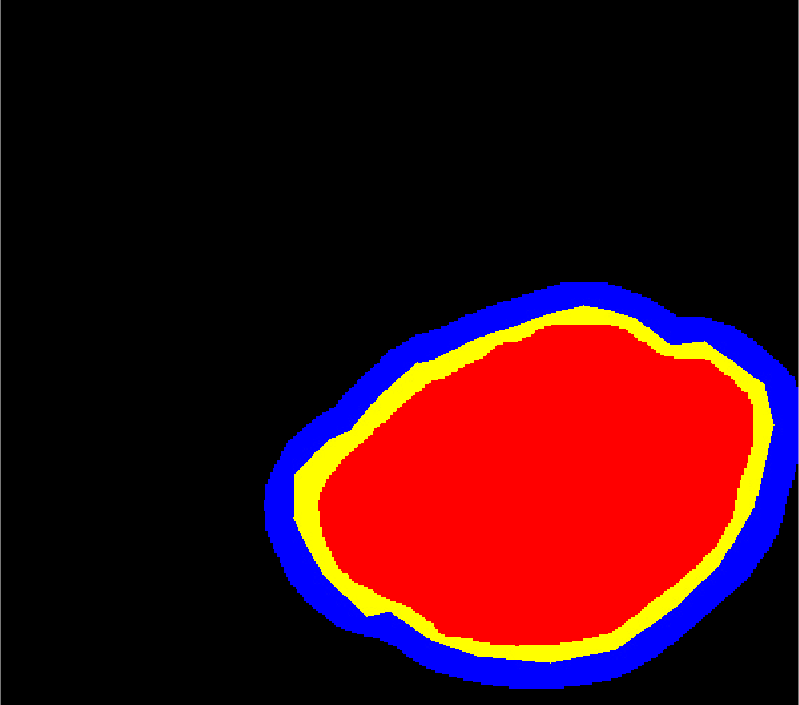}
		\label{fig:1}
	\end{subfigure}
	\begin{subfigure}{0.19\textwidth}
		\centering
		\includegraphics[width=\textwidth]{../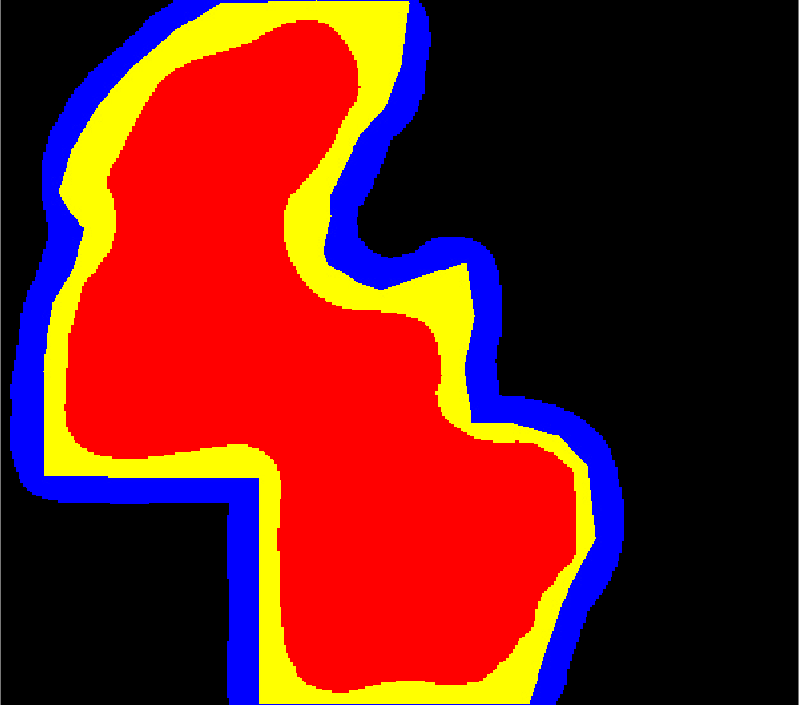}
		\label{fig:1}
	\end{subfigure}
	\vspace{-0.35cm}
	\\
		\begin{subfigure}{0.19\textwidth}
		\centering
		\includegraphics[width=\textwidth]{../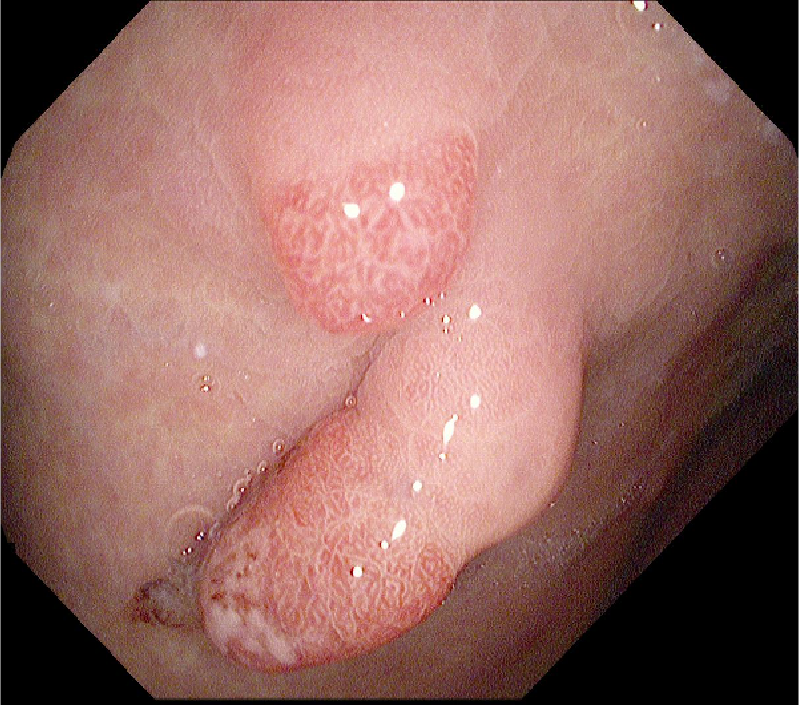}
		\label{fig:1}
	\end{subfigure}
	\begin{subfigure}{0.19\textwidth}
		\centering
		\includegraphics[width=\textwidth]{../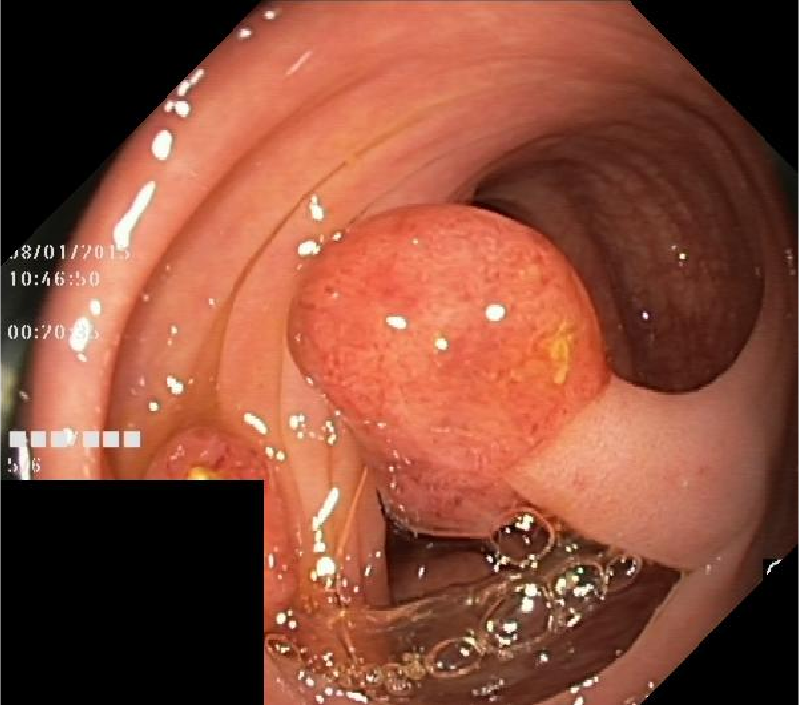}
		\label{fig:1}
	\end{subfigure}
	\begin{subfigure}{0.19\textwidth}
		\centering
		\includegraphics[width=\textwidth]{../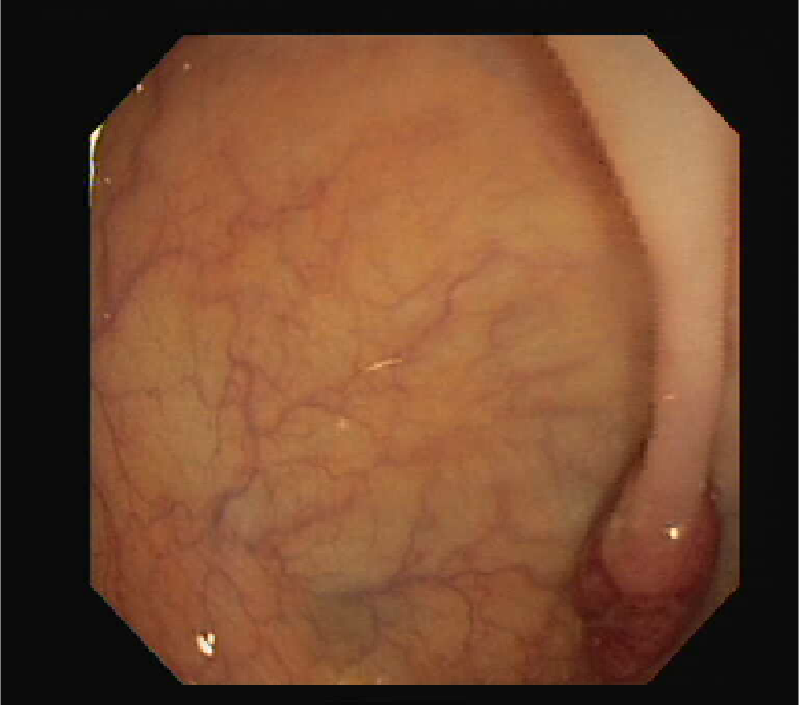}
		\label{fig:1}
	\end{subfigure}
	\begin{subfigure}{0.19\textwidth}
		\centering
		\includegraphics[width=\textwidth]{../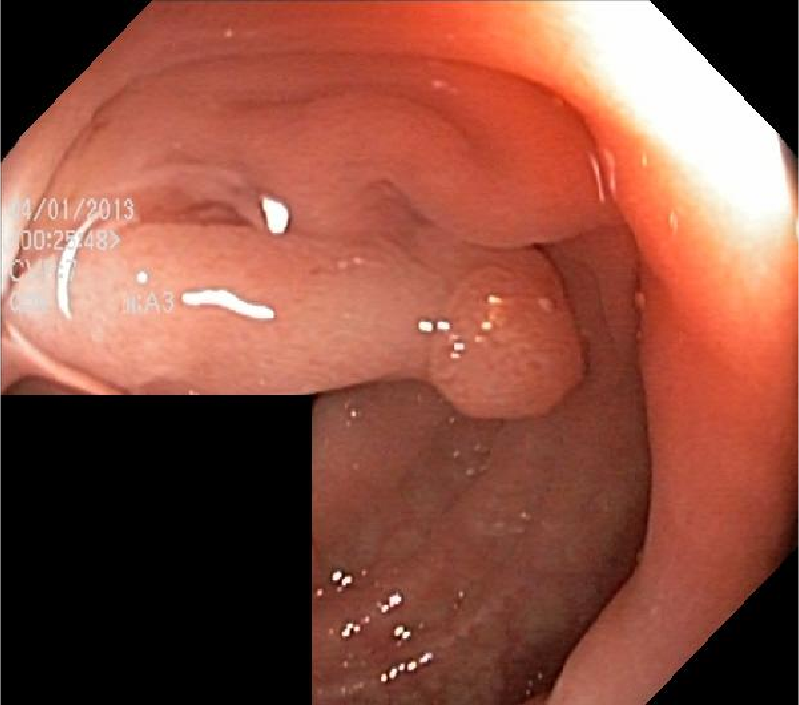}
		\label{fig:1}
	\end{subfigure}
	\begin{subfigure}{0.19\textwidth}
		\centering
		\includegraphics[width=\textwidth]{../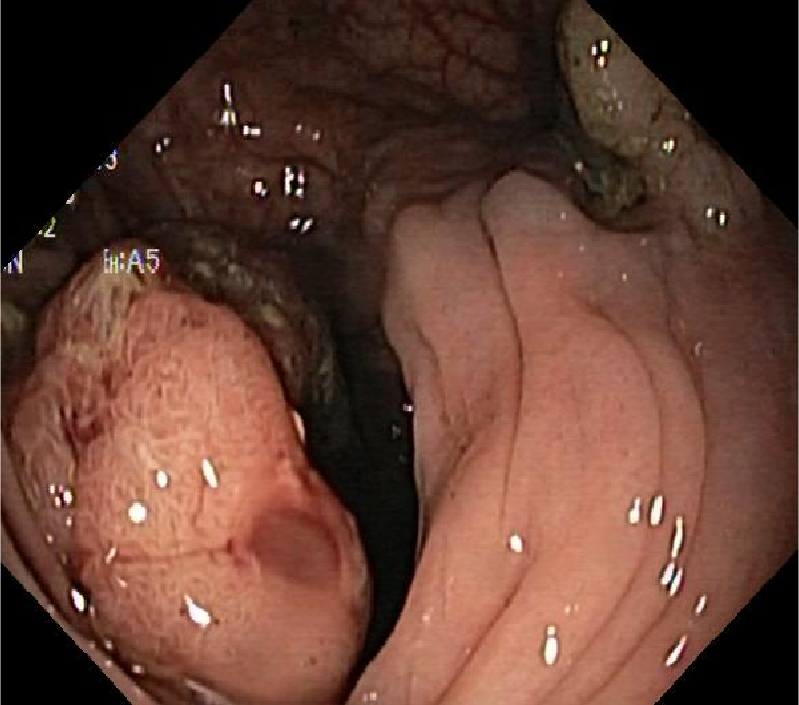}
		\label{fig:1}
	\end{subfigure}
	\vspace{-0.35cm}
	\\
	\begin{subfigure}{0.19\textwidth}
		\centering
		\includegraphics[width=\textwidth]{../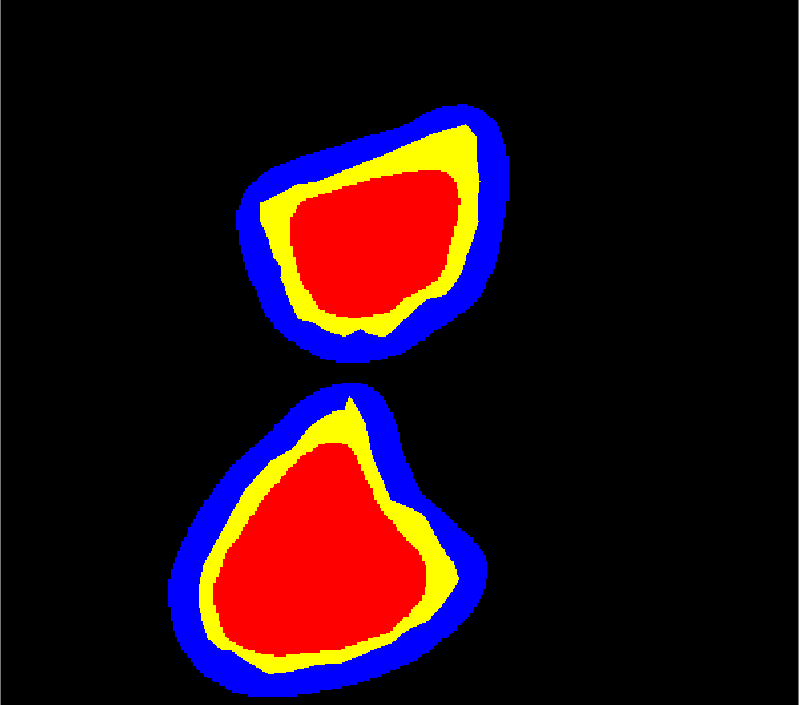}
		\label{fig:1}
	\end{subfigure}
	\begin{subfigure}{0.19\textwidth}
		\centering
		\includegraphics[width=\textwidth]{../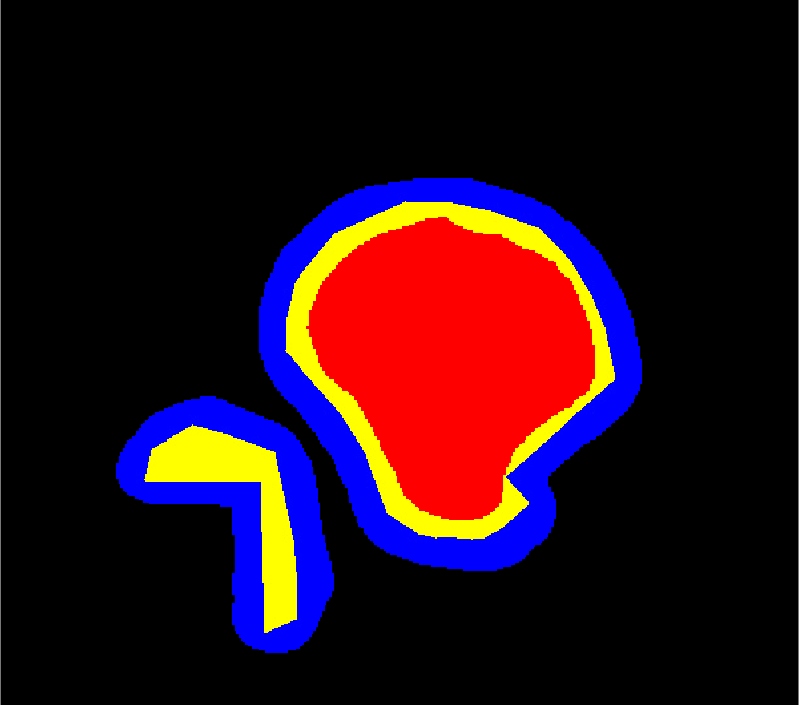}
		\label{fig:1}
	\end{subfigure}
	\begin{subfigure}{0.19\textwidth}
		\centering
		\includegraphics[width=\textwidth]{../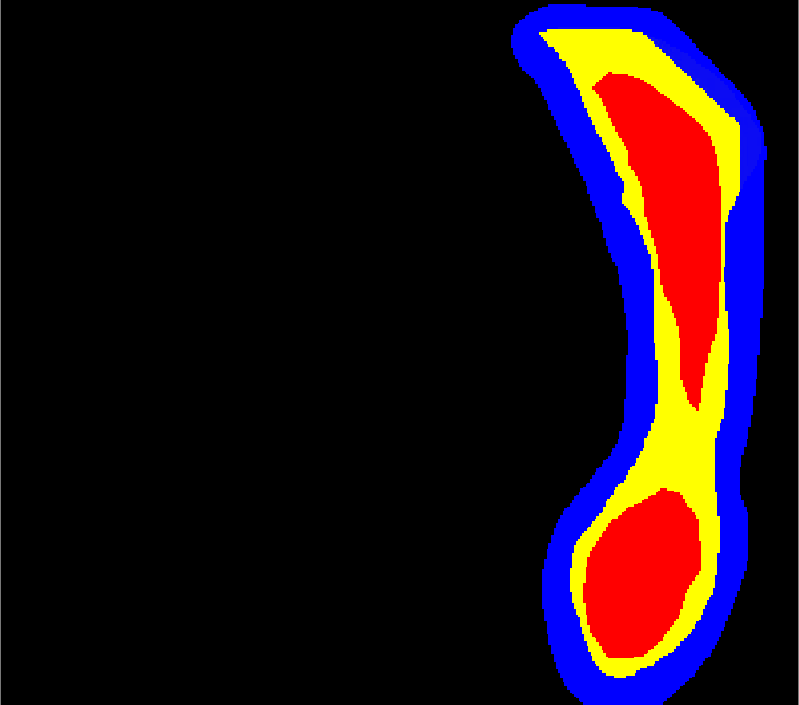}
		\label{fig:1}
	\end{subfigure}
	\begin{subfigure}{0.19\textwidth}
		\centering
		\includegraphics[width=\textwidth]{../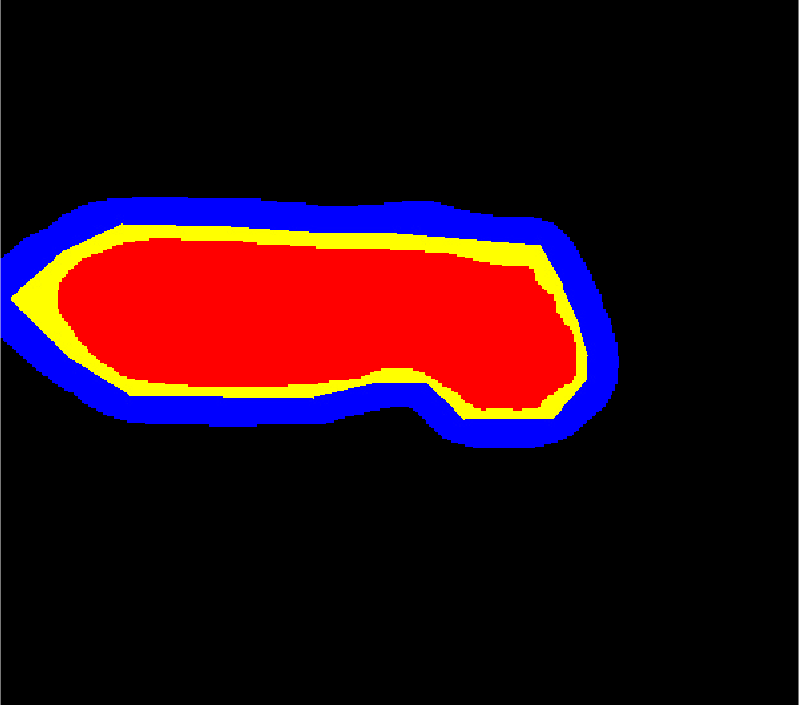}
		\label{fig:1}
	\end{subfigure}
	\begin{subfigure}{0.19\textwidth}
		\centering
		\includegraphics[width=\textwidth]{../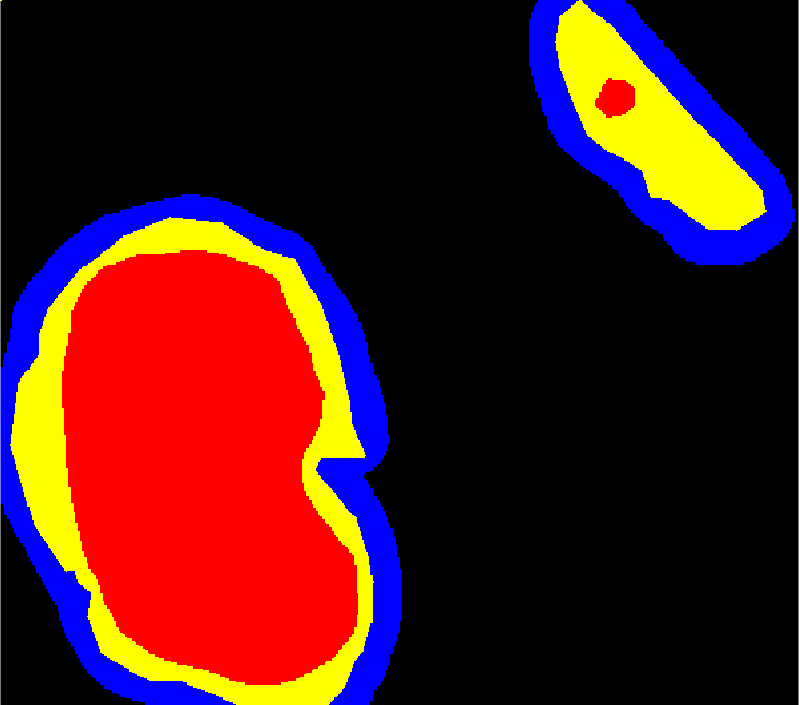}
		\label{fig:1}
	\end{subfigure}
	\label{fig:grid}
	\caption{Conformal confidence sets for the polyps data. For each set of polpys images the top row shows the original endoscopic images with visible polyps and the second row presents the marginal 90\% confidence sets, with ground truth masks shown in yellow. The inner sets and outer sets are shown in red and blue, obtained using the identity and distance transforms respectively. The figure shows the benefits of combining different score transformations for the inner and outer sets and illustrates the method's effectiveness in accurately identifying polyp regions whilst providing informative spatial uncertainty bounds.}\label{fig:res}
\end{figure}
Results for confidence sets based on the original and bounding box scores as well as additional examples are available in Figures \ref{fig:polpysex} and \ref{fig:polpysex2}. Confidence sets can also be provided for the bounding boxes themselves if that is the object of interest, see Figure \ref{fig:resbb}. Joint $90\%$ confidence sets are displayed in Figure \ref{fig:joint}, from which we can see that with alpha-weighting (i.e. taking $\alpha_1 = 0.02$ and $\alpha_2 = 0.08$) we are able to obtain joint confidence sets which are still relatively tight.

These results collectively show that we can provide informative confidence bounds for the location of the polpys and allow us to use the PraNet segmentation model with uncertainty guarantees. From Figure \ref{fig:res} we can see that the method, which combines the original and the transformed scores, effectively delineates polyps regions. These results also help to make us aware of the limitations of the model, allowing medical practioners to follow up on outer sets which do not contain inner sets  in order to determine whether a tumor is present. Improved uncertainty quantification would require an improved segmentation model. 

More precise results can be obtained at the expense of probabilistic guarantees, see Figures \ref{fig:joint2} and \ref{fig:joint3}. A trade off must be made between precision and confidence. The most informative confidence level can be determined in advance based on the learning dataset and the desired type of coverage.


\subsection{Measuring the coverge rate}\label{SS:cov}
In this section we run validations to evaluate the false coverage rate of our approach. To do so we take the set aside 1500 images and run 1000 validations, in each validation dividing the data into 1000 calibration and 500 test images. In each division we calculate the conformal confidence sets using the different score transformations, based on thresholds derived from the calibration dataset, and evaluate the coverage rate on the test dataset. We average over all 1000 validations and present the results in Figure \ref{fig:coverage}. Histograms for the $90\%$ coverage obtained over all validation runs are shown in Figure \ref{fig:valhist}. From these results we can see that for all the approaches the coverage rate is controlled at or above the nominal level as desired. Using the bounding box scores results in slight over coverage at lower confidence levels. This is likely due to the discontinuities in the score functions $b_I$ and $b_O$. 
\begin{figure}
	\begin{center}
		\includegraphics[width=0.4\textwidth]{../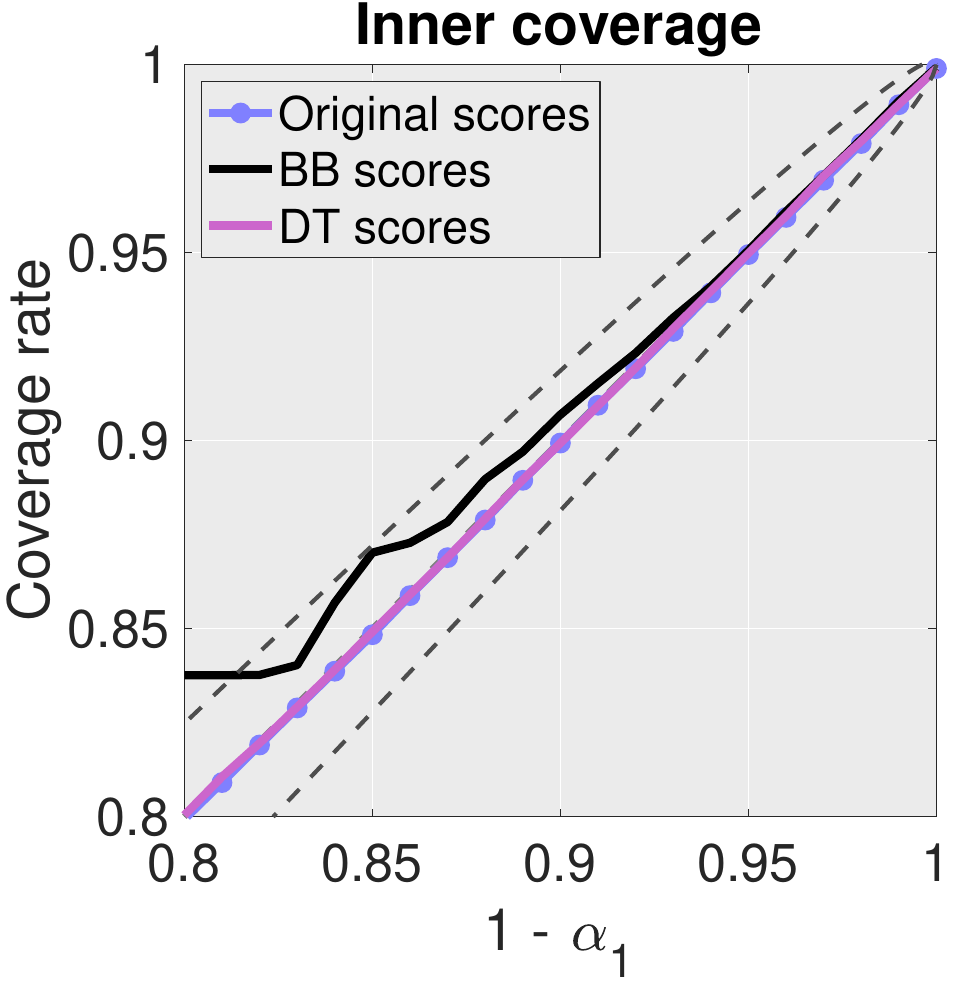}
		\quad\quad
		\includegraphics[width=0.4\textwidth]{../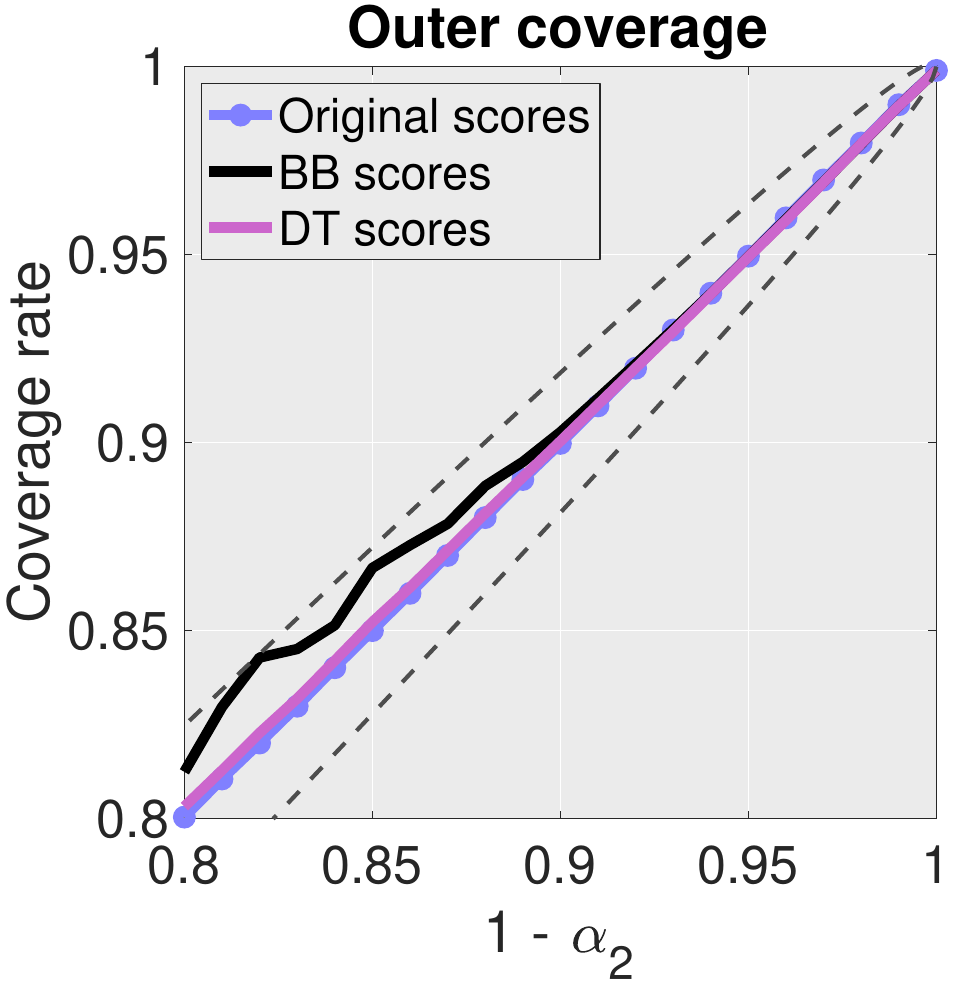}
	\end{center}
	\caption{Coverage levels of the inner and outer sets averaged over 1000 validations for the original, distance transformed (DT) and bounding box (BB) scores.}\label{fig:coverage}
\end{figure}


\subsection{Comparing the efficiency of the bounds}
In this section we compare the efficiency of the confidence sets based on the different score transformations. To do so we run 1000 validations in each dividing and calibrating as in Section \ref{SS:cov}. For each run we compute the ratio between the diameter of the inner set and the diameter of the ground truth mask and average this ratio over the 500 test images. In order to make a smooth curve we average this quantity over all 1000 runs. A similar calculation is performed for the outer set. The results are shown in Figure \ref{fig:efficiency}. They show that the inner confidence sets produced by using the original scores are the most efficient. Instead, for the outer set, the distance transformed scores perform best. These results match the observations made on the learning dataset in Section \ref{SS:learn} and the results found in Section \ref{SS:val}.

We repeat this procedure instead targeting the proportion of the entire image which is under/over covered by the respective confidence sets. The results are shown in Figure \ref{fig:efficiency2} and can be interpreted similarly. 

\begin{figure}
	\begin{center}
			\includegraphics[width=0.45\textwidth]{../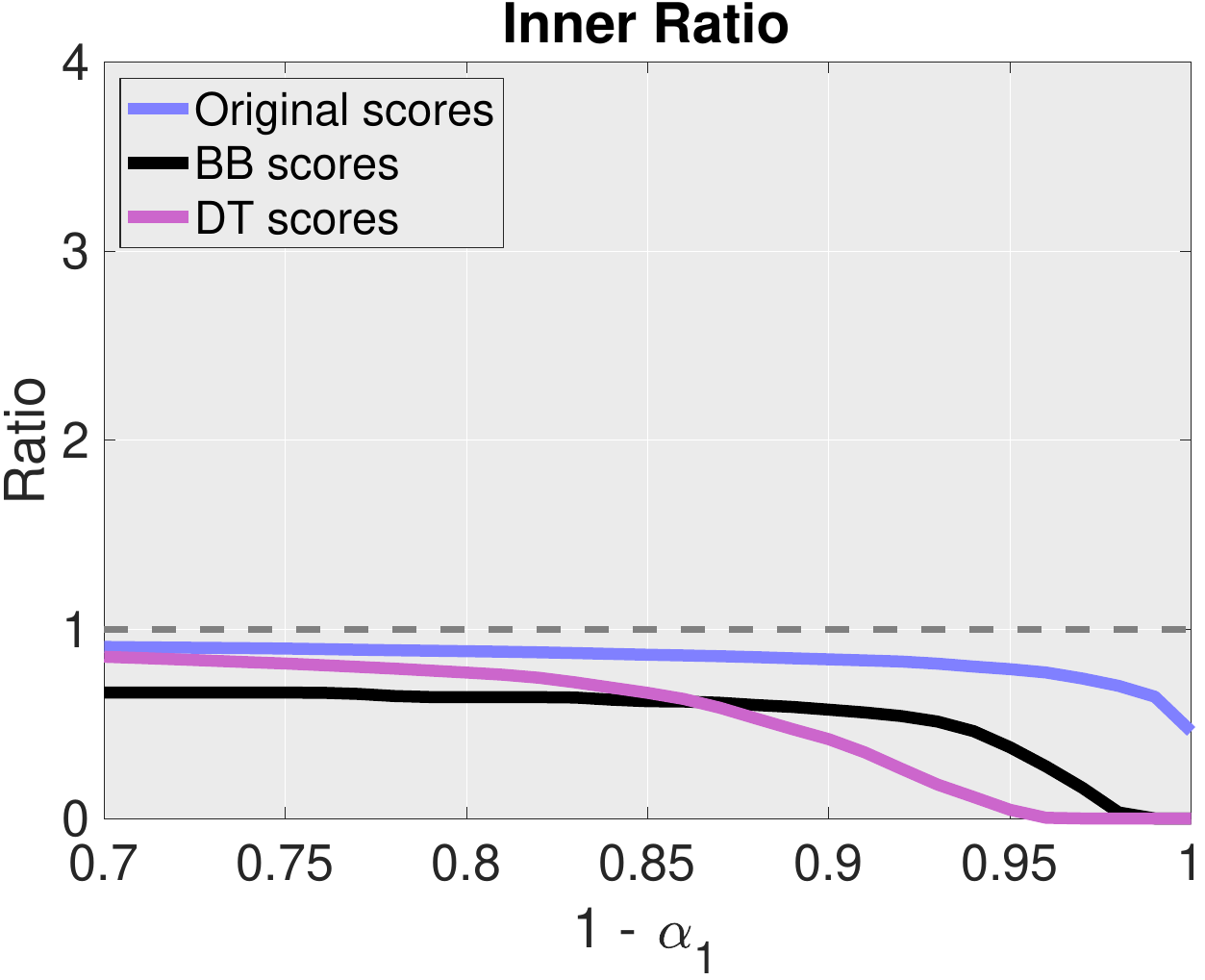}
			\quad\quad
		\includegraphics[width=0.45\textwidth]{../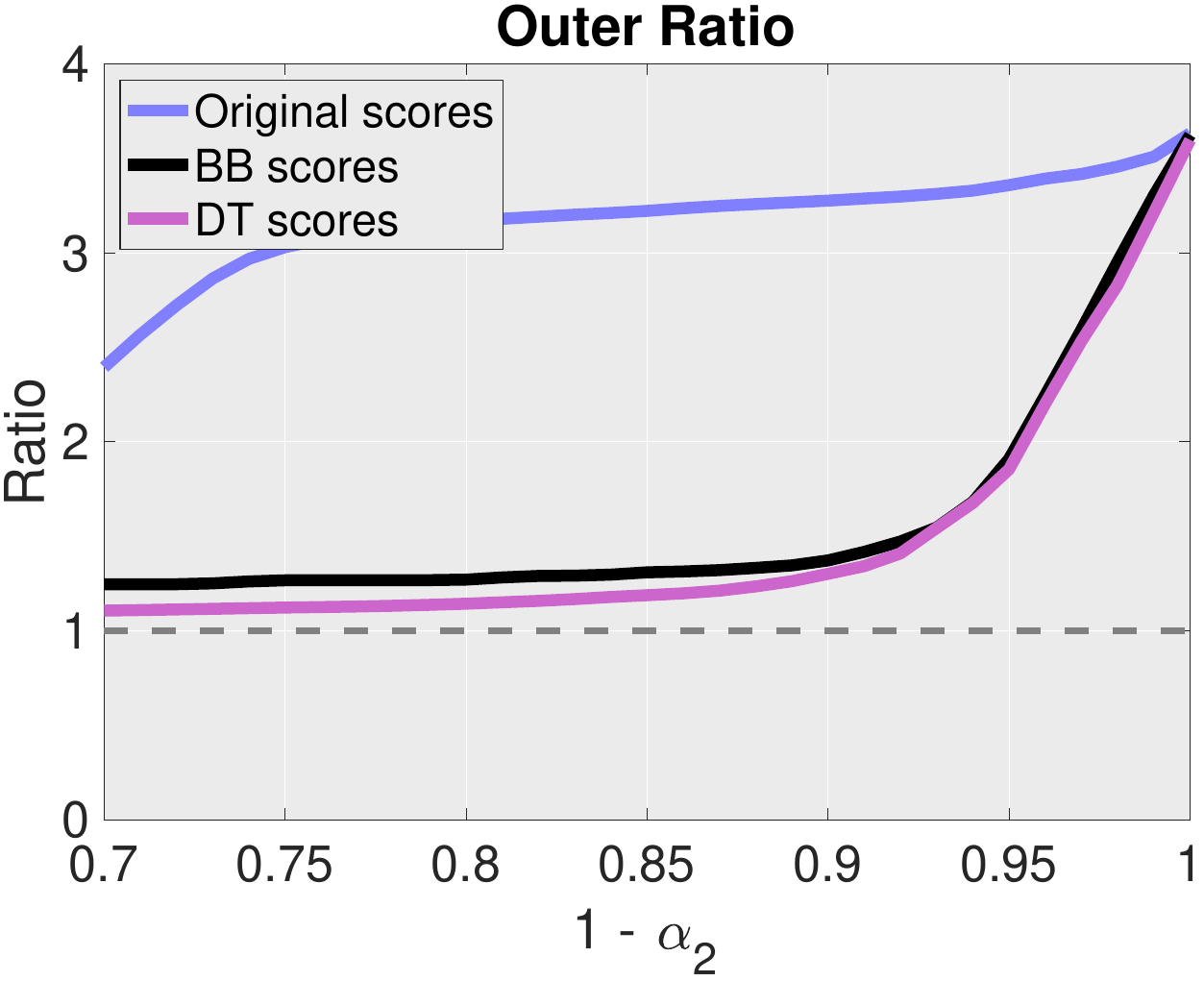}
	\end{center}
	\caption{Measuring the efficiency of the bound using the ratio of the diameter of the coverage set to the diameter of the true tumor mask. The closer the ratio is to one the better. Higher coverage rates lead to a lower efficiency. The original scores provide the most efficient inner sets and the distance transformed scores provide the most efficient outer sets.}\label{fig:efficiency}
\end{figure}

\begin{figure}
	\begin{center}
		\includegraphics[width=0.46\textwidth]{../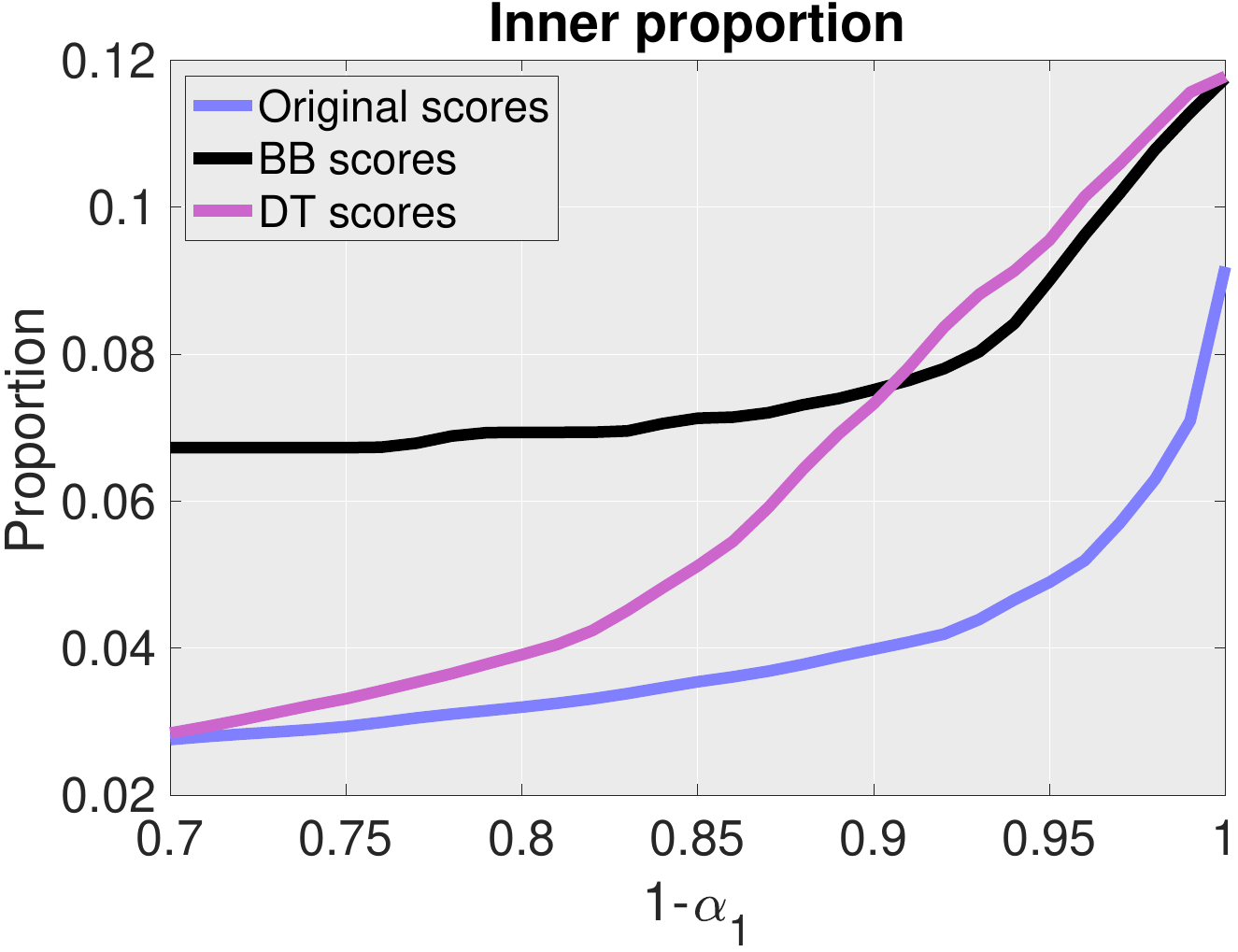}
		\quad\quad
		\includegraphics[width=0.45\textwidth]{../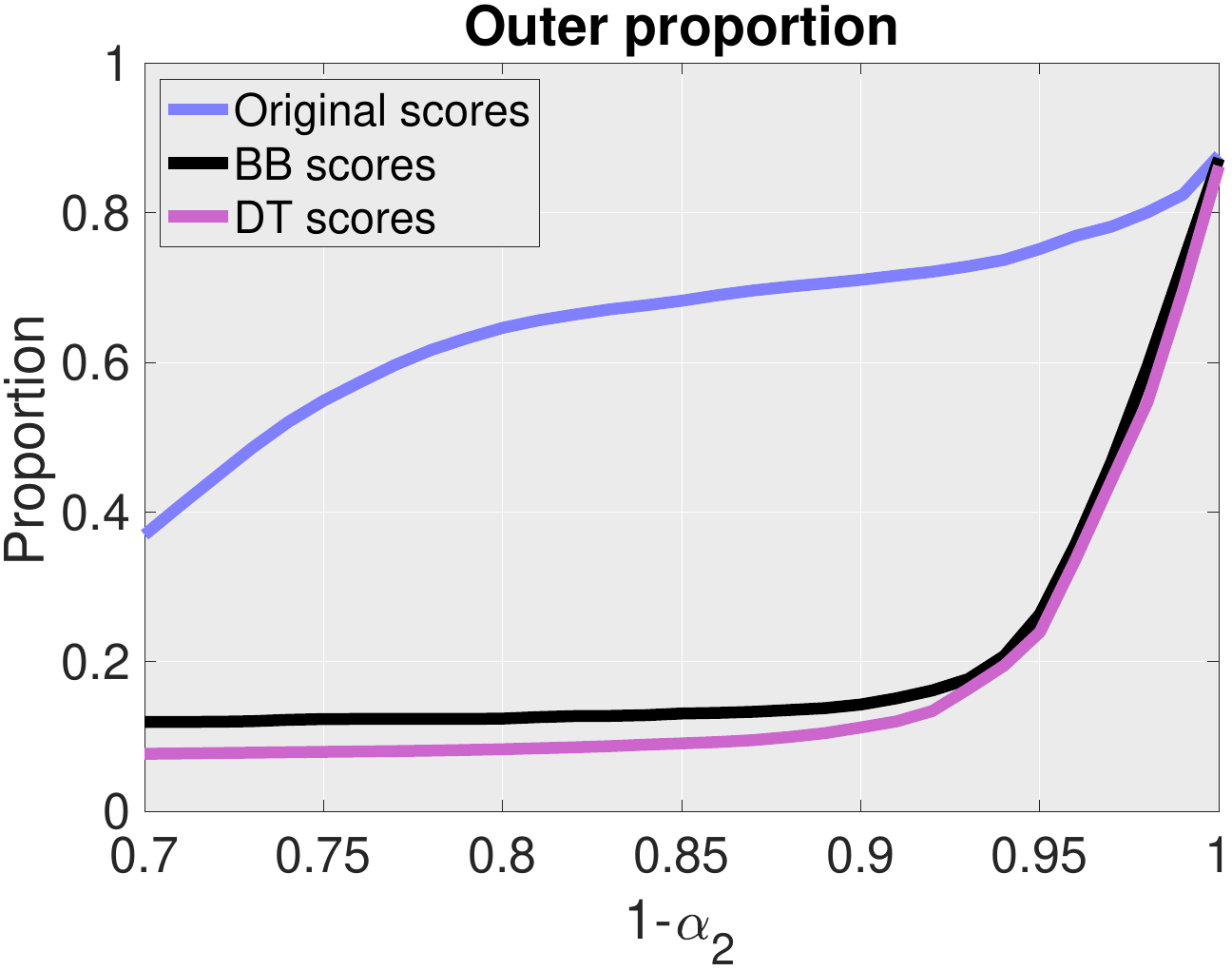}
	\end{center}
	\caption{Measuring the proportion of the entire image which is under/over covered by the respective confidence sets. Left: proportion of the image which lies within the true mask but outside of the inner set. Right: proportion of the image which lies within the confidence set but outside of the true mask. For both a lower proportion corresponds to increased precision. }\label{fig:efficiency2}
\end{figure}




\section{Discussion}
In this work, we have developed conformal confidence sets which offer probabilistic guarantees for the output of a black box image segmentation model and provide tight bounds. Our work helps to address the lack of formal uncertainty quantification in the application of deep neural networks to medical imaging which has limited the reliability and adoption of these models in practice. The use of improved neural networks which can better separate the scores within and outside the ground truth masks would lead to more precise confidence sets and optimizing this is an important area of research. We have here established validity guarantees and additionally showed that these can be used to theoretically justify a modified version of the max-additive bounding box based method of \cite{Andeol2023}. 

The use of the distance transformed scores was crucial in providing tight outer confidence bounds as the original neural network is by itself unable to reliably determine where the tumors end with certainty.  The distance transformation penalizes regions away from the predicted mask, allowing the tumors to be distinguished from the background. In other datasets and model settings, other transformations may be appropriate. As such we strongly recommend the use of a learning dataset in order to calibrate the transformations and maximize precision of the resulting confidence bounds.

The confidence sets we develop in this paper are related in spirit to work on uncertainty quantification for spatial excursion sets (\cite{chen2017density}, \cite{Bowring2019}, \cite{Mejia2020}). These approaches instead assume that multiple observations from a signal plus noise model are observed and perform inference on the underlying signal rather than prediction. Unlike conformal inference these approaches rely on central limit theorems or distributional assumptions in order to provide spatial confidence regions with asymptotic coverage guarantees. 

\section*{Availability of code}
\vspace{-0.1cm}
Matlab code to implement the methods of this paper and a demo on a downscaled version of the data is available in the supplementary material. The code is very fast: calculating inner and outer thresholds (over the 1000 images in the calibration set) requires approximately 0.03 seconds on the downscaled data on a standard laptop (Apple M3 chip with 16 GB RAM) and 2.64 seconds for the original dataset.


\section*{Acknowledgements}
I'm grateful to Habib Ganjgahi at the Big Data Institute at the University of Oxford for useful conversations on this topic. I'm also very grateful to Armin Schartzman at the University of San Diego, California for generous funding and support via NIH grant R01EB026859.

\bibliographystyle{plainnat}
\bibliography{./bibfiles/RFT,./bibfiles/MachineLearning,./bibfiles/MHT,./bibfiles/Statistics,./bibfiles/TomsPapers,./bibfiles/Genetics,./bibfiles/fMRI,./bibfiles/Books}

\newgeometry{top=1.5cm, bottom=1.5cm, left=2.7cm, right=2.5cm}  
\appendix
\section{Appendix}
\renewcommand{\thefigure}{A\arabic{figure}}
\subsection{Obtaining conformal confidence sets with increasing combination functions}\label{A:CF}
As discussed in Remark \ref{rmk:max} the results of Sections \ref{SS:MCS} and \ref{SS:joint} can be generalized to a wider class of combination functions. 
\begin{definition}
	We define a suitable combination function to be a function $C: \mathcal{P}(\mathcal{V}) \times \mathcal{X} \rightarrow \mathbb{R}$  which is increasing in the sense that for all sets $\mathcal{A} \subseteq \mathcal{V}$ and each $v \in \mathcal{A} $, $C(v, X) \leq C(\mathcal{A}, X)$ for all $X \in \mathcal{X}$. 
\end{definition}
The maximum is a suitable combination function since $X(v) = \max_{v \in \lbrace v \rbrace } X(v) \leq \max_{v \in \mathcal{A}} X(v)$. As such this framework directly generalizes the results of the main text. 

We can construct generalized marginal confidence sets as follows.
\begin{theorem}\label{thm:innergen}
	(Marginal inner set)
	Under Assumptions \ref{ass:ex} and \ref{ass:indep}, given $\alpha_1 \in (0,1)$, define 
	\begin{equation*}
		\lambda_I(\alpha_1) = \inf\left\lbrace \lambda: \frac{1}{n} \sum_{i = 1}^n 1\left[ C(\lbrace v \in \mathcal{V}: Y_i(v) = 1\rbrace, f_I(s(X_i))) \leq \lambda \right] \geq \frac{\lceil (1-\alpha_1)(n+1) \rceil}{n} \right\rbrace,
	\end{equation*}
 	for a suitable combination function $C$, and define $I(X) = \lbrace v \in \mathcal{V}: C(v, f_I(s(X))) >\lambda_I(\alpha_1)  \rbrace $. Then,
	\begin{equation}\label{eq:probstat}
		\mathbb{P}\left( I(X_{n+1}) \subseteq\lbrace v\in \mathcal{V}: Y_{n+1}(v) = 1 \rbrace \right) \geq 1 - \alpha_1.
	\end{equation}
\end{theorem}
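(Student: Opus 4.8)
The plan is to reproduce the proof of Theorem~\ref{thm:inner} with the maximum replaced by the general suitable combination function $C$. Write $T_i := C(\{v \in \mathcal{V}: Y_i(v) = 0\}, f_I(s(X_i)))$ for the calibration statistic (see the closing remark on the conditioning set). As in the scalar case the argument splits into an exchangeability/quantile half that is entirely model-free and a deterministic set-containment half that is the only place the structure of $C$ enters; so almost all of the original proof is reused verbatim and only the last implication is new.

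First I would transfer exchangeability to the statistics. Under Assumptions~\ref{ass:ex} and~\ref{ass:indep} the pairs $(X_i, Y_i)_{i=1}^{n+1}$ are exchangeable and independent of the fixed maps $s, f_I, C$; since each $T_i$ is the image of $(X_i, Y_i)$ under one common deterministic functional $(X,Y) \mapsto C(\{v: Y(v)=0\}, f_I(s(X)))$, the sequence $(T_i)_{i=1}^{n+1}$ inherits exchangeability. By construction $\lambda_I(\alpha_1)$ is the upper $\alpha_1$ empirical quantile of the padded sample $(T_i)_{i=1}^{n} \cup \{+\infty\}$, so Lemma~1 of \cite{Tibshirani2019} yields $\mathbb{P}(T_{n+1} \le \lambda_I(\alpha_1)) \ge 1 - \alpha_1$, exactly as in Theorem~\ref{thm:inner}.

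The substantive step converts this event into the containment. On $\{T_{n+1} \le \lambda_I(\alpha_1)\}$, set $\mathcal{A} = \{v: Y_{n+1}(v) = 0\}$ and apply the increasing property of $C$ to the set $\mathcal{A}$: for every $u \in \mathcal{A}$ we get $C(u, f_I(s(X_{n+1}))) \le C(\mathcal{A}, f_I(s(X_{n+1}))) = T_{n+1} \le \lambda_I(\alpha_1)$. Hence no zero-label point clears the threshold, so $u \in I(X_{n+1})$ forces $Y_{n+1}(u) = 1$, i.e. $I(X_{n+1}) \subseteq \{v: Y_{n+1}(v) = 1\}$. Monotonicity of $\mathbb{P}$ then gives $\mathbb{P}(I(X_{n+1}) \subseteq \{v: Y_{n+1}(v) = 1\}) \ge \mathbb{P}(T_{n+1} \le \lambda_I(\alpha_1)) \ge 1 - \alpha_1$.

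The main obstacle, and the one genuinely new point relative to the $C=\max$ case, is that a general suitable $C$ supplies only the one-sided bound $C(u, \cdot) \le C(\mathcal{A}, \cdot)$ for $u \in \mathcal{A}$: there is no reverse inequality and no control over singletons lying outside $\mathcal{A}$. Consequently the deterministic step closes if and only if the conditioning set used to form $\lambda_I$ is the zero-label region $\{v: Y_i(v) = 0\}$, since then the increasing property bounds precisely the singletons that must be excluded from $I$. For this reason, and so that the result specializes to Theorem~\ref{thm:inner} (whose $\tau_i$ maximizes over $\{v: Y_i(v) = 0\}$) when $C = \max$, I would read the ``$Y_i(v) = 1$'' in the definition of $\lambda_I$ as a typographical slip for ``$Y_i(v) = 0$''; taking the one-label region there would instead bound the one-label singletons and deliver containment in the complementary set $\{v: Y_{n+1}(v) = 0\}$ rather than the claimed one.
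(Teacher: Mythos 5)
Your proof is correct and follows essentially the same route as the paper's: the paper proves Theorem \ref{thm:innergen} by invoking the argument of Theorem \ref{thm:inner} together with the key observation that $C(\mathcal{A}, X) \leq \lambda$ forces $C(v, X) \leq \lambda$ for every $v \in \mathcal{A}$, which is exactly your exchangeability/quantile step (via Lemma 1 of Tibshirani et al.) followed by the monotone set-containment step. Your diagnosis of the typographical slip is also correct: the calibration statistic must be computed over the zero-label region $\lbrace v : Y_i(v) = 0 \rbrace$, consistent with the $\tau_i$ of Theorem \ref{thm:inner} that it generalizes, since calibrating over the one-label region would only control scores on the mask and would instead yield containment in $\lbrace v : Y_{n+1}(v) = 0 \rbrace$, as you note.
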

The proof follows that of Theorem \ref{thm:inner}. The key observation is that for any suitable combination function $C$,  given $\lambda \in \mathbb{R}$, $\mathcal{A} \subseteq \mathcal{V} $ and $X \in \mathcal{X}$, $C(\mathcal{A}, X) \leq \lambda$ implies that $C(v, X) \leq \lambda$. This is the relevant property of the maximum which we used for the results in the main text. For the outer set we similarly have the following.
\begin{theorem}\label{thm:genouter}
	(Marginal outer set)
	Under Assumptions \ref{ass:ex} and \ref{ass:indep}, given $\alpha_2 \in (0,1)$, define 
	\begin{equation*}
		\lambda_O({\alpha_2})= \inf\left\lbrace \lambda: \frac{1}{n} \sum_{i = 1}^n 1\left[ C(\lbrace v \in \mathcal{V}: Y_i(v) = 0\rbrace, -f_O(s(X_i))) \leq \lambda \right] \geq \frac{\lceil (1-\alpha_2)(n+1) \rceil}{n} \right\rbrace.
	\end{equation*}
	for a suitable combination function $C$, and let $O(X) = \lbrace v \in \mathcal{V}: C(v, -f_O(s(X))) \leq \lambda_O(\alpha_2)  \rbrace $. Then,
	\begin{equation}\label{eq:probstat}
		\mathbb{P}\left( \lbrace v\in \mathcal{V}: Y_{n+1}(v) = 1 \rbrace \subseteq O(X_{n+1}) \right) \geq 1 - \alpha_2.
	\end{equation}
\end{theorem}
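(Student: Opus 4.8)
The plan is to replicate the proof of Theorem \ref{thm:outer} verbatim, substituting the maximum by the suitable combination function $C$ and replacing the pointwise inequality by the monotonicity property of $C$ recorded immediately after Theorem \ref{thm:innergen}. Concretely, I would work with the combination statistic entering the definition of $\lambda_O(\alpha_2)$, taken over the true-mask region, namely
\begin{equation*}
	\gamma_i = C\big(\lbrace v\in\mathcal{V}: Y_i(v)=1\rbrace,\, -f_O(s(X_i))\big), \qquad 1\le i\le n+1,
\end{equation*}
which reduces to the $\gamma_i$ of Theorem \ref{thm:outer} in the special case $C=\max$.

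First I would transfer exchangeability. Because Assumption \ref{ass:indep} makes $C$, $f_O$ and $s$ fixed functions independent of the data, each $\gamma_i$ is the image of $(X_i,Y_i)$ under a single common measurable map, so the exchangeability of $(X_i,Y_i)_{i=1}^{n+1}$ granted by Assumption \ref{ass:ex} passes directly to $(\gamma_i)_{i=1}^{n+1}$. I would then identify $\lambda_O(\alpha_2)$ as the upper $\alpha_2$ empirical quantile of $(\gamma_i)_{i=1}^{n}\cup\lbrace\infty\rbrace$ and apply Lemma 1 of \cite{Tibshirani2019}, exactly as in Theorem \ref{thm:inner}, to obtain $\mathbb{P}(\gamma_{n+1}\le\lambda_O(\alpha_2))\ge 1-\alpha_2$.

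The deterministic core is then to show that the event $\lbrace\gamma_{n+1}\le\lambda_O(\alpha_2)\rbrace$ forces the desired containment. On this event I would invoke the defining property of a suitable combination function, that $C(\mathcal{A},X)\le\lambda$ implies $C(v,X)\le\lambda$ for every $v\in\mathcal{A}$, applied with $\mathcal{A}=\lbrace v: Y_{n+1}(v)=1\rbrace$ and $X=-f_O(s(X_{n+1}))$. This yields $C(v,-f_O(s(X_{n+1})))\le\lambda_O(\alpha_2)$, hence $v\in O(X_{n+1})$, for every $v$ with $Y_{n+1}(v)=1$; that is, $\lbrace v: Y_{n+1}(v)=1\rbrace\subseteq O(X_{n+1})$. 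Combining this inclusion of events with the probability bound from the previous step delivers the stated guarantee.

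I do not expect a substantive obstacle: the monotonicity of $C$ is precisely the abstraction of the elementwise bound $X(v)\le\max_{w\in\mathcal{A}}X(w)$ that drives Theorem \ref{thm:outer}, so the substitution is essentially mechanical. The only two points needing a line of care are the exchangeability transfer and the verification that the $\lbrace\infty\rbrace$-augmented empirical quantile is the correct input to Tibshirani's lemma, both of which are routine and already implicit in the main-text proofs.
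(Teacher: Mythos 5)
Your argument is correct and is exactly the proof the paper intends: the paper gives no explicit proof of Theorem \ref{thm:genouter}, saying only that it follows "similarly," and the intended route is precisely yours --- transfer exchangeability to the combined scores, apply Lemma 1 of \cite{Tibshirani2019} to the $\lbrace\infty\rbrace$-augmented empirical quantile, and replace the elementwise bound $X(v)\le\max_{w\in\mathcal{A}}X(w)$ by the defining monotonicity of $C$. One point worth flagging: you define $\gamma_i = C(\lbrace v: Y_i(v)=1\rbrace, -f_O(s(X_i)))$ and call it "the combination statistic entering the definition of $\lambda_O(\alpha_2)$," but the theorem as stated calibrates on $C(\lbrace v: Y_i(v)=0\rbrace, -f_O(s(X_i)))$; with that literal choice the deterministic step only yields $\lbrace v: Y_{n+1}(v)=0\rbrace\subseteq O(X_{n+1})$, which is not the claimed inclusion. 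Your version, taken over the true-mask region, is the one consistent with the special case $C=\max$ in Theorem \ref{thm:outer} and is the one that makes the proof go through, so the statement's "$Y_i(v)=0$" appears to be a typo that you have (correctly, but silently) repaired.
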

Joint results can be analogously obtained. 

\subsection{Obtaining confidence sets from risk control}\label{risk2con}
We can alternatively establish Theorems \ref{thm:inner} and \ref{thm:innergen} using an argument from risk control \citep{Angelopoulos2022}. In particular, given an image pair $(X,Y)$ and $\lambda \in \mathbb{R}$, let $$I_\lambda(X) =  \lbrace v \in \mathcal{V}: f_I(s(X), v) > \lambda \rbrace.$$ Define a loss function, $L:\mathcal{P}(\mathcal{V}) \times \mathcal{Y} \rightarrow \mathbb{R}$ which sends $(X,Y)$ to 
\begin{equation*}
	L(I_\lambda(X), Y) = 1\left[ I_\lambda(X) \not \subseteq\lbrace v\in \mathcal{V}: Y(v) = 1 \rbrace \right].
\end{equation*}
For $i = 1, \dots, n + 1$, let 	$L_i(\lambda) = 	L(I_\lambda(X_i), Y_i)$. Arguing as in the proof of Theorem \ref{thm:inner} it follows that $L_i(\lambda) = 1[\tau_i > \lambda]$. Then applying Theorem 1 of \cite{Angelopoulos2022} it follows that 
\begin{equation*}
	\mathbb{E}\left[ L_{n+1}(\hat{\lambda})\right] \leq \alpha_1,
\end{equation*}
where $\hat{\lambda} = \inf\left\lbrace \lambda: \frac{1}{n}\sum_{i = 1}^n L_i(\lambda) \leq \alpha_1 - \frac{1-\alpha_1}{n}\right\rbrace$. Arguing as in Appendix A of \citep{Angelopoulos2022} it follows that 
	\begin{equation*}
	\hat{\lambda} = \inf\left\lbrace \lambda: \frac{1}{n} \sum_{i = 1}^n 1\left[ \tau_i\leq \lambda \right] \geq \frac{\lceil (1-\alpha_1)(n+1) \rceil}{n}\right\rbrace = \lambda_I(\alpha_1),
\end{equation*}
and so $I(X) = I_{\hat{\lambda}}(X)$. As such 	
\begin{equation}\label{eq:probstat2}
	\mathbb{P}\left( I(X_{n+1}) \subseteq\lbrace v\in \mathcal{V}: Y_{n+1}(v) = 1 \rbrace \right) = 1 - \mathbb{E}\left[ L_{n+1}(\hat{\lambda})\right]  \geq 1 - \alpha_1, 
\end{equation}
and we recover the desired result. Arguing similarly it is possible to establish a proof of Theorem \ref{thm:outer}.

\subsection{Deriving confidence sets from bounding boxes}\label{AA:BBtheory}
We can use our results in order to provide valid inference for bounding boxes via an adaption of the approach of \cite{Andeol2023}. In particular given $Z \in \mathcal{Y}$, let $B_{I, \max}(Z)$ be the largest box which can be contained within the set $\lbrace v\in \mathcal{V}: Z(v) = 1 \rbrace$ and let $ B_{O, \min}(Z)$ be the smallest box which contains the set $\lbrace v\in \mathcal{V}: Z(v) = 1 \rbrace$. Given $Y \in \mathcal{Y}, $ let $cc(Y) \subseteq \mathcal{P}(\mathcal{V})$ denote the set of connected components of the set $\lbrace v\in \mathcal{V}: Y(v) = 1 \rbrace$ for a given connectivity criterion (which we take to be $4$ in our examples), and note that these components can themselves be identifed as elements of $\mathcal{Y}$. Define 
$$B_I(Y) = \cup_{c \in cc(Y)} B_{I, \max}(c) \text{ and } B_O(Y) = \cup_{c \in cc(Y)} B_{O, \min}(c)$$
to be the unions of the largest inner and smallest outer boxes of the connected components of the image $Y$, respectively. Then define
$$\hat{B}_I(s(X)) = \cup_{c \in cc(\hat{M}(X)) } B_{I, \max}(c) \text{ and } \hat{B}_O(s(X)) = \cup_{c \in cc(\hat{M}(X))} B_{O, \min}(c)$$
to be the unions of the largest inner and smallest outer boxes of the connected components of the predicted mask $\hat{M}(X)$, respectively. Note that this is well-defined as $\hat{M}(X)$ is a function of $s(X)$.

For the remainder of this section we shall assume that $\mathcal{V} \subset \mathbb{R}^2$, this is not strictly necessary but will help to simplify notation. Given $u,v \in \mathcal{V}$, write $u = (u_1, u_2)$ and $v = (v_1, v_2)$ and let $\rho(u,v) = \max \left( |u_1 - v_1|, |u_2 - v_2| \right)$ be the chessboard metric. 
\begin{definition}\label{dfn:BBS}
	(Bounding box scores) For each $X \in \mathcal{X}$ and $v \in \mathcal{V}$, let
	\begin{equation*}
		b_I(s(X), v) = d_{\rho}(\hat{B}_I(s(X)), v) \text{ and } b_O(s(X), v) = d_{\rho}(\hat{B}_O(s(X)), v) 
	\end{equation*}
be the distance transformed scores based on the chessboard distance to the predicted inner and outer box collections $\hat{B}_I(s(X))$ and $\hat{B}_O(s(X))$, respectively. We also define a combination of these $b_M$, primarily for the purposes of plotting in Figure \ref{fig:learning}, as follows. Let $b_M(s(X), v) = b_O(s(X),v)$ for each $v \not\in \hat{B}_O(s(X))$ and let $b_M(s(X), v) = \max(b_I(s(X),v), 0) $ for $v \in \hat{B}_O(s(X))$. We shall write $b_I(s(X)) \in \mathcal{X}$ to denote the image which has $b_I(s(X))(v) = 	b_I(s(X), v)$ and similarly for $b_O(s(X))$ and $b_M(s(X))$. 
\end{definition}
Now consider the sequences of image pairs $(X_i, B_i^I)_{i = 1}^n$ and $(X_i, B_i^O)_{i = 1}^n$. These both satisfy exchangeability and so, applying Theorems \ref{thm:innergen} and \ref{thm:genouter}, we obtain the following bounding box validity results.
\begin{corollary}\label{thm:boxinnergen}
	(Marginal inner bounding boxes)
	Suppose Assumption \ref{ass:ex} holds and that $(X_i, Y_i)_{i = 1}^{n+1}$ is independent of the functions $s$ and $b_I$.  Given $\alpha_1 \in (0,1)$, define 
	\begin{equation}
		\lambda_I(\alpha_1) = \inf\left\lbrace \lambda: \frac{1}{n} \sum_{i = 1}^n 1\left[ C(B^I_i, b_I(s(X_i))) \leq \lambda \right] \geq  \frac{\lceil (1-\alpha_1)(n+1) \rceil}{n} \right\rbrace,
	\end{equation}
	for a suitable combination function $C$, and define $I(X) = \lbrace v \in \mathcal{V}: C(v, b_I(s(X))) >\lambda_I(\alpha_1)  \rbrace $. Then,
	\begin{equation*}\label{eq:probstat}
		\mathbb{P}\left( I(X_{n+1}) \subseteq B^I_{n+1} \subseteq\lbrace v\in \mathcal{V}: Y_{n+1}(v) = 1 \rbrace \right) \geq 1 - \alpha_1.
	\end{equation*}
\end{corollary}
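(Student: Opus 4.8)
The plan is to obtain this as a direct specialization of Theorem \ref{thm:innergen}, applied not to the original ground-truth masks $(Y_i)$ but to the inner bounding-box images $(B^I_i)_{i=1}^{n+1}$, where $B^I_i = B_I(Y_i)$ is viewed as an element of $\mathcal{Y}$ through its indicator, and with the inner transformation function taken to be $f_I = b_I$. Under these substitutions the threshold $\lambda_I(\alpha_1)$ and the set $I(X)$ appearing in the statement coincide exactly with those produced by Theorem \ref{thm:innergen}, since $\lbrace v \in \mathcal{V}: B^I_i(v) = 1\rbrace = B^I_i$ and $C$ is a suitable combination function in both places. So the task reduces to checking that the hypotheses of Theorem \ref{thm:innergen} hold for the relabelled sequence, and then to propagating the inclusion from the box image out to the true mask.

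First I would verify the two hypotheses for the sequence $(X_i, B^I_i)_{i=1}^{n+1}$. For exchangeability, I would note that $B^I_i$ is the image of $Y_i$ under the single fixed deterministic map $Y \mapsto B_I(Y)$; applying the same deterministic transformation componentwise to an exchangeable sequence preserves exchangeability, so Assumption \ref{ass:ex} for $(X_i, Y_i)_{i=1}^{n+1}$ transfers to $(X_i, B^I_i)_{i=1}^{n+1}$. For the independence hypothesis (Assumption \ref{ass:indep} as it pertains to the inner construction, i.e.\ independence from $s$ and $f_I = b_I$), since each $B^I_i$ is a function of $Y_i$ and the corollary assumes $(X_i, Y_i)_{i=1}^{n+1}$ is independent of $s$ and $b_I$, it follows that $(X_i, B^I_i)_{i=1}^{n+1}$ is likewise independent of these functions.

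With the hypotheses in place, Theorem \ref{thm:innergen} immediately yields
\begin{equation*}
	\mathbb{P}\left( I(X_{n+1}) \subseteq B^I_{n+1} \right) \geq 1 - \alpha_1.
\end{equation*}
To close the chain to the true mask, I would observe that by construction each largest inner box $B_{I,\max}(c)$ is contained in its connected component $c \subseteq \lbrace v \in \mathcal{V}: Y_{n+1}(v) = 1\rbrace$, so that $B^I_{n+1} = B_I(Y_{n+1}) \subseteq \lbrace v \in \mathcal{V}: Y_{n+1}(v) = 1\rbrace$ holds surely. Consequently the event $\lbrace I(X_{n+1}) \subseteq B^I_{n+1}\rbrace$ coincides with $\lbrace I(X_{n+1}) \subseteq B^I_{n+1} \subseteq \lbrace v \in \mathcal{V}: Y_{n+1}(v) = 1\rbrace\rbrace$, and the stated bound follows.

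The only genuinely delicate step is the exchangeability transfer of the second paragraph, which is precisely the point emphasized in the discussion of bounding-box methods: by treating the union $B_I(Y_i)$ of all inner boxes within image $i$ as a single ground-truth object, image-wise exchangeability is preserved, whereas pooling the individual boxes across images (box-wise aggregation) would introduce within-image dependence and break the exchangeability that Theorem \ref{thm:innergen} requires. Everything else is a routine relabelling of the marginal inner-set argument, with no additional analytic content, and the analogous outer result (Corollary \ref{thm:boxgenouter}) would be obtained in the same way from Theorem \ref{thm:genouter} using the outer boxes $B^O_i$ and the score $b_O$.
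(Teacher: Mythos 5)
Your proposal is correct and takes essentially the same route as the paper: the paper likewise obtains the corollary by applying Theorem \ref{thm:innergen} to the relabelled exchangeable sequence $(X_i, B^I_i)_{i=1}^{n+1}$ with $f_I = b_I$, the containment $B^I_{n+1} \subseteq \lbrace v : Y_{n+1}(v) = 1\rbrace$ holding deterministically by construction of the inner boxes. Your write-up simply makes explicit the exchangeability and independence transfers that the paper leaves as a one-line remark.
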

\begin{corollary}\label{thm:boxgenouter}
	(Marginal outer bounding boxes)
	Suppose Assumption \ref{ass:ex} holds and that $(X_i, Y_i)_{i = 1}^{n+1}$ is independent of the functions $s$ and $b_O$. Given $\alpha_2 \in (0,1)$, define 
	\begin{equation}
		\lambda_O({\alpha_2})= \inf\left\lbrace \lambda: \frac{1}{n} \sum_{i = 1}^n 1\left[ C(B^O_i, -b_O(s(X_i))) \leq \lambda \right] \geq  \frac{\lceil (1-\alpha_2)(n+1) \rceil}{n} \right\rbrace.
	\end{equation}
	for a suitable combination function $C$, and let $O(X) = \lbrace v \in \mathcal{V}: C(v, -b_O(s(X))) \leq \lambda_O(\alpha_2)  \rbrace $. Then,
	\begin{equation*}\label{eq:probstat}
		\mathbb{P}\left( \lbrace v\in \mathcal{V}: Y_{n+1}(v) = 1 \rbrace \subseteq B^O_{n+1} \subseteq O(X_{n+1}) \right) \geq 1 - \alpha_2.
	\end{equation*}
\end{corollary}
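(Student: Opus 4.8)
The plan is to split the target event into the two containments $\lbrace v \in \mathcal{V}: Y_{n+1}(v) = 1\rbrace \subseteq B^O_{n+1}$ and $B^O_{n+1} \subseteq O(X_{n+1})$, establishing the first deterministically and the second by a direct application of Theorem \ref{thm:genouter}. For the first containment, recall that $B^O_{n+1} = B_O(Y_{n+1}) = \cup_{c \in cc(Y_{n+1})} B_{O, \min}(c)$. Each connected component $c$ of $\lbrace v \in \mathcal{V}: Y_{n+1}(v) = 1\rbrace$ is by definition contained in its smallest enclosing box $B_{O, \min}(c)$, and since the connected components cover the whole mask, taking the union over $c \in cc(Y_{n+1})$ gives $\lbrace v \in \mathcal{V}: Y_{n+1}(v) = 1\rbrace \subseteq B^O_{n+1}$ with probability one.

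For the second containment I would invoke Theorem \ref{thm:genouter} applied to the sequence $(X_i, B^O_i)_{i = 1}^{n+1}$, in which the set $B^O_i$ plays the role of the ground-truth mask $\lbrace v \in \mathcal{V}: Y_i(v) = 1\rbrace$ and the transformation $b_O$ plays the role of $f_O$. First I would check that the hypotheses of that theorem transfer. Since $B_O$ is a fixed deterministic map, each $B^O_i$ is a measurable function of $Y_i$ alone; hence applying any permutation $\sigma \in S_{n+1}$ to the pairs $(X_i, Y_i)$ induces the same permutation on $(X_i, B^O_i)$, so exchangeability of $(X_i, Y_i)_{i = 1}^{n+1}$ (Assumption \ref{ass:ex}) passes to $(X_i, B^O_i)_{i = 1}^{n+1}$. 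Likewise, because the data are assumed independent of $s$ and $b_O$ and $B^O_i$ is a function of $Y_i$, the independence hypothesis needed by Theorem \ref{thm:genouter} holds with $f_O$ replaced by $b_O$. The threshold $\lambda_O(\alpha_2)$ defined in the statement is then exactly the outer threshold of Theorem \ref{thm:genouter} for this relabelled problem, with $C(B^O_i, -b_O(s(X_i)))$ in place of $C(\lbrace v: Y_i(v) = 1\rbrace, -f_O(s(X_i)))$, so the theorem yields $\mathbb{P}\left( B^O_{n+1} \subseteq O(X_{n+1})\right) \geq 1 - \alpha_2$.

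Finally I would combine the two pieces: since $\lbrace v \in \mathcal{V}: Y_{n+1}(v) = 1\rbrace \subseteq B^O_{n+1}$ holds surely, the event $\lbrace B^O_{n+1} \subseteq O(X_{n+1})\rbrace$ is contained in the event $\lbrace \lbrace v \in \mathcal{V}: Y_{n+1}(v) = 1\rbrace \subseteq B^O_{n+1} \subseteq O(X_{n+1})\rbrace$, which gives the stated bound directly from the displayed probability inequality. The one point genuinely requiring care is the transfer of exchangeability and independence to the derived sequence $(X_i, B^O_i)$, since this is what licenses treating the bounding boxes as a bona fide labelling and reusing the general outer-set result verbatim; everything else is either the deterministic geometric containment or bookkeeping to match the notation of Theorem \ref{thm:genouter}.
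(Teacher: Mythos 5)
Your proof is correct and follows essentially the same route as the paper, which simply observes that the derived pairs $(X_i, B^O_i)_{i=1}^{n+1}$ inherit exchangeability and then invokes Theorem \ref{thm:genouter} with $B^O_i$ playing the role of the ground-truth mask, the containment $\lbrace v \in \mathcal{V}: Y_{n+1}(v) = 1\rbrace \subseteq B^O_{n+1}$ being immediate from the definition of $B_O$ as a union of smallest enclosing boxes. Your write-up is in fact more explicit than the paper's one-sentence justification, particularly in verifying that exchangeability and the independence hypothesis transfer to the derived sequence.
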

Joint results can be obtained in a similar manner to those in Section \ref{SS:joint}.

\newpage
\subsection{Additional examples from the learning dataset}

\begin{figure}[h!]
	\begin{center}
		\includegraphics[width=0.24\textwidth]{../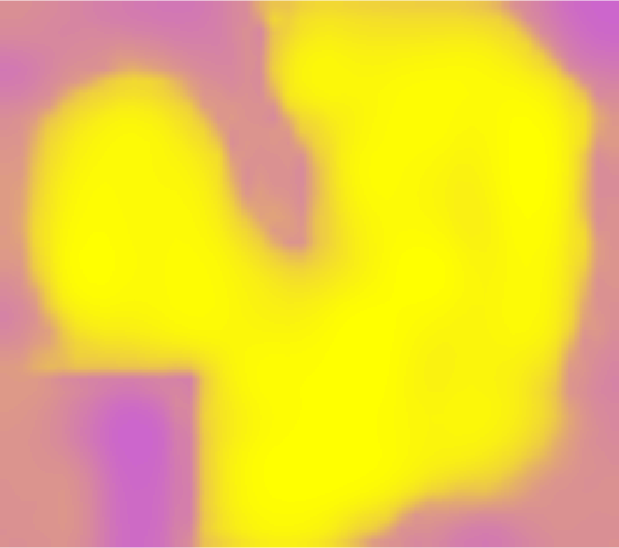}
		\includegraphics[width=0.24\textwidth]{../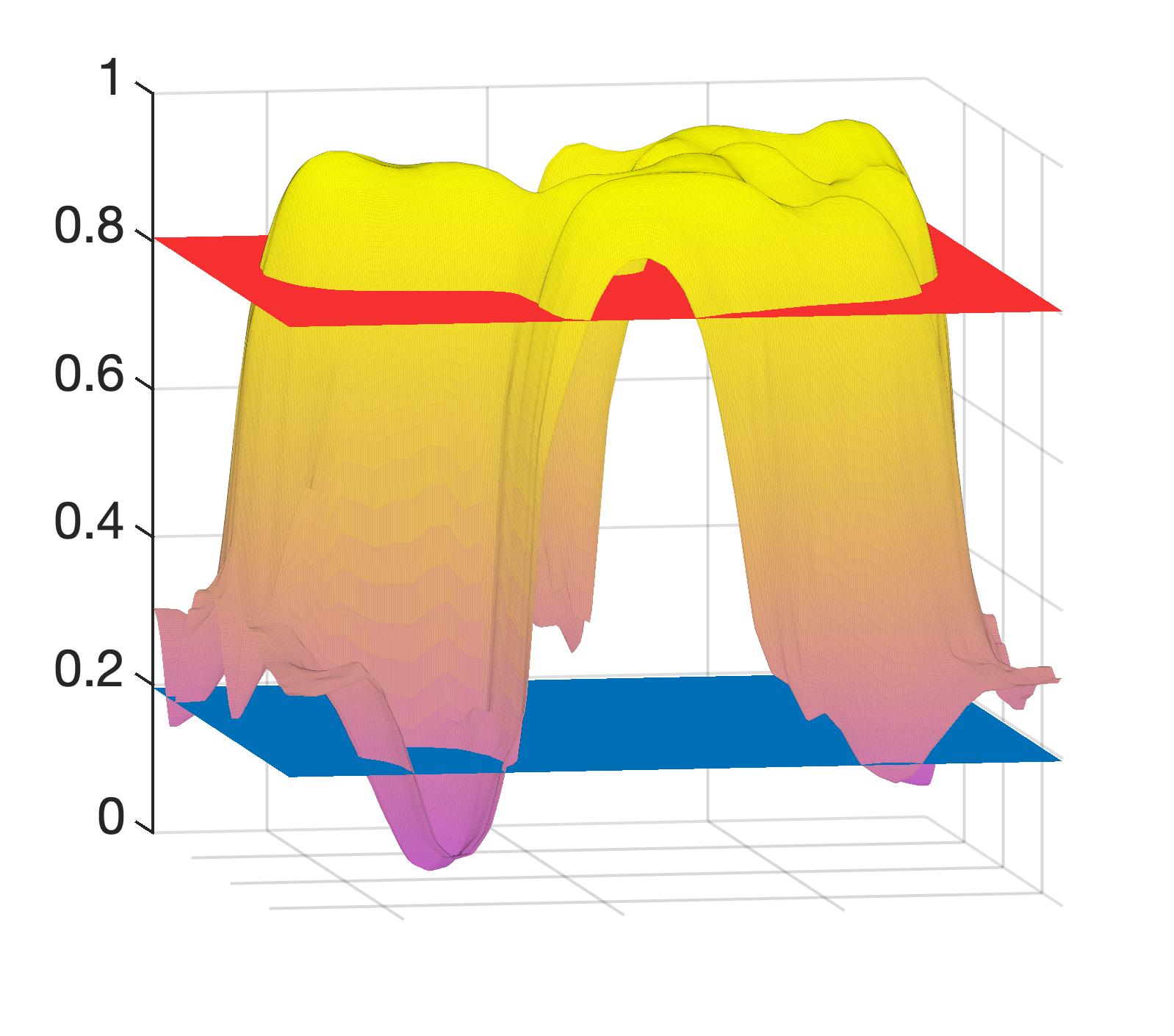}	\includegraphics[width=0.24\textwidth]{../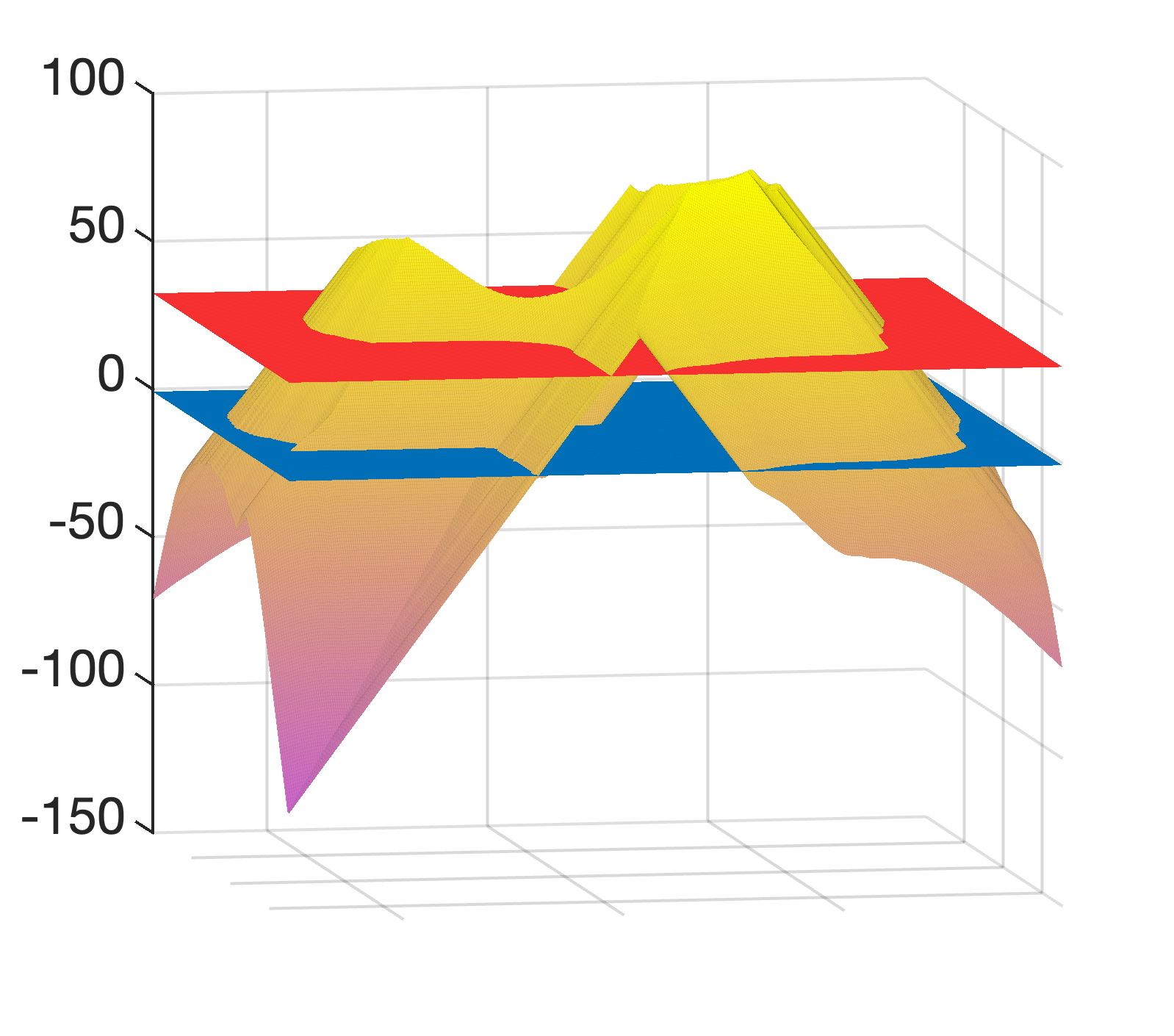}
		\includegraphics[width=0.24\textwidth]{../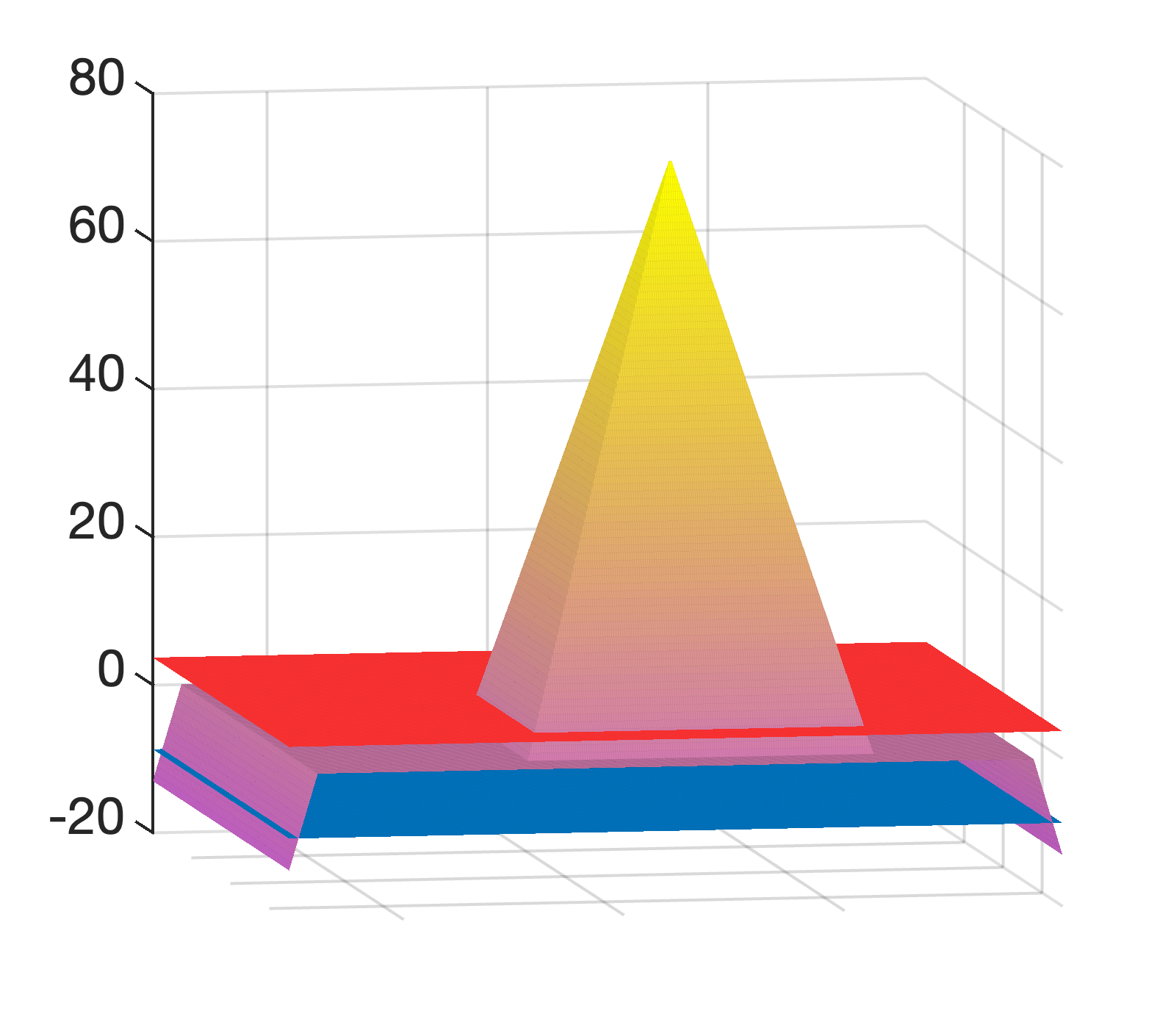}\\
		\includegraphics[width=0.24\textwidth]{../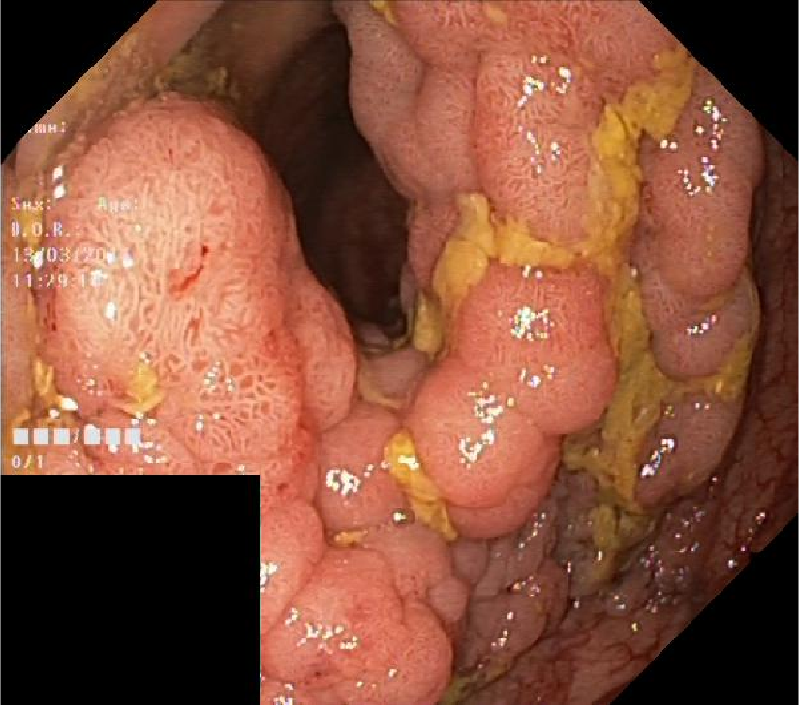}
		\includegraphics[width=0.24\textwidth]{../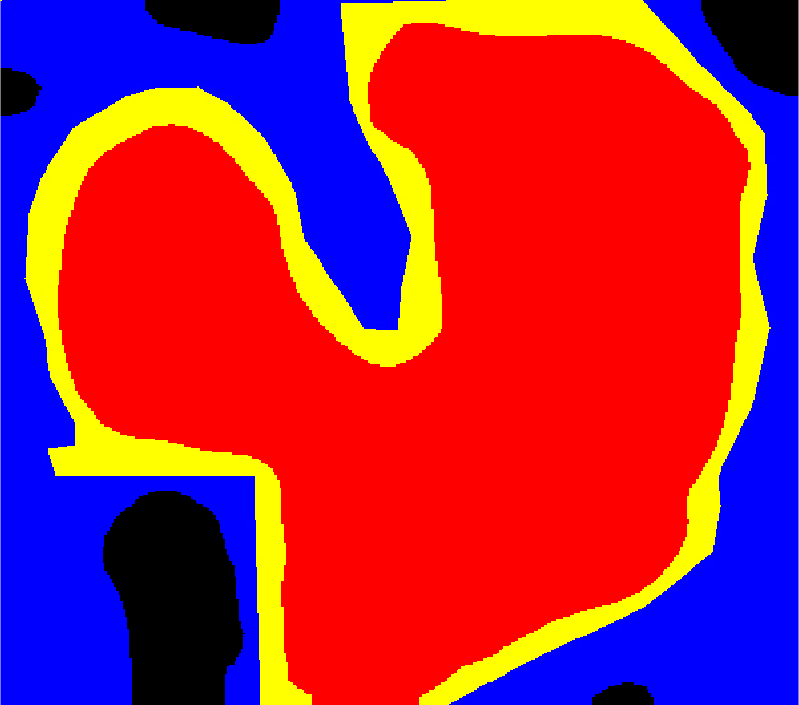}
		\includegraphics[width=0.24\textwidth]{../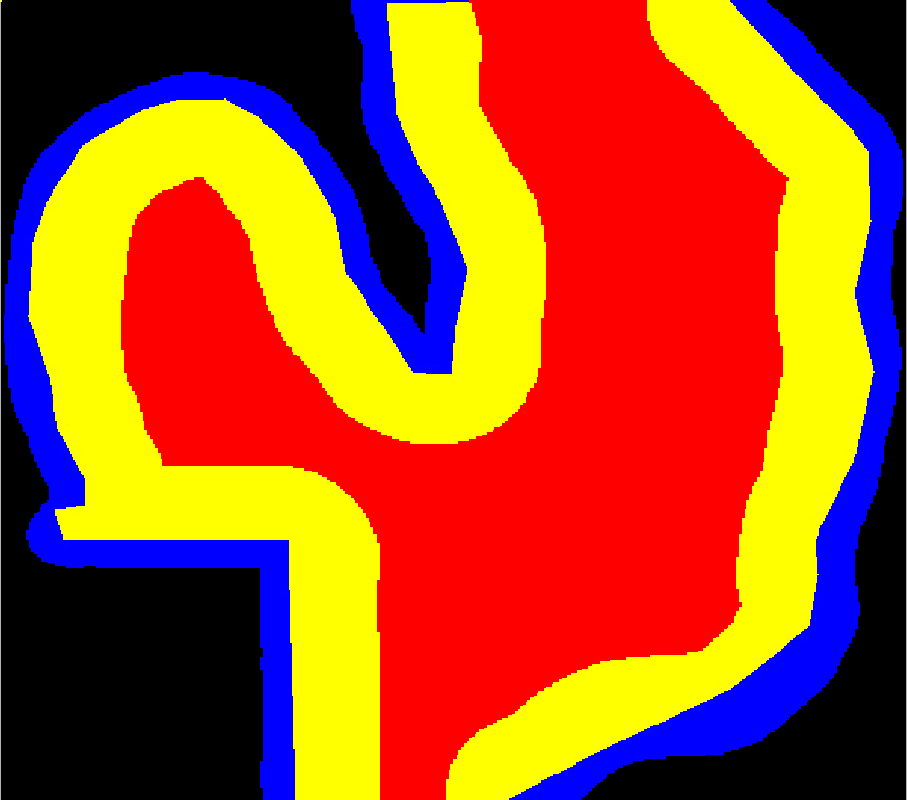}
		\includegraphics[width=0.24\textwidth]{../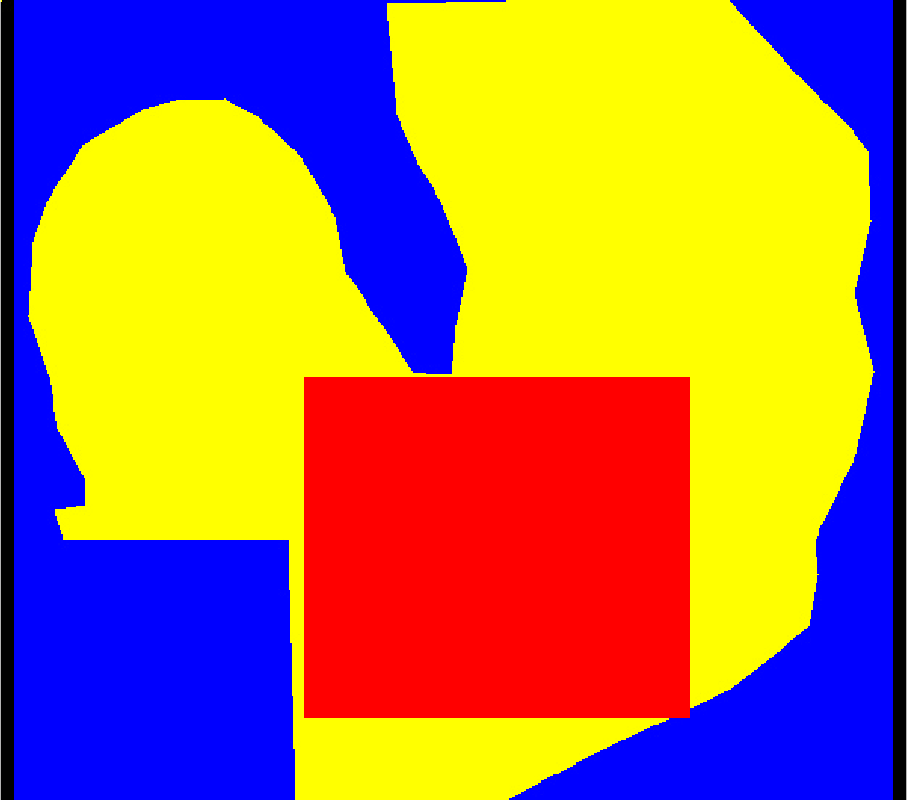}\\
		\vspace{0.5cm}
		\includegraphics[width=0.24\textwidth]{../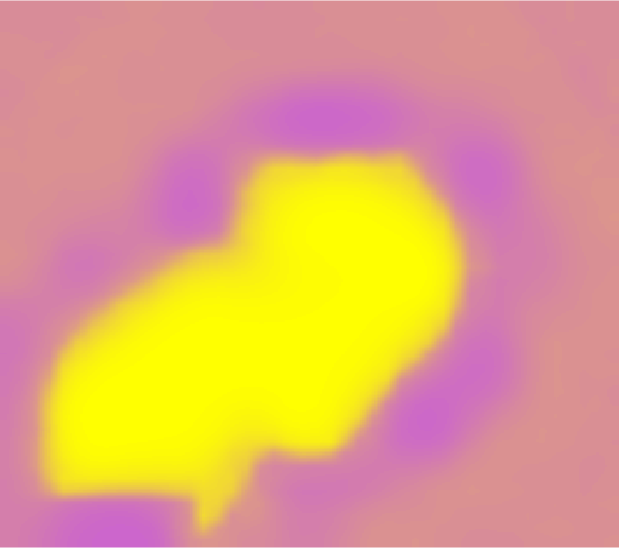}
		\includegraphics[width=0.24\textwidth]{../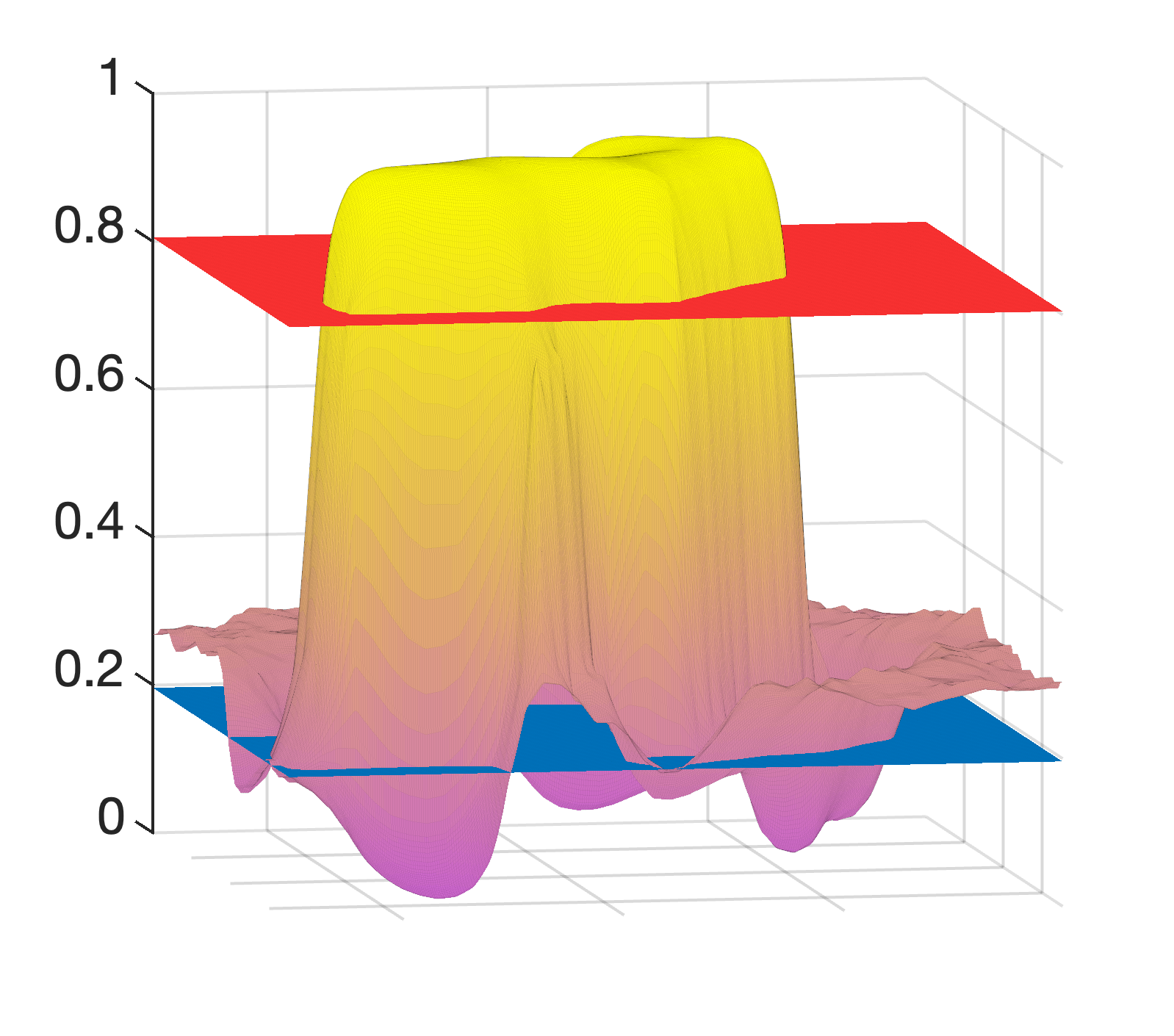}	\includegraphics[width=0.24\textwidth]{../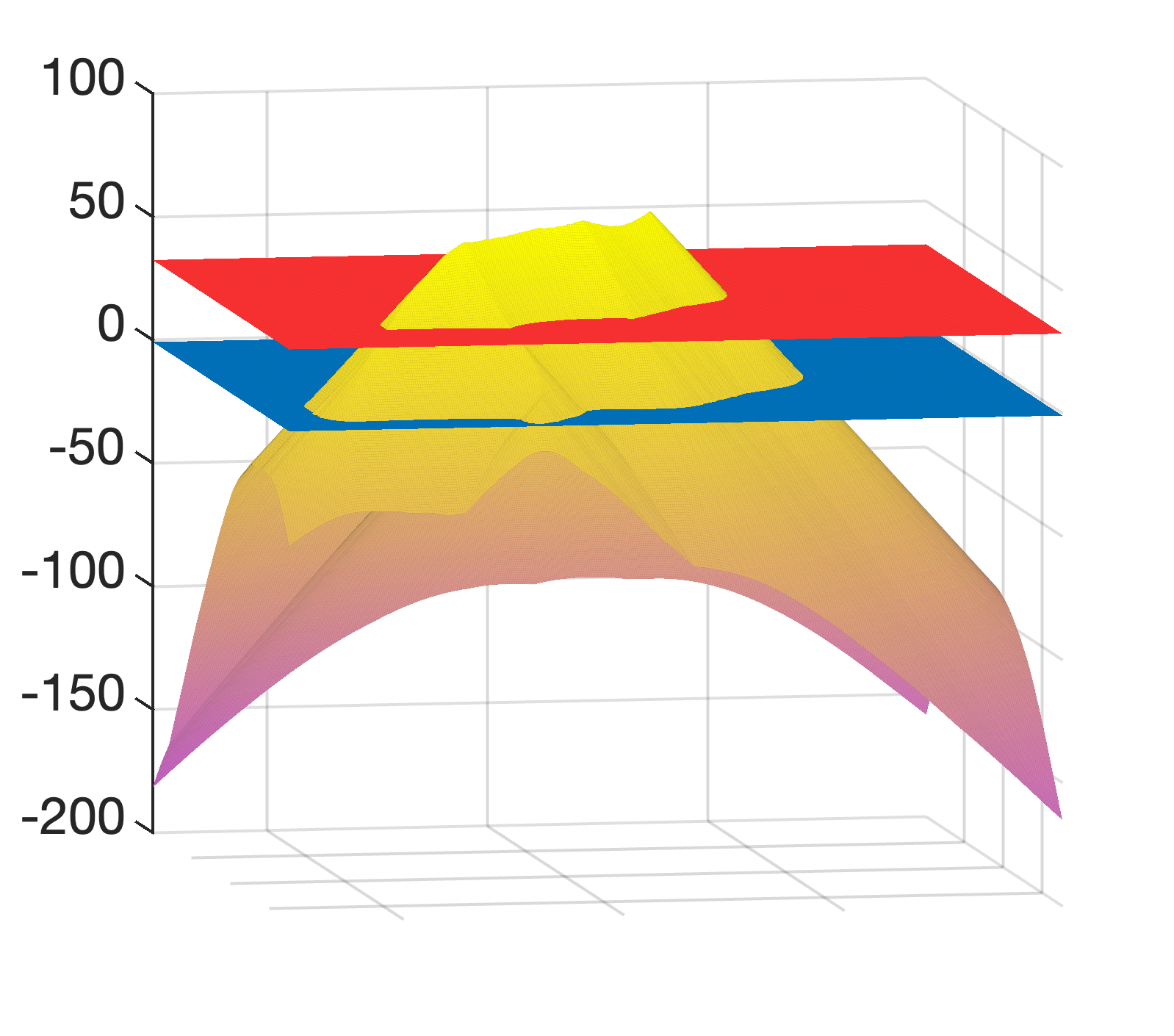}
		\includegraphics[width=0.24\textwidth]{../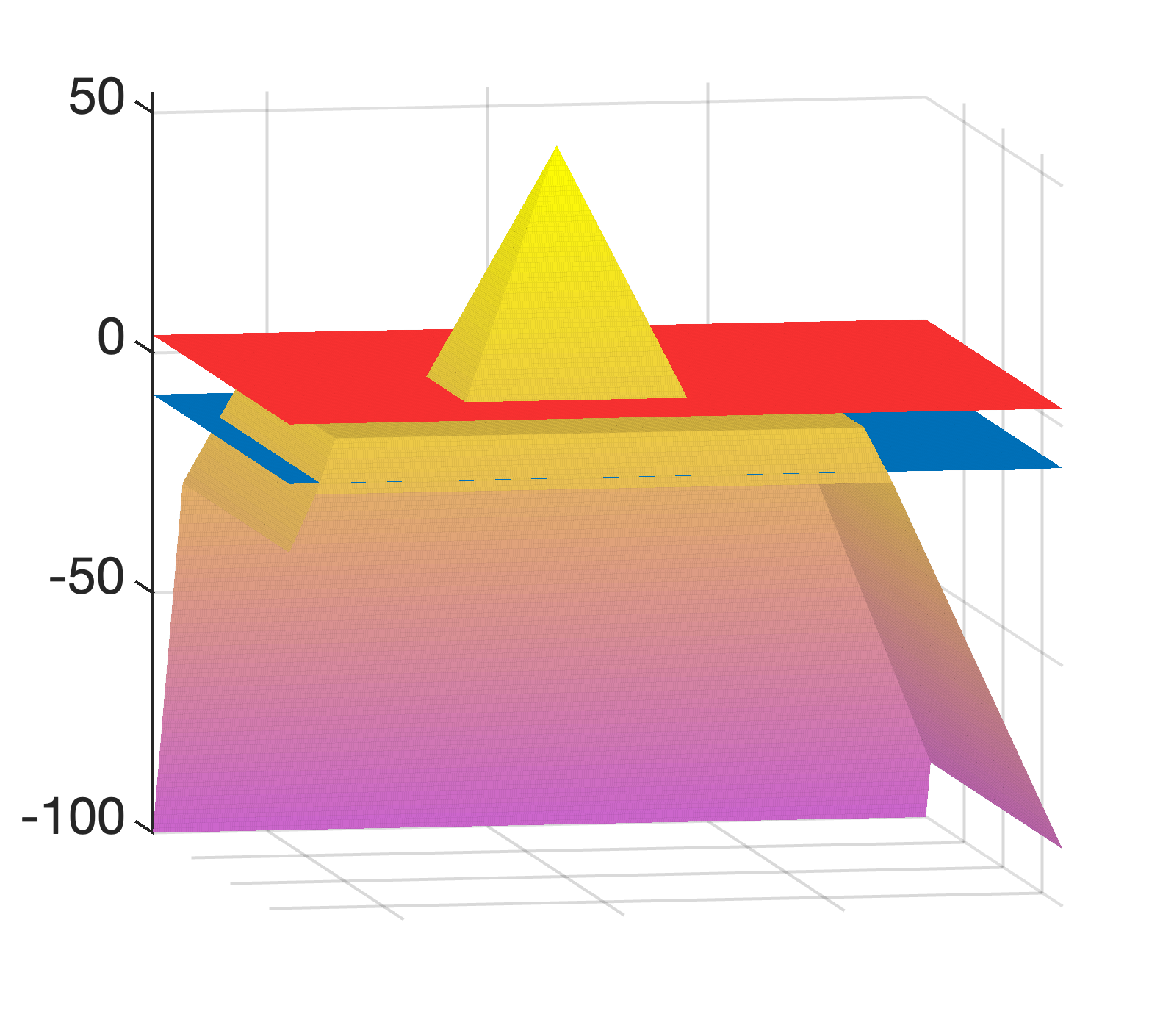}\\
		\includegraphics[width=0.24\textwidth]{../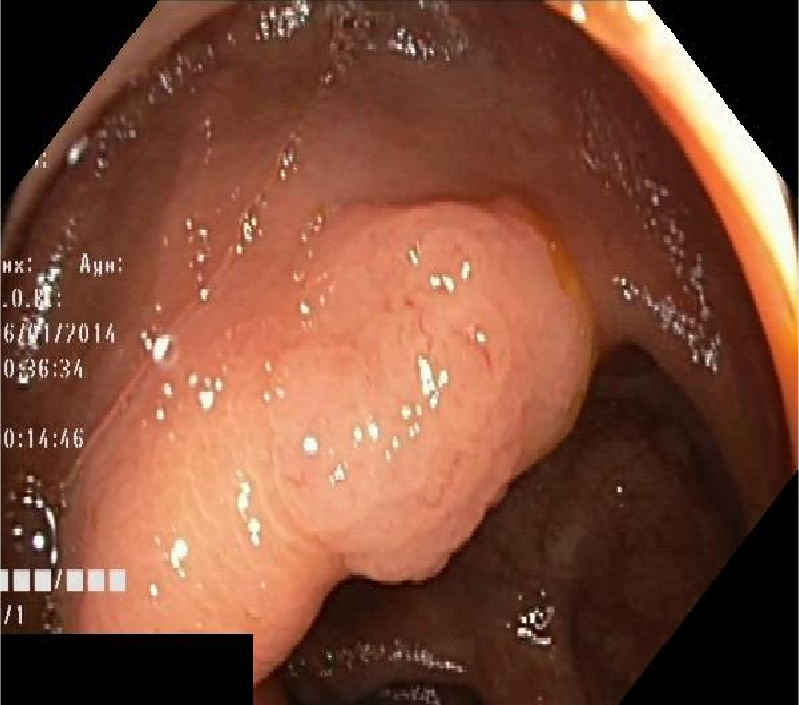}
		\includegraphics[width=0.24\textwidth]{../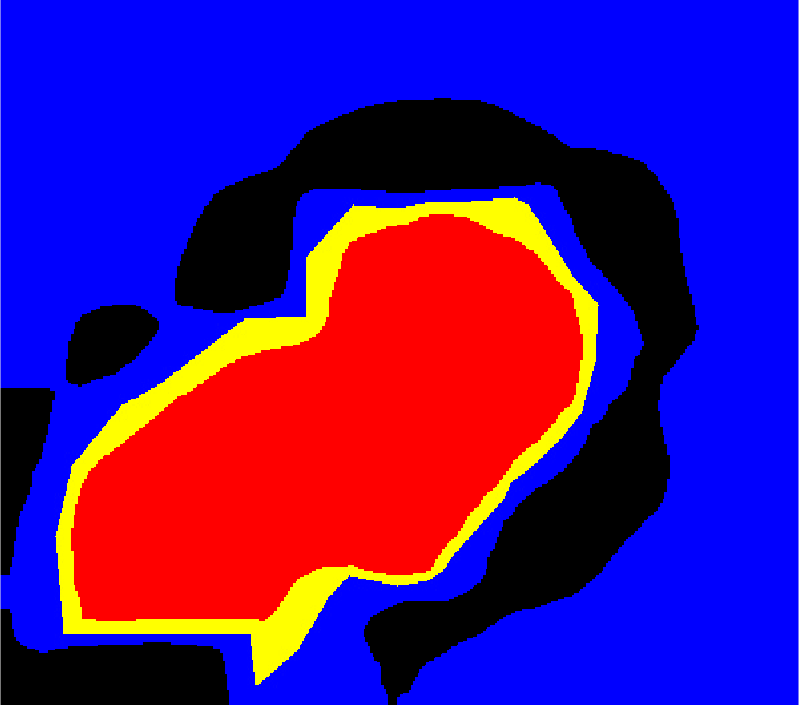}
		\includegraphics[width=0.24\textwidth]{../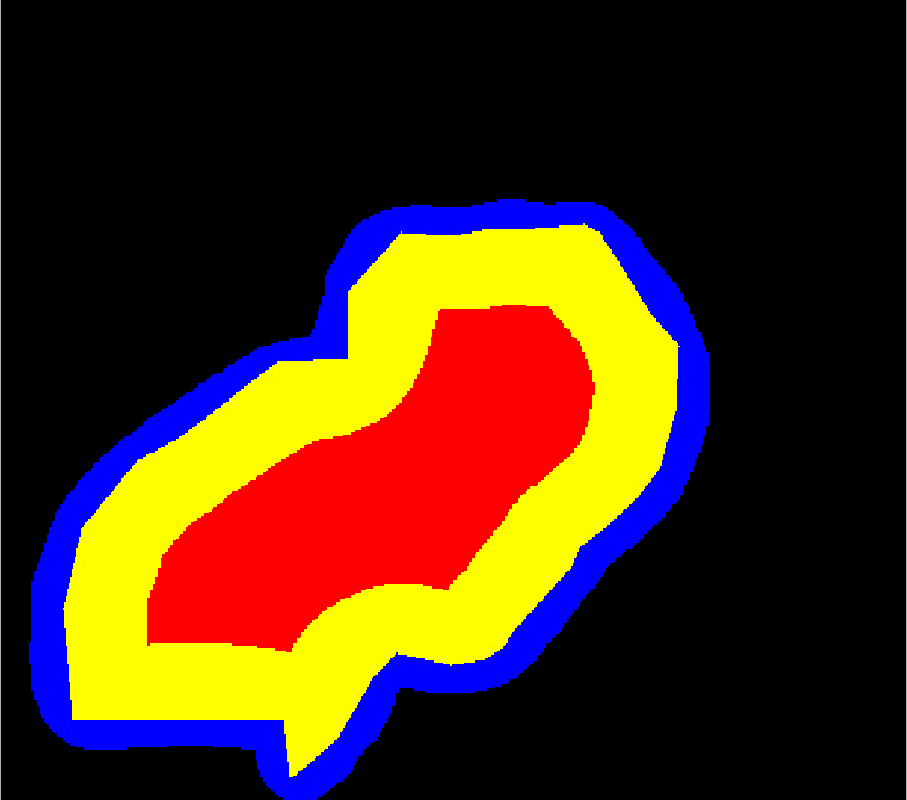}
		\includegraphics[width=0.24\textwidth]{../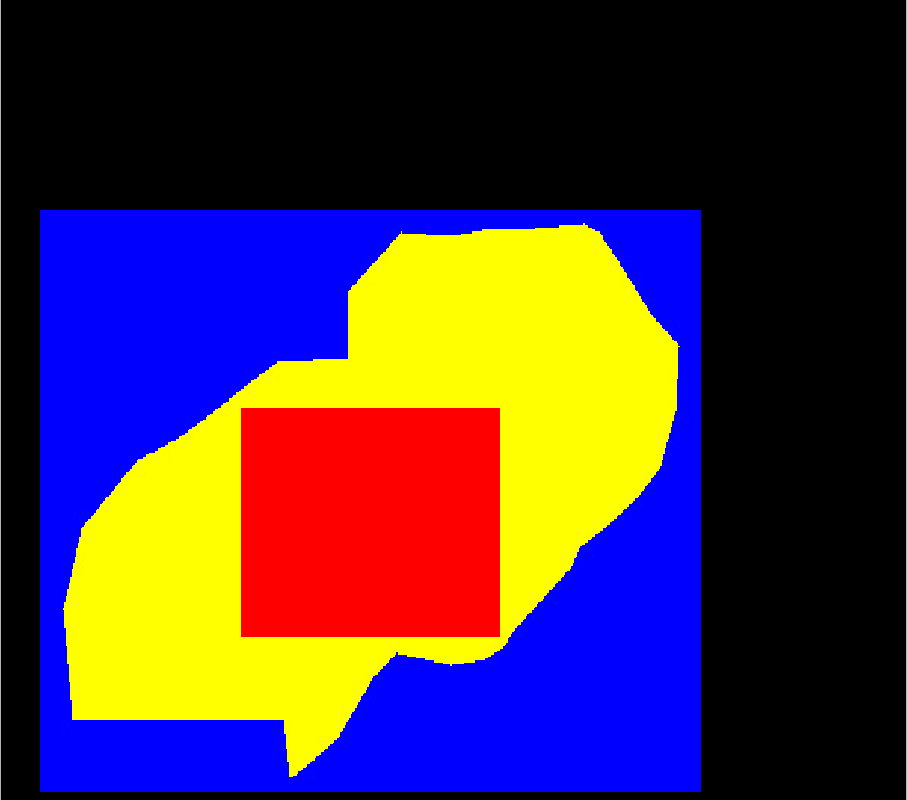}\\
			\vspace{0.5cm}
		\includegraphics[width=0.24\textwidth]{../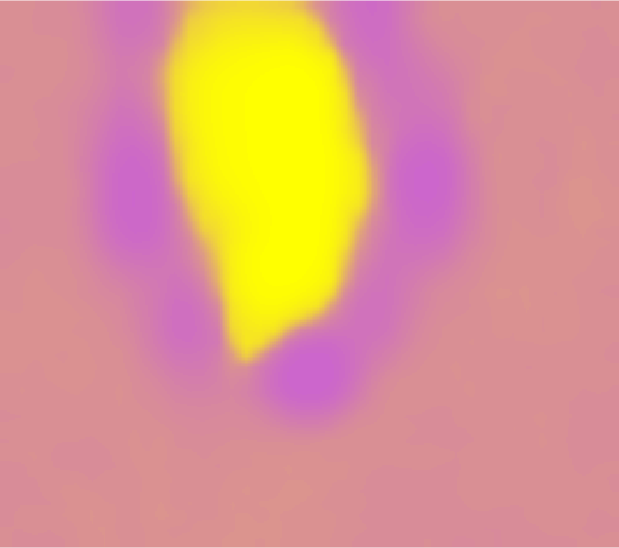}
		\includegraphics[width=0.24\textwidth]{../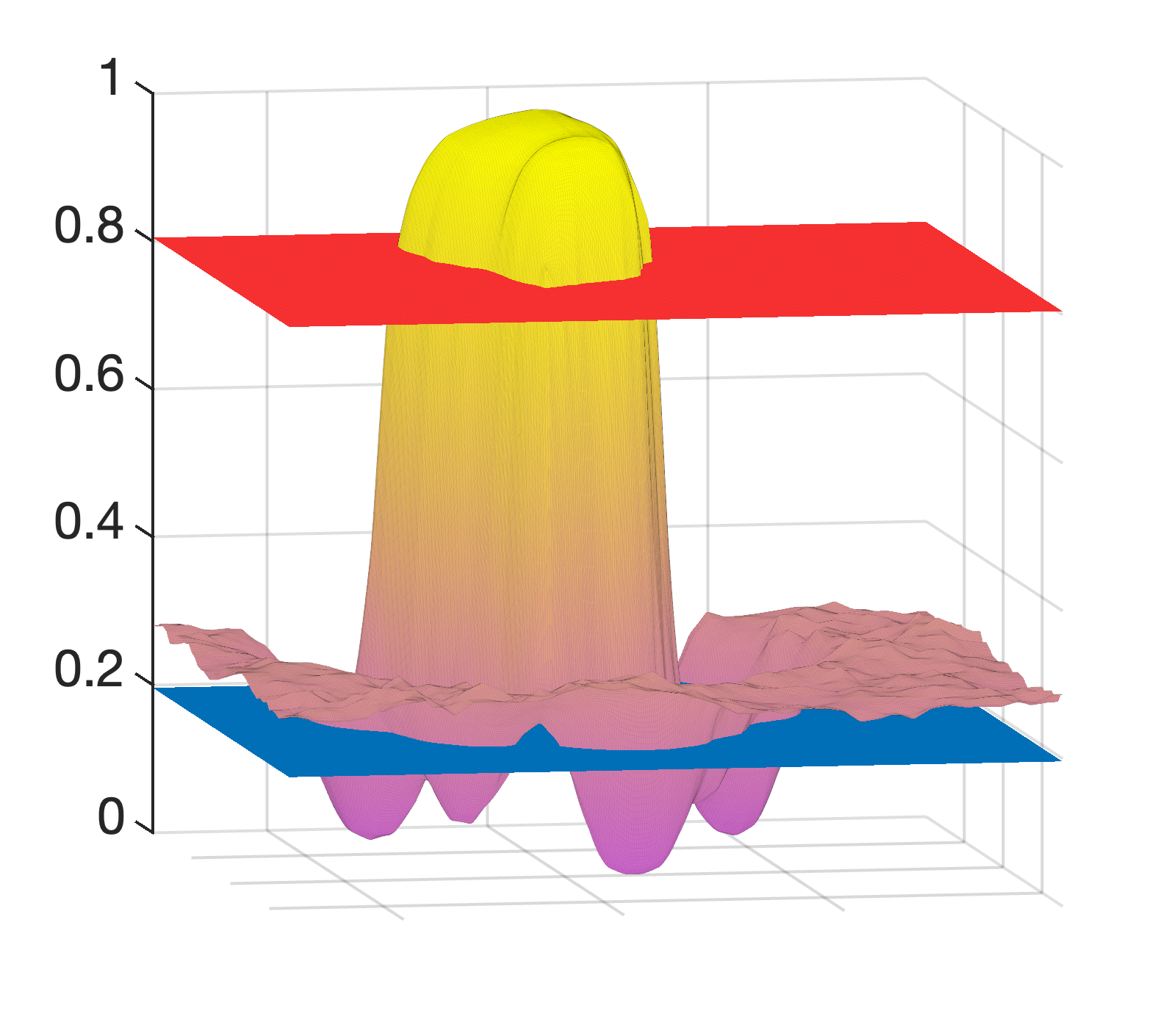}	\includegraphics[width=0.24\textwidth]{../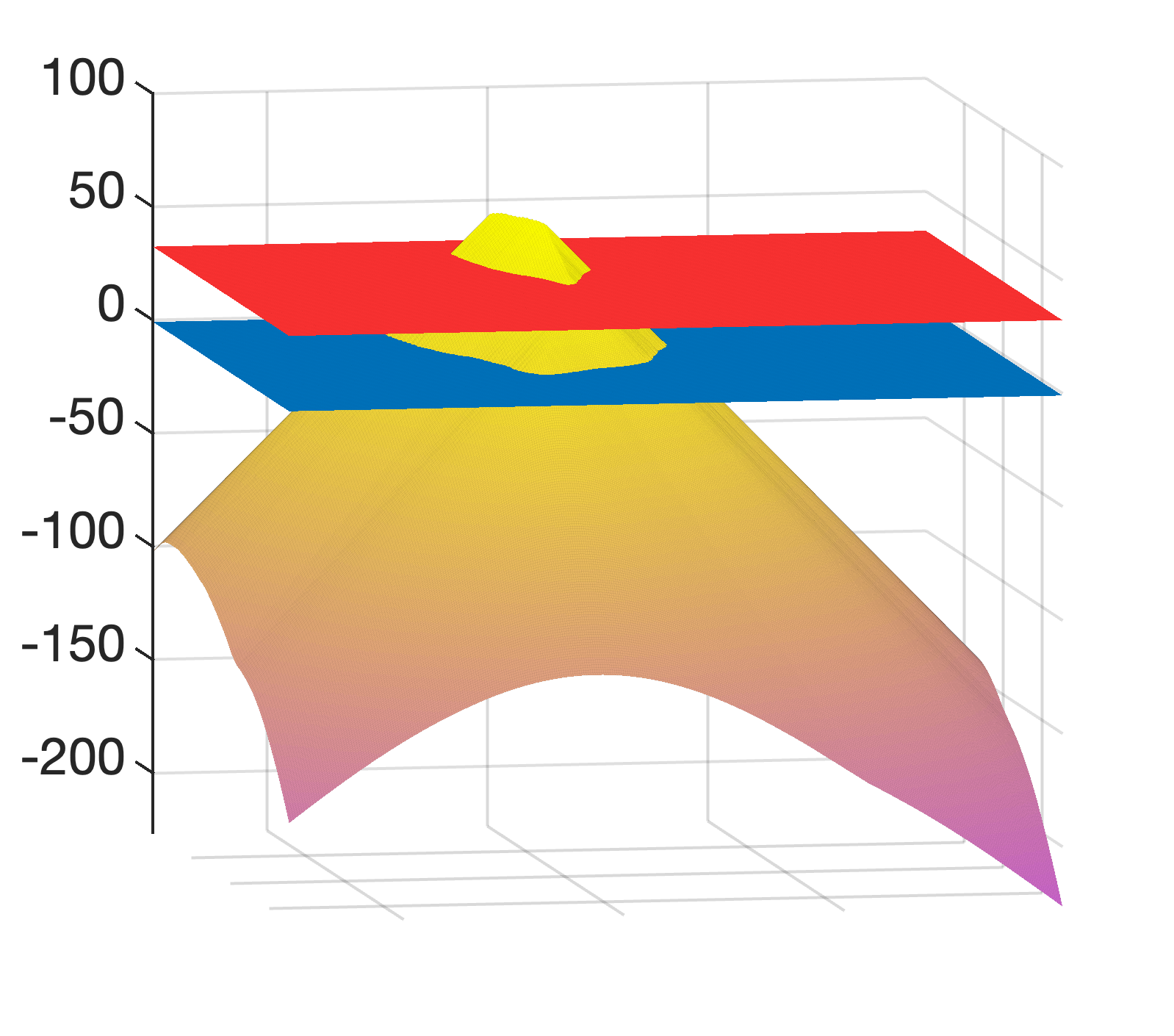}
		\includegraphics[width=0.24\textwidth]{../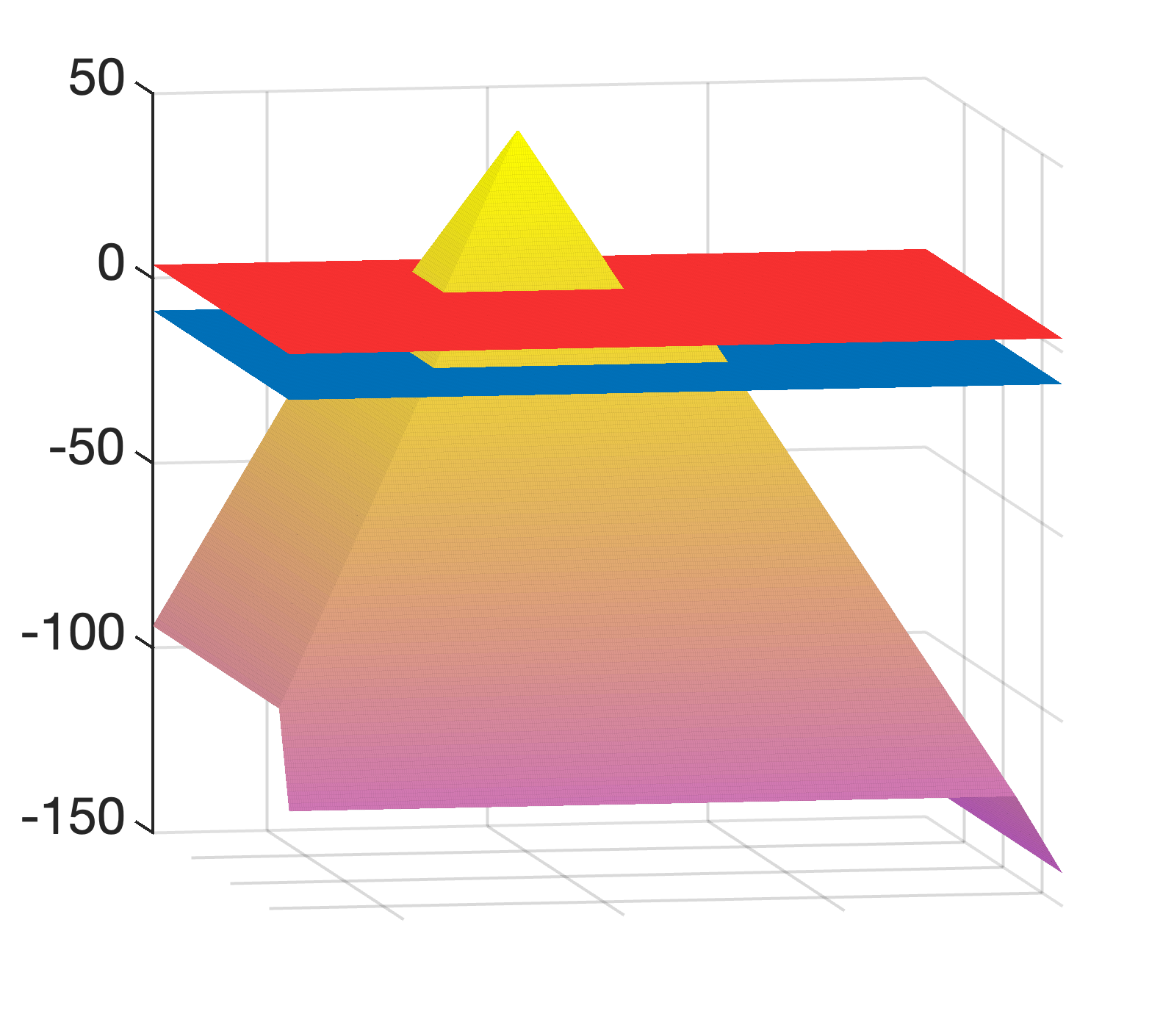}\\
		\includegraphics[width=0.24\textwidth]{../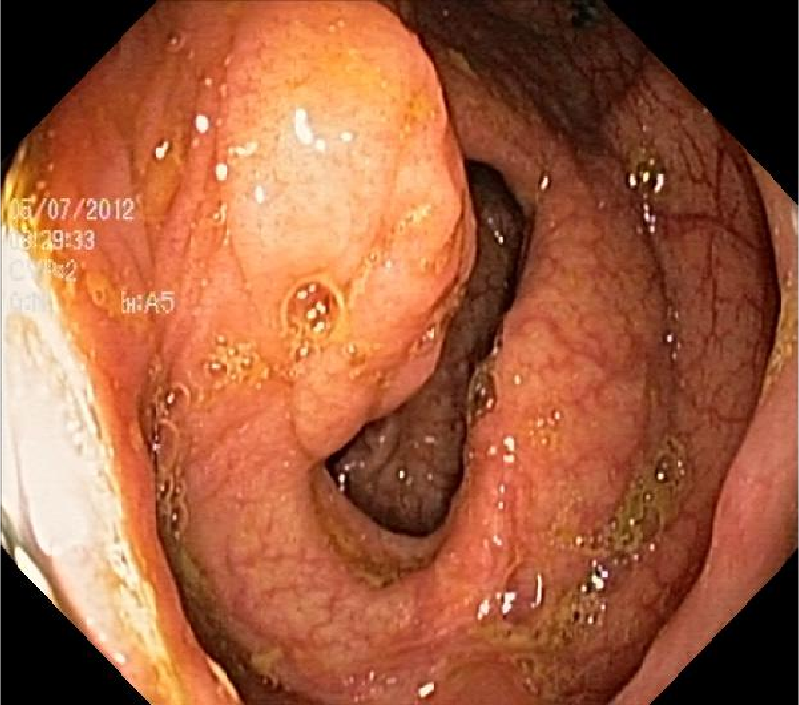}
		\includegraphics[width=0.24\textwidth]{../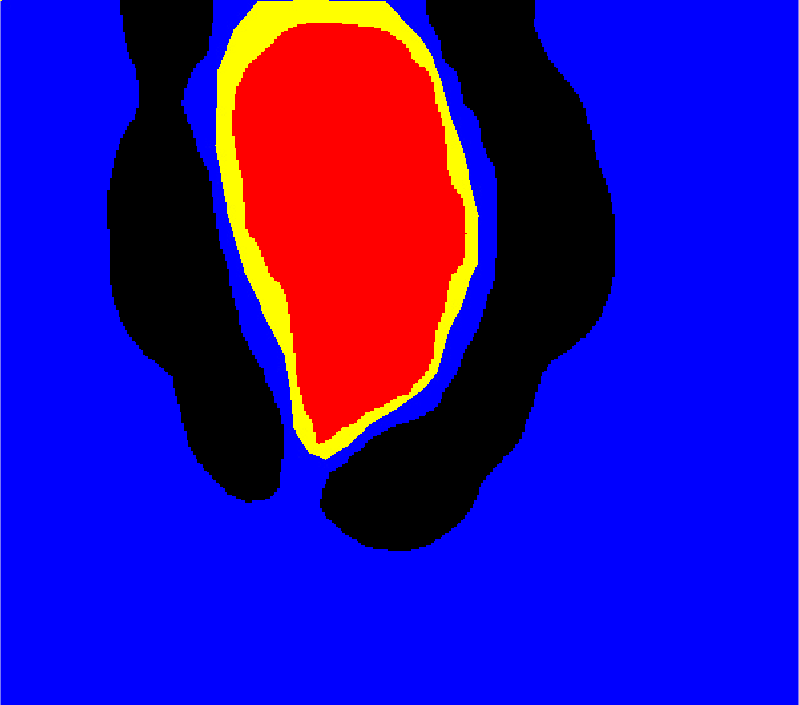}
		\includegraphics[width=0.24\textwidth]{../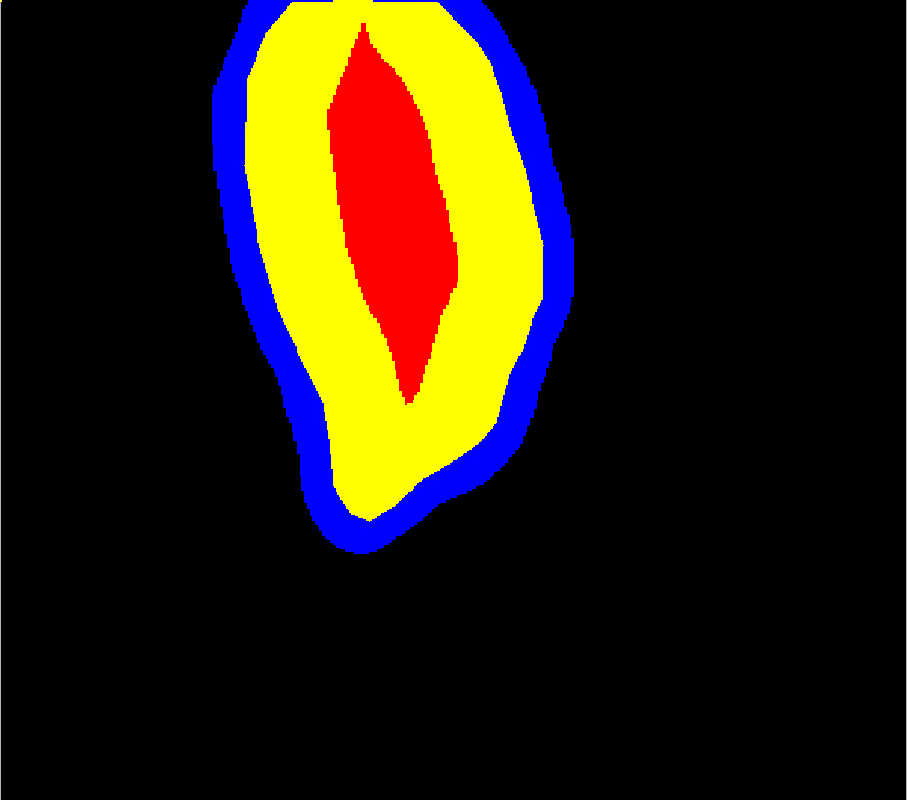}
		\includegraphics[width=0.24\textwidth]{../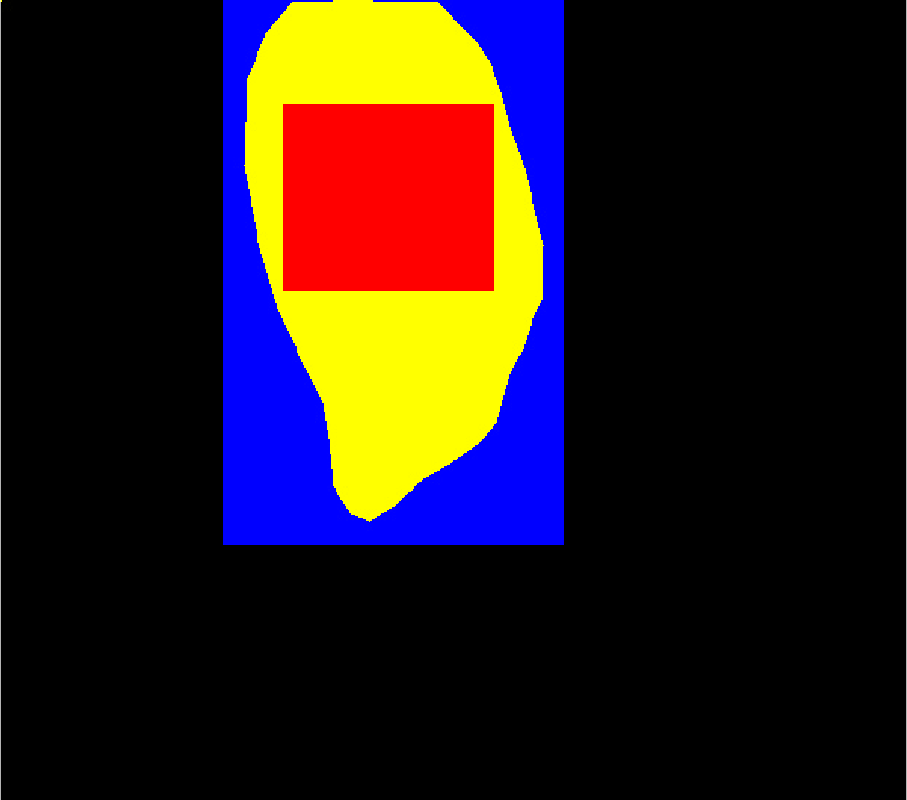}
	\end{center}
	\caption{Additional examples from the learning dataset. The layout of these figures is the same as for Figure \ref{fig:learning}.}
	\label{fig:learning2}
\end{figure}

\begin{figure}
	\begin{center}
		\includegraphics[width=0.24\textwidth]{../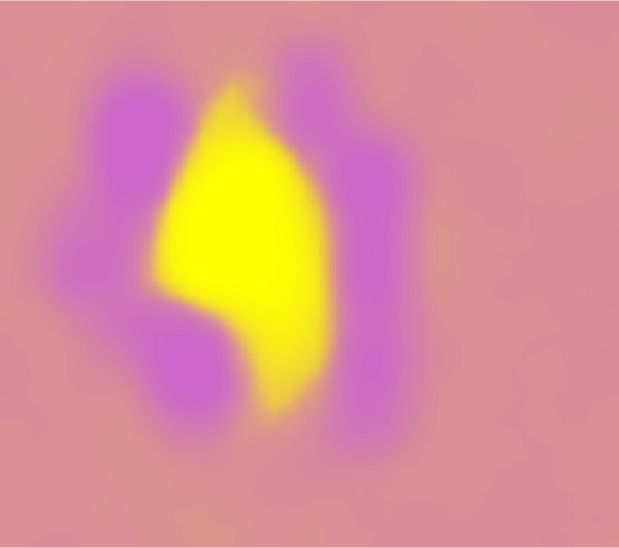}
		\includegraphics[width=0.24\textwidth]{../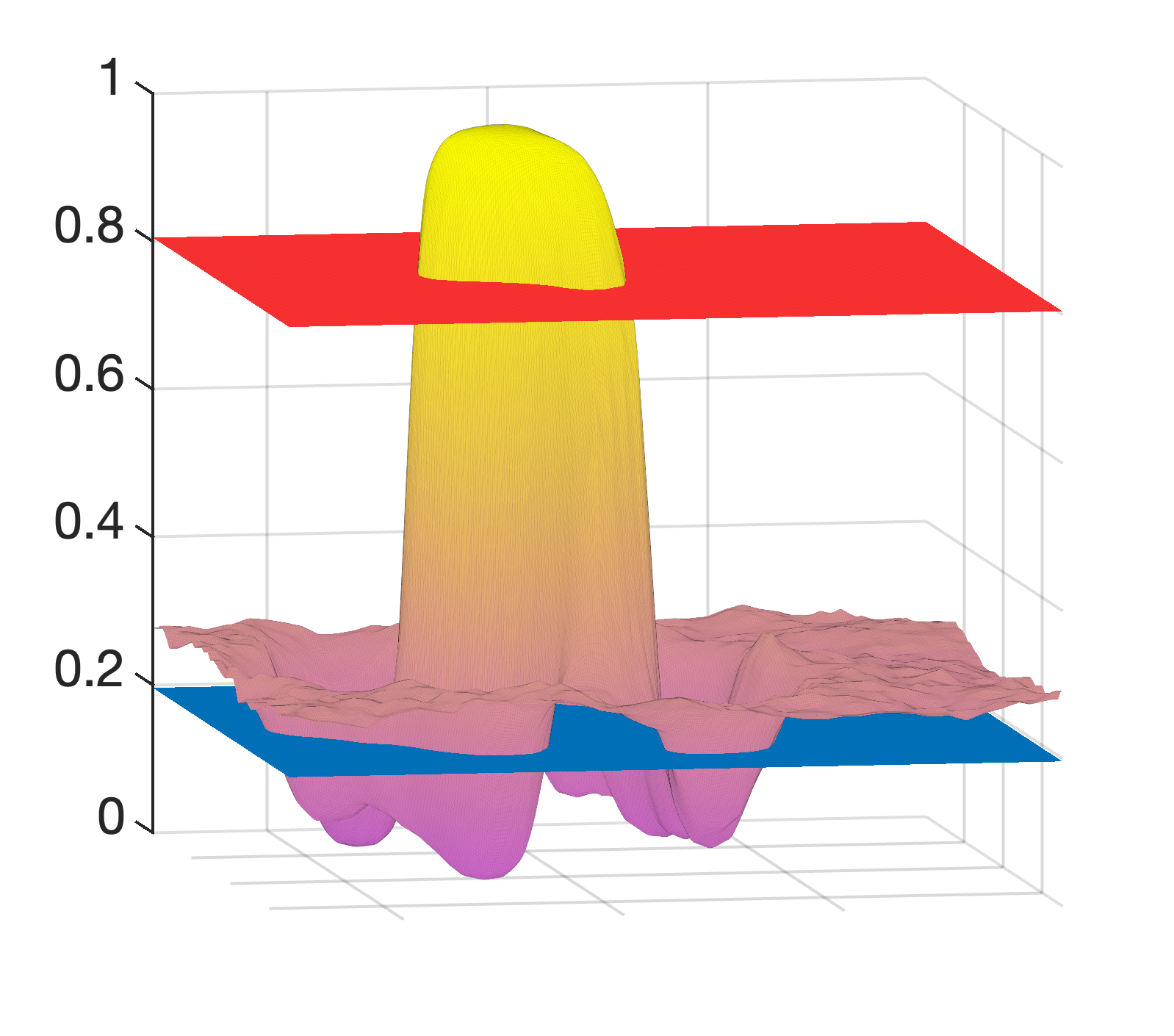}	\includegraphics[width=0.24\textwidth]{../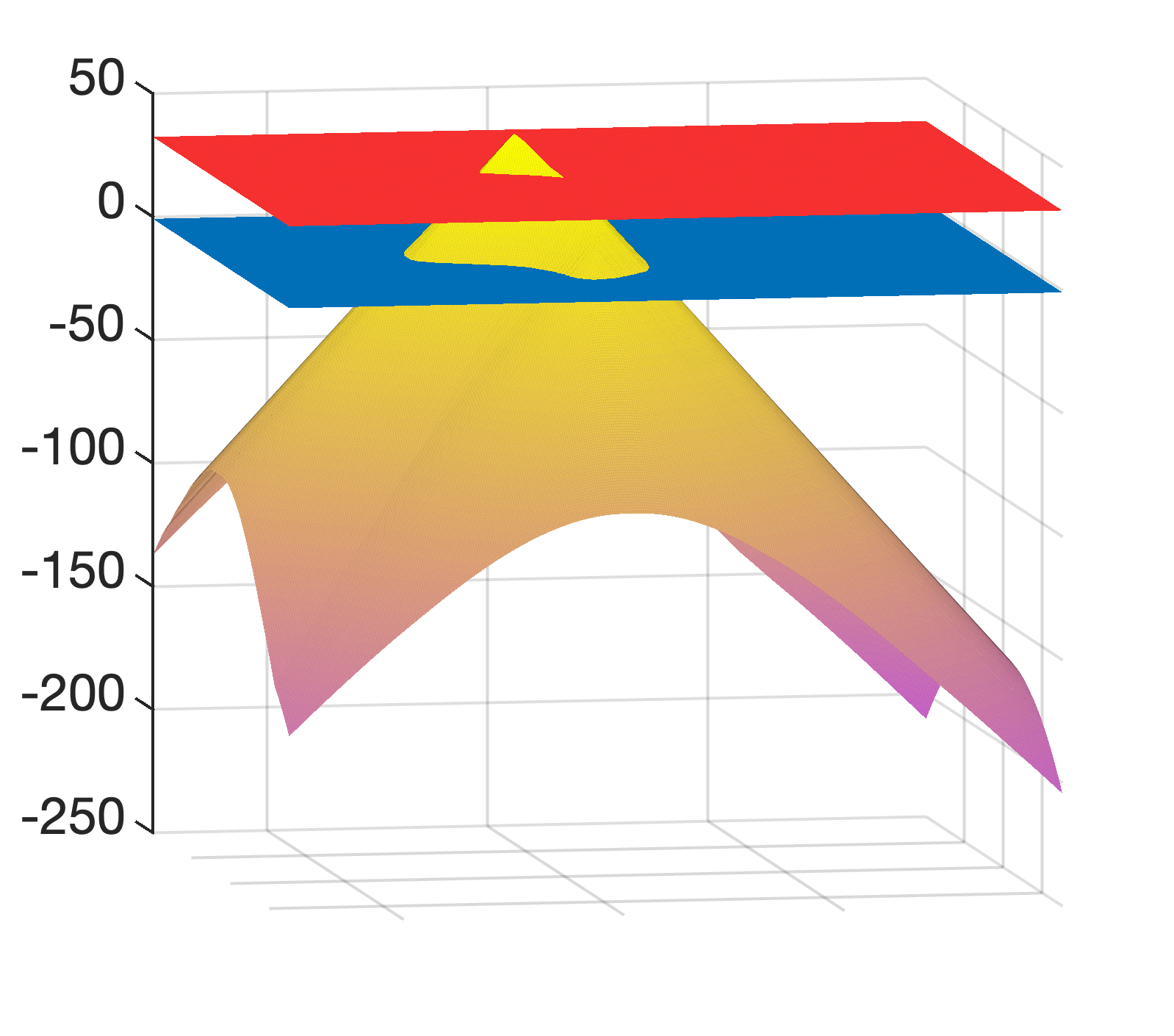}
		\includegraphics[width=0.24\textwidth]{../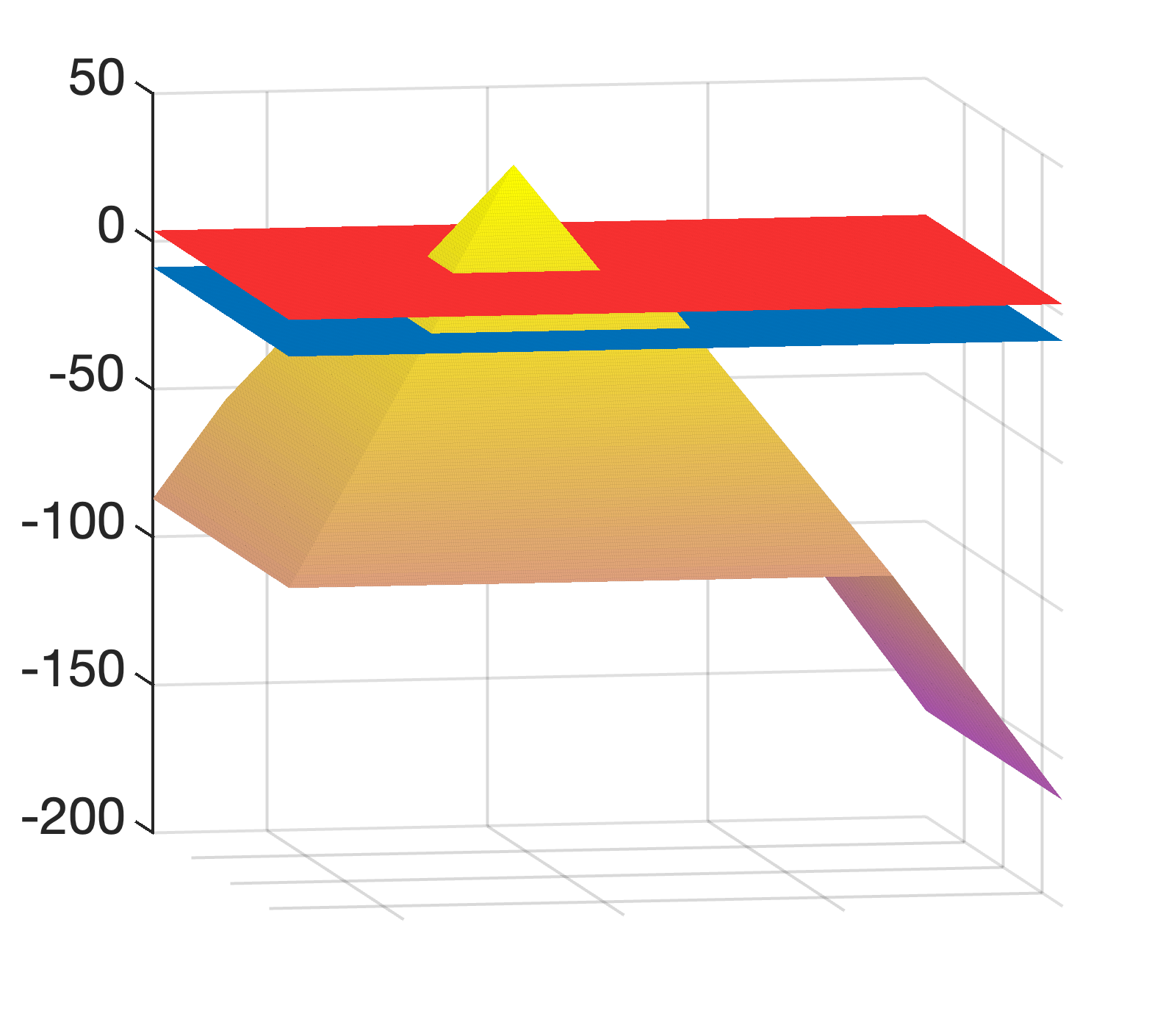}\\
		\includegraphics[width=0.24\textwidth]{../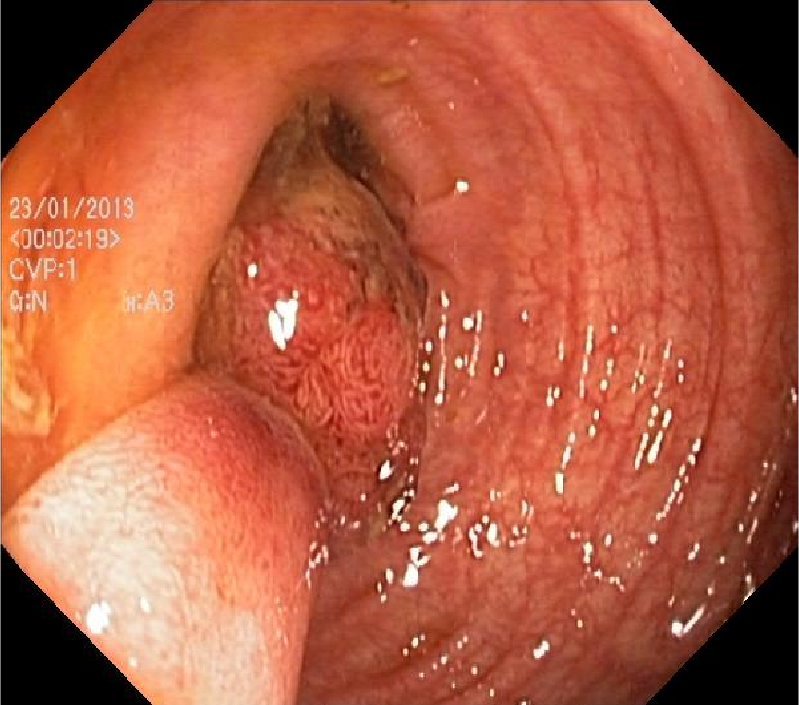}
		\includegraphics[width=0.24\textwidth]{../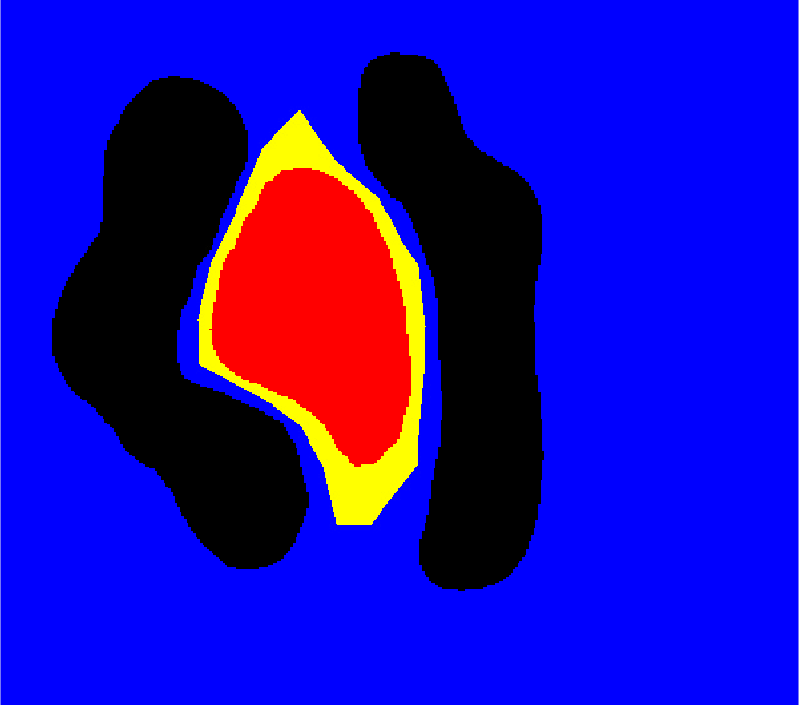}
		\includegraphics[width=0.24\textwidth]{../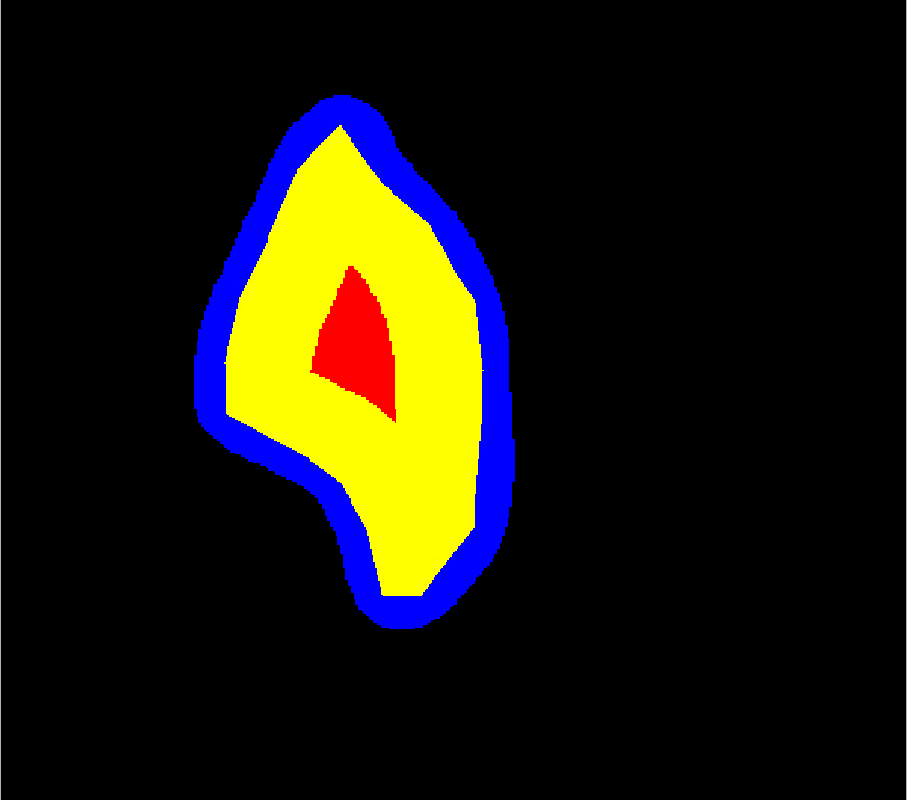}
		\includegraphics[width=0.24\textwidth]{../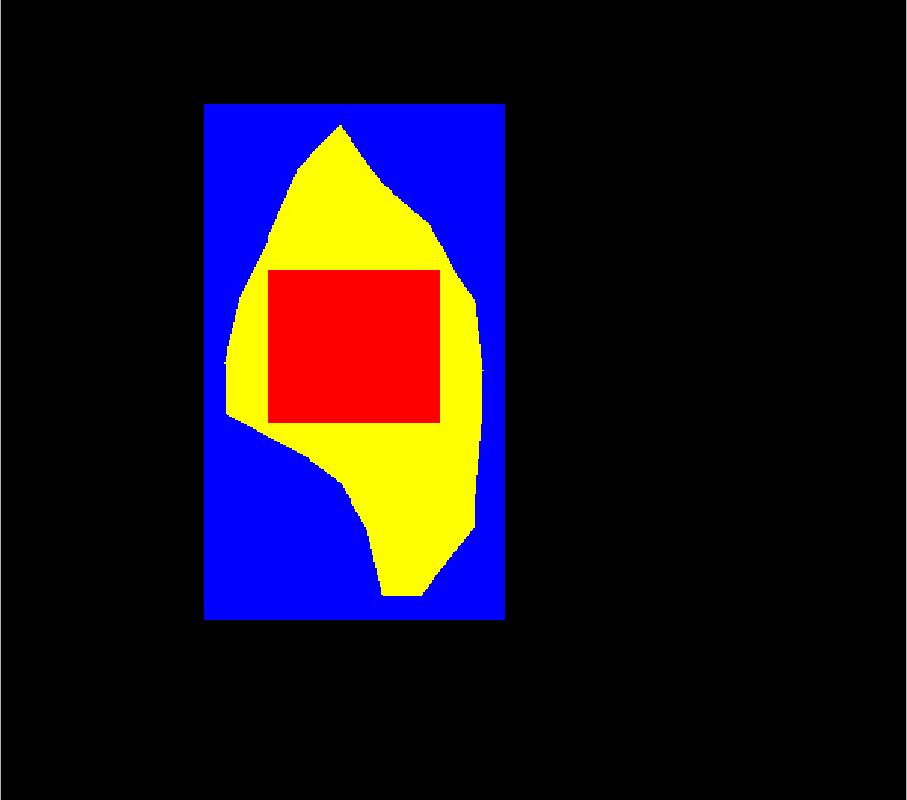}\\
		\vspace{0.5cm}
		\includegraphics[width=0.24\textwidth]{../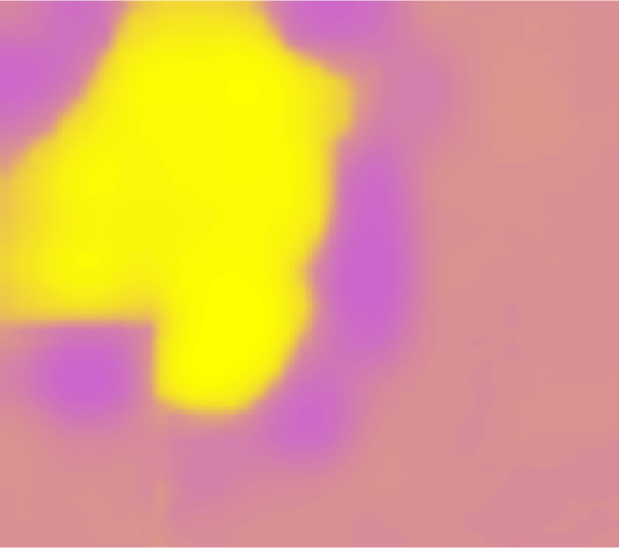}
		\includegraphics[width=0.24\textwidth]{../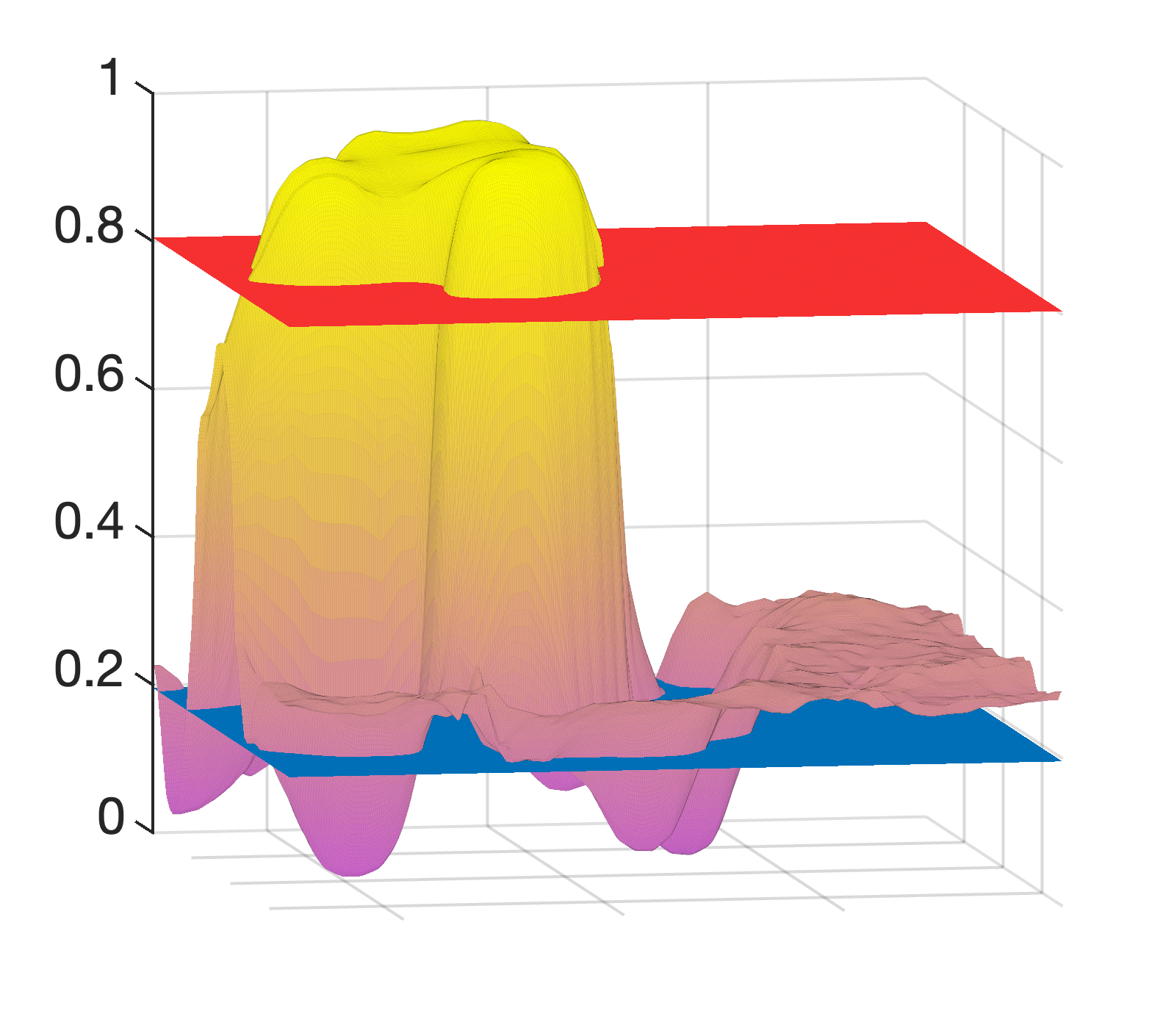}	\includegraphics[width=0.24\textwidth]{../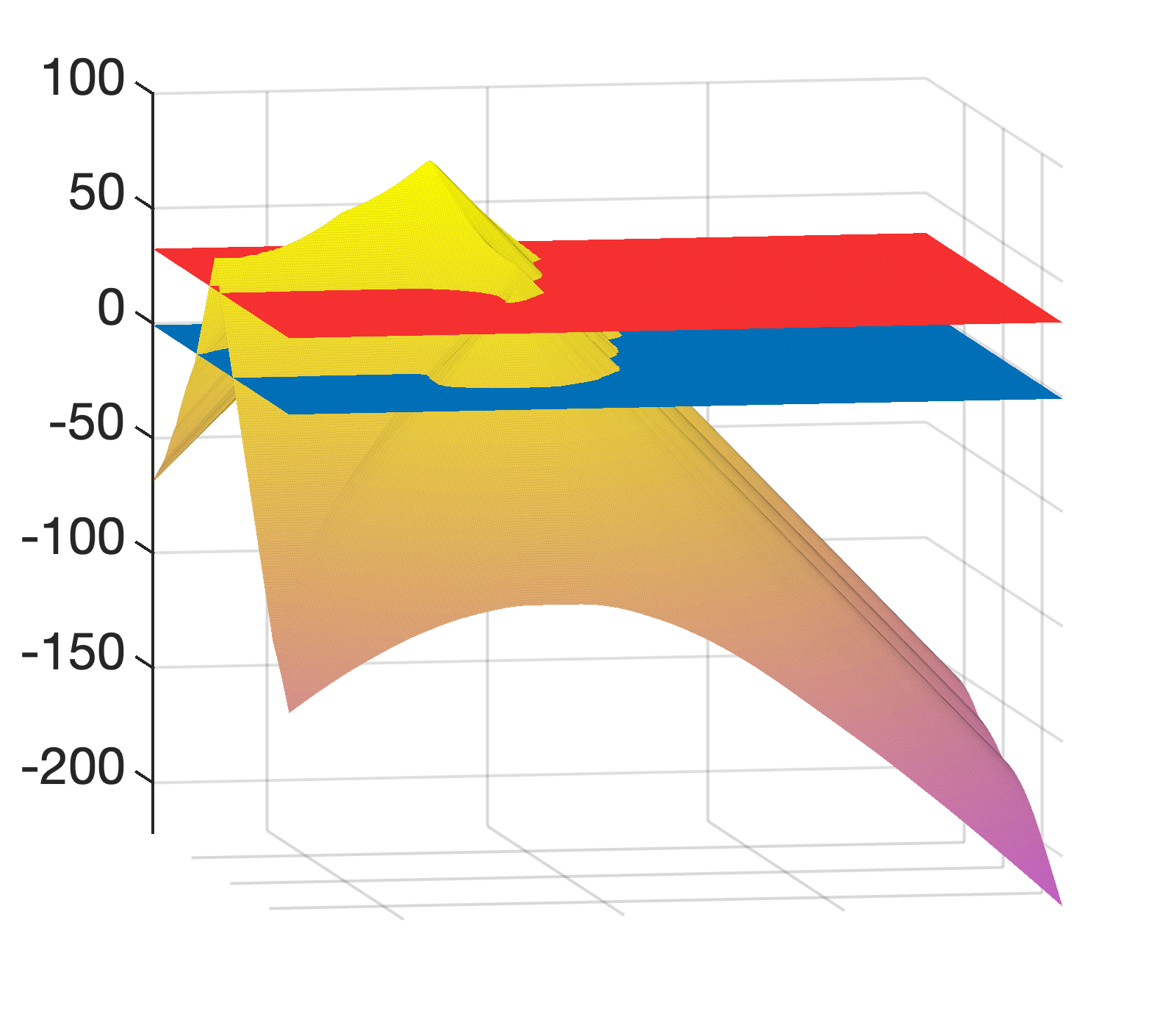}
		\includegraphics[width=0.24\textwidth]{../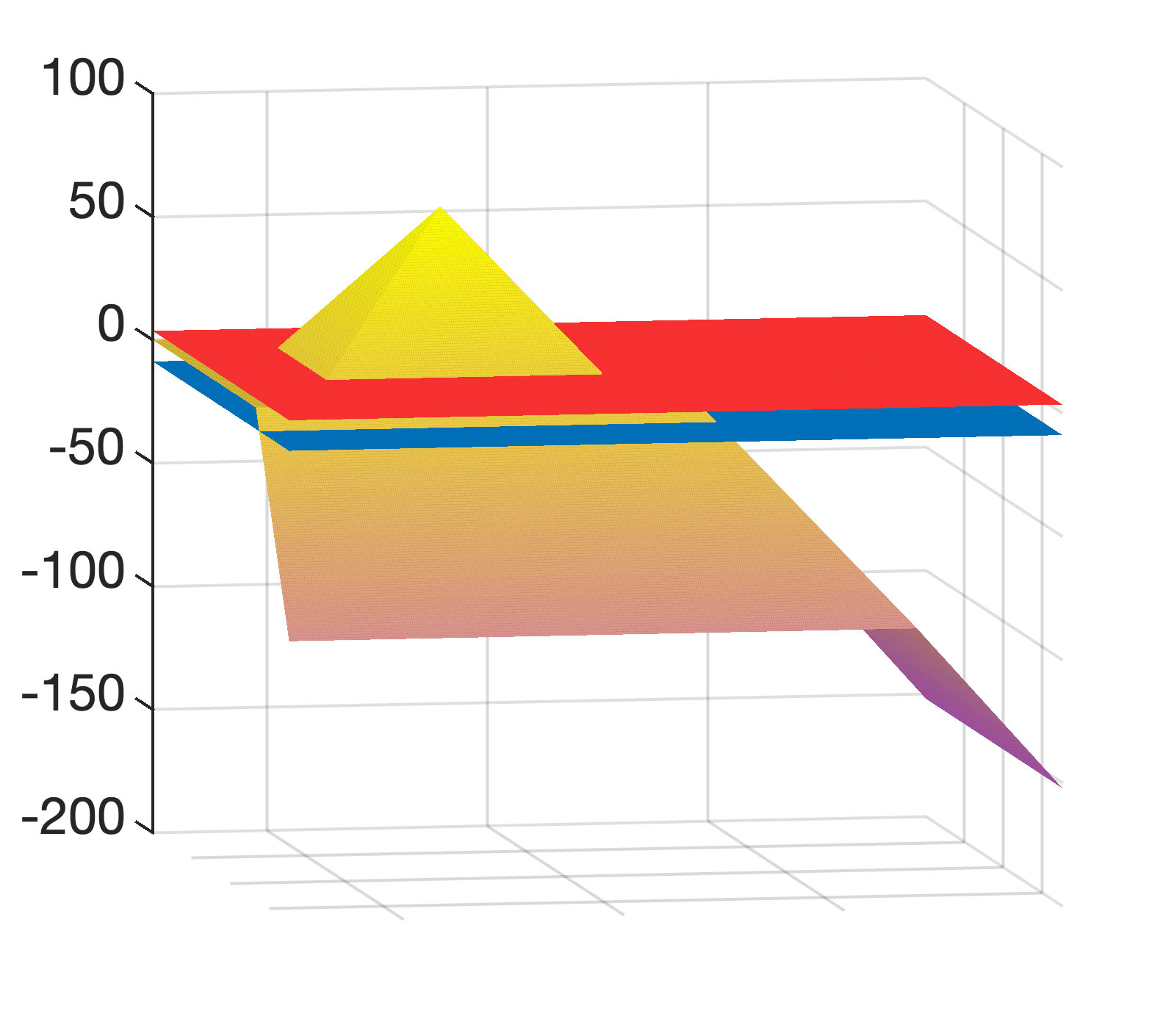}\\
		\includegraphics[width=0.24\textwidth]{../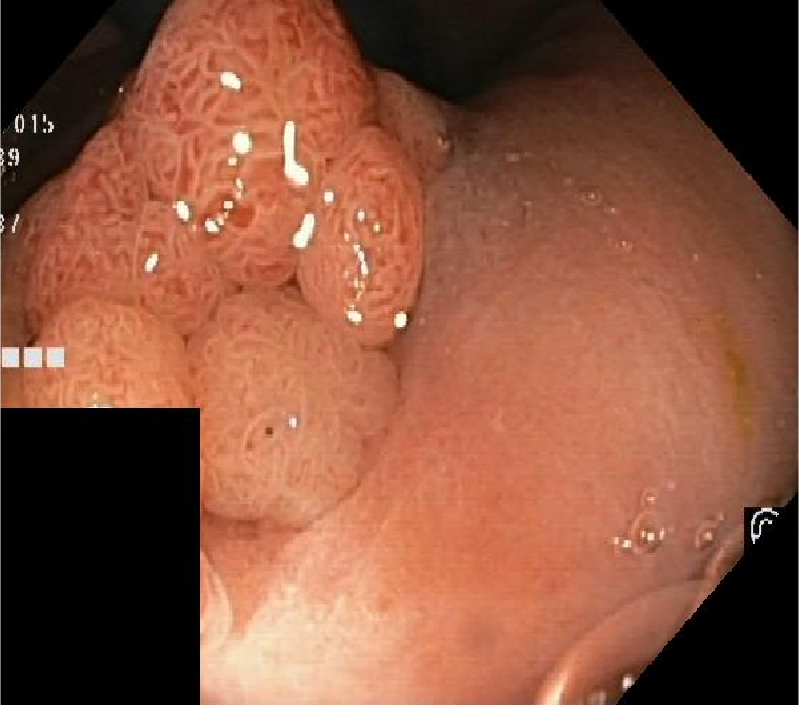}
		\includegraphics[width=0.24\textwidth]{../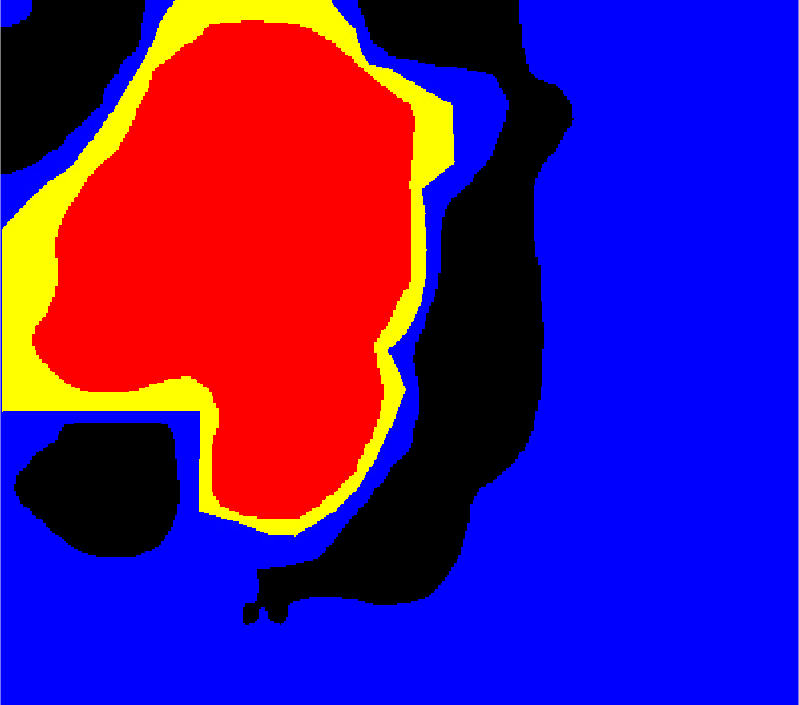}
		\includegraphics[width=0.24\textwidth]{../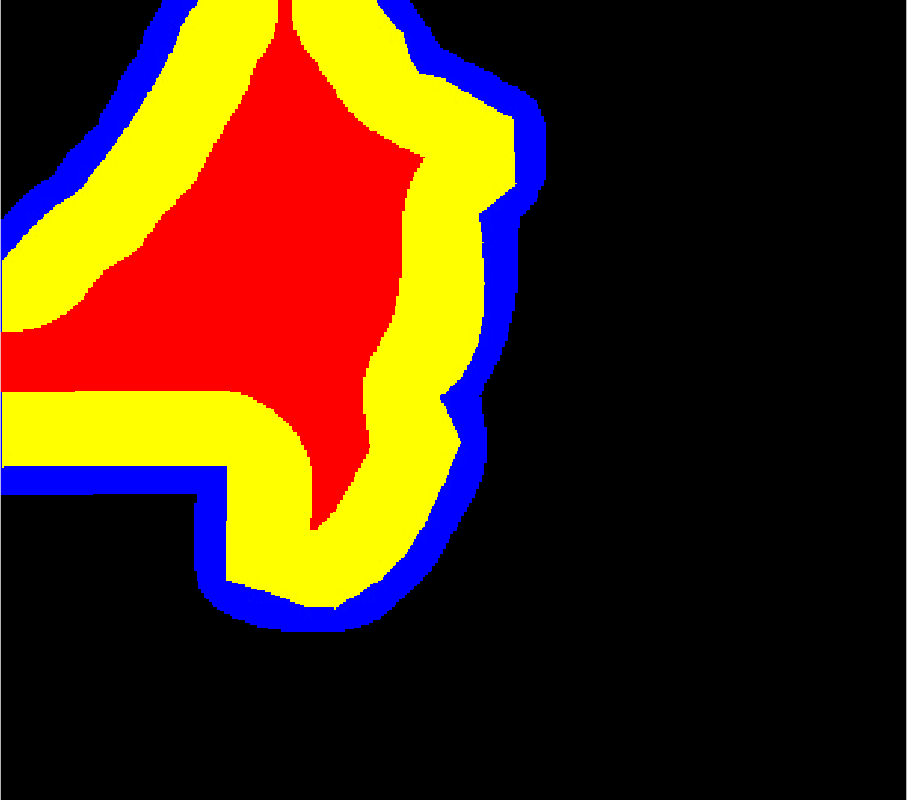}
		\includegraphics[width=0.24\textwidth]{../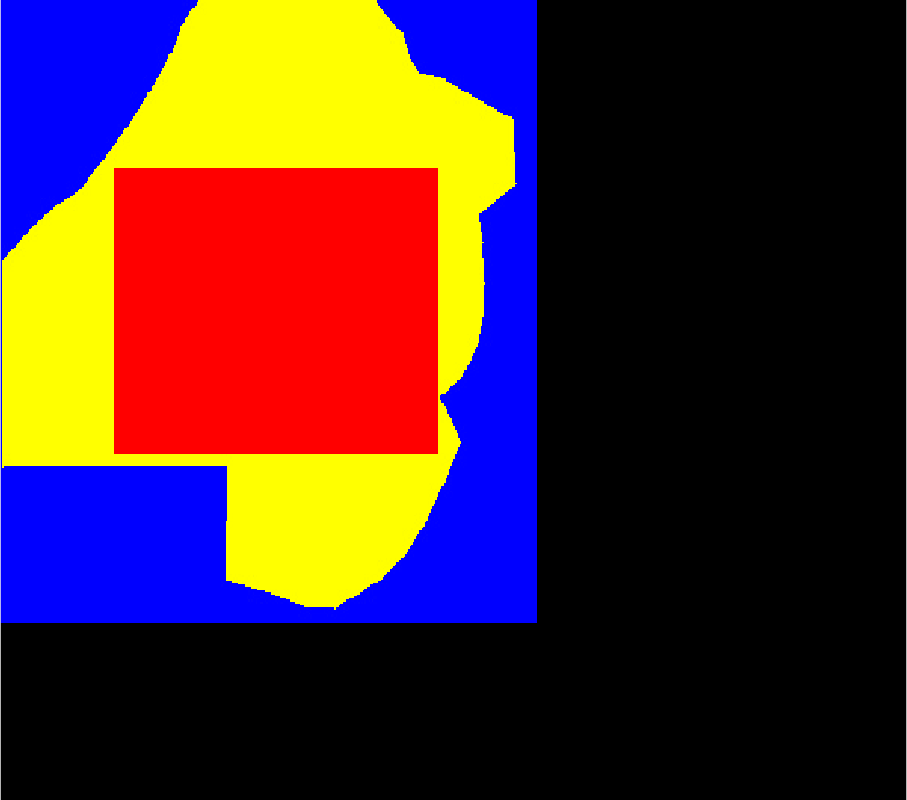}\\
		\vspace{0.5cm}
		\includegraphics[width=0.24\textwidth]{../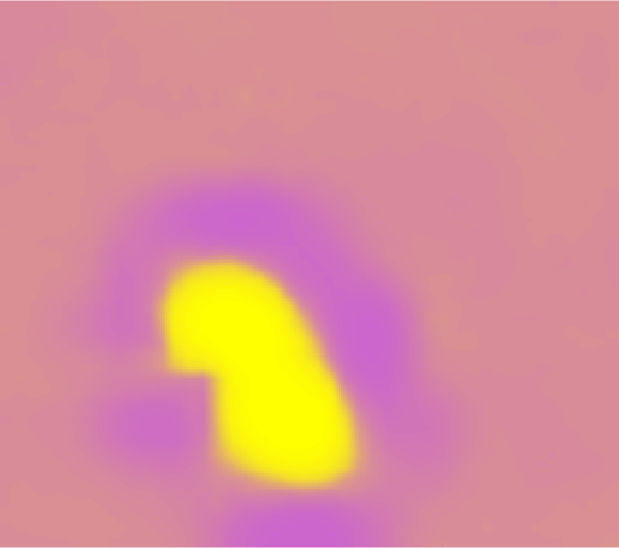}
		\includegraphics[width=0.24\textwidth]{../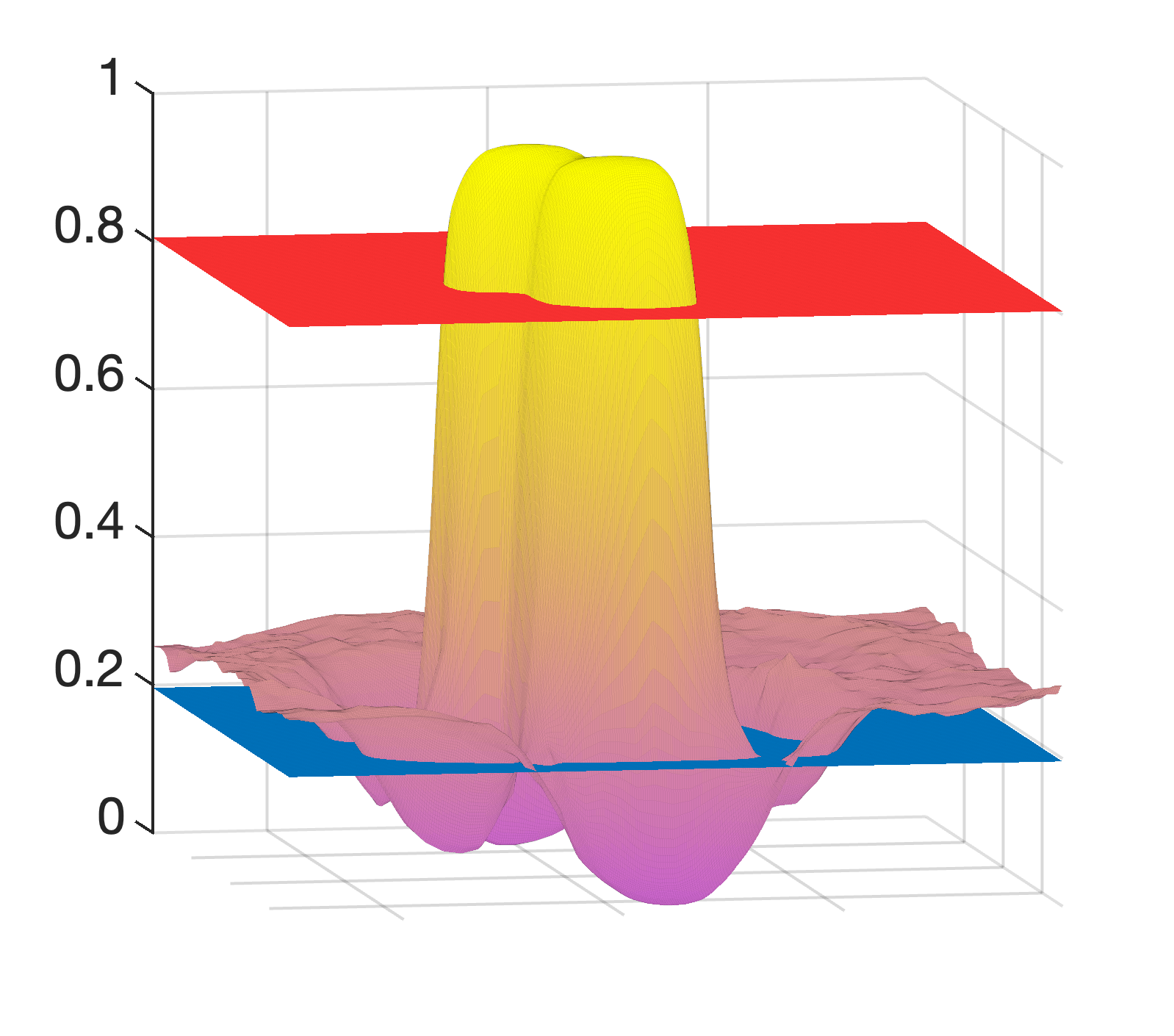}	\includegraphics[width=0.24\textwidth]{../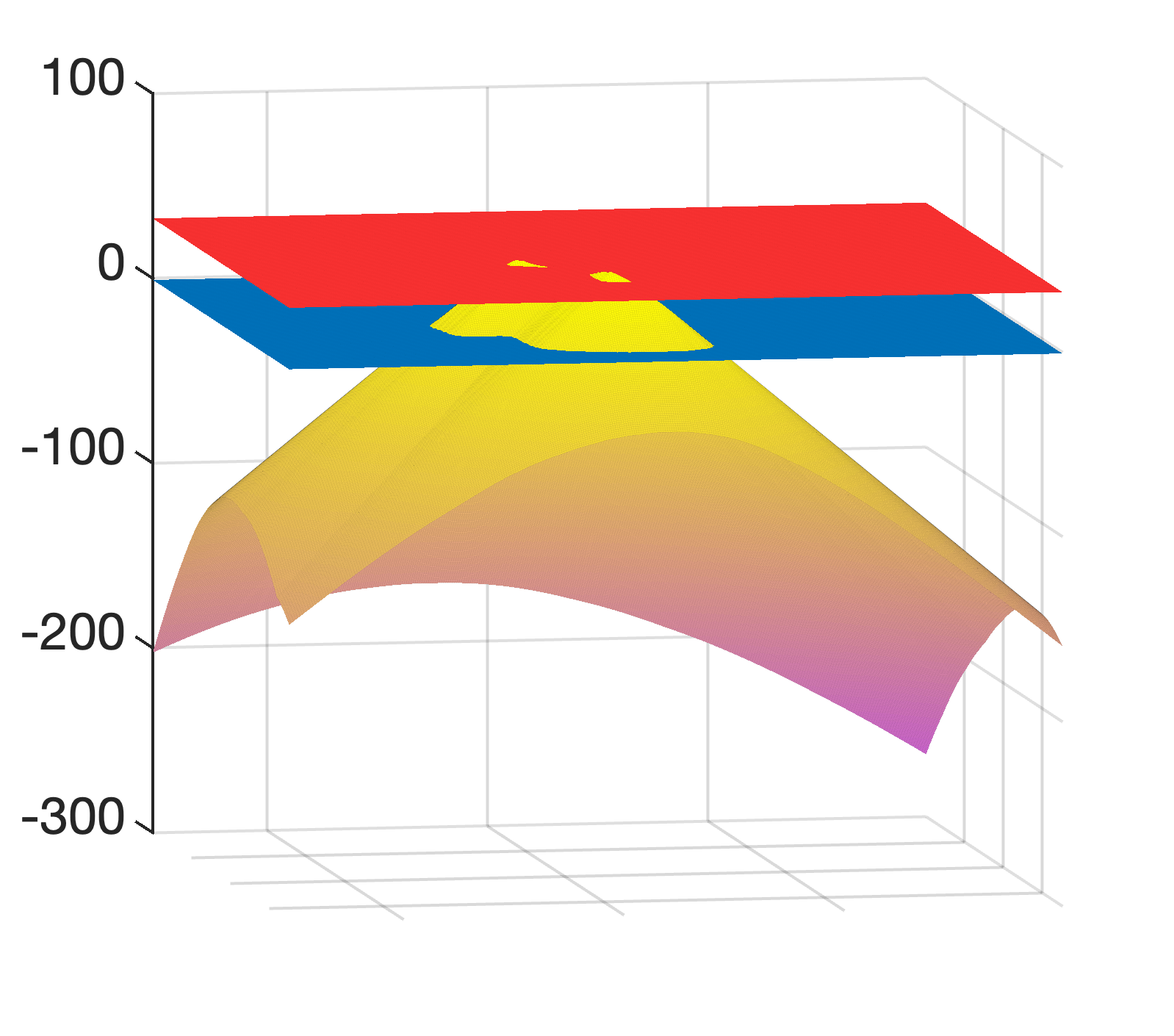}
		\includegraphics[width=0.24\textwidth]{../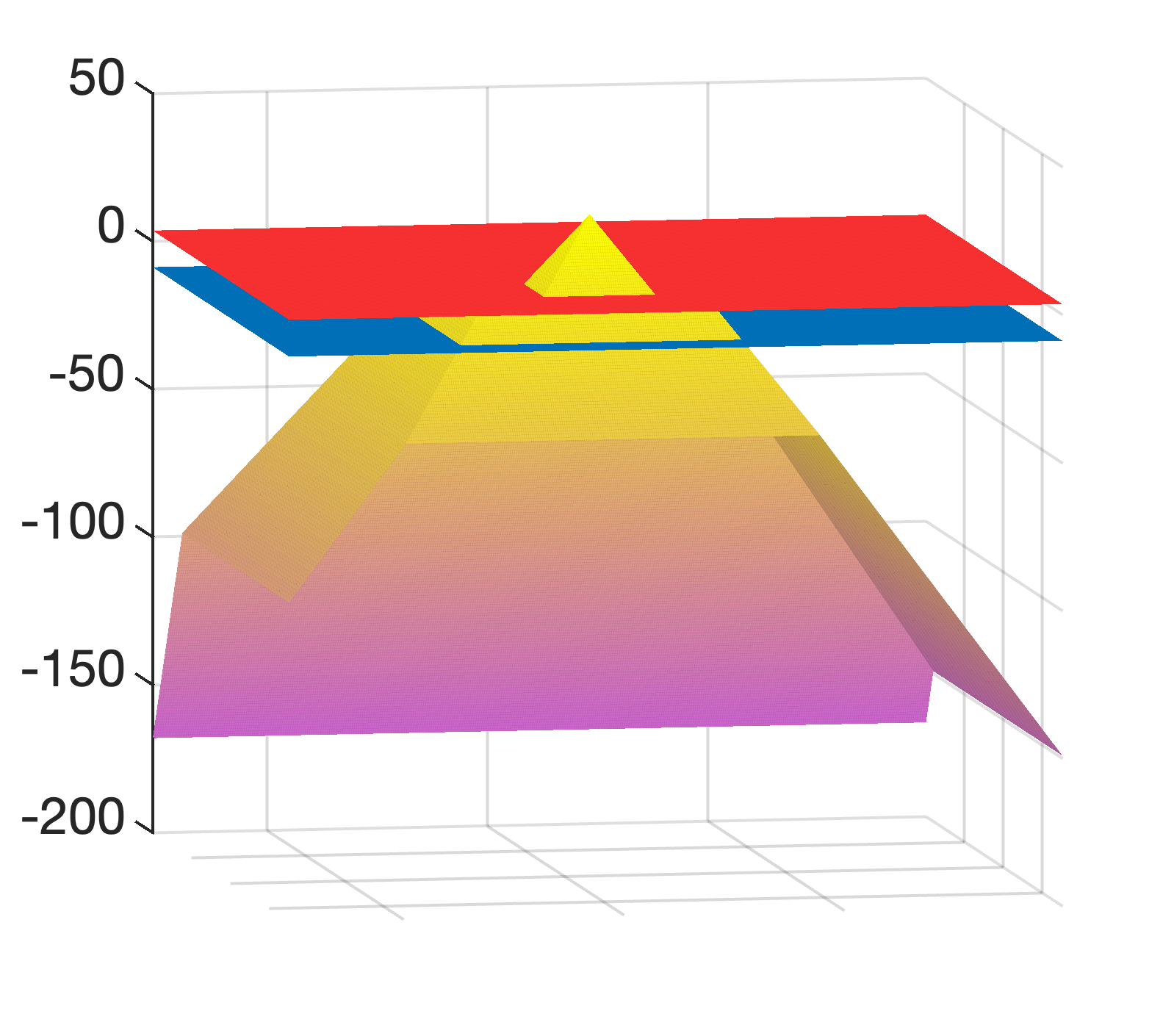}\\
		\includegraphics[width=0.24\textwidth]{../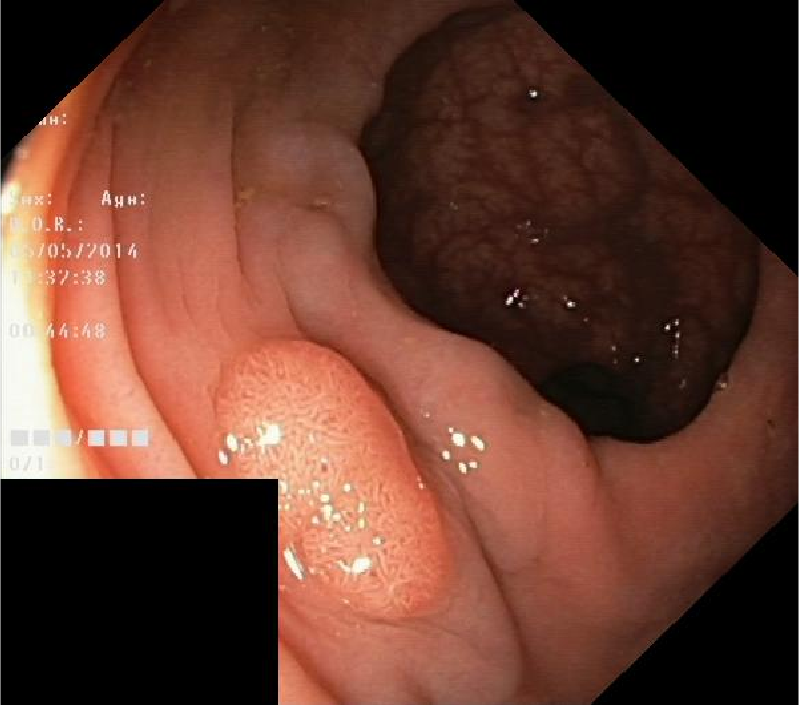}
		\includegraphics[width=0.24\textwidth]{../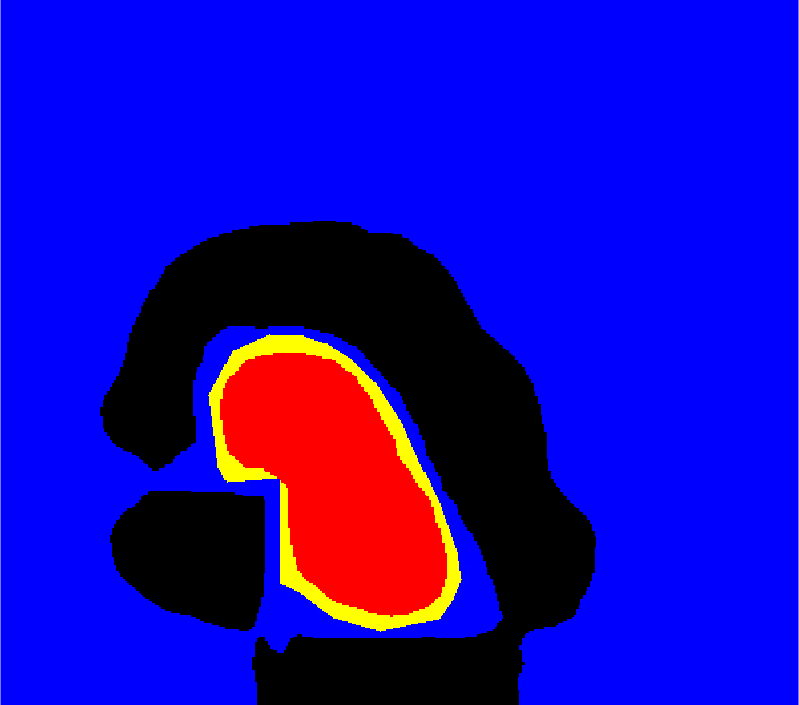}
		\includegraphics[width=0.24\textwidth]{../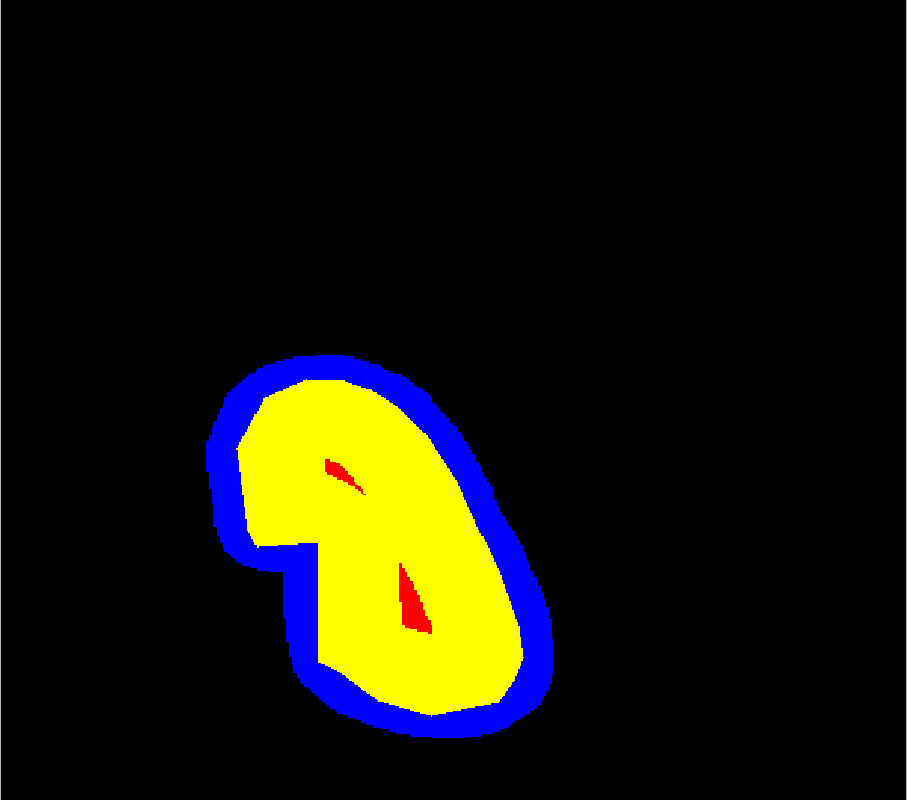}
		\includegraphics[width=0.24\textwidth]{../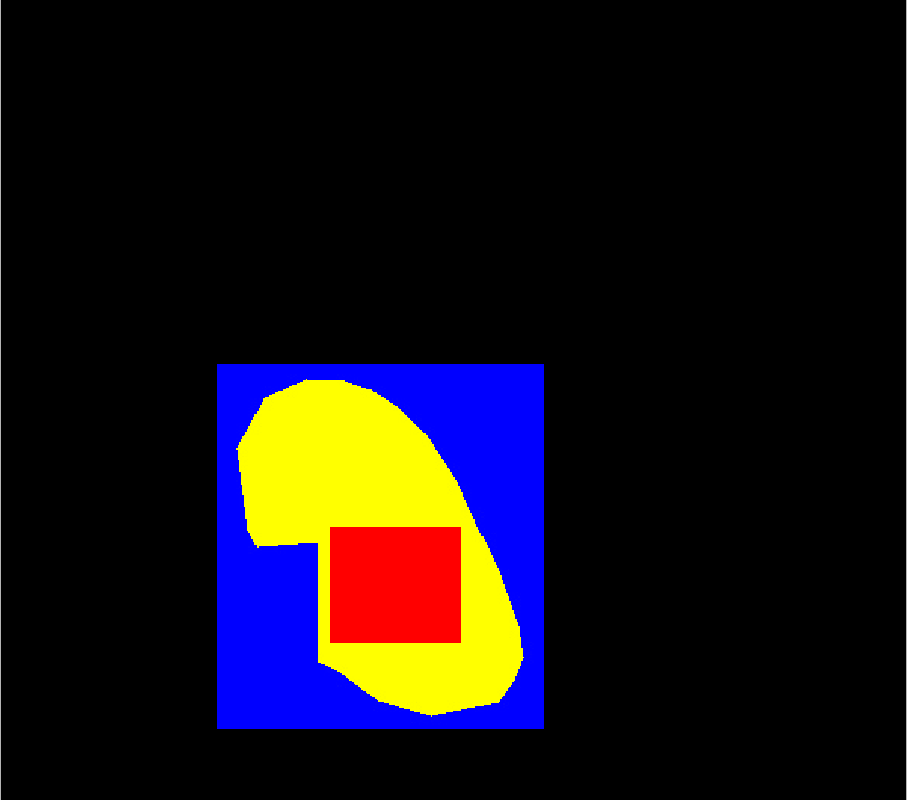}
	\end{center}
	\caption{Futher examples from the learning dataset. The layout of these figures is the same as for Figure \ref{fig:learning}.}
	\label{fig:learning3}
\end{figure}

\newpage
\subsection{Validition figures for the original and bounding box scores}\label{SS:furtherval}
\begin{figure}[h!]
	\begin{subfigure}{0.19\textwidth}
		\centering
		\includegraphics[width=\textwidth]{../figures/all_images/61.png}
		\label{fig:1}
	\end{subfigure}
	\begin{subfigure}{0.19\textwidth}
		\centering
		\includegraphics[width=\textwidth]{../figures/all_images/114.png}
		\label{fig:1}
	\end{subfigure}
	\begin{subfigure}{0.19\textwidth}
		\centering
		\includegraphics[width=\textwidth]{../figures/all_images/144.png}
		\label{fig:1}
	\end{subfigure}
	\begin{subfigure}{0.19\textwidth}
		\centering
		\includegraphics[width=\textwidth]{../figures/all_images/148.png}
		\label{fig:1}
	\end{subfigure}
	\begin{subfigure}{0.19\textwidth}
		\centering
		\includegraphics[width=\textwidth]{../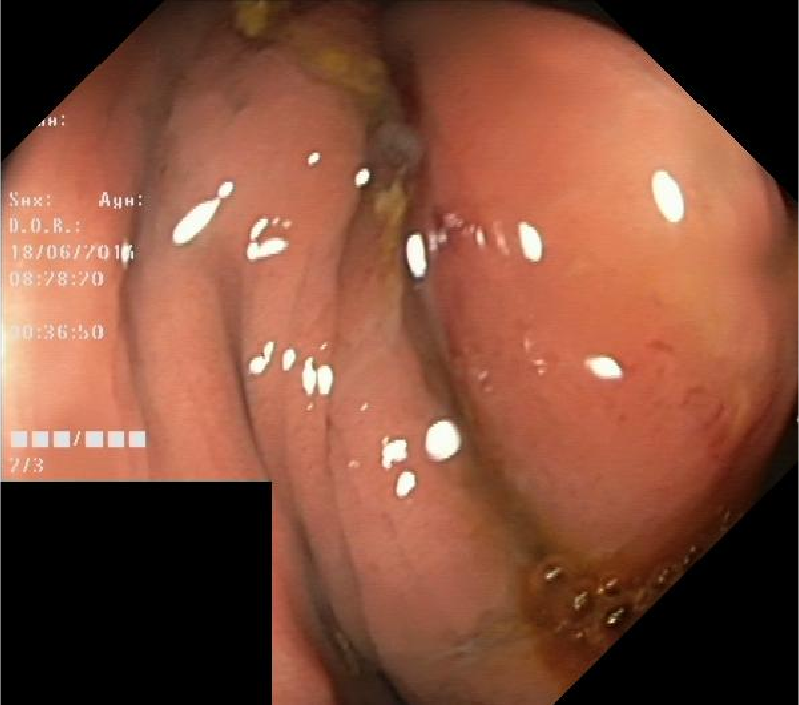}
		\label{fig:1}
	\end{subfigure}
	\vspace{-0.35cm}
	\\
	\begin{subfigure}{0.19\textwidth}
		\centering
		\includegraphics[width=\textwidth]{../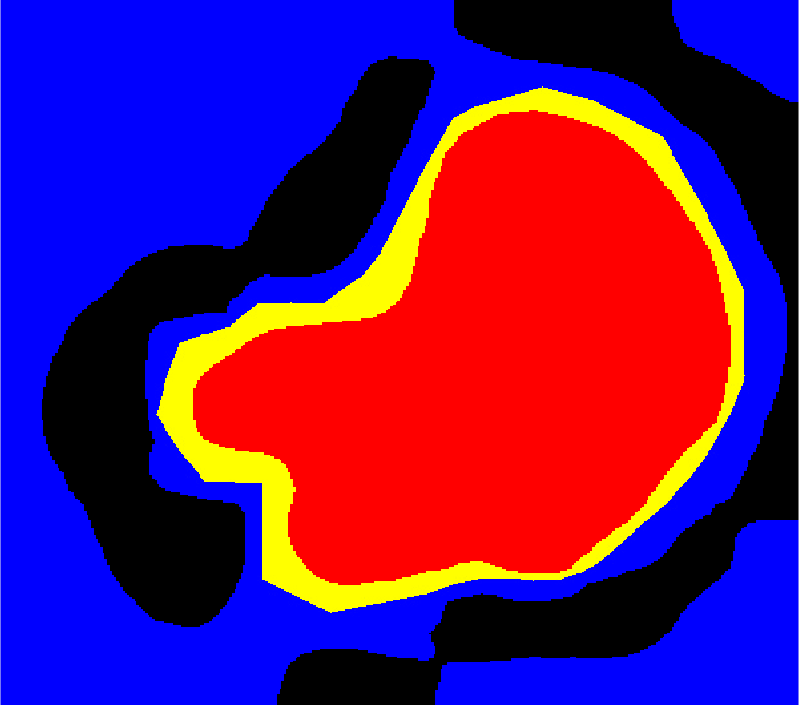}
		\label{fig:1}
	\end{subfigure}
	\begin{subfigure}{0.19\textwidth}
		\centering
		\includegraphics[width=\textwidth]{../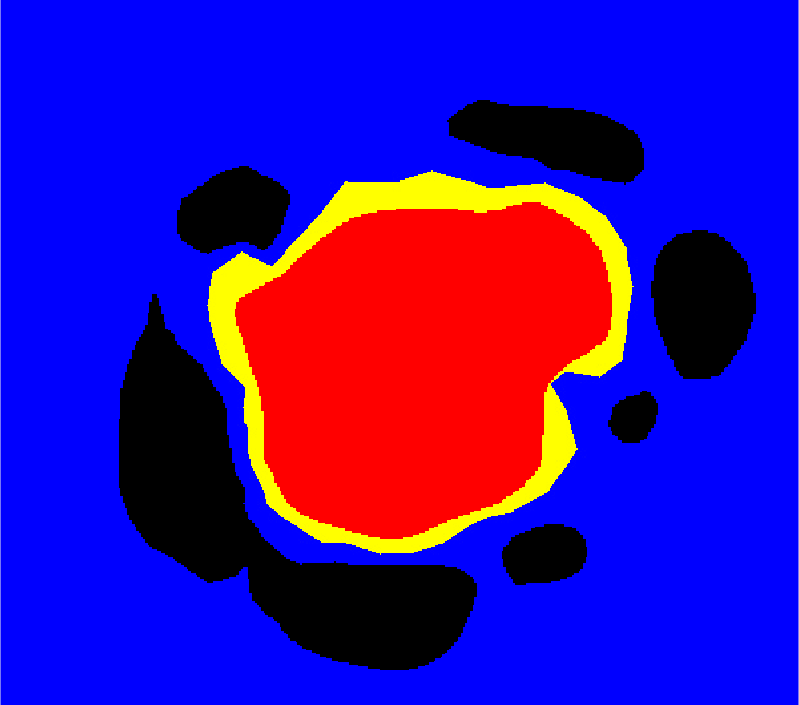}
		\label{fig:1}
	\end{subfigure}
	\begin{subfigure}{0.19\textwidth}
		\centering
		\includegraphics[width=\textwidth]{../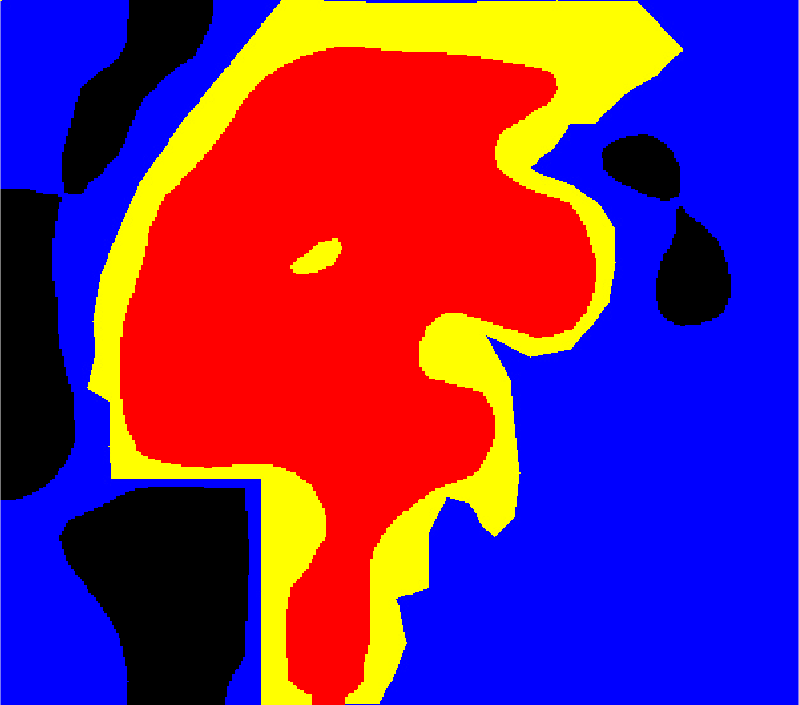}
		\label{fig:1}
	\end{subfigure}
	\begin{subfigure}{0.19\textwidth}
		\centering
		\includegraphics[width=\textwidth]{../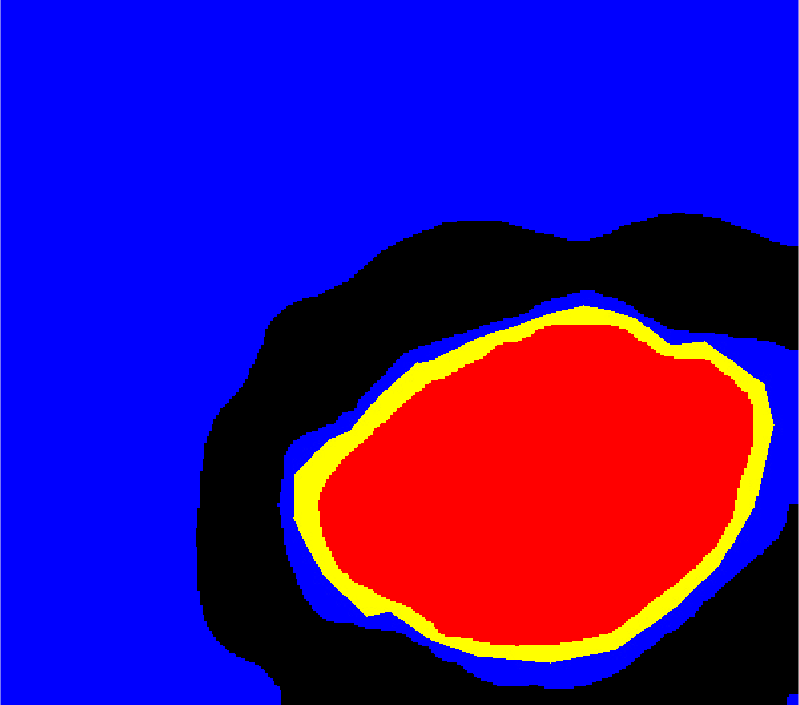}
		\label{fig:1}
	\end{subfigure}
	\begin{subfigure}{0.19\textwidth}
		\centering
		\includegraphics[width=\textwidth]{../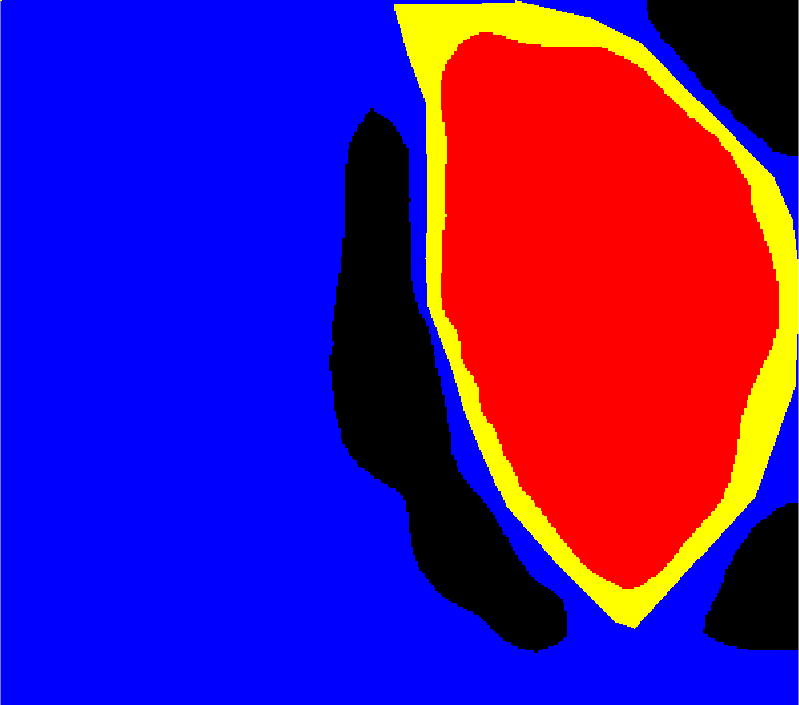}
		\label{fig:1}
	\end{subfigure}
	\vspace{-0.35cm}
	\\
		\begin{subfigure}{0.19\textwidth}
		\centering
		\includegraphics[width=\textwidth]{../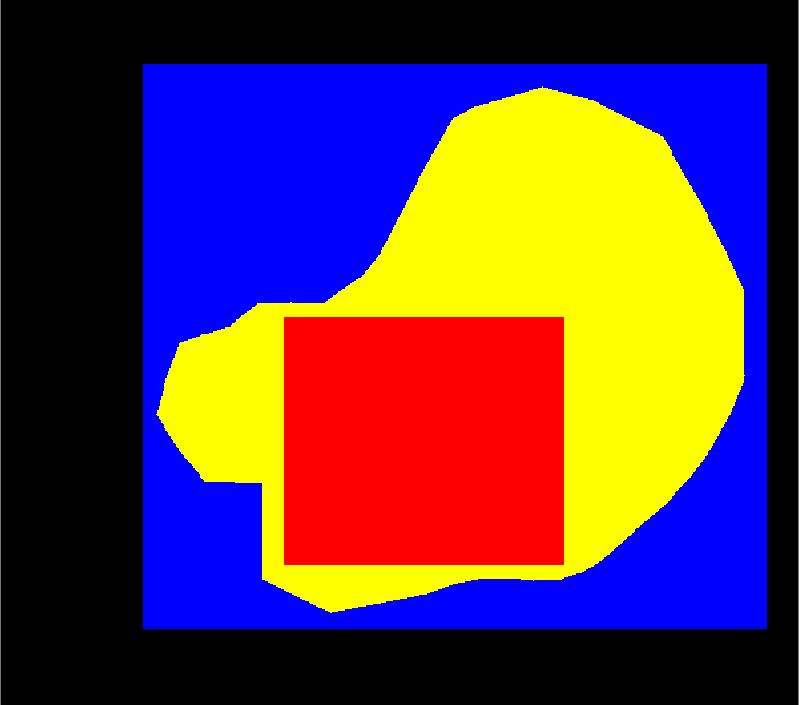}
		\label{fig:1}
	\end{subfigure}
	\begin{subfigure}{0.19\textwidth}
		\centering
		\includegraphics[width=\textwidth]{../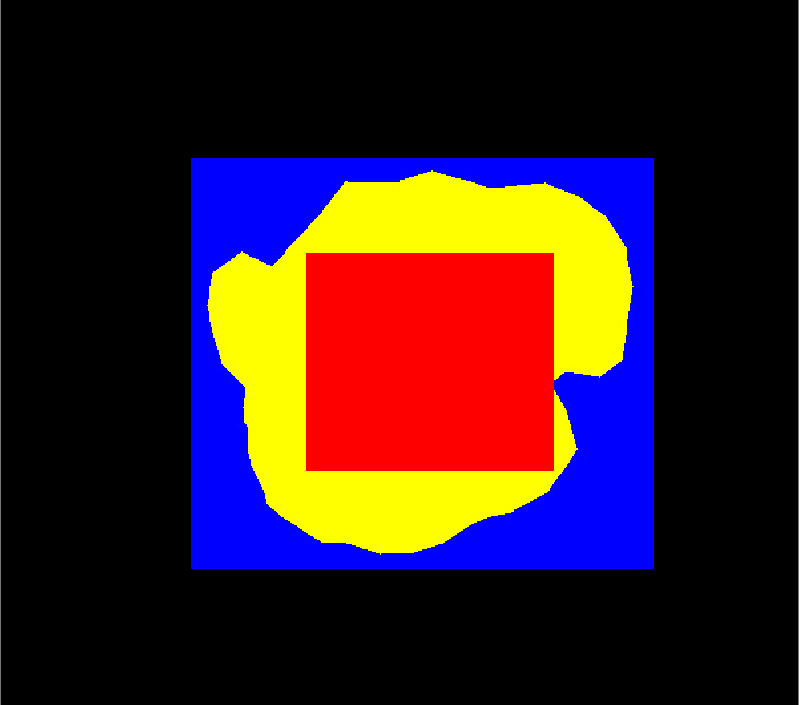}
		\label{fig:1}
	\end{subfigure}
	\begin{subfigure}{0.19\textwidth}
		\centering
		\includegraphics[width=\textwidth]{../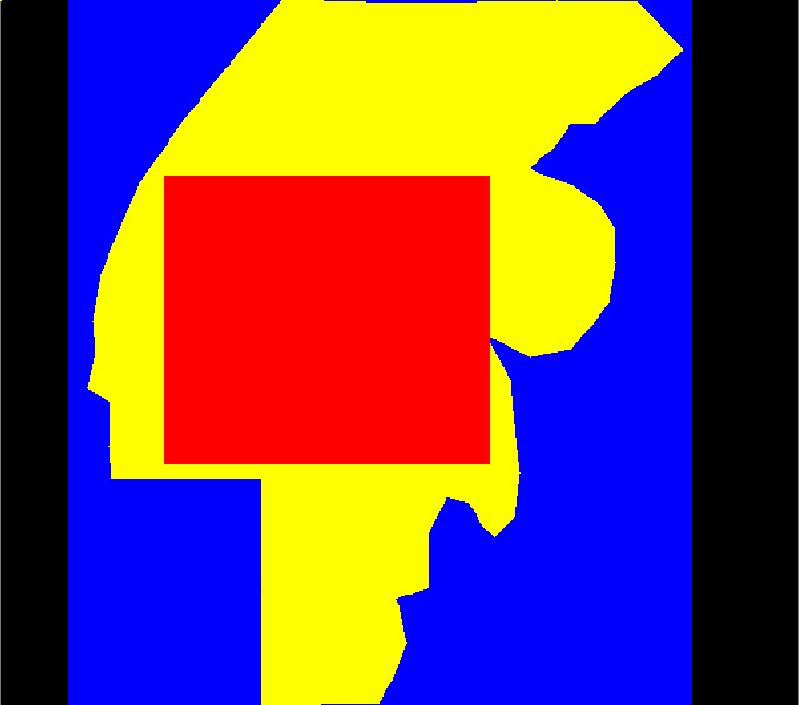}
		\label{fig:1}
	\end{subfigure}
	\begin{subfigure}{0.19\textwidth}
		\centering
		\includegraphics[width=\textwidth]{../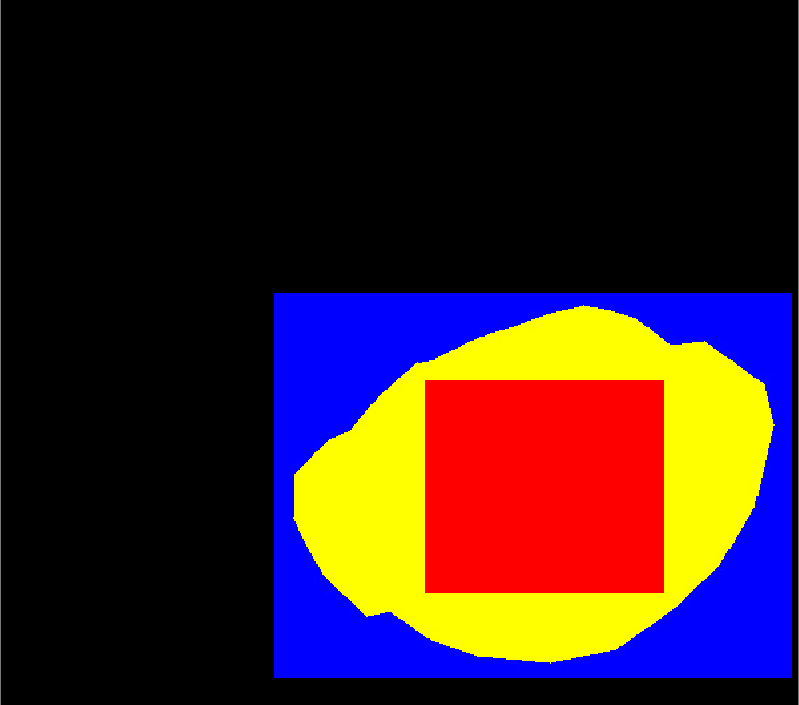}
		\label{fig:1}
	\end{subfigure}
	\begin{subfigure}{0.19\textwidth}
		\centering
		\includegraphics[width=\textwidth]{../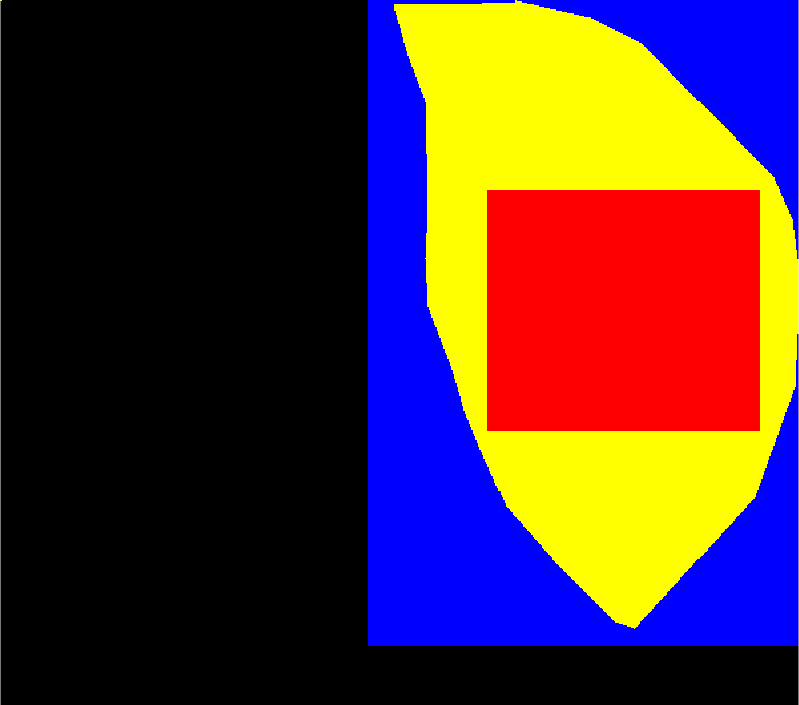}
		\label{fig:1}
	\end{subfigure}
	\vspace{-0.35cm}
	\\
	\begin{subfigure}{0.19\textwidth}
		\centering
		\includegraphics[width=\textwidth]{../figures/all_images/7.png}
		\label{fig:1}
	\end{subfigure}
	\begin{subfigure}{0.19\textwidth}
		\centering
		\includegraphics[width=\textwidth]{../figures/all_images/211.png}
		\label{fig:1}
	\end{subfigure}
	\begin{subfigure}{0.19\textwidth}
		\centering
		\includegraphics[width=\textwidth]{../figures/all_images/1062.png}
		\label{fig:1}
	\end{subfigure}
	\begin{subfigure}{0.19\textwidth}
		\centering
		\includegraphics[width=\textwidth]{../figures/all_images/398.png}
		\label{fig:1}
	\end{subfigure}
	\begin{subfigure}{0.19\textwidth}
		\centering
		\includegraphics[width=\textwidth]{../figures/all_images/269.png}
		\label{fig:1}
	\end{subfigure}
	\vspace{-0.35cm}
	\\
	\begin{subfigure}{0.19\textwidth}
		\centering
		\includegraphics[width=\textwidth]{../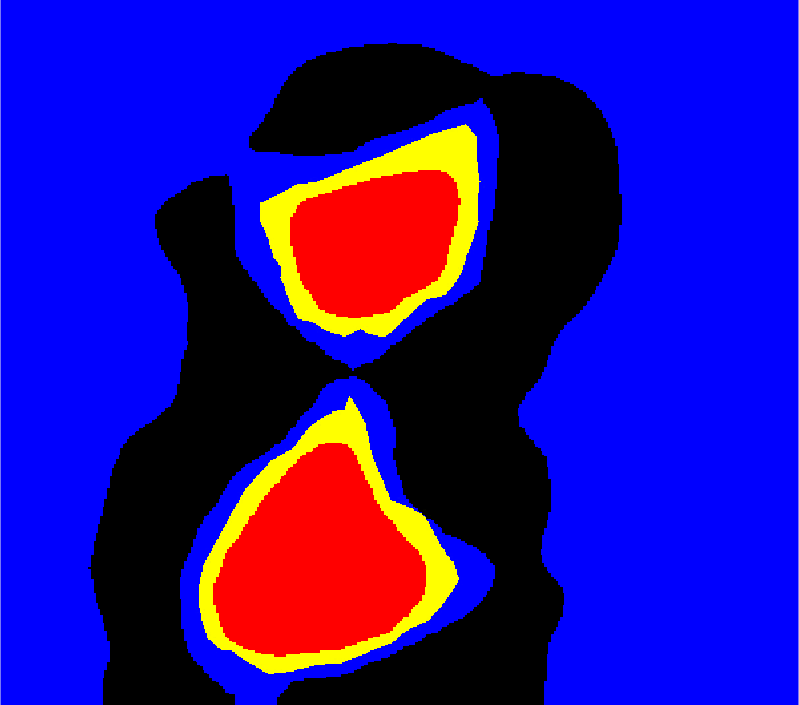}
		\label{fig:1}
	\end{subfigure}
	\begin{subfigure}{0.19\textwidth}
		\centering
		\includegraphics[width=\textwidth]{../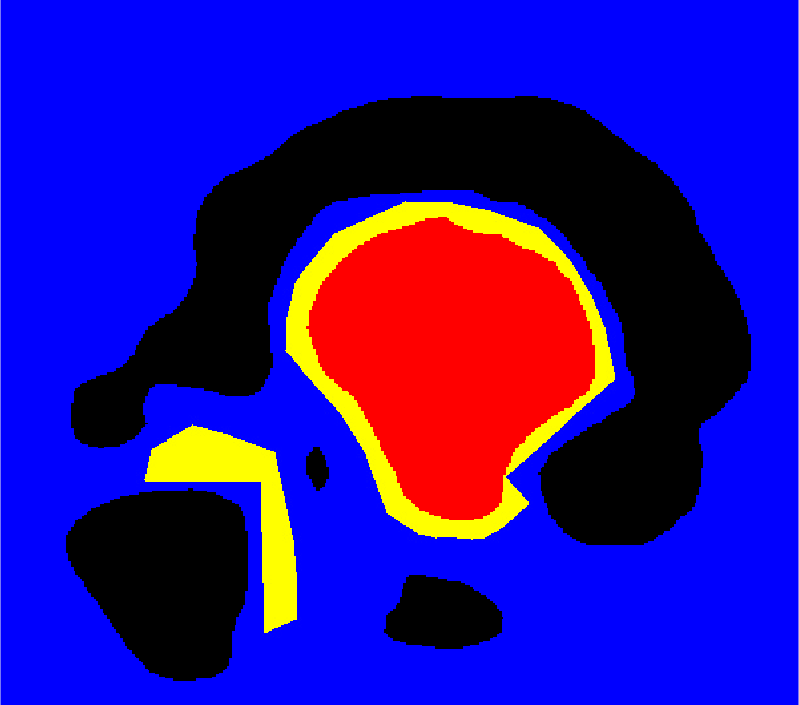}
		\label{fig:1}
	\end{subfigure}
	\begin{subfigure}{0.19\textwidth}
		\centering
		\includegraphics[width=\textwidth]{../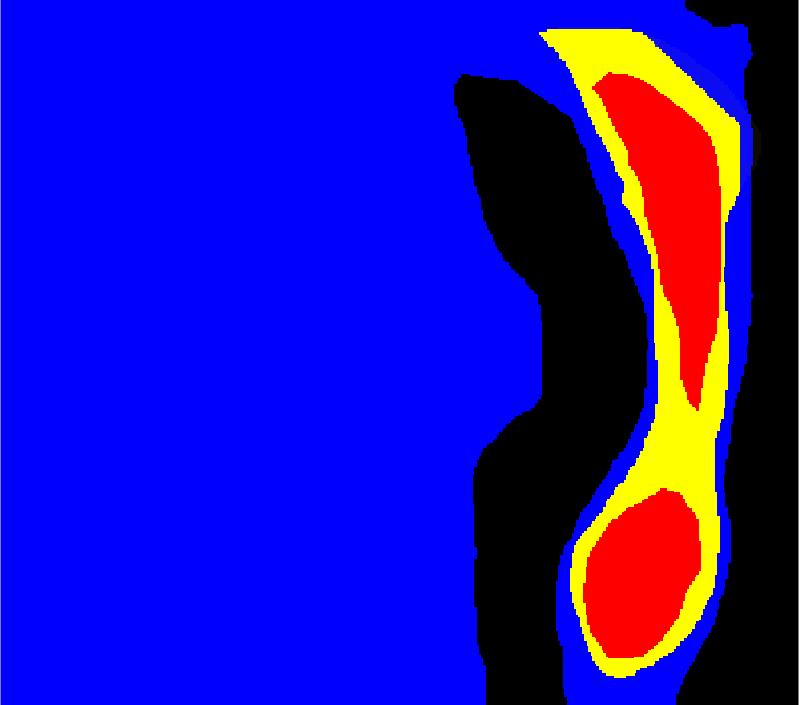}
		\label{fig:1}
	\end{subfigure}
	\begin{subfigure}{0.19\textwidth}
		\centering
		\includegraphics[width=\textwidth]{../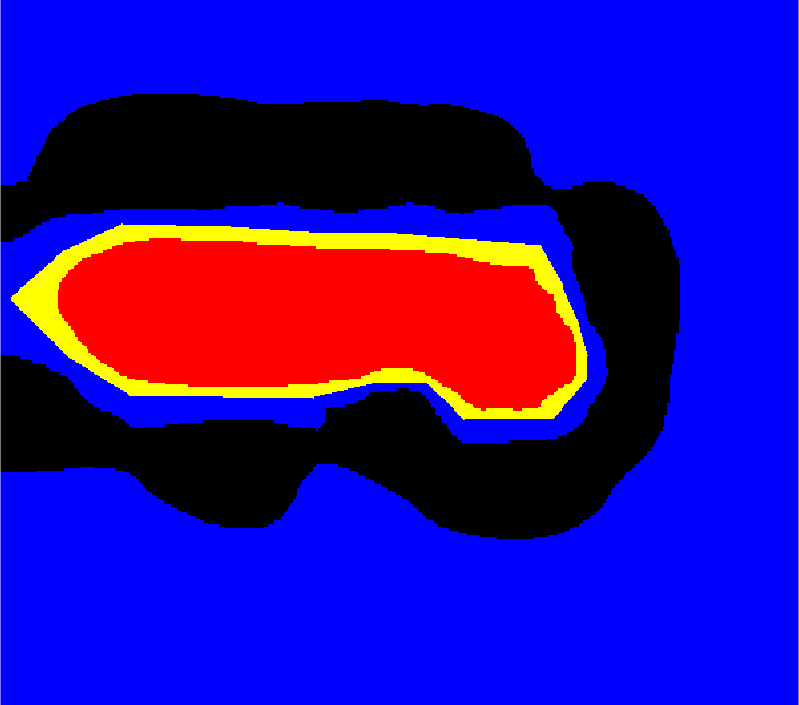}
		\label{fig:1}
	\end{subfigure}
	\begin{subfigure}{0.19\textwidth}
		\centering
		\includegraphics[width=\textwidth]{../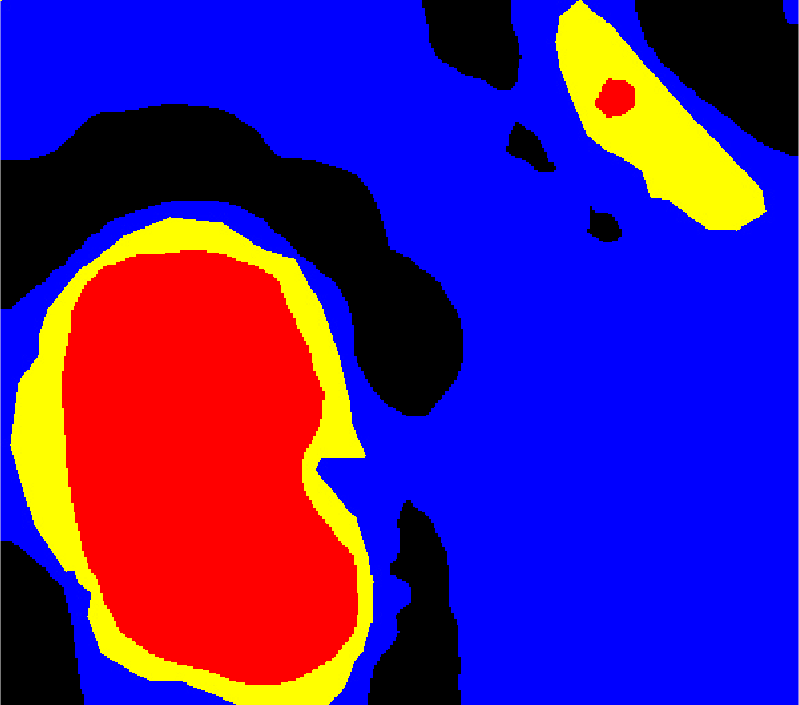}
		\label{fig:1}
	\end{subfigure}
	\vspace{-0.35cm}
	\\
	\begin{subfigure}{0.19\textwidth}
		\centering
		\includegraphics[width=\textwidth]{../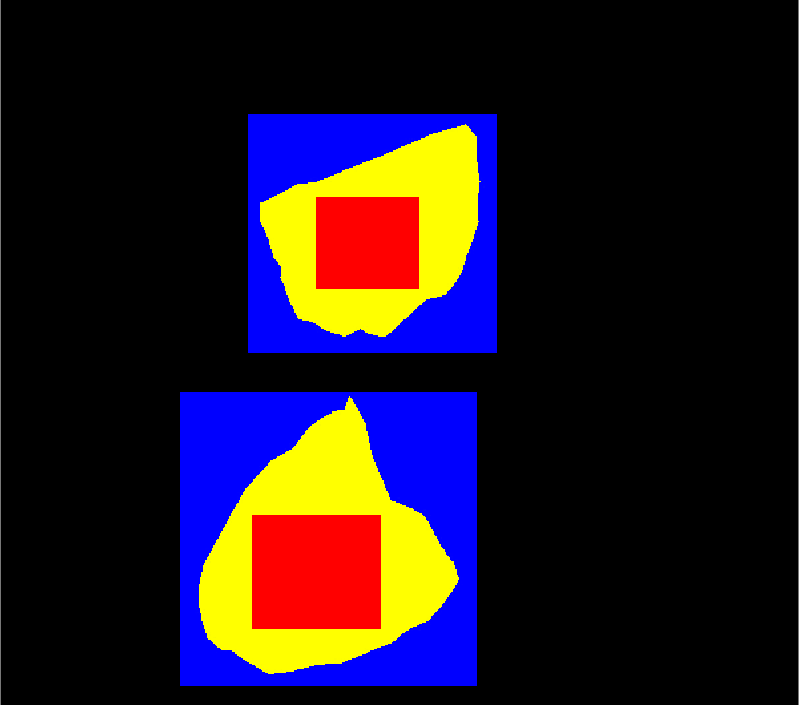}
		\label{fig:1}
	\end{subfigure}
	\begin{subfigure}{0.19\textwidth}
		\centering
		\includegraphics[width=\textwidth]{../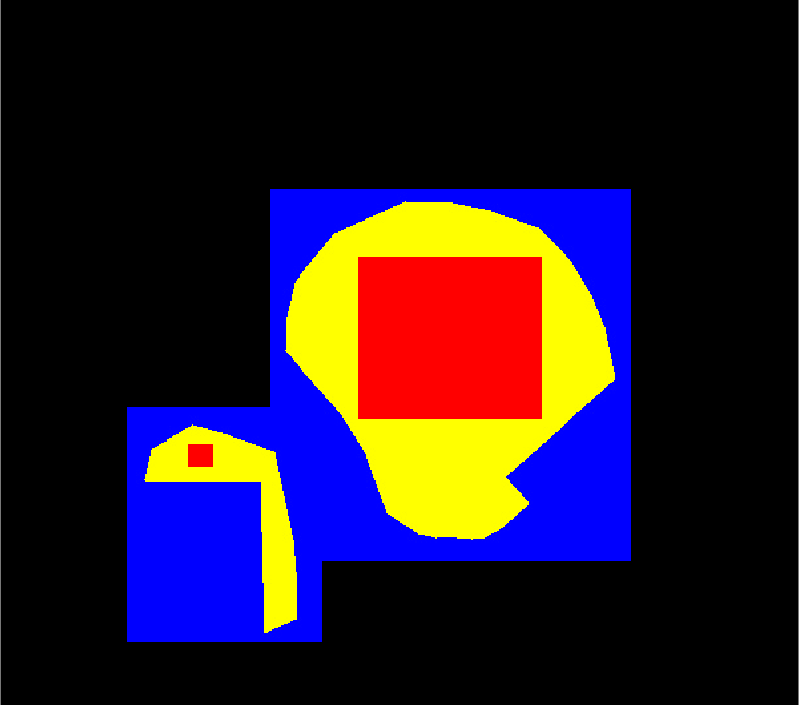}
		\label{fig:1}
	\end{subfigure}
	\begin{subfigure}{0.19\textwidth}
		\centering
		\includegraphics[width=\textwidth]{../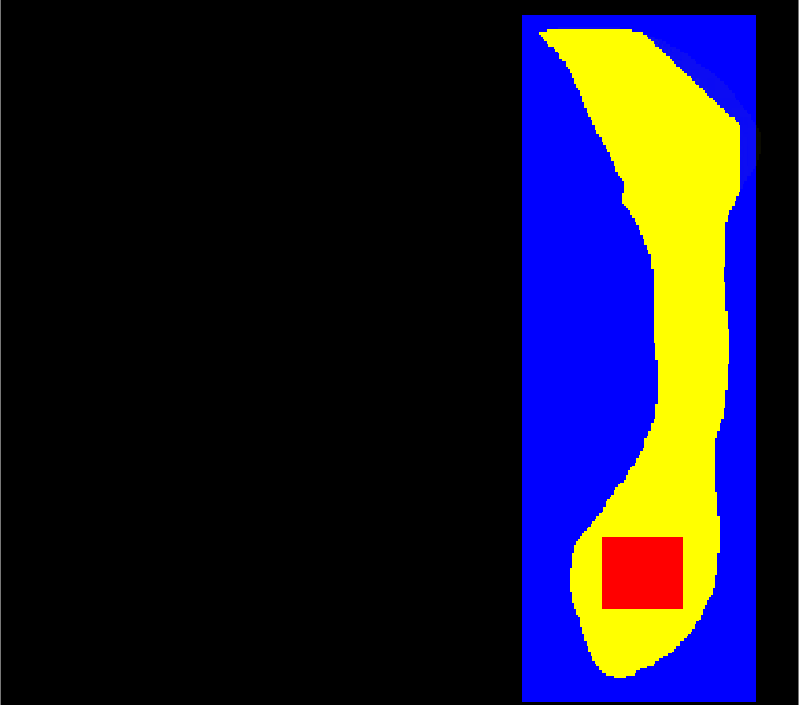}
		\label{fig:1}
	\end{subfigure}
	\begin{subfigure}{0.19\textwidth}
		\centering
		\includegraphics[width=\textwidth]{../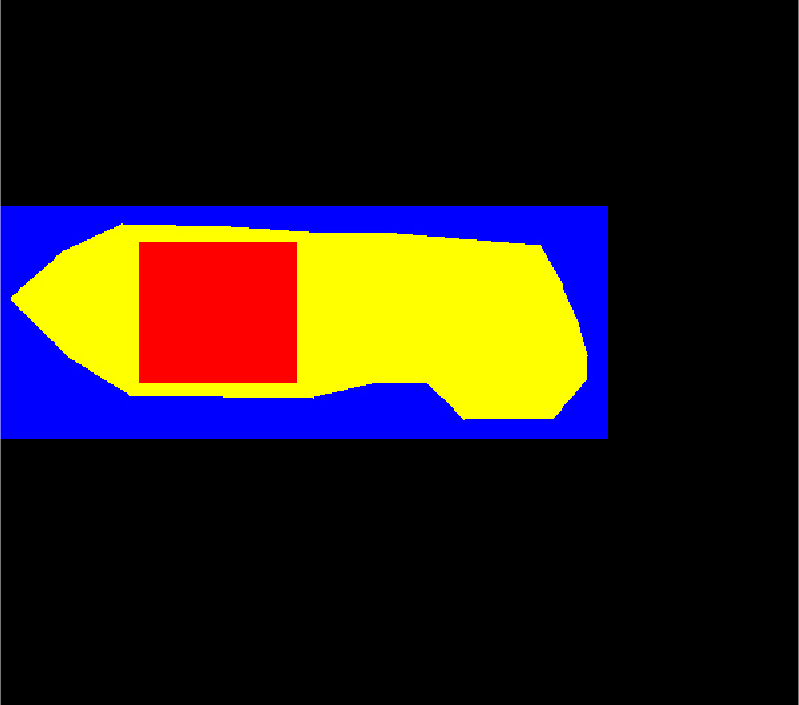}
		\label{fig:1}
	\end{subfigure}
	\begin{subfigure}{0.19\textwidth}
		\centering
		\includegraphics[width=\textwidth]{../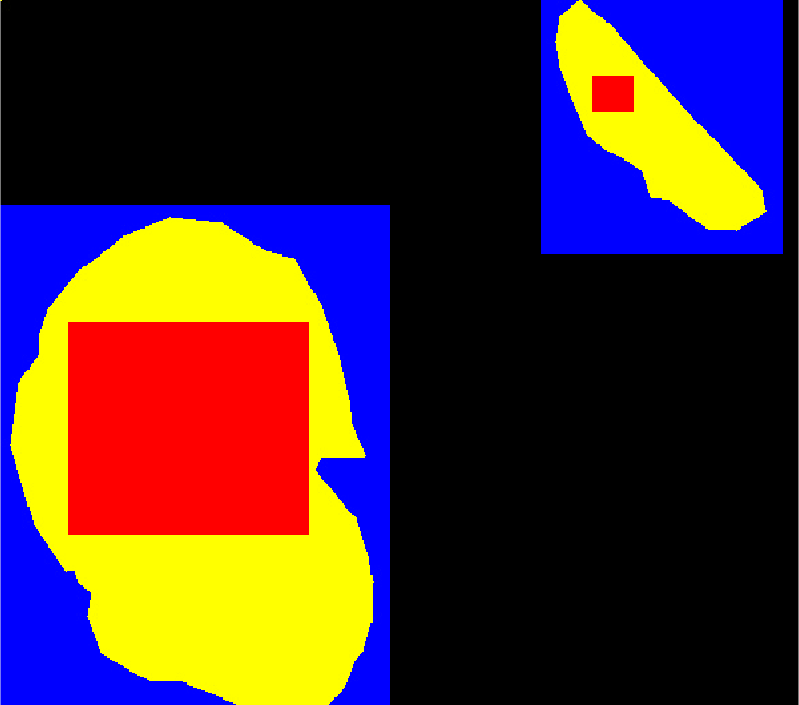}
		\label{fig:1}
	\end{subfigure}
	\label{fig:grid}
	\caption{Conformal confidence sets for the polyps data examples from Figure \ref{fig:res} for alternative scores. In each set of panels the confidence obtained from using the original scores are shown in the middle row and those obtained from the bounding box scores are shown in the bottom row. As observed on the learning dataset the outer sets obtained when using the original scores are very large and uninformative.}\label{fig:polpysex}
\end{figure}
\newpage
\subsection{Additional validition figures}
\begin{figure}[h!]
	\begin{subfigure}{0.19\textwidth}
		\centering
		\includegraphics[width=\textwidth]{../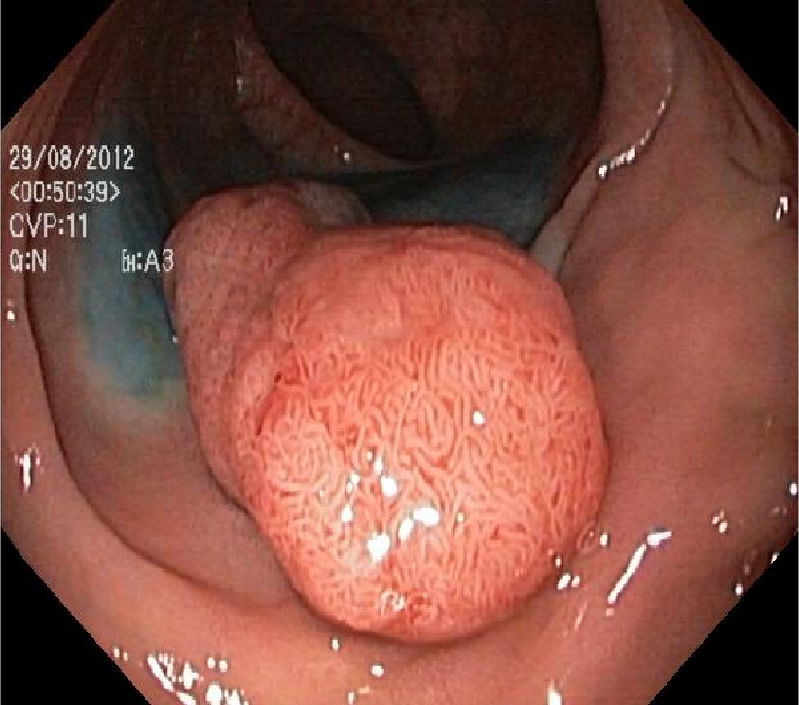}
		\label{fig:1}
	\end{subfigure}
	\begin{subfigure}{0.19\textwidth}
		\centering
		\includegraphics[width=\textwidth]{../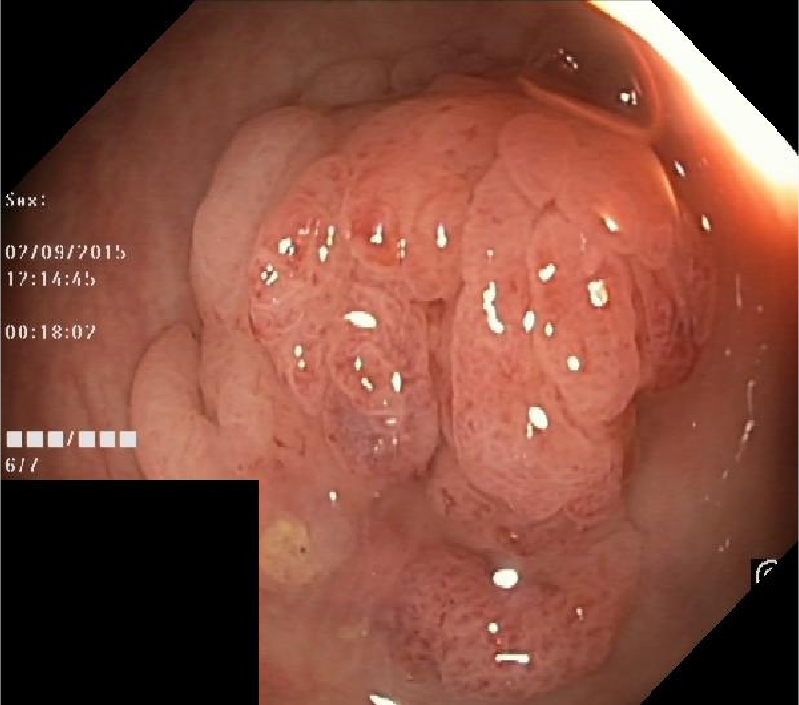}
		\label{fig:1}
	\end{subfigure}
	\begin{subfigure}{0.19\textwidth}
		\centering
		\includegraphics[width=\textwidth]{../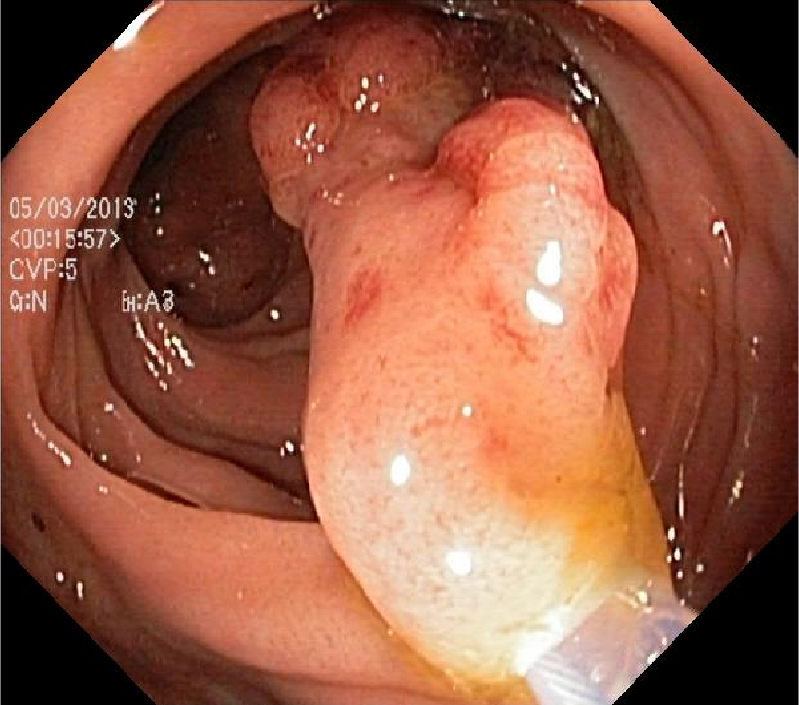}
		\label{fig:1}
	\end{subfigure}
	\begin{subfigure}{0.19\textwidth}
		\centering
		\includegraphics[width=\textwidth]{../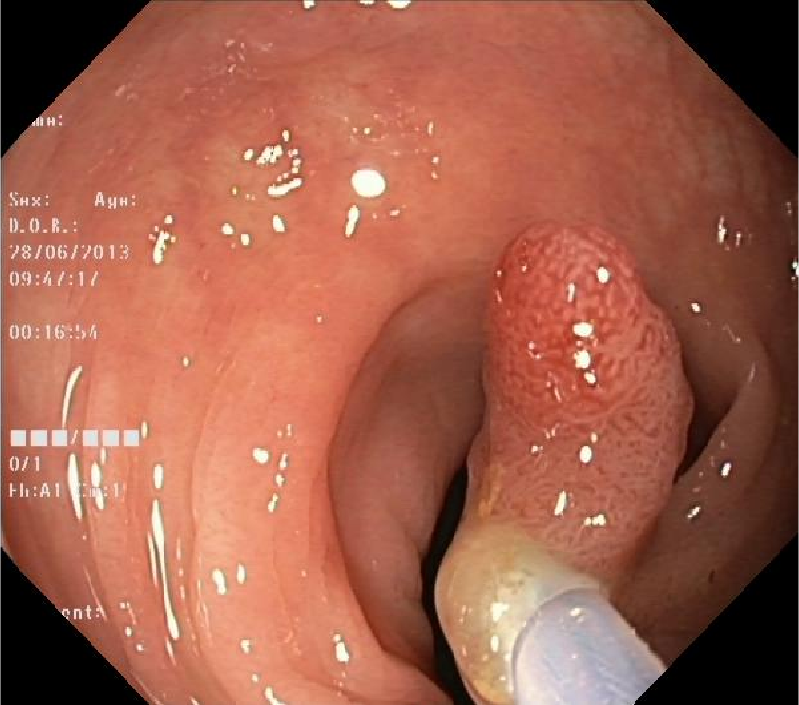}
		\label{fig:1}
	\end{subfigure}
	\begin{subfigure}{0.19\textwidth}
		\centering
		\includegraphics[width=\textwidth]{../figures/all_images/848.png}
		\label{fig:1}
	\end{subfigure}
	\vspace{-0.35cm}
	\\
	\begin{subfigure}{0.19\textwidth}
		\centering
		\includegraphics[width=\textwidth]{../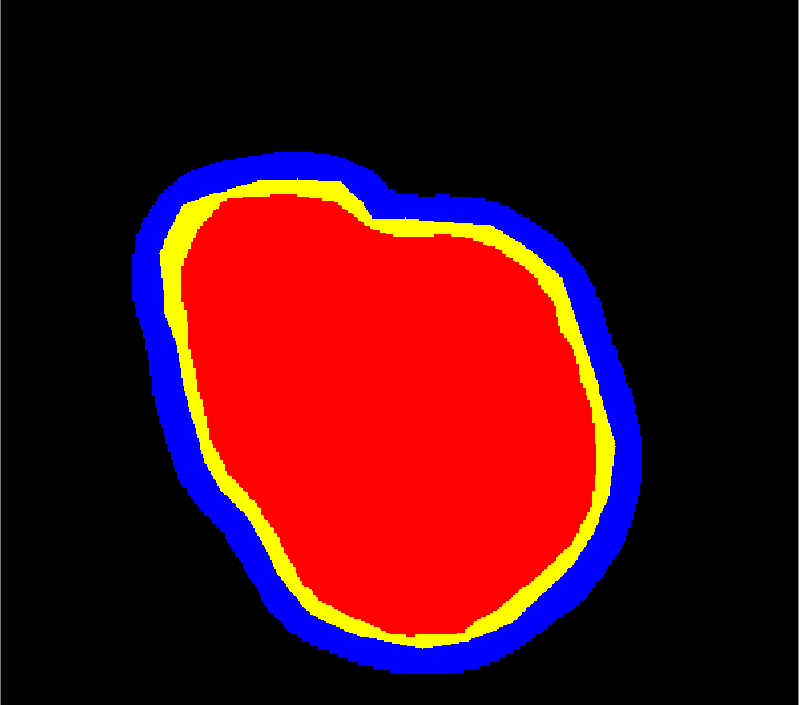}
		\label{fig:1}
	\end{subfigure}
	\begin{subfigure}{0.19\textwidth}
		\centering
		\includegraphics[width=\textwidth]{../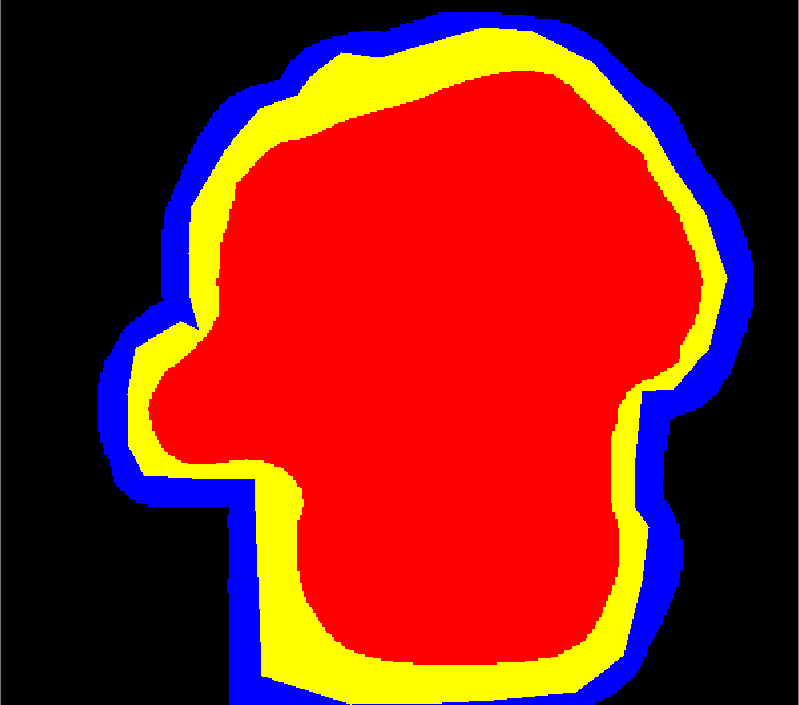}
		\label{fig:1}
	\end{subfigure}
	\begin{subfigure}{0.19\textwidth}
		\centering
		\includegraphics[width=\textwidth]{../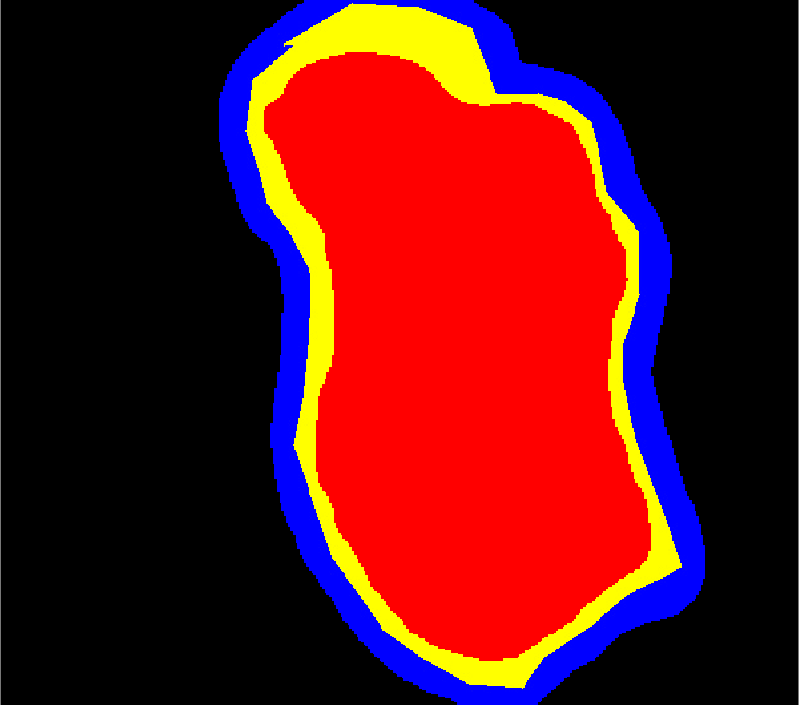}
		\label{fig:1}
	\end{subfigure}
	\begin{subfigure}{0.19\textwidth}
		\centering
		\includegraphics[width=\textwidth]{../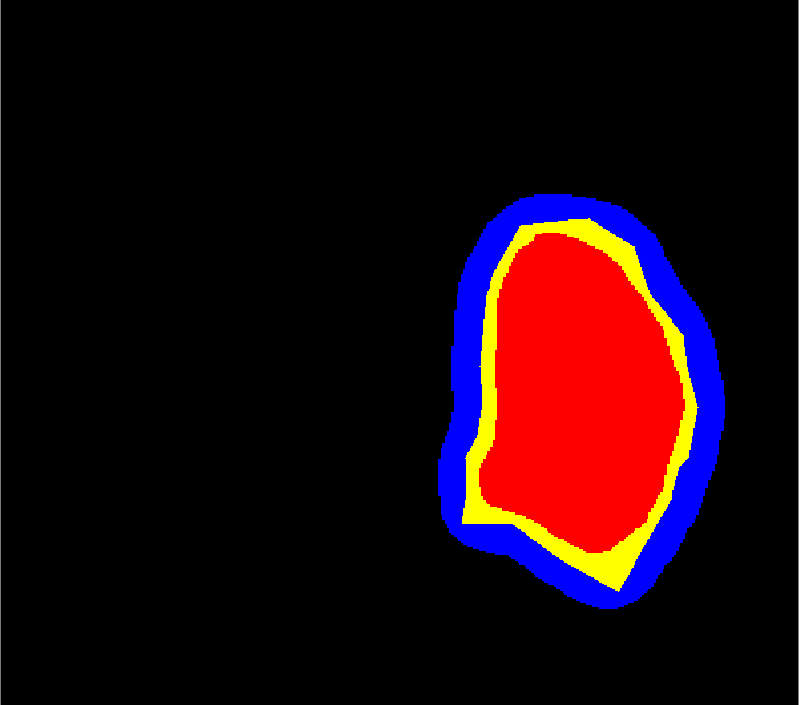}
		\label{fig:1}
	\end{subfigure}
	\begin{subfigure}{0.19\textwidth}
		\centering
		\includegraphics[width=\textwidth]{../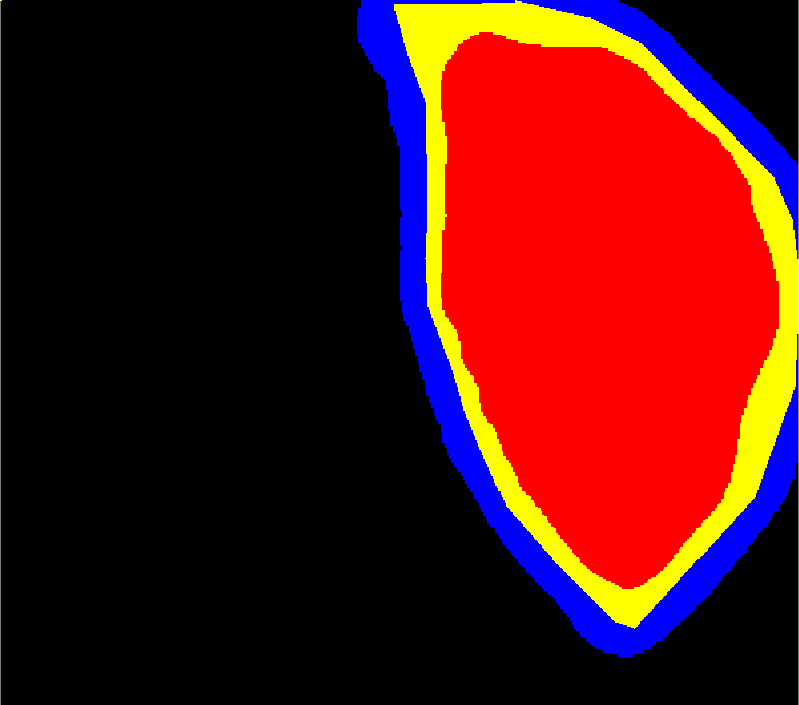}
		\label{fig:1}
	\end{subfigure}
	\vspace{-0.35cm}
	\\
	\begin{subfigure}{0.19\textwidth}
		\centering
		\includegraphics[width=\textwidth]{../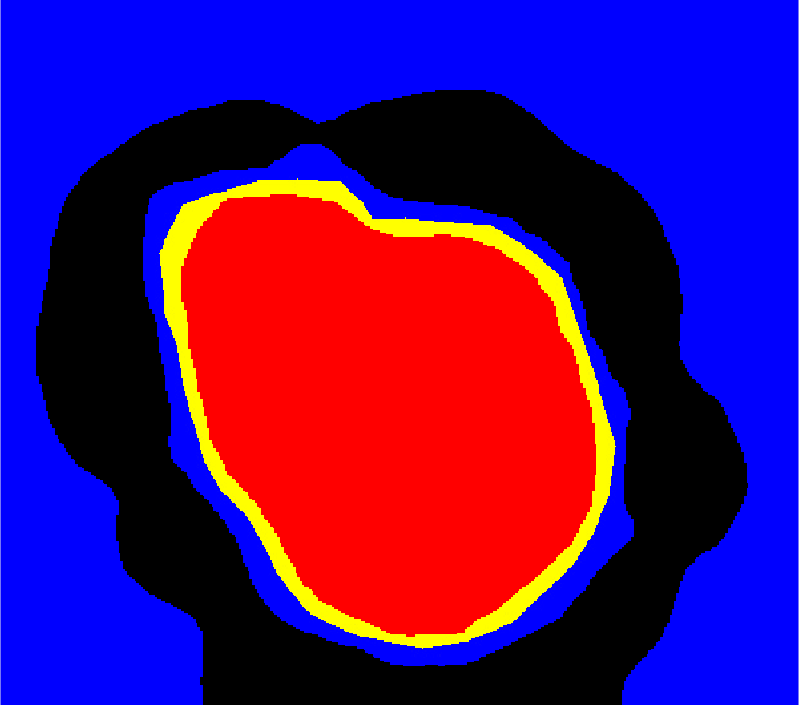}
		\label{fig:1}
	\end{subfigure}
	\begin{subfigure}{0.19\textwidth}
		\centering
		\includegraphics[width=\textwidth]{../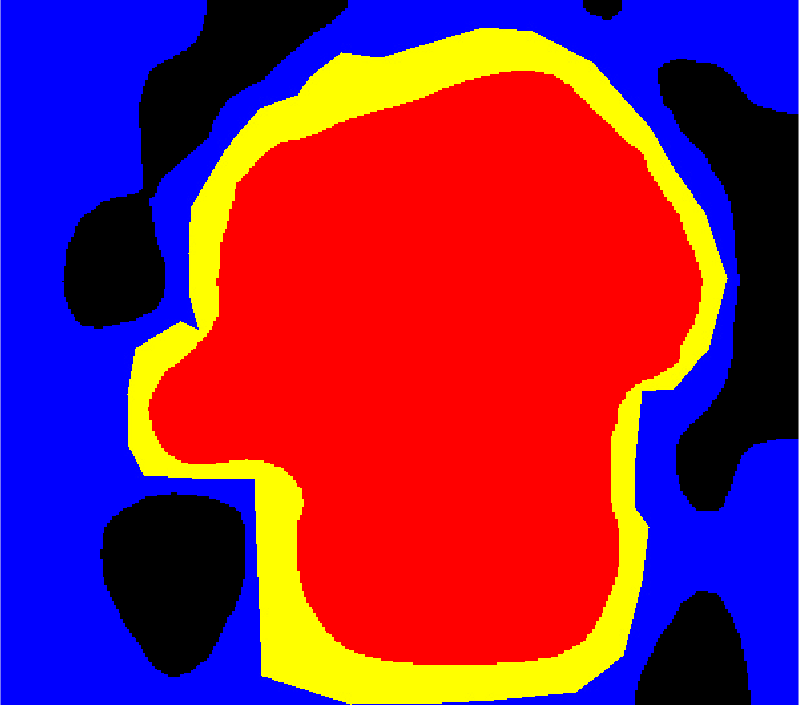}
		\label{fig:1}
	\end{subfigure}
	\begin{subfigure}{0.19\textwidth}
		\centering
		\includegraphics[width=\textwidth]{../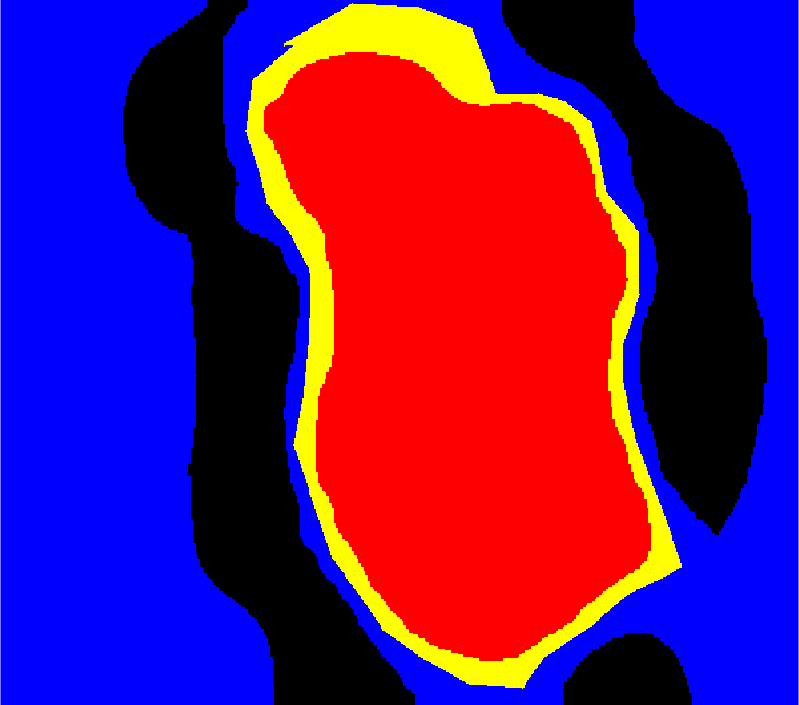}
		\label{fig:1}
	\end{subfigure}
	\begin{subfigure}{0.19\textwidth}
		\centering
		\includegraphics[width=\textwidth]{../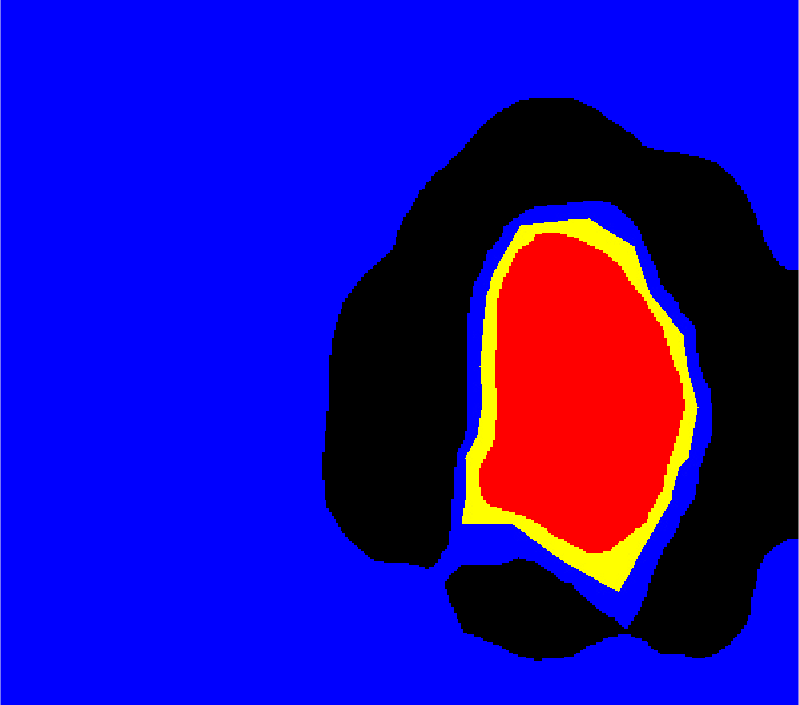}
		\label{fig:1}
	\end{subfigure}
	\begin{subfigure}{0.19\textwidth}
		\centering
		\includegraphics[width=\textwidth]{../figures/validation/val_crs_orig_90/848.png}
		\label{fig:1}
	\end{subfigure}
		\vspace{-0.35cm}
	\\
	\begin{subfigure}{0.19\textwidth}
		\centering
		\includegraphics[width=\textwidth]{../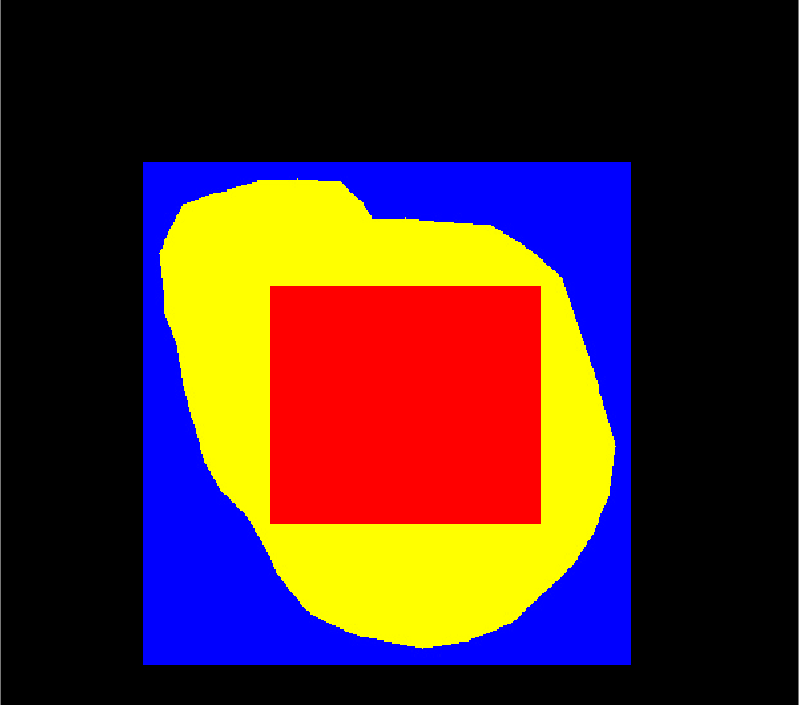}
		\label{fig:1}
	\end{subfigure}
	\begin{subfigure}{0.19\textwidth}
		\centering
		\includegraphics[width=\textwidth]{../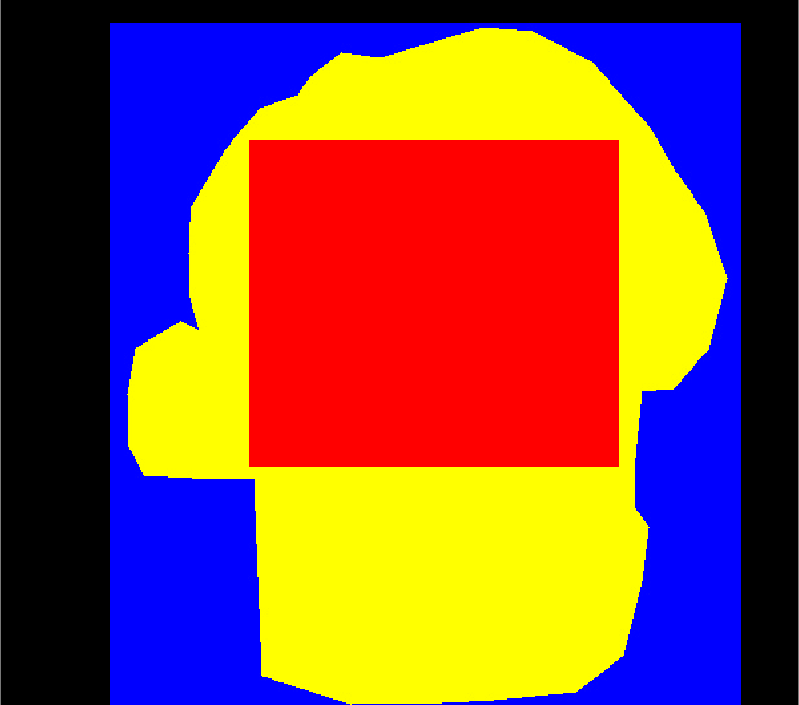}
		\label{fig:1}
	\end{subfigure}
	\begin{subfigure}{0.19\textwidth}
		\centering
		\includegraphics[width=\textwidth]{../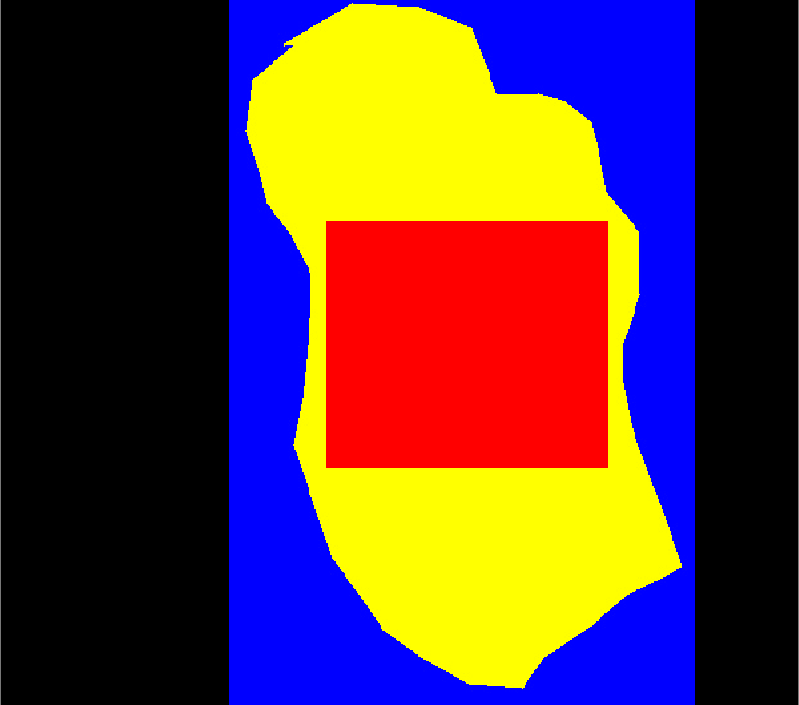}
		\label{fig:1}
	\end{subfigure}
	\begin{subfigure}{0.19\textwidth}
		\centering
		\includegraphics[width=\textwidth]{../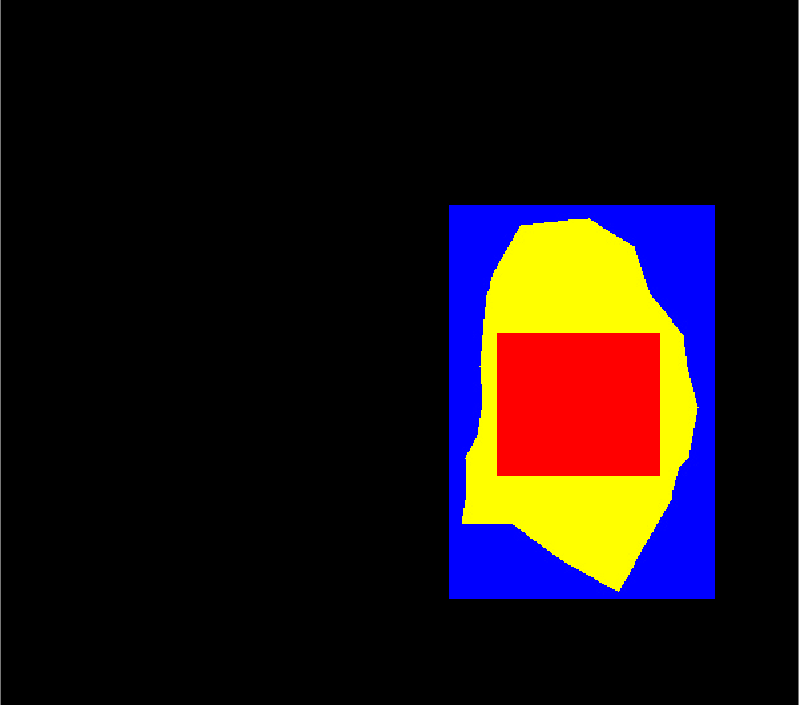}
		\label{fig:1}
	\end{subfigure}
	\begin{subfigure}{0.19\textwidth}
		\centering
		\includegraphics[width=\textwidth]{../figures/validation/val_crs_bt_90/848.png}
		\label{fig:1}
	\end{subfigure}
	\vspace{-0.35cm}
	\\
	\begin{subfigure}{0.19\textwidth}
		\centering
		\includegraphics[width=\textwidth]{../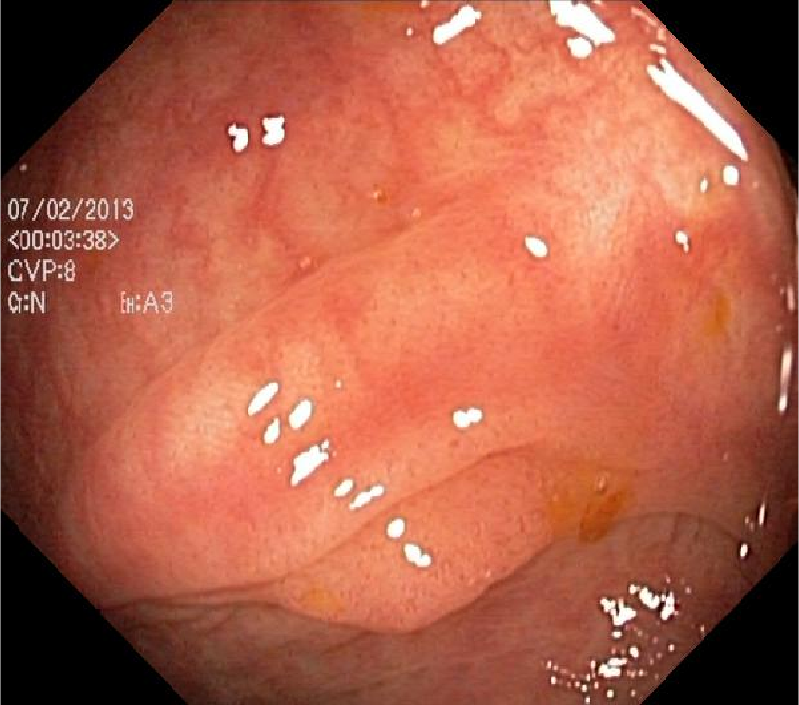}
		\label{fig:1}
	\end{subfigure}
	\begin{subfigure}{0.19\textwidth}
		\centering
		\includegraphics[width=\textwidth]{../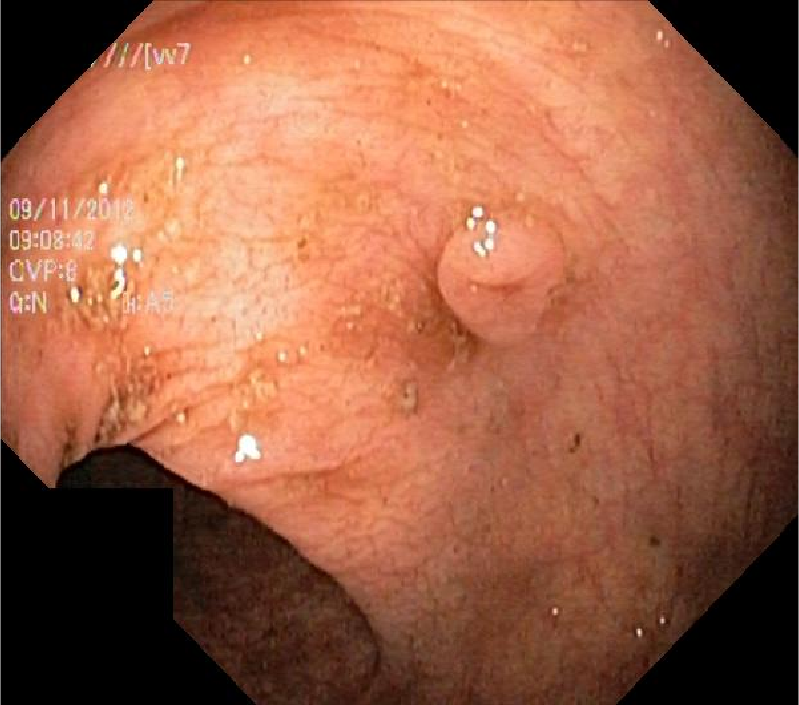}
		\label{fig:1}
	\end{subfigure}
	\begin{subfigure}{0.19\textwidth}
		\centering
		\includegraphics[width=\textwidth]{../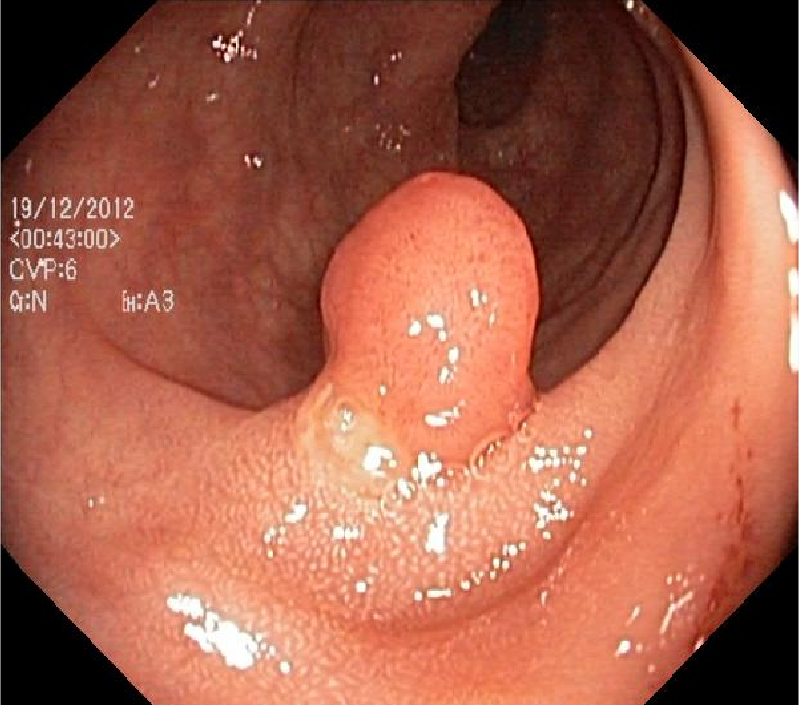}
		\label{fig:1}
	\end{subfigure}
	\begin{subfigure}{0.19\textwidth}
		\centering
		\includegraphics[width=\textwidth]{../}
		\label{fig:1}
	\end{subfigure}
	\begin{subfigure}{0.19\textwidth}
		\centering
		\includegraphics[width=\textwidth]{../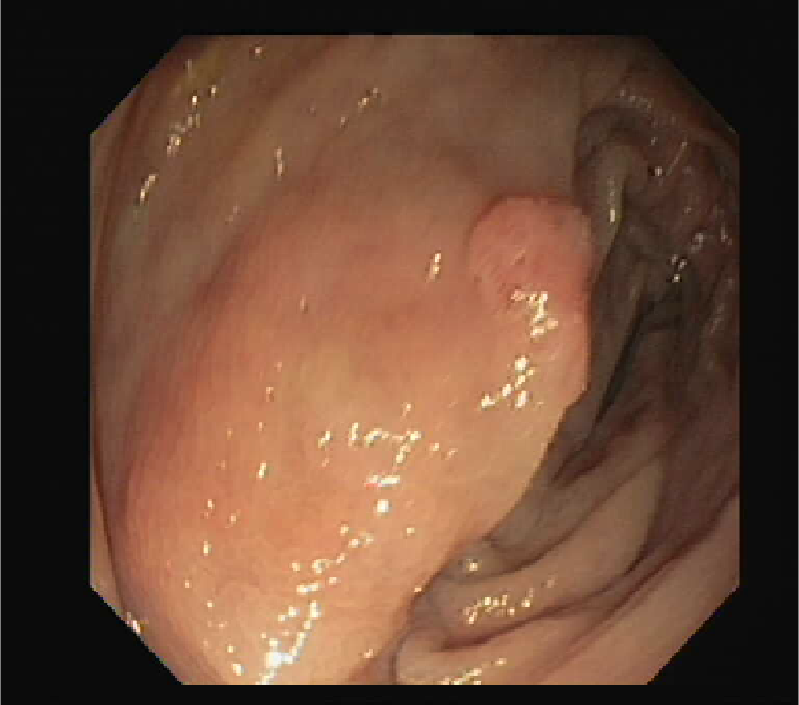}
		\label{fig:1}
	\end{subfigure}
	\vspace{-0.35cm}
	\\
	\begin{subfigure}{0.19\textwidth}
		\centering
		\includegraphics[width=\textwidth]{../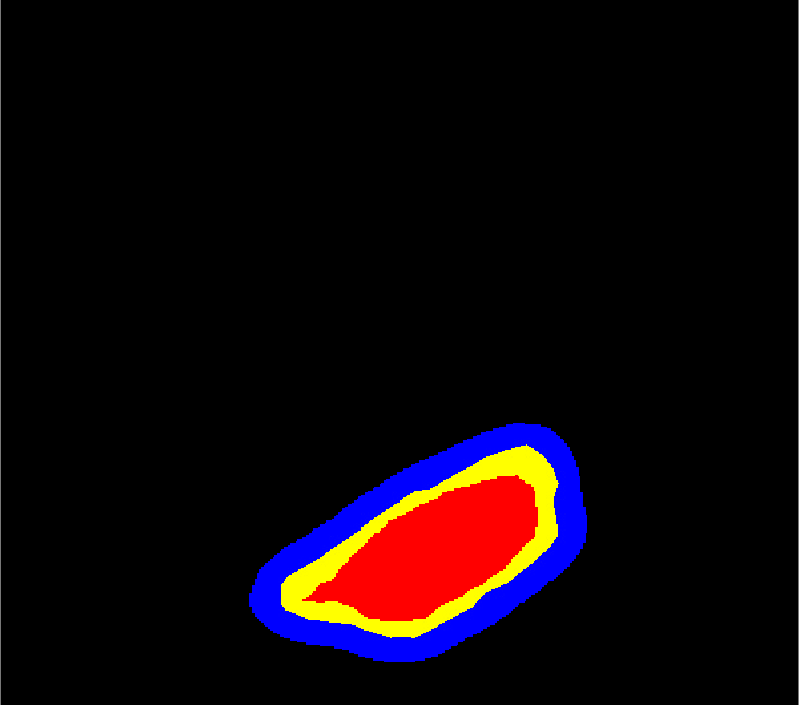}
		\label{fig:1}
	\end{subfigure}
	\begin{subfigure}{0.19\textwidth}
		\centering
		\includegraphics[width=\textwidth]{../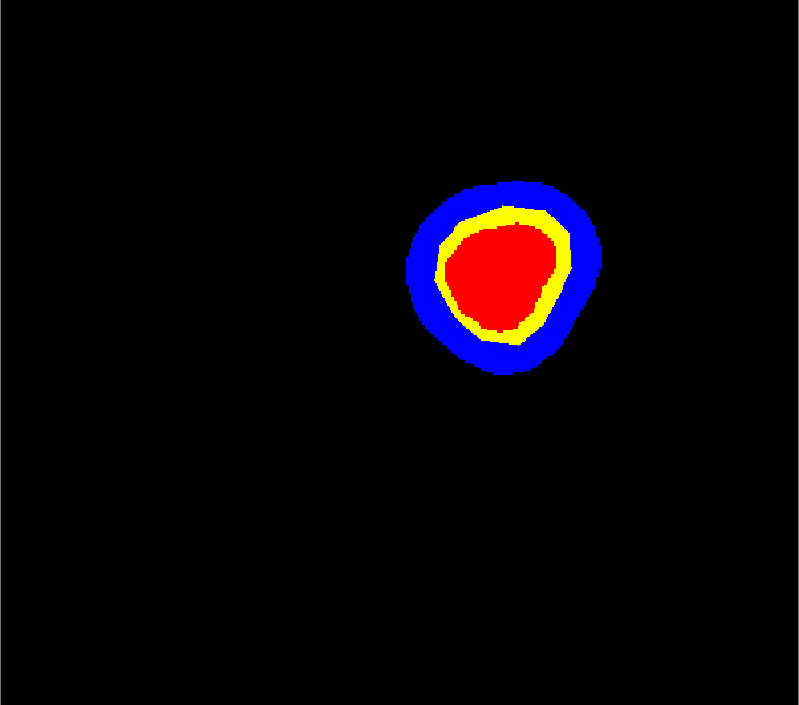}
		\label{fig:1}
	\end{subfigure}
	\begin{subfigure}{0.19\textwidth}
		\centering
		\includegraphics[width=\textwidth]{../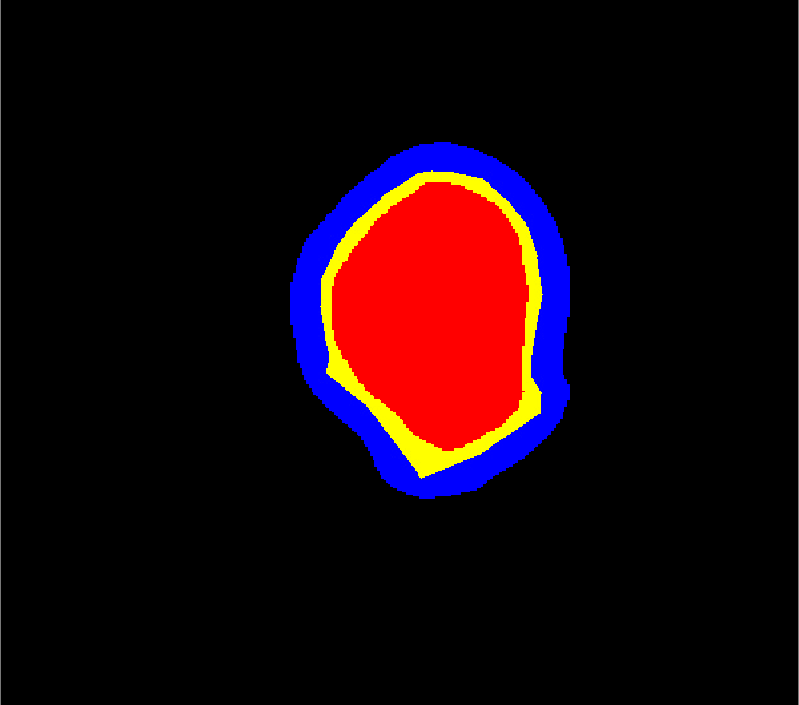}
		\label{fig:1}
	\end{subfigure}
	\begin{subfigure}{0.19\textwidth}
		\centering
		\includegraphics[width=\textwidth]{../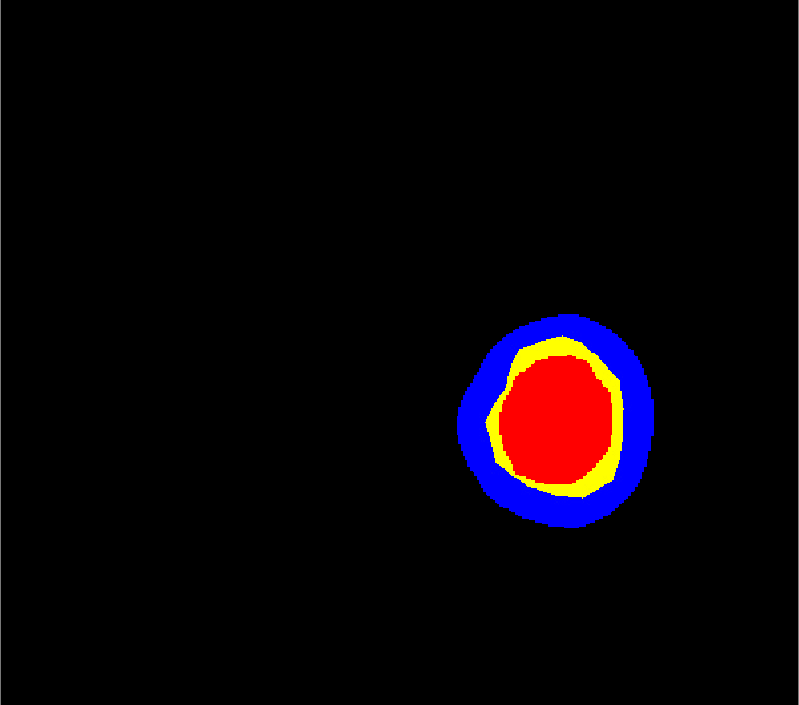}
		\label{fig:1}
	\end{subfigure}
	\begin{subfigure}{0.19\textwidth}
		\centering
		\includegraphics[width=\textwidth]{../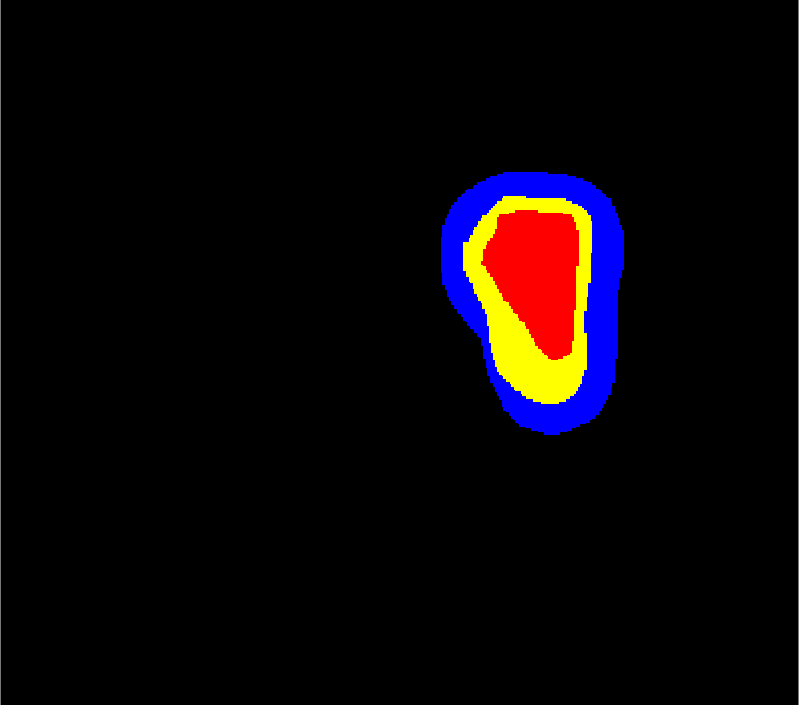}
		\label{fig:1}
	\end{subfigure}
		\vspace{-0.35cm}
	\\
	\begin{subfigure}{0.19\textwidth}
		\centering
		\includegraphics[width=\textwidth]{../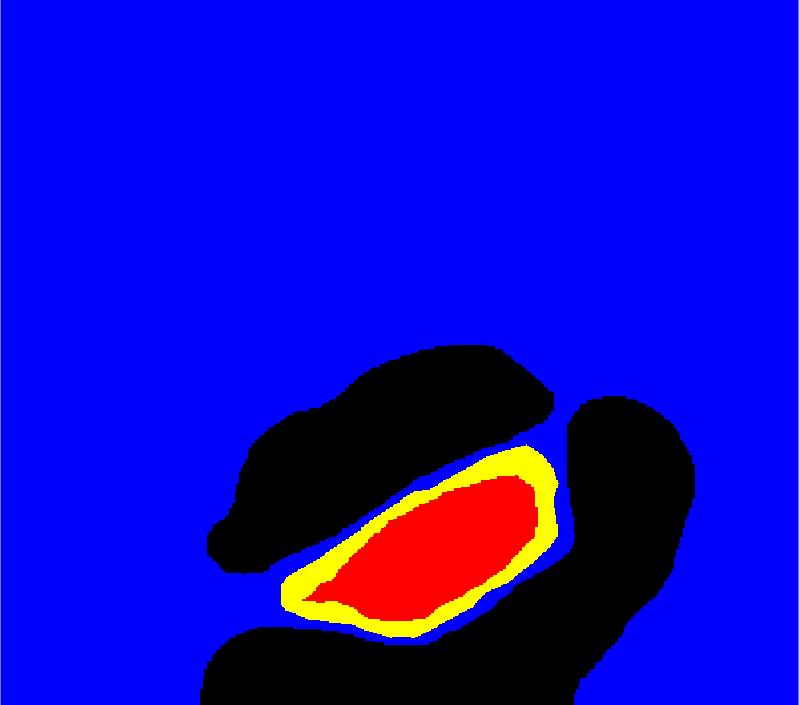}
		\label{fig:1}
	\end{subfigure}
	\begin{subfigure}{0.19\textwidth}
		\centering
		\includegraphics[width=\textwidth]{../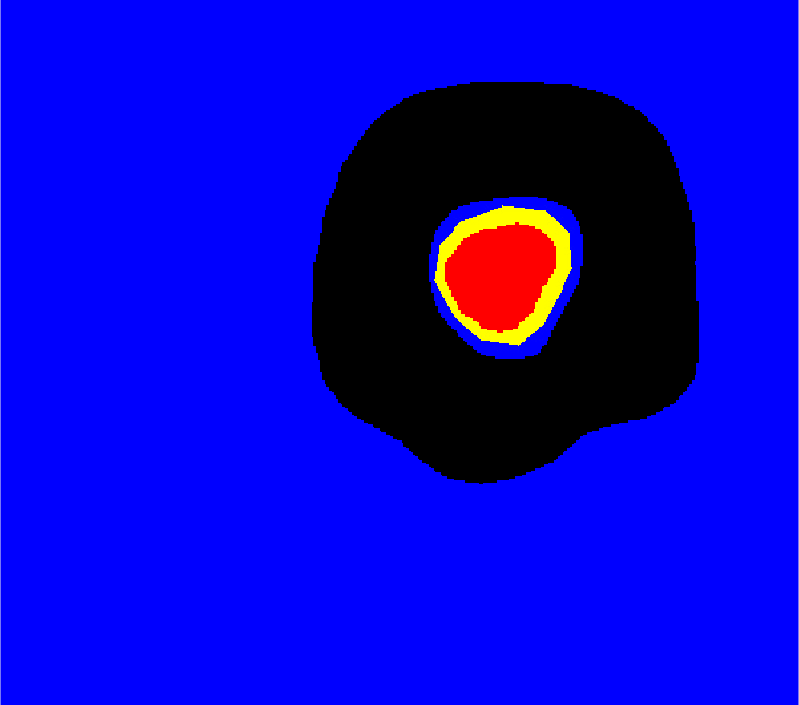}
		\label{fig:1}
	\end{subfigure}
	\begin{subfigure}{0.19\textwidth}
		\centering
		\includegraphics[width=\textwidth]{../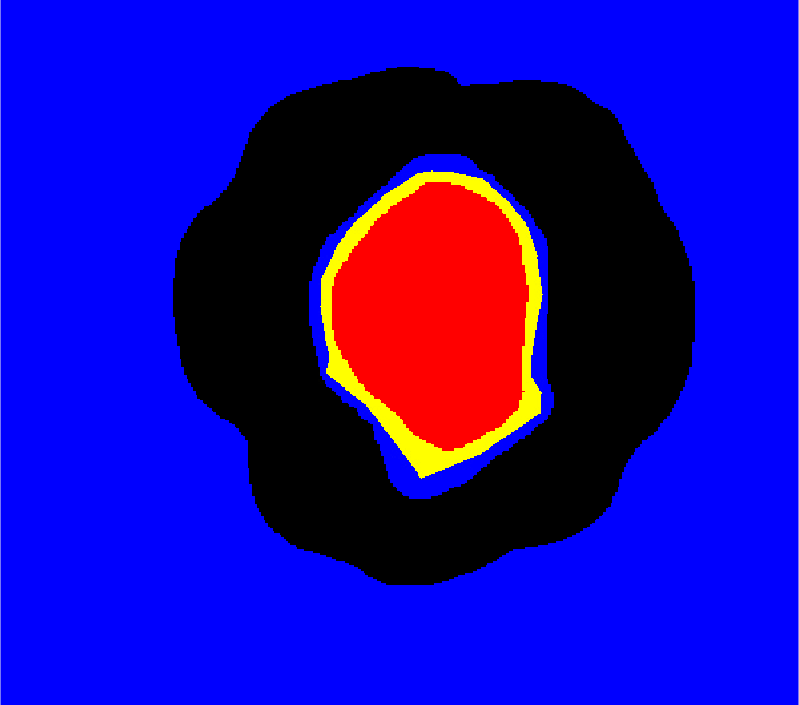}
		\label{fig:1}
	\end{subfigure}
	\begin{subfigure}{0.19\textwidth}
		\centering
		\includegraphics[width=\textwidth]{../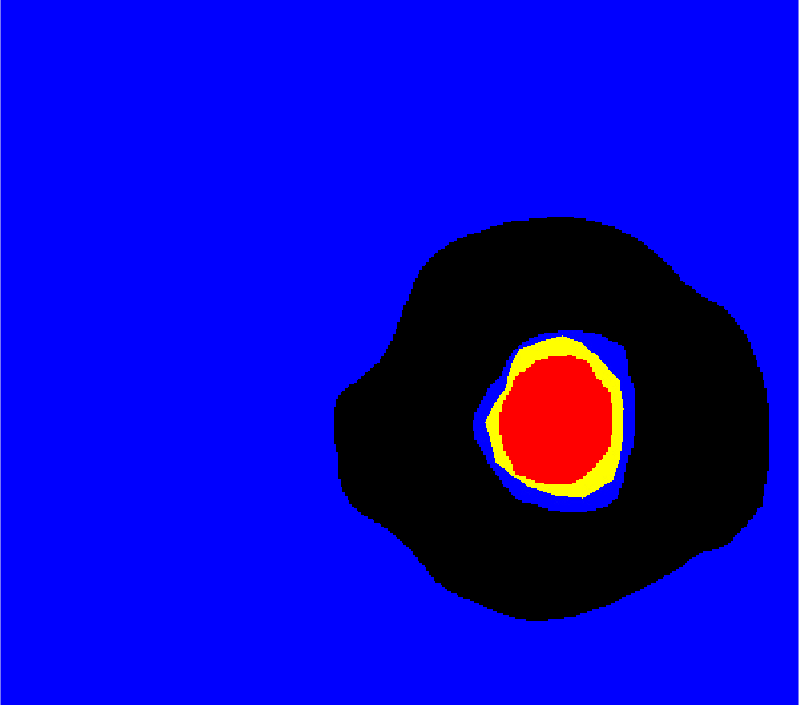}
		\label{fig:1}
	\end{subfigure}
	\begin{subfigure}{0.19\textwidth}
		\centering
		\includegraphics[width=\textwidth]{../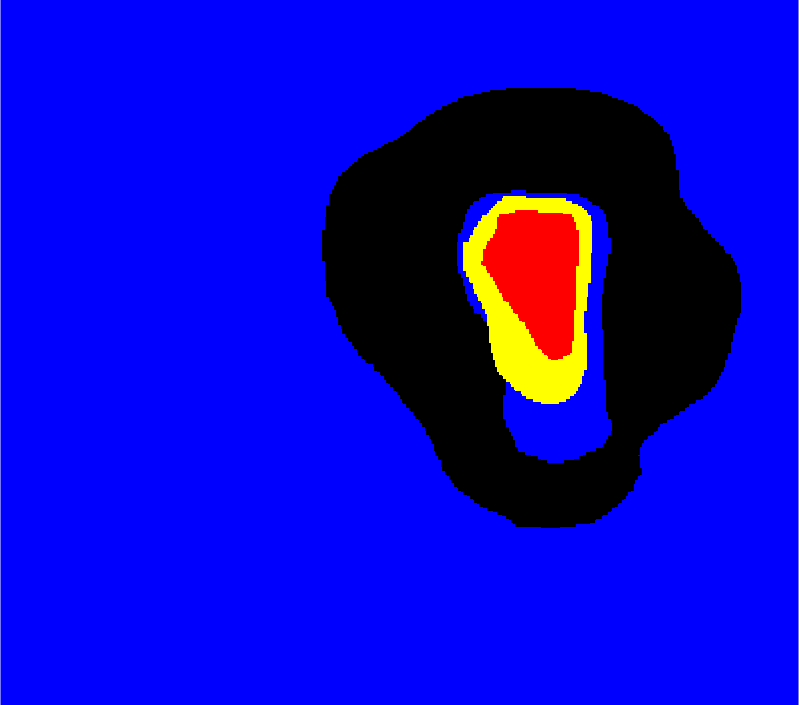}
		\label{fig:1}
	\end{subfigure}
			\vspace{-0.35cm}
	\\
	\begin{subfigure}{0.19\textwidth}
		\centering
		\includegraphics[width=\textwidth]{../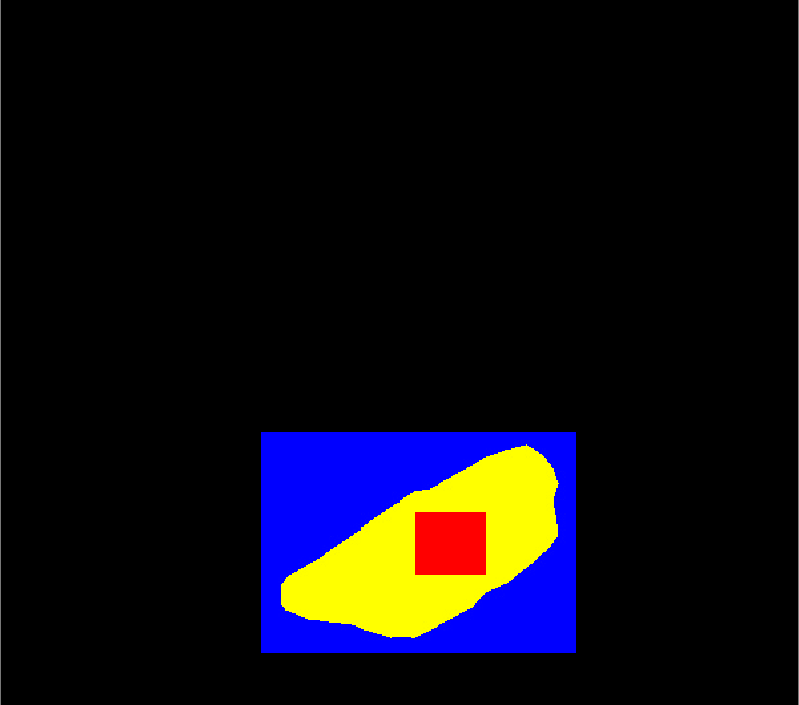}
		\label{fig:1}
	\end{subfigure}
	\begin{subfigure}{0.19\textwidth}
		\centering
		\includegraphics[width=\textwidth]{../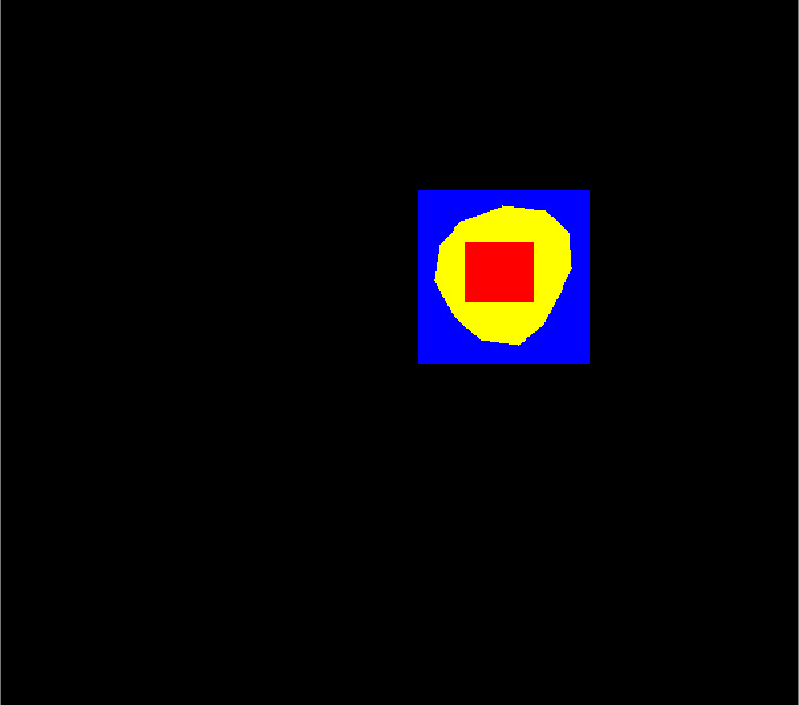}
		\label{fig:1}
	\end{subfigure}
	\begin{subfigure}{0.19\textwidth}
		\centering
		\includegraphics[width=\textwidth]{../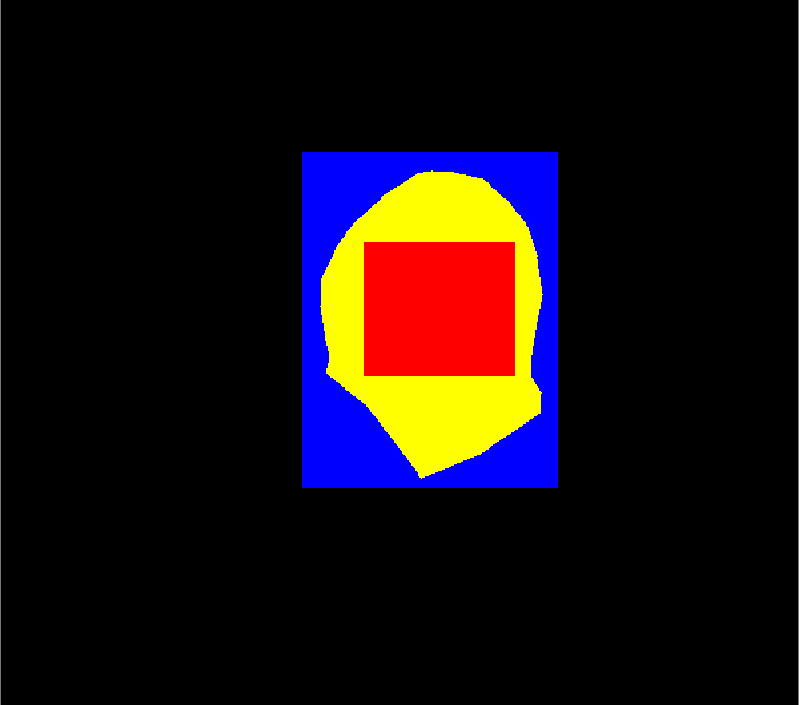}
		\label{fig:1}
	\end{subfigure}
	\begin{subfigure}{0.19\textwidth}
		\centering
		\includegraphics[width=\textwidth]{../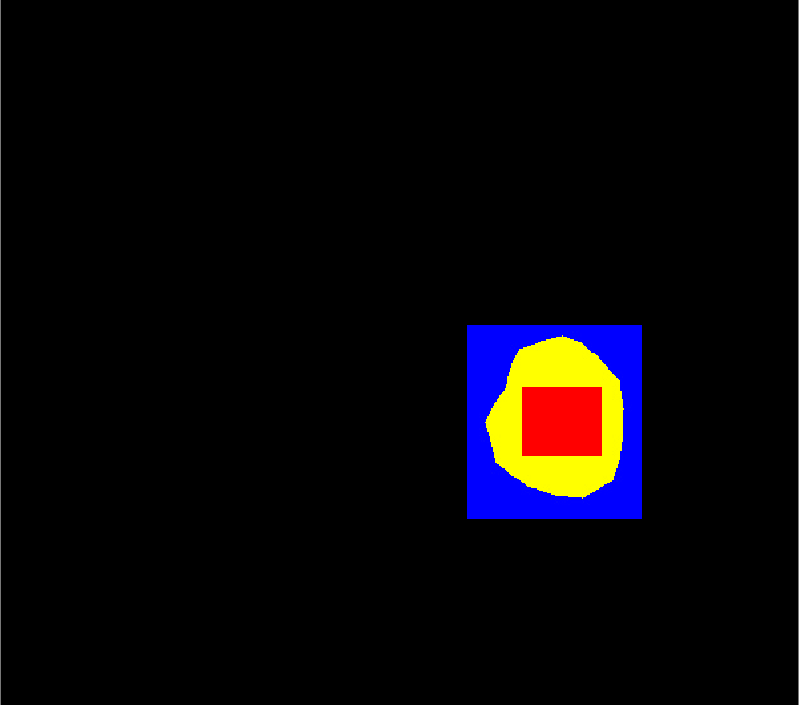}
		\label{fig:1}
	\end{subfigure}
	\begin{subfigure}{0.19\textwidth}
		\centering
		\includegraphics[width=\textwidth]{../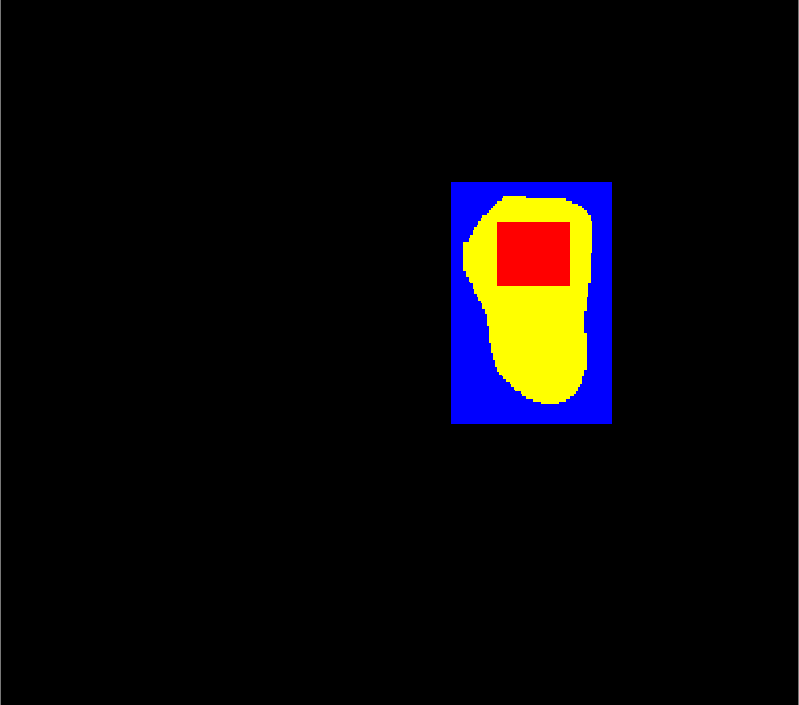}
		\label{fig:1}
	\end{subfigure}
	\label{fig:grid}
	\caption{Additional validition examples. In each example, after the original images, the rows are (from top to bottom) the combination of the original and distance transformed scores, then the original scores and finally the bounding box scores.}\label{fig:polpysex2}
\end{figure}

\newpage
\subsection{Confidence sets for the bounding boxes}
\begin{figure}[h!]
	\begin{subfigure}{0.19\textwidth}
		\centering
		\includegraphics[width=\textwidth]{../figures/all_images/61.png}
		\label{fig:1}
	\end{subfigure}
	\begin{subfigure}{0.19\textwidth}
		\centering
		\includegraphics[width=\textwidth]{../figures/all_images/114.png}
		\label{fig:1}
	\end{subfigure}
	\begin{subfigure}{0.19\textwidth}
		\centering
		\includegraphics[width=\textwidth]{../figures/all_images/144.png}
		\label{fig:1}
	\end{subfigure}
	\begin{subfigure}{0.19\textwidth}
		\centering
		\includegraphics[width=\textwidth]{../figures/all_images/148.png}
		\label{fig:1}
	\end{subfigure}
	\begin{subfigure}{0.19\textwidth}
		\centering
		\includegraphics[width=\textwidth]{../figures/all_images/251.png}
		\label{fig:1}
	\end{subfigure}
	\vspace{-0.35cm}
	\\
	\begin{subfigure}{0.19\textwidth}
		\centering
		\includegraphics[width=\textwidth]{../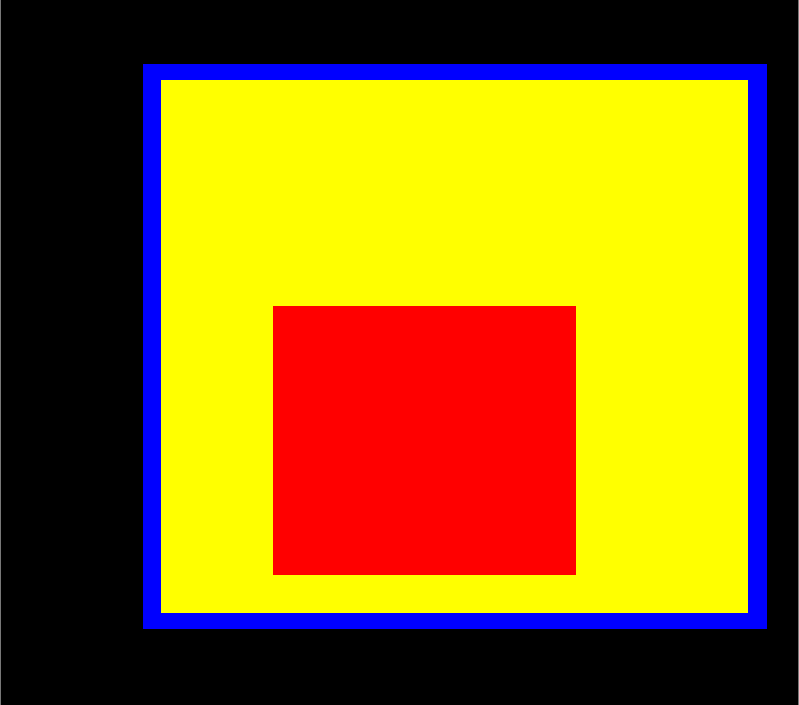}
		\label{fig:1}
	\end{subfigure}
	\begin{subfigure}{0.19\textwidth}
		\centering
		\includegraphics[width=\textwidth]{../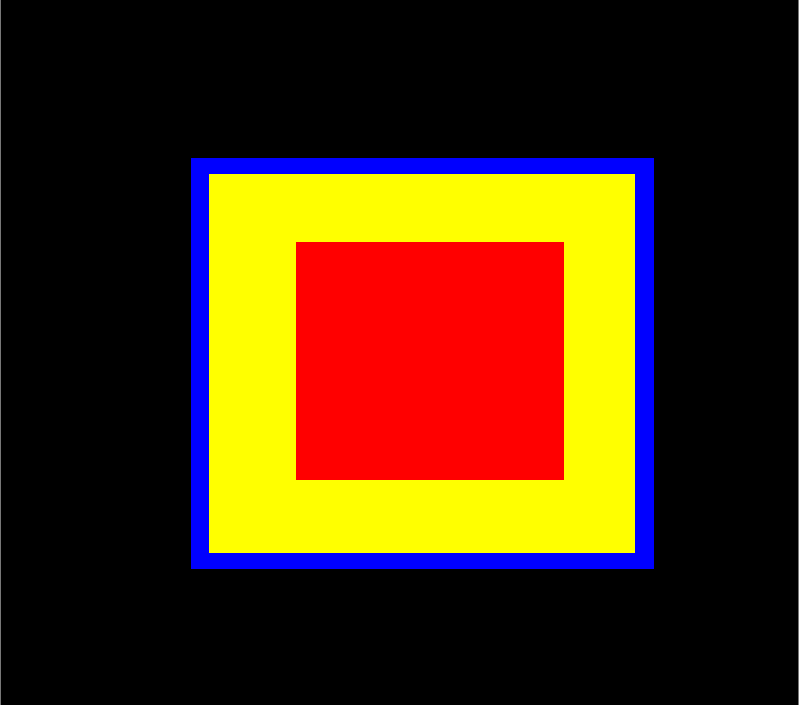}
		\label{fig:1}
	\end{subfigure}
	\begin{subfigure}{0.19\textwidth}
		\centering
		\includegraphics[width=\textwidth]{../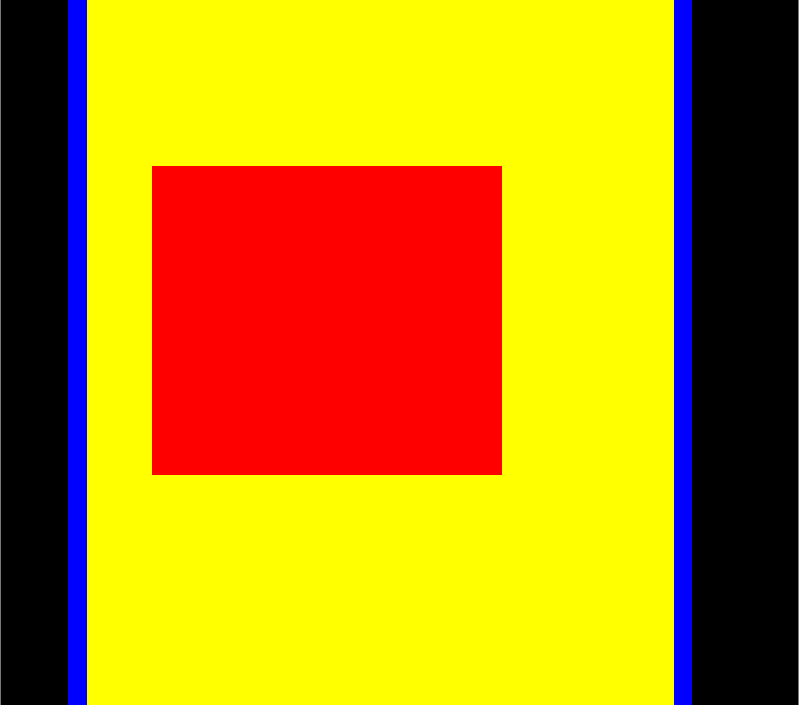}
		\label{fig:1}
	\end{subfigure}
	\begin{subfigure}{0.19\textwidth}
		\centering
		\includegraphics[width=\textwidth]{../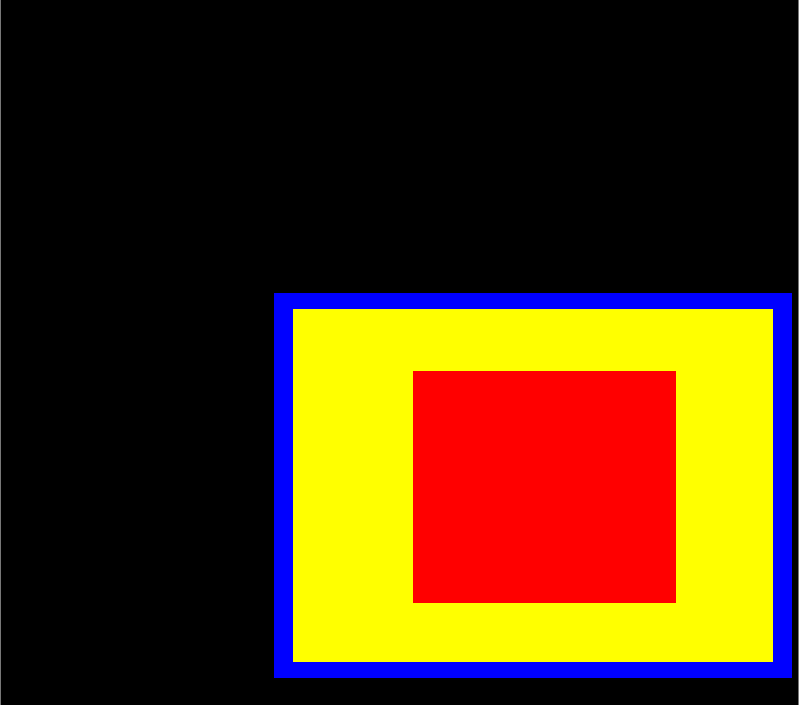}
		\label{fig:1}
	\end{subfigure}
	\begin{subfigure}{0.19\textwidth}
		\centering
		\includegraphics[width=\textwidth]{../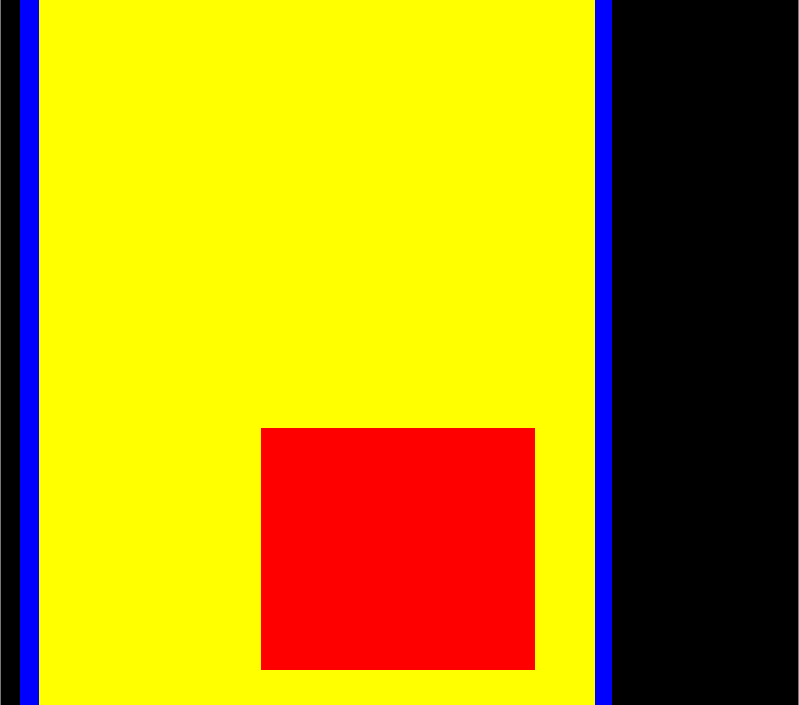}
		\label{fig:1}
	\end{subfigure}
	\vspace{-0.35cm}
	\\
	\begin{subfigure}{0.19\textwidth}
		\centering
		\includegraphics[width=\textwidth]{../figures/all_images/7.png}
		\label{fig:1}
	\end{subfigure}
	\begin{subfigure}{0.19\textwidth}
		\centering
		\includegraphics[width=\textwidth]{../figures/all_images/211.png}
		\label{fig:1}
	\end{subfigure}
	\begin{subfigure}{0.19\textwidth}
		\centering
		\includegraphics[width=\textwidth]{../figures/all_images/1062.png}
		\label{fig:1}
	\end{subfigure}
	\begin{subfigure}{0.19\textwidth}
		\centering
		\includegraphics[width=\textwidth]{../figures/all_images/398.png}
		\label{fig:1}
	\end{subfigure}
	\begin{subfigure}{0.19\textwidth}
		\centering
		\includegraphics[width=\textwidth]{../figures/all_images/269.png}
		\label{fig:1}
	\end{subfigure}
	\vspace{-0.35cm}
	\\
	\begin{subfigure}{0.19\textwidth}
		\centering
		\includegraphics[width=\textwidth]{../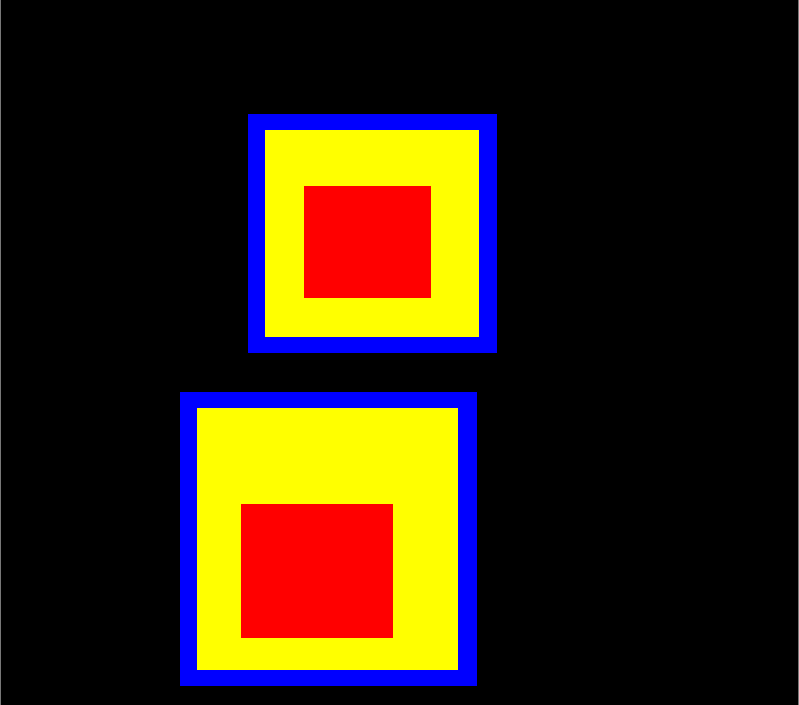}
		\label{fig:1}
	\end{subfigure}
	\begin{subfigure}{0.19\textwidth}
		\centering
		\includegraphics[width=\textwidth]{../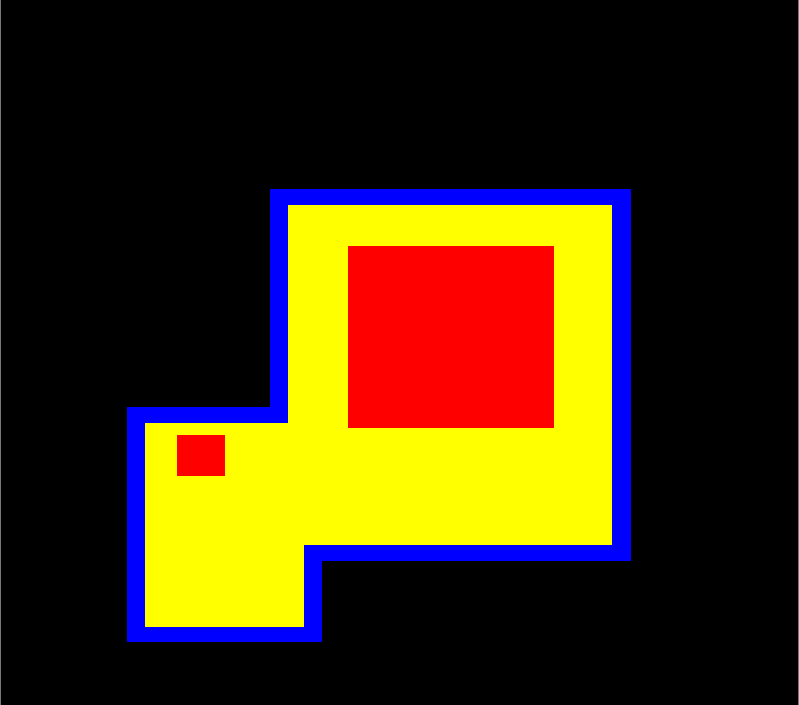}
		\label{fig:1}
	\end{subfigure}
	\begin{subfigure}{0.19\textwidth}
		\centering
		\includegraphics[width=\textwidth]{../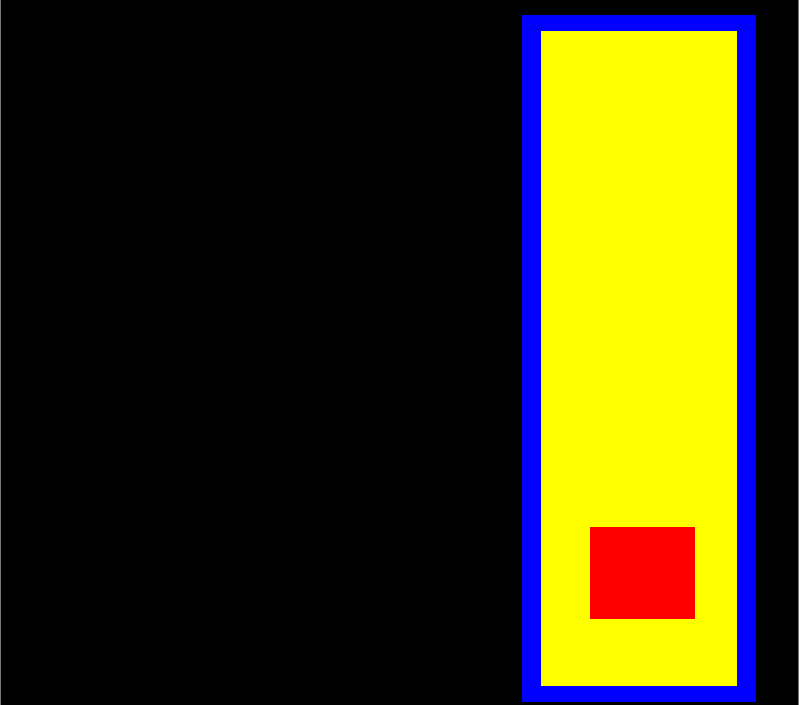}
		\label{fig:1}
	\end{subfigure}
	\begin{subfigure}{0.19\textwidth}
		\centering
		\includegraphics[width=\textwidth]{../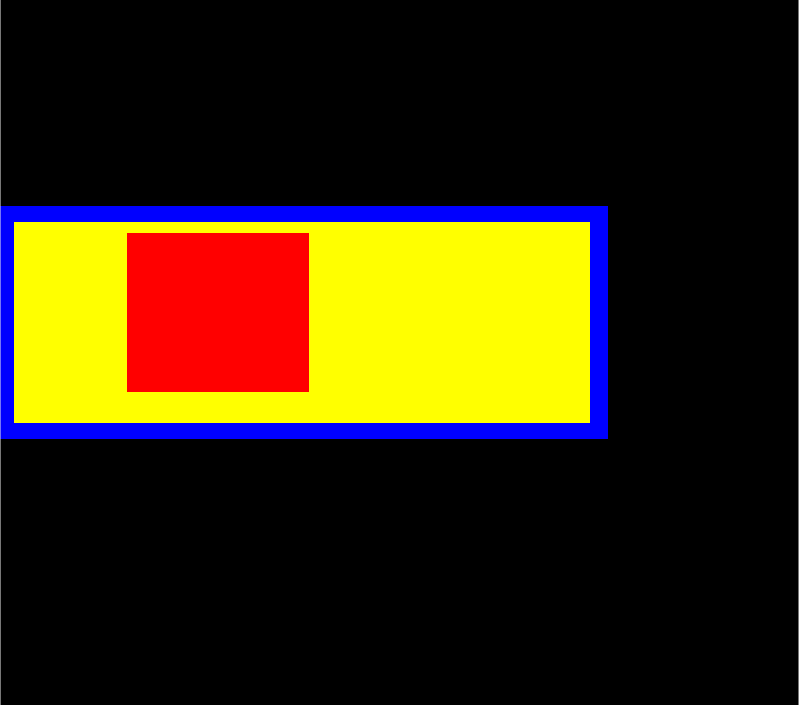}
		\label{fig:1}
	\end{subfigure}
	\begin{subfigure}{0.19\textwidth}
		\centering
		\includegraphics[width=\textwidth]{../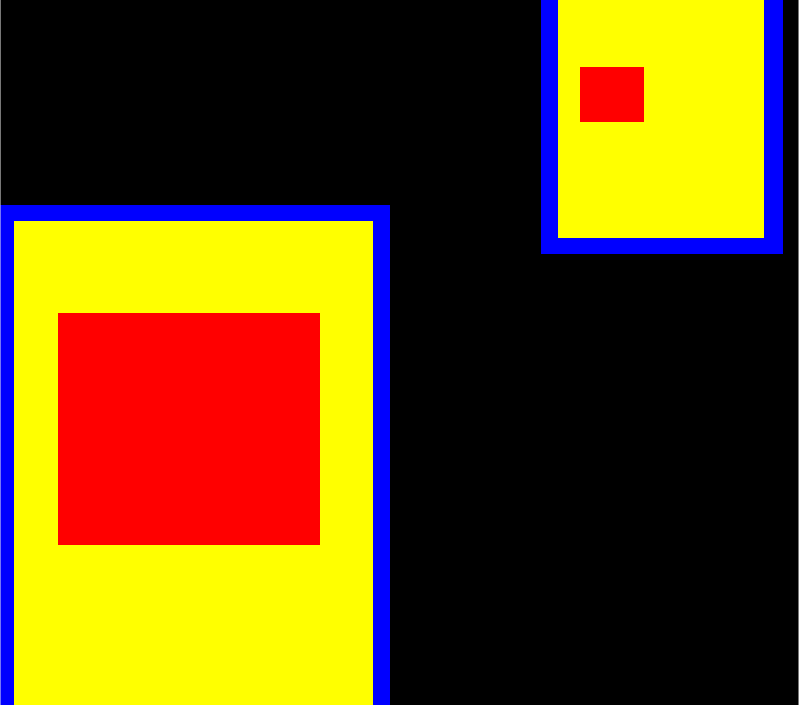}
		\label{fig:1}
	\end{subfigure}
	\label{fig:grid}
	\caption{Conformal confidence sets for the boundary boxes themselves using the approach introduced in Section \ref{AA:BBtheory}. The ground truth outer bounding boxes are shown in yellow.}\label{fig:resbb}
\end{figure}

\subsection{Joint 90\% confidence regions}
\begin{figure}[h!]
	\begin{subfigure}{0.19\textwidth}
		\centering
		\includegraphics[width=\textwidth]{../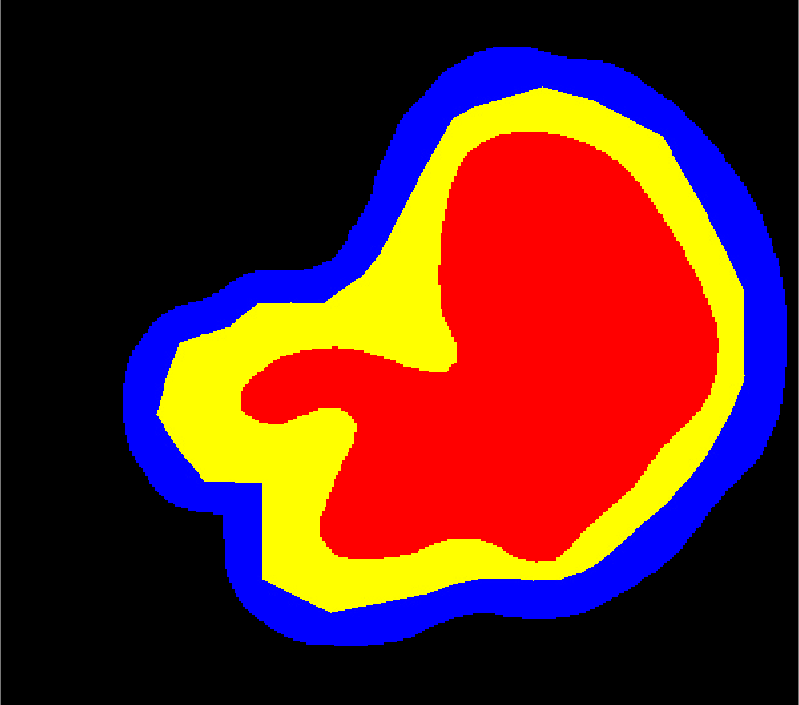}
		\label{fig:1}
	\end{subfigure}
	\begin{subfigure}{0.19\textwidth}
		\centering
		\includegraphics[width=\textwidth]{../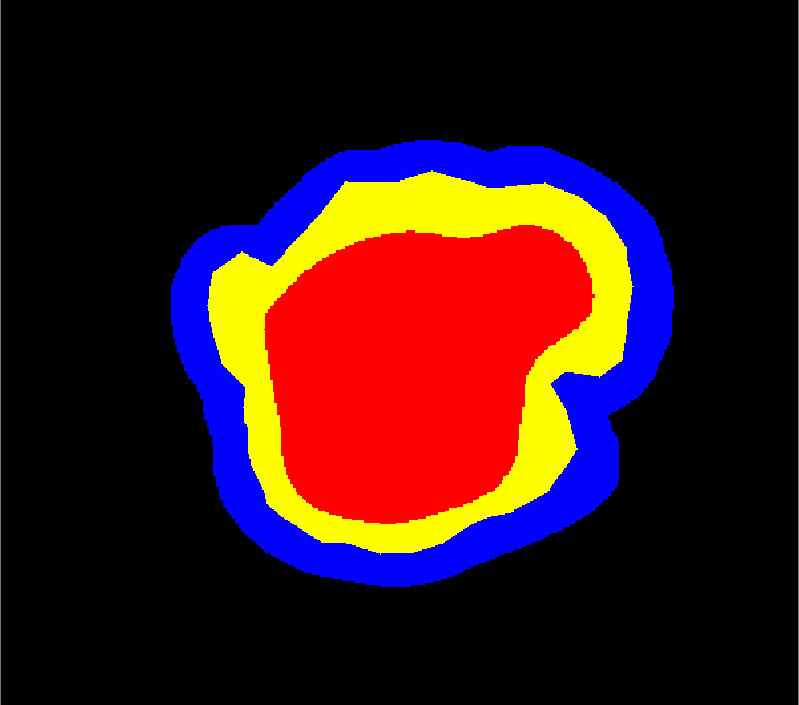}
		\label{fig:1}
	\end{subfigure}
	\begin{subfigure}{0.19\textwidth}
		\centering
		\includegraphics[width=\textwidth]{../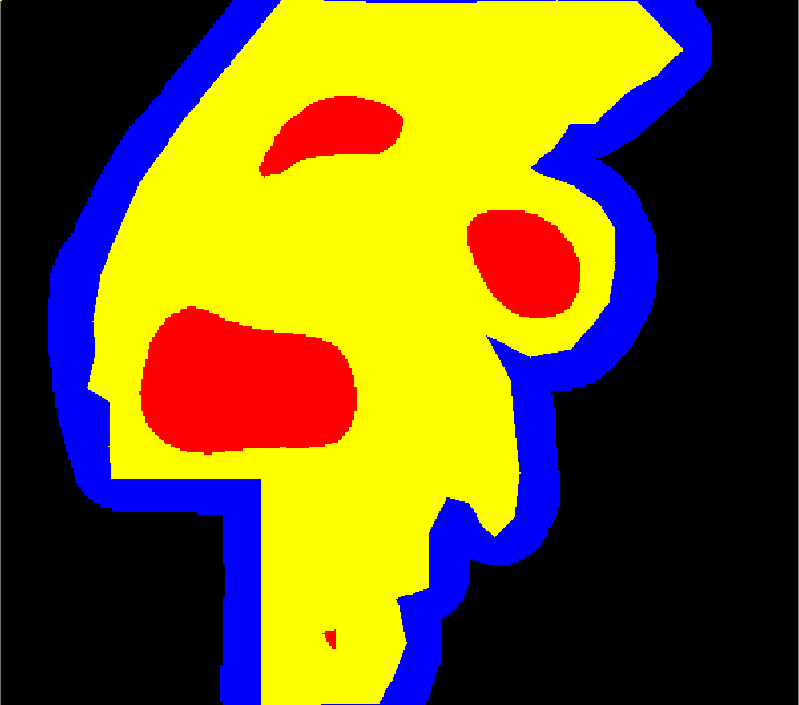}
		\label{fig:1}
	\end{subfigure}
	\begin{subfigure}{0.19\textwidth}
		\centering
		\includegraphics[width=\textwidth]{../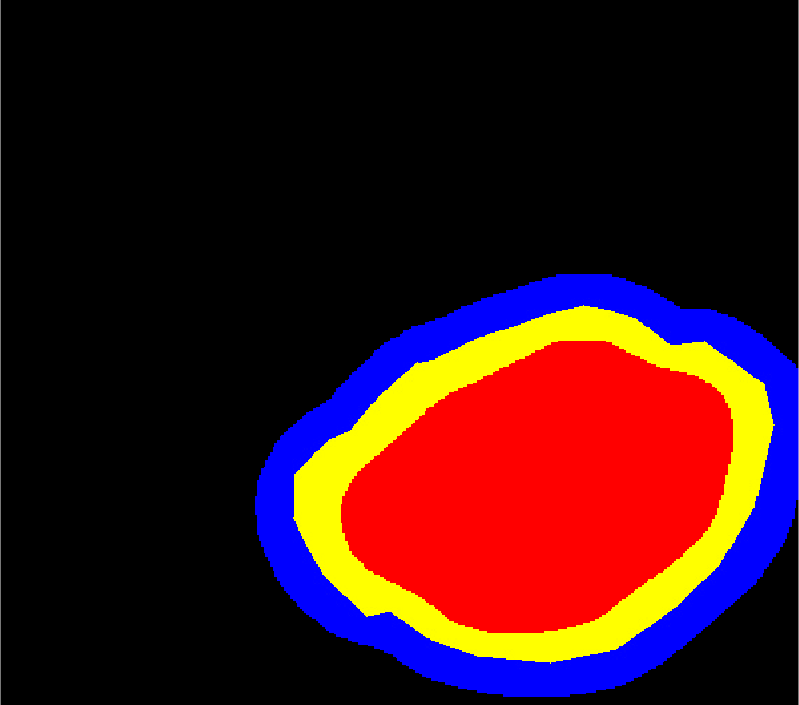}
		\label{fig:1}
	\end{subfigure}
	\begin{subfigure}{0.19\textwidth}
		\centering
		\includegraphics[width=\textwidth]{../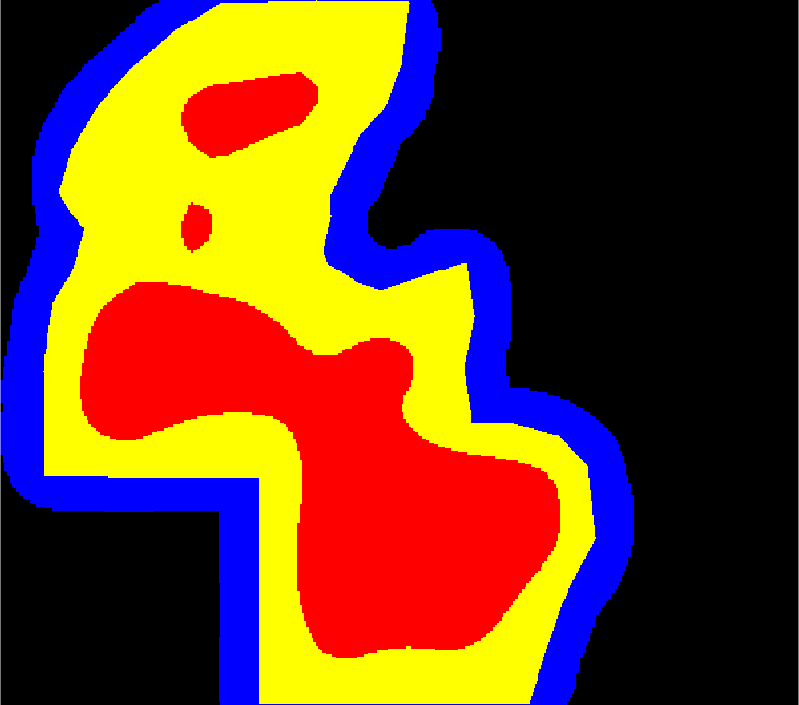}
		\label{fig:1}
	\end{subfigure}
	\vspace{-0.35cm}
	\\
	\begin{subfigure}{0.19\textwidth}
		\centering
		\includegraphics[width=\textwidth]{../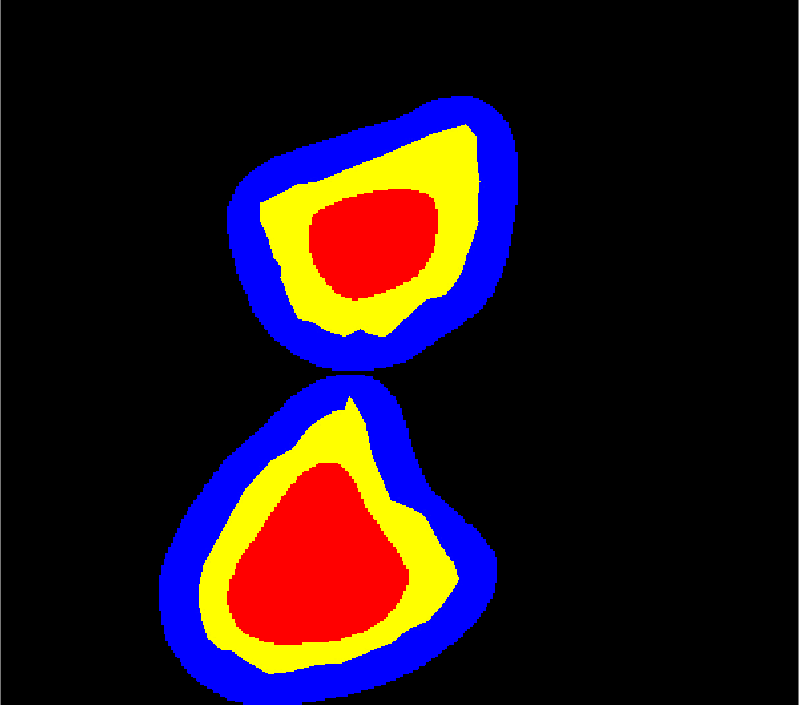}
		\label{fig:1}
	\end{subfigure}
	\begin{subfigure}{0.19\textwidth}
		\centering
		\includegraphics[width=\textwidth]{../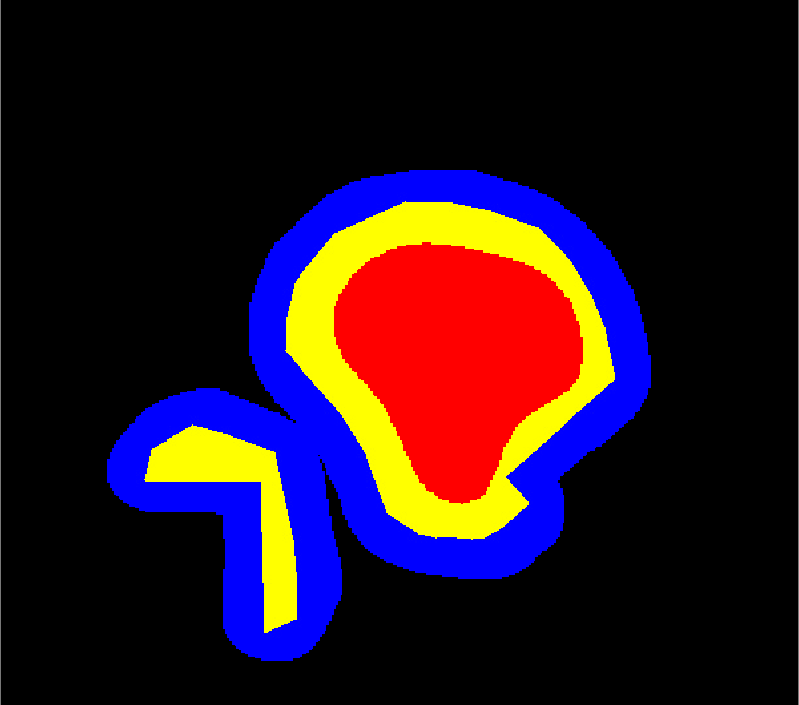}
		\label{fig:1}
	\end{subfigure}
	\begin{subfigure}{0.19\textwidth}
		\centering
		\includegraphics[width=\textwidth]{../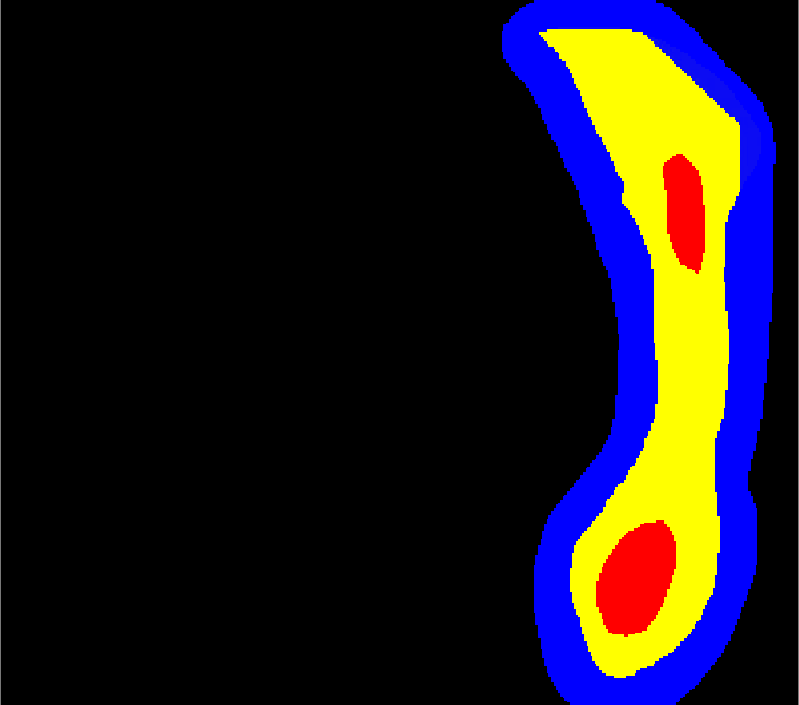}
		\label{fig:1}
	\end{subfigure}
	\begin{subfigure}{0.19\textwidth}
		\centering
		\includegraphics[width=\textwidth]{../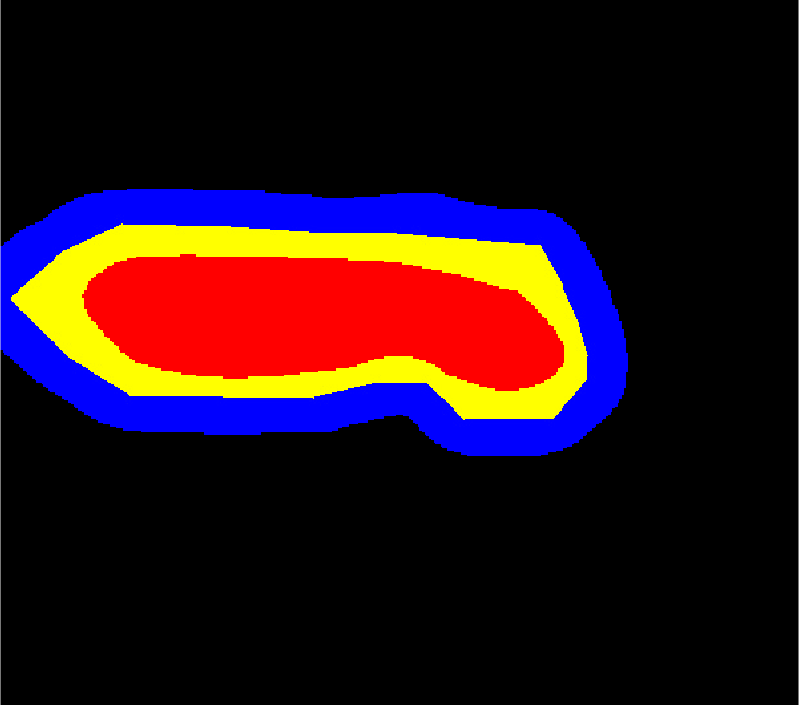}
		\label{fig:1}
	\end{subfigure}
	\begin{subfigure}{0.19\textwidth}
		\centering
		\includegraphics[width=\textwidth]{../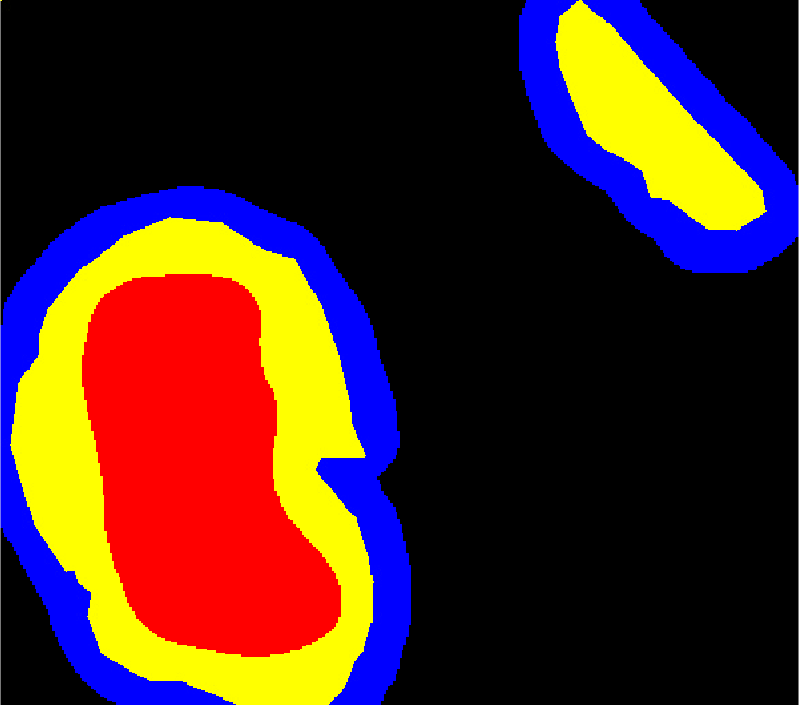}
		\label{fig:1}
	\end{subfigure}
	\label{fig:grid}
	\caption{Joint 90\% conformal confidence sets obtained using Corollary \ref{cor:weighting}, with $\alpha_1 = 0.02$ and $\alpha_2 = 0.08$, for the polpys images in Figure \ref{fig:res}.}\label{fig:joint}
\end{figure}
\newpage
\subsection{Marginal 80 \% Confidence regions}
\begin{figure}[h!]
	\begin{subfigure}{0.19\textwidth}
		\centering
		\includegraphics[width=\textwidth]{../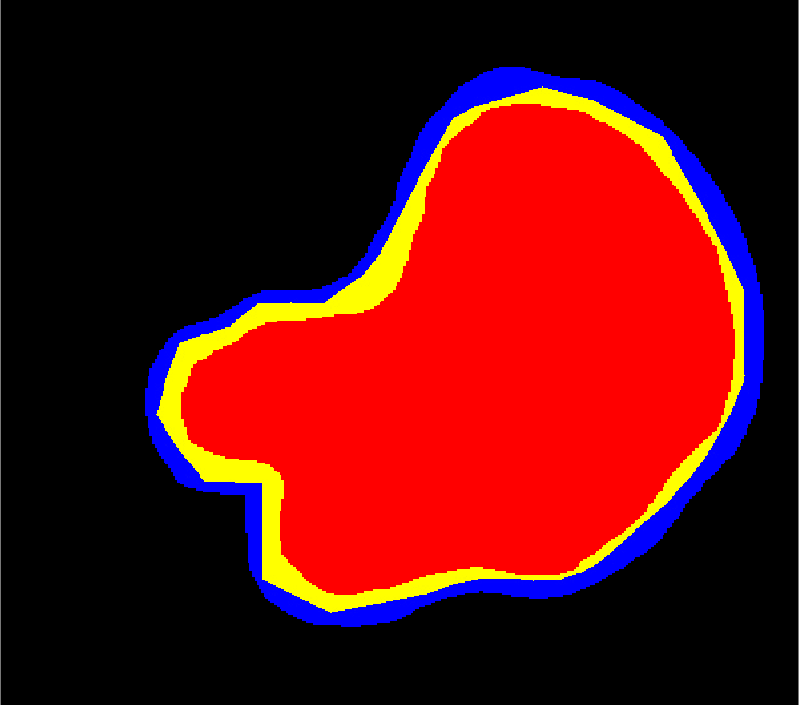}
		\label{fig:1}
	\end{subfigure}
	\begin{subfigure}{0.19\textwidth}
		\centering
		\includegraphics[width=\textwidth]{../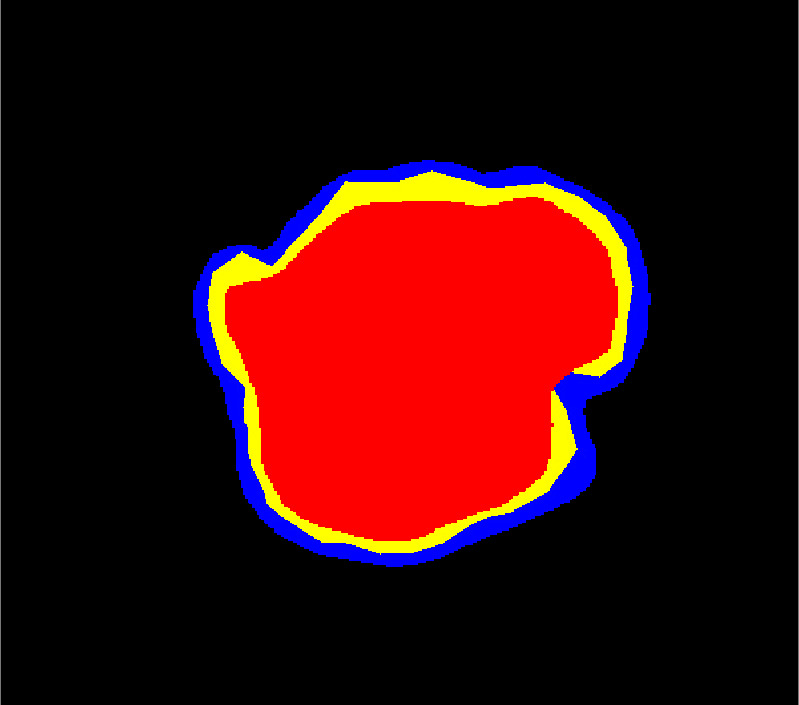}
		\label{fig:1}
	\end{subfigure}
	\begin{subfigure}{0.19\textwidth}
		\centering
		\includegraphics[width=\textwidth]{../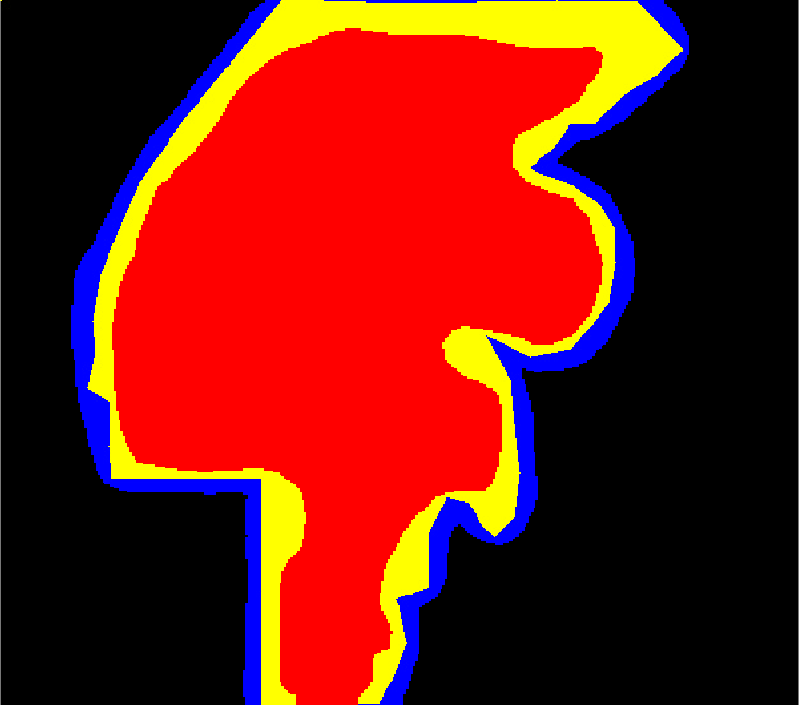}
		\label{fig:1}
	\end{subfigure}
	\begin{subfigure}{0.19\textwidth}
		\centering
		\includegraphics[width=\textwidth]{../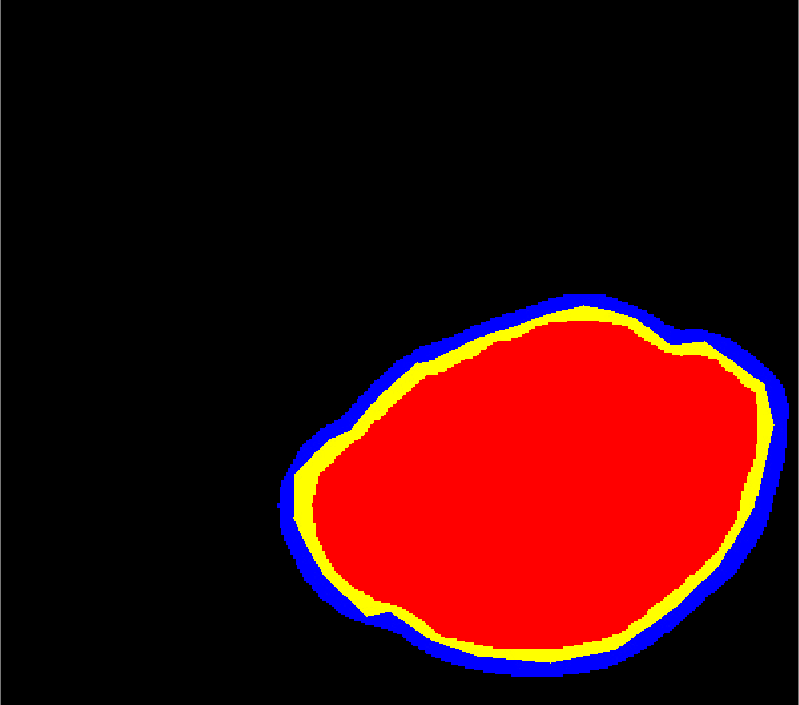}
		\label{fig:1}
	\end{subfigure}
	\begin{subfigure}{0.19\textwidth}
		\centering
		\includegraphics[width=\textwidth]{../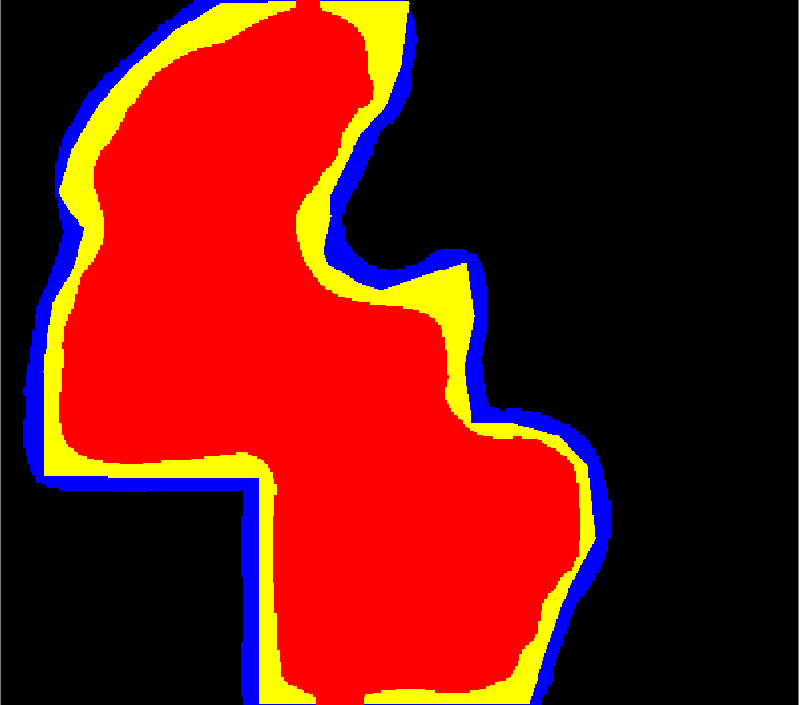}
		\label{fig:1}
	\end{subfigure}
	\vspace{-0.35cm}
	\\
	\begin{subfigure}{0.19\textwidth}
		\centering
		\includegraphics[width=\textwidth]{../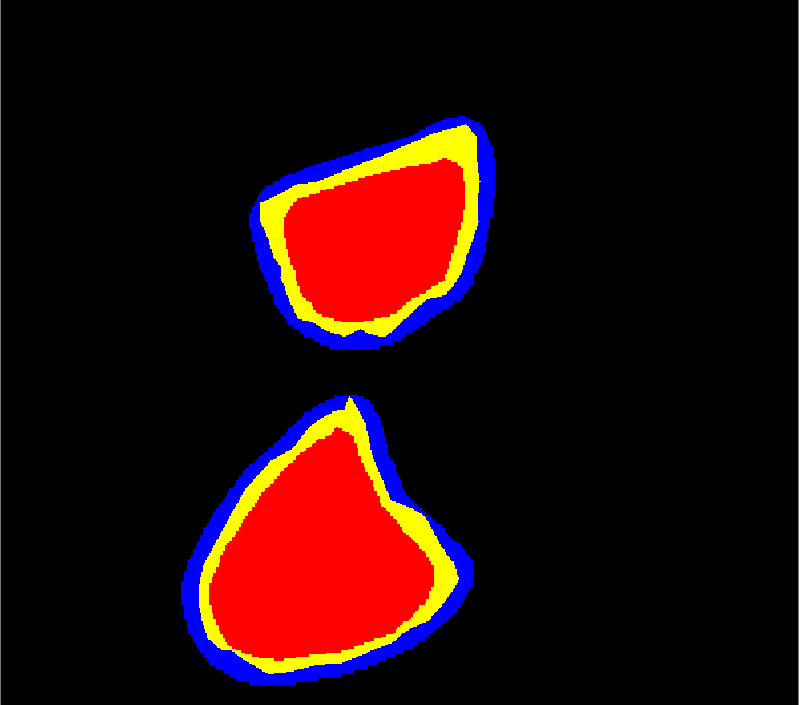}
		\label{fig:1}
	\end{subfigure}
	\begin{subfigure}{0.19\textwidth}
		\centering
		\includegraphics[width=\textwidth]{../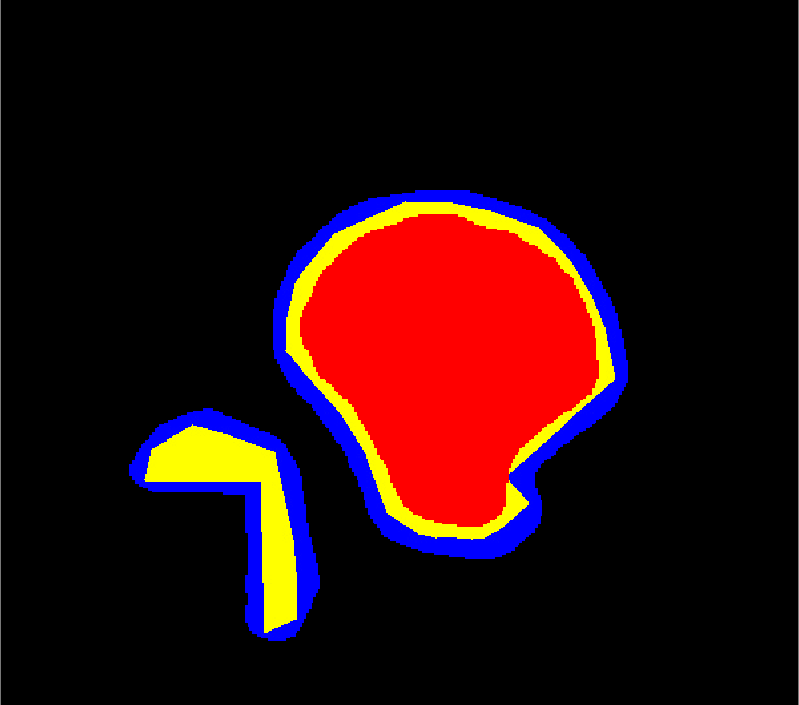}
		\label{fig:1}
	\end{subfigure}
	\begin{subfigure}{0.19\textwidth}
		\centering
		\includegraphics[width=\textwidth]{../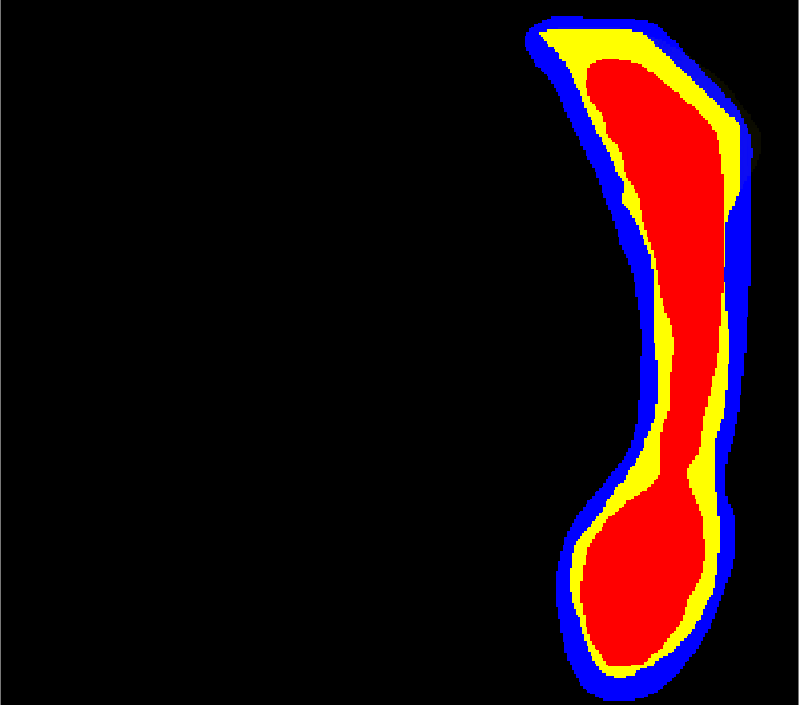}
		\label{fig:1}
	\end{subfigure}
	\begin{subfigure}{0.19\textwidth}
		\centering
		\includegraphics[width=\textwidth]{../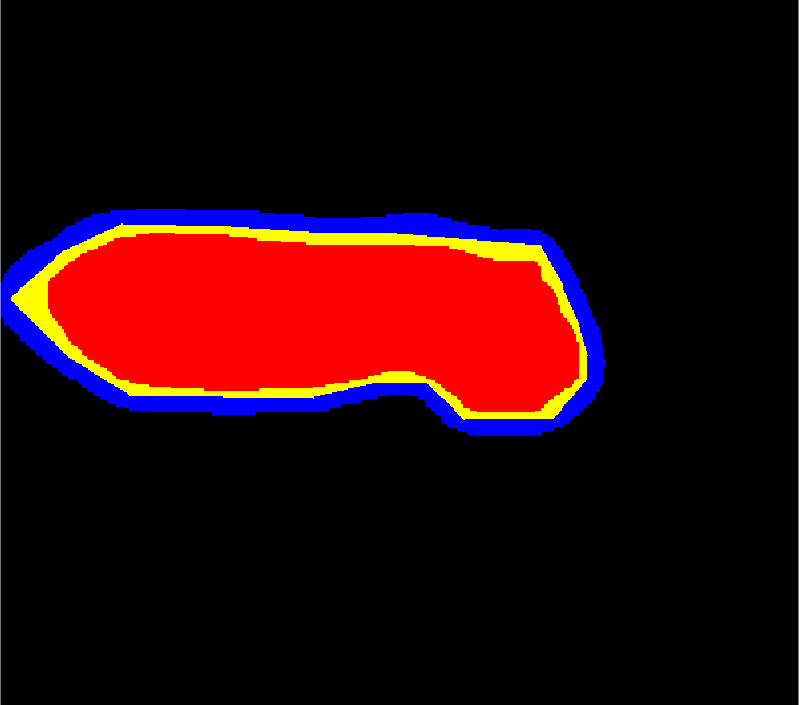}
		\label{fig:1}
	\end{subfigure}
	\begin{subfigure}{0.19\textwidth}
		\centering
		\includegraphics[width=\textwidth]{../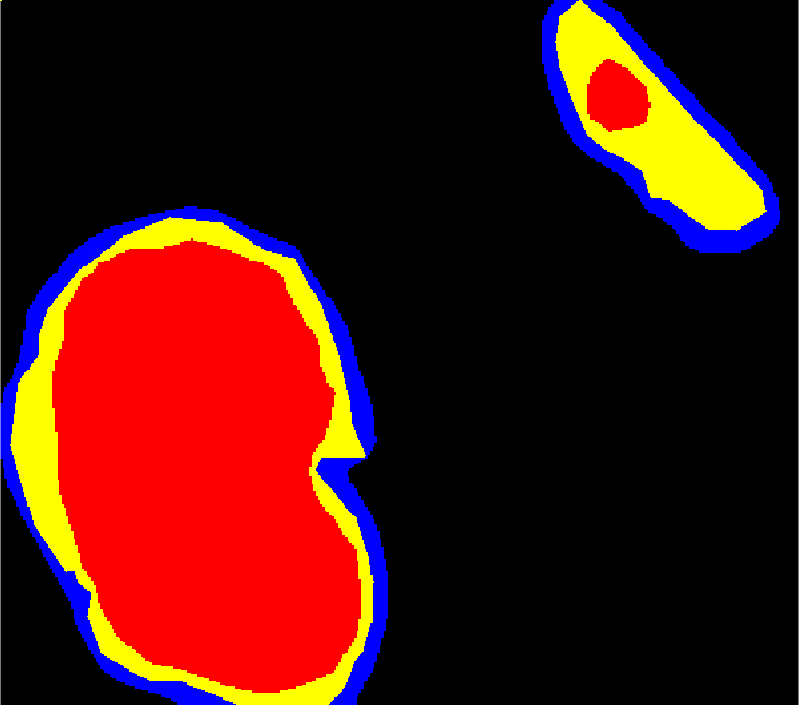}
		\label{fig:1}
	\end{subfigure}
	\label{fig:grid}
	\caption{Marginal 80\% conformal confidence sets obtained for the polpys images in Figure \ref{fig:res}.}\label{fig:joint2}
\end{figure}
\subsection{Marginal 95 \% Confidence regions}

\begin{figure}[h!]
	\begin{subfigure}{0.19\textwidth}
		\centering
		\includegraphics[width=\textwidth]{../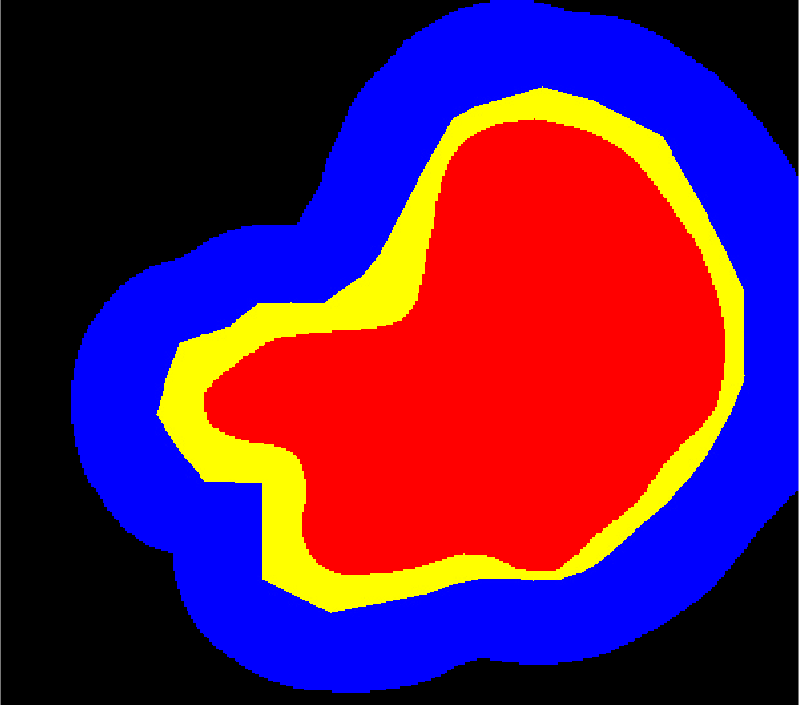}
		\label{fig:1}
	\end{subfigure}
	\begin{subfigure}{0.19\textwidth}
		\centering
		\includegraphics[width=\textwidth]{../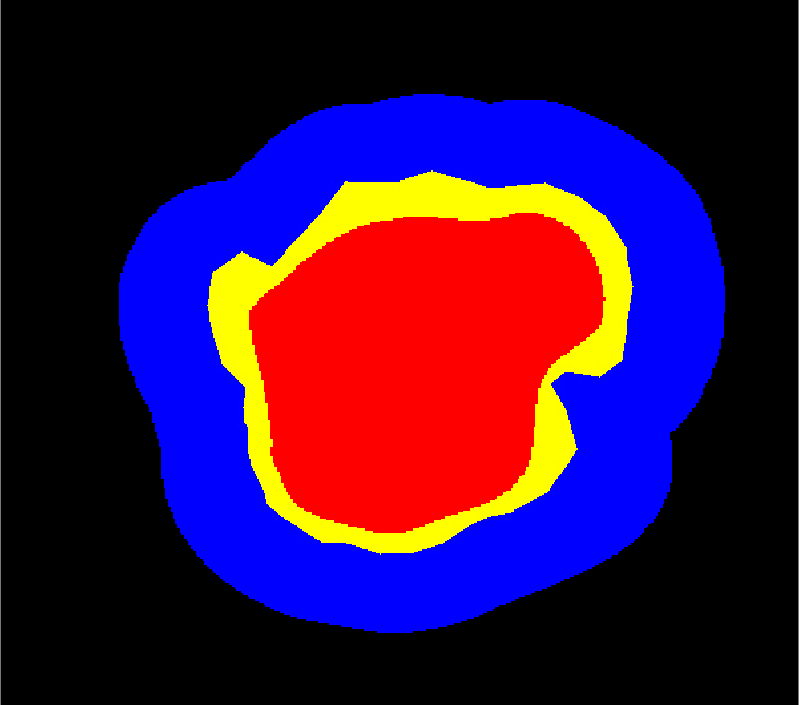}
		\label{fig:1}
	\end{subfigure}
	\begin{subfigure}{0.19\textwidth}
		\centering
		\includegraphics[width=\textwidth]{../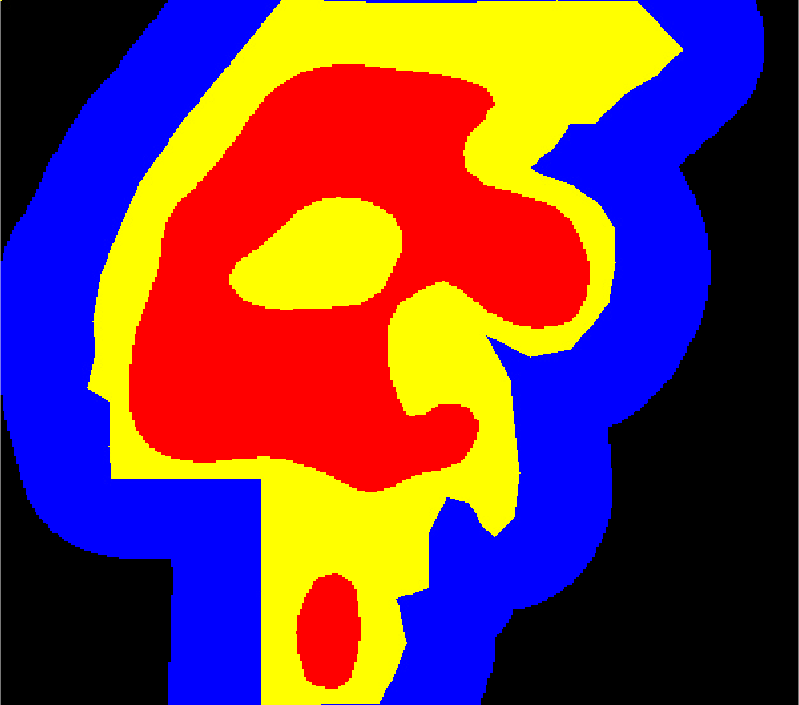}
		\label{fig:1}
	\end{subfigure}
	\begin{subfigure}{0.19\textwidth}
		\centering
		\includegraphics[width=\textwidth]{../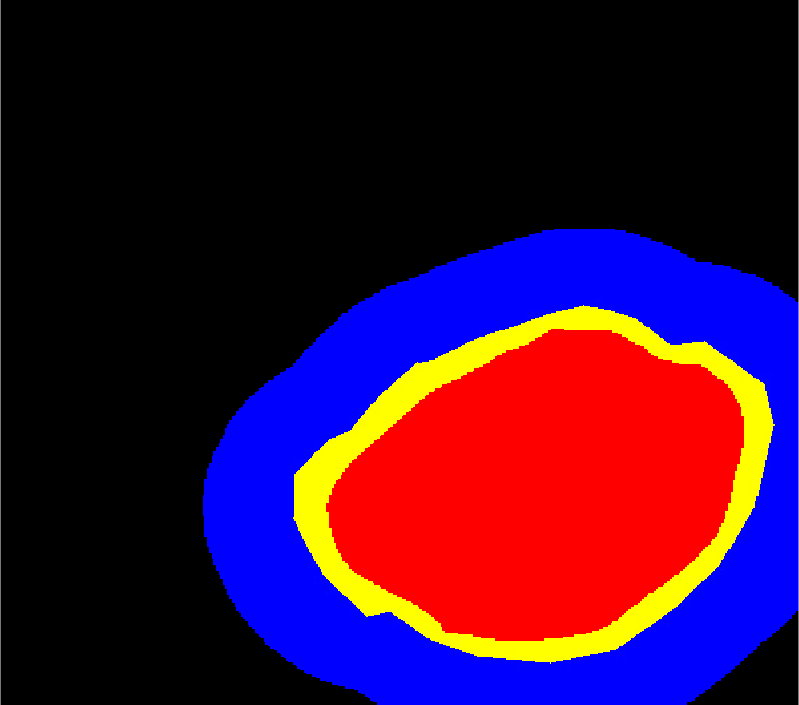}
		\label{fig:1}
	\end{subfigure}
	\begin{subfigure}{0.19\textwidth}
		\centering
		\includegraphics[width=\textwidth]{../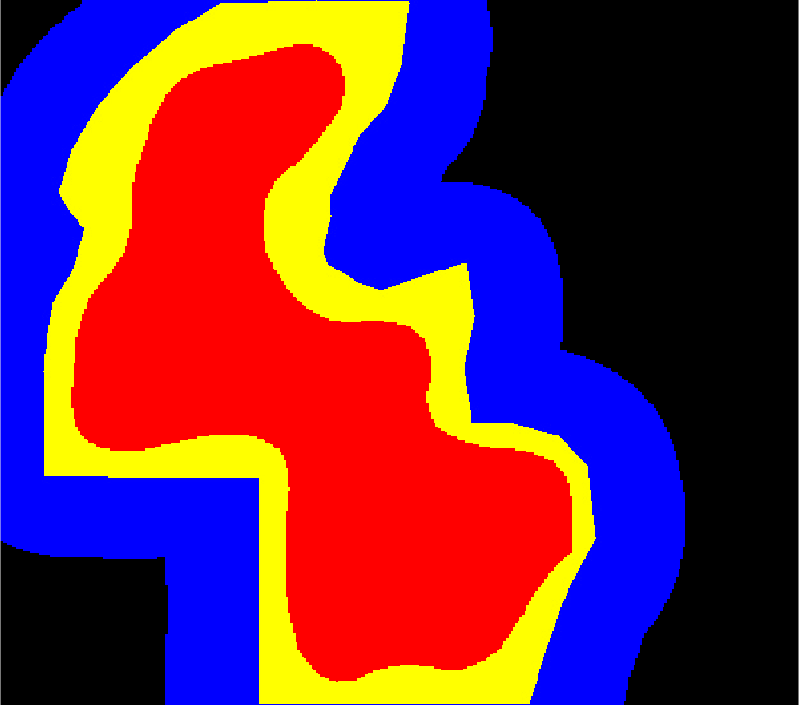}
		\label{fig:1}
	\end{subfigure}
	\vspace{-0.35cm}
	\\
	\begin{subfigure}{0.19\textwidth}
		\centering
		\includegraphics[width=\textwidth]{../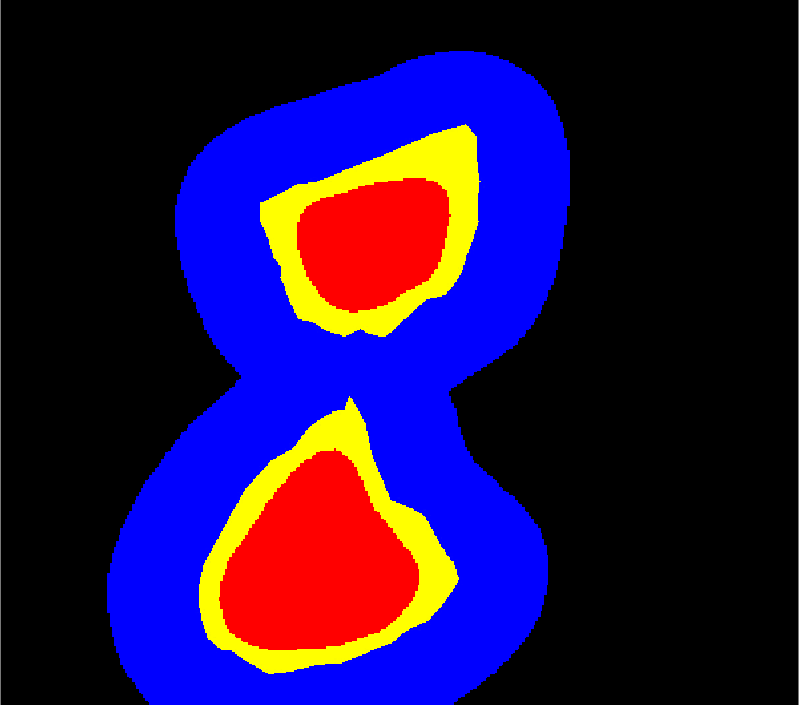}
		\label{fig:1}
	\end{subfigure}
	\begin{subfigure}{0.19\textwidth}
		\centering
		\includegraphics[width=\textwidth]{../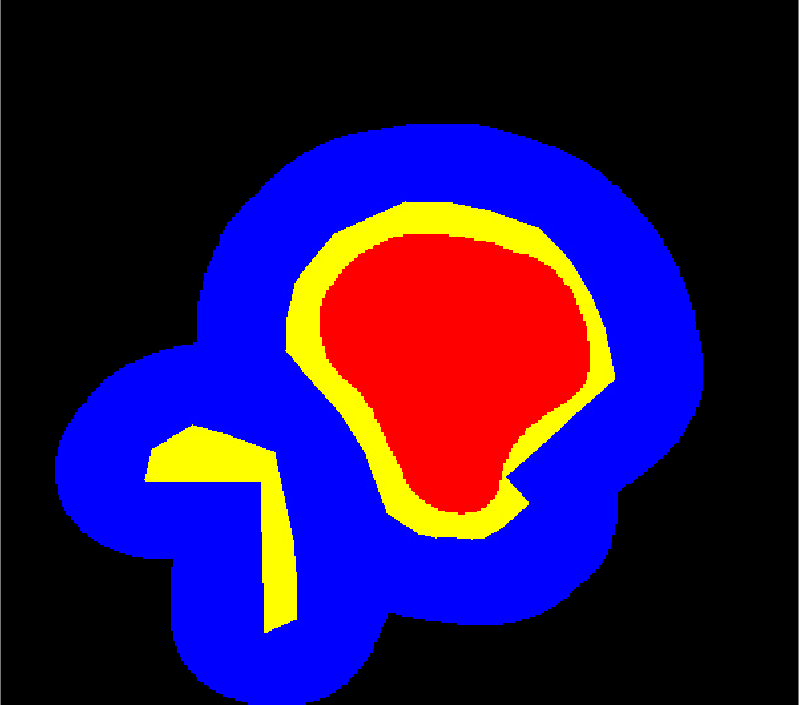}
		\label{fig:1}
	\end{subfigure}
	\begin{subfigure}{0.19\textwidth}
		\centering
		\includegraphics[width=\textwidth]{../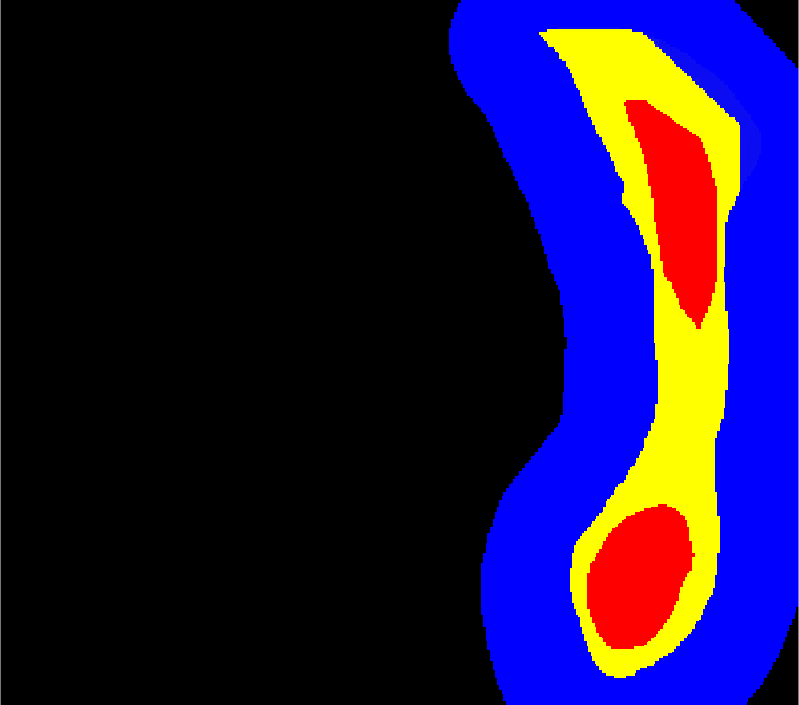}
		\label{fig:1}
	\end{subfigure}
	\begin{subfigure}{0.19\textwidth}
		\centering
		\includegraphics[width=\textwidth]{../figures/validation/joint_90/398.png}
		\label{fig:1}
	\end{subfigure}
	\begin{subfigure}{0.19\textwidth}
		\centering
		\includegraphics[width=\textwidth]{../figures/validation/joint_90/269.png}
		\label{fig:1}
	\end{subfigure}
	\label{fig:grid}
	\caption{Marginal 95\% conformal confidence sets obtained using for the polpys images in Figure \ref{fig:res}. These sets are also joint 90\% confidence sets with equally weighted $\alpha_1 = \alpha_2 = 0.05$. The influence of the weighting scheme can therefore examined by comparing to Figure \ref{fig:joint}.}\label{fig:joint3}
\end{figure}
\newpage
\subsection{Histograms of the coverage}
\begin{figure}[h!]
	\begin{center}
		\includegraphics[width=0.32\textwidth]{../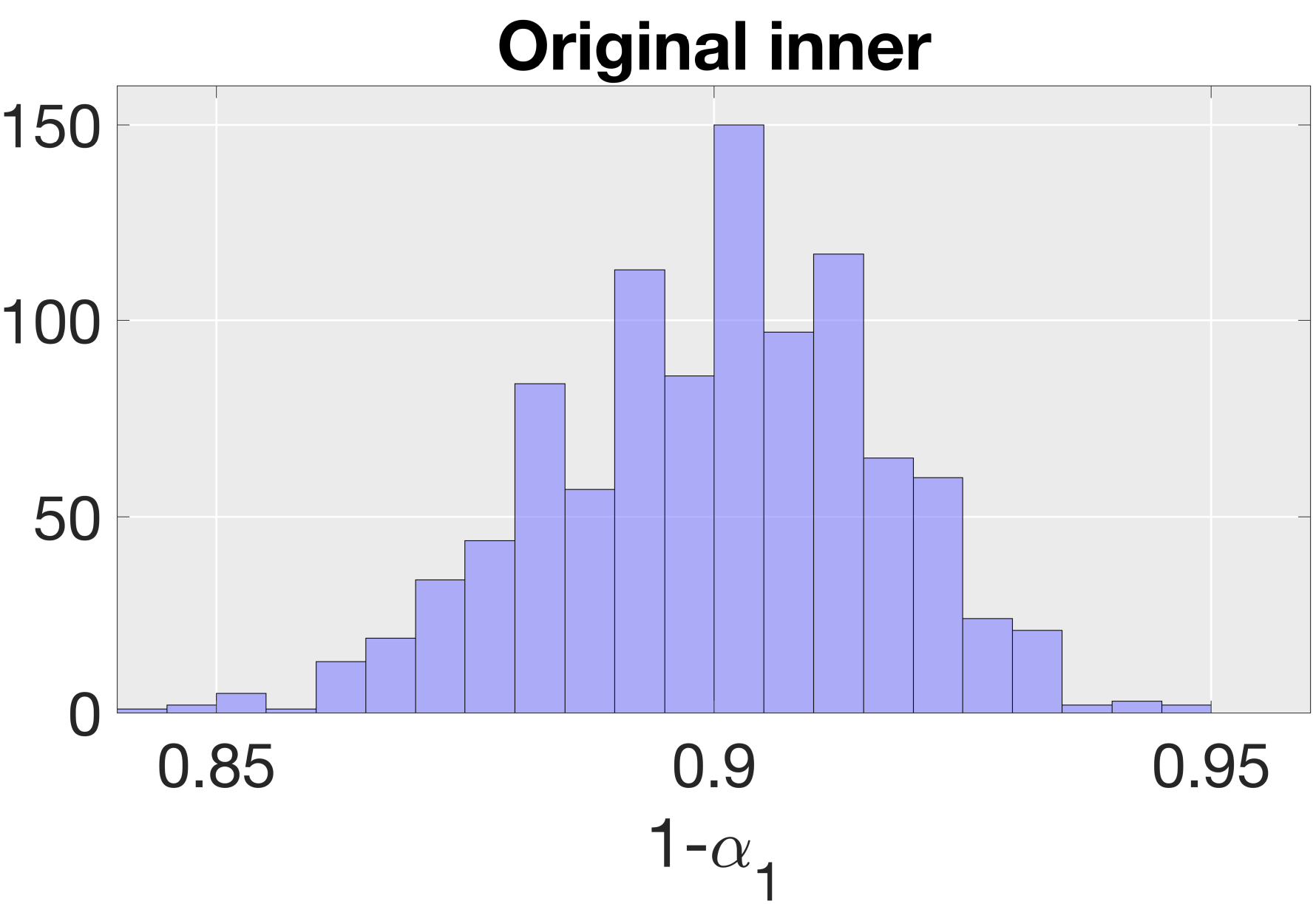}
		\includegraphics[width=0.32\textwidth]{../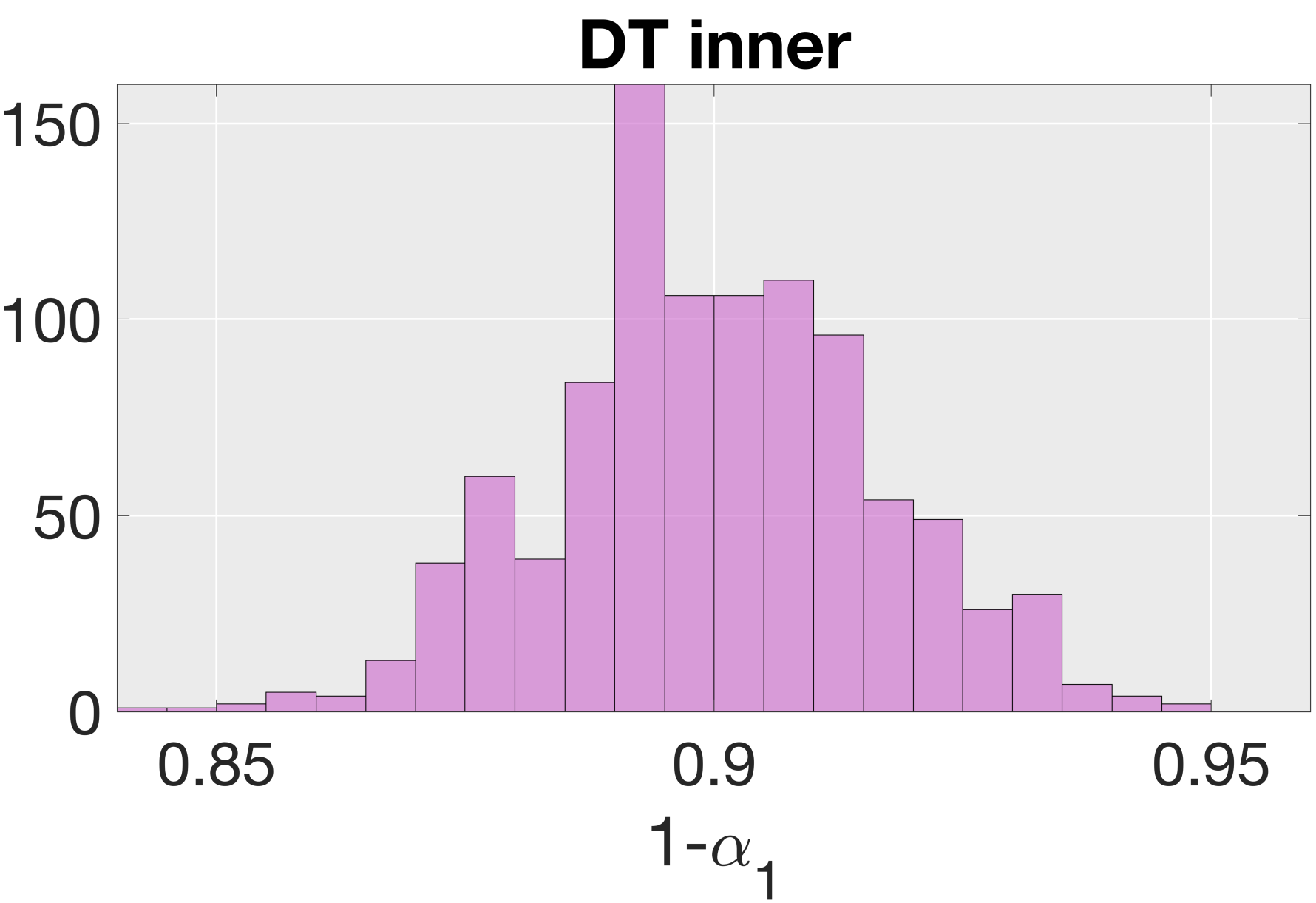}
		\includegraphics[width=0.32\textwidth]{../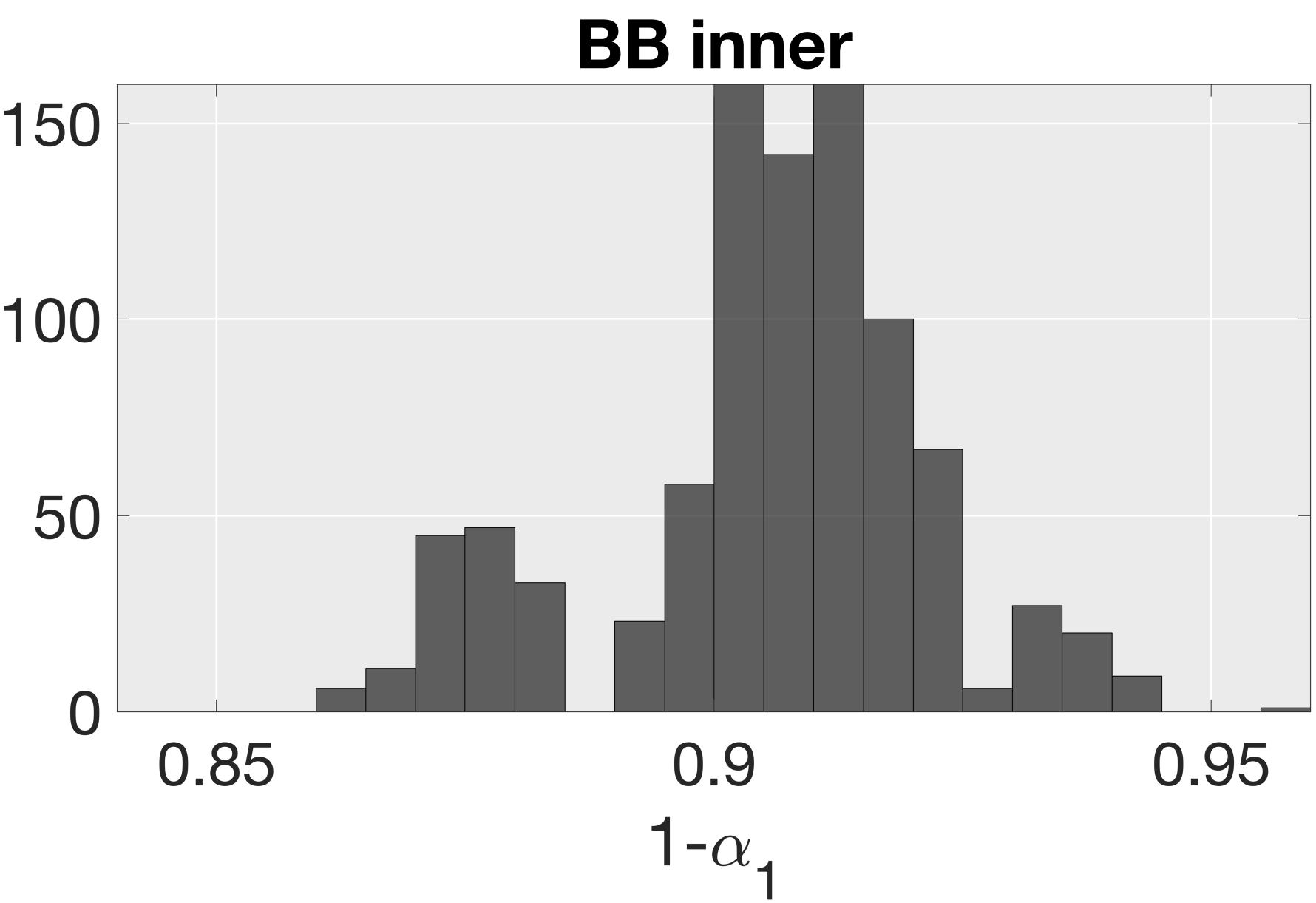}\\
		\includegraphics[width=0.32\textwidth]{../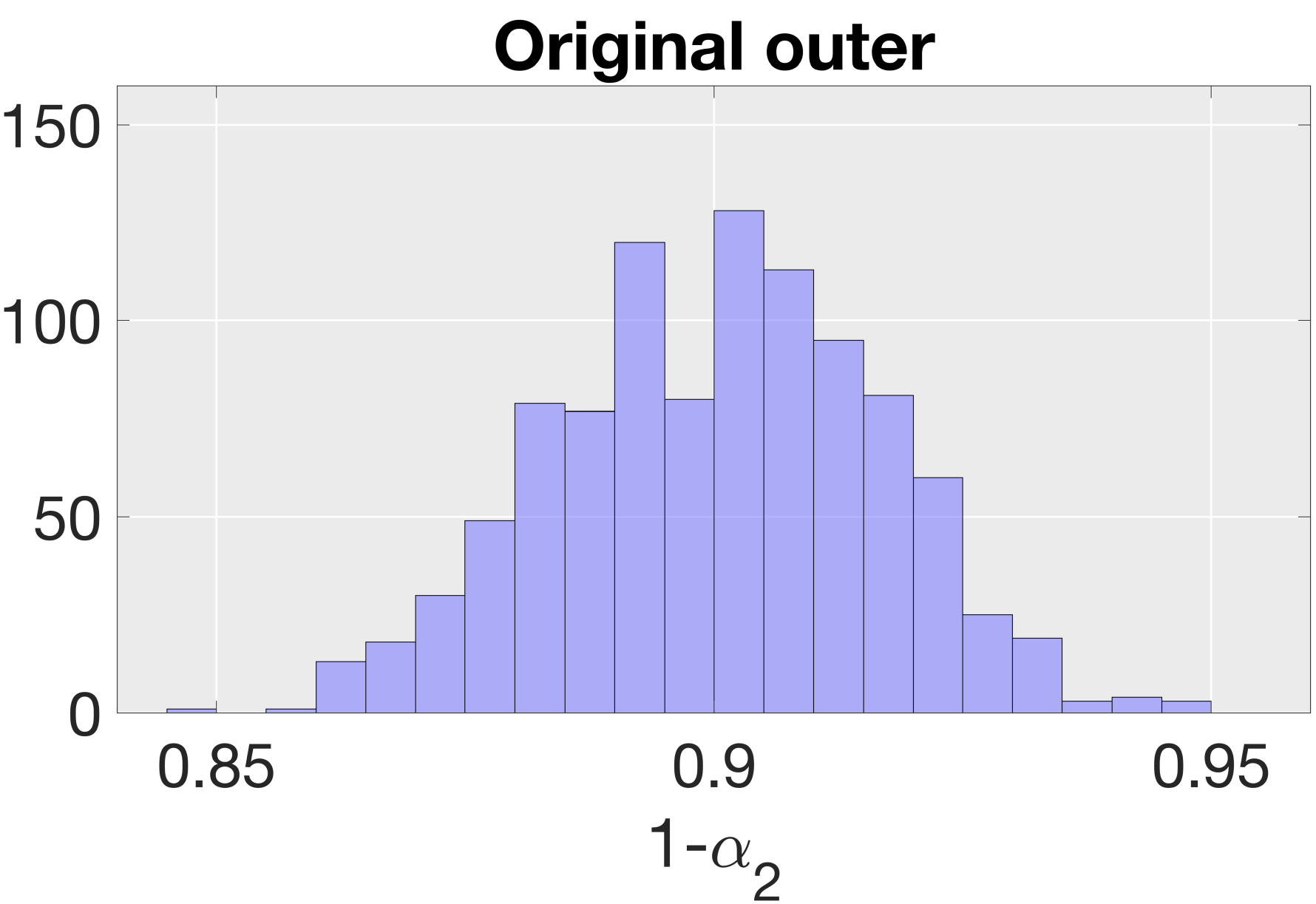}
		\includegraphics[width=0.32\textwidth]{../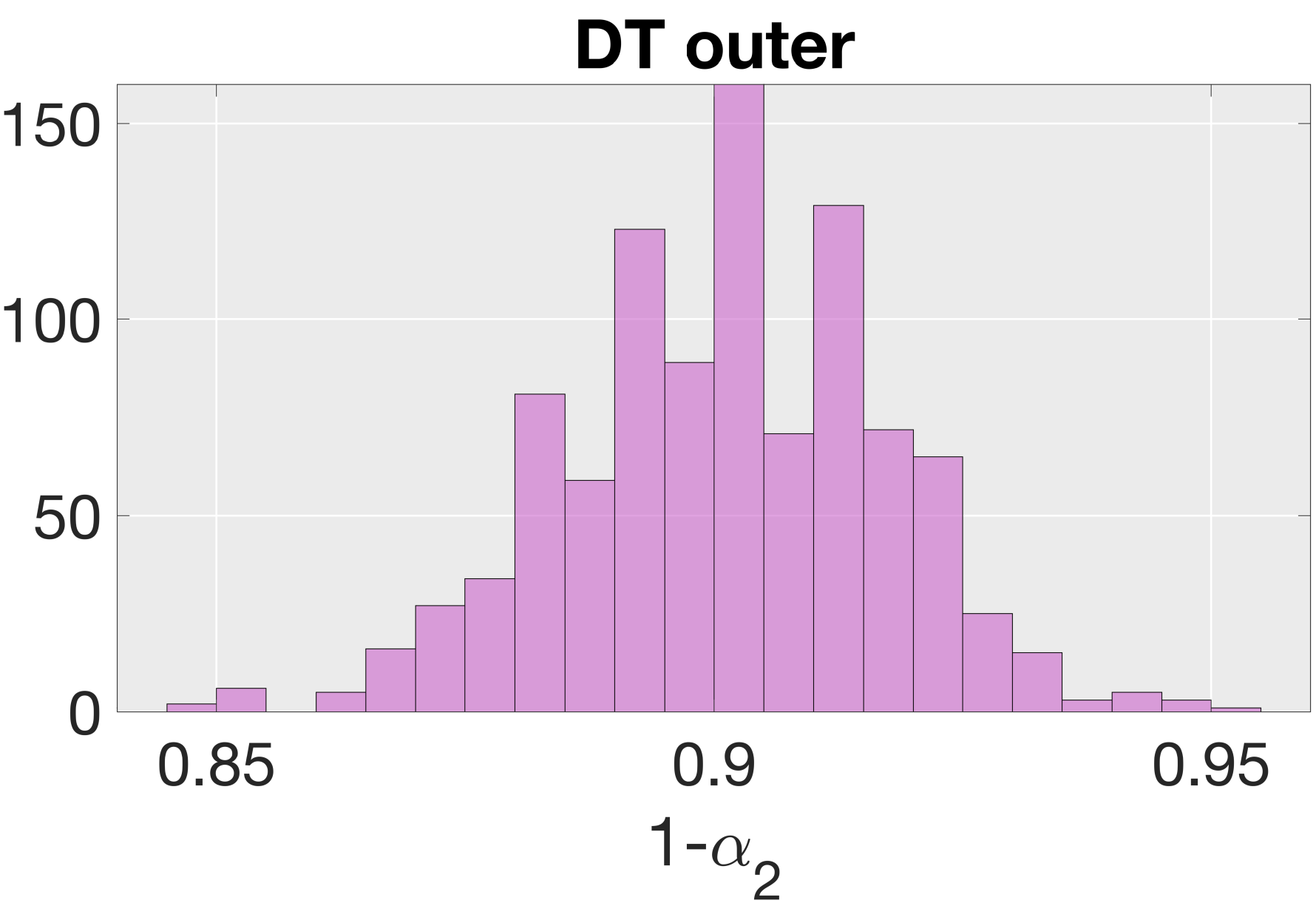}
		\includegraphics[width=0.32\textwidth]{../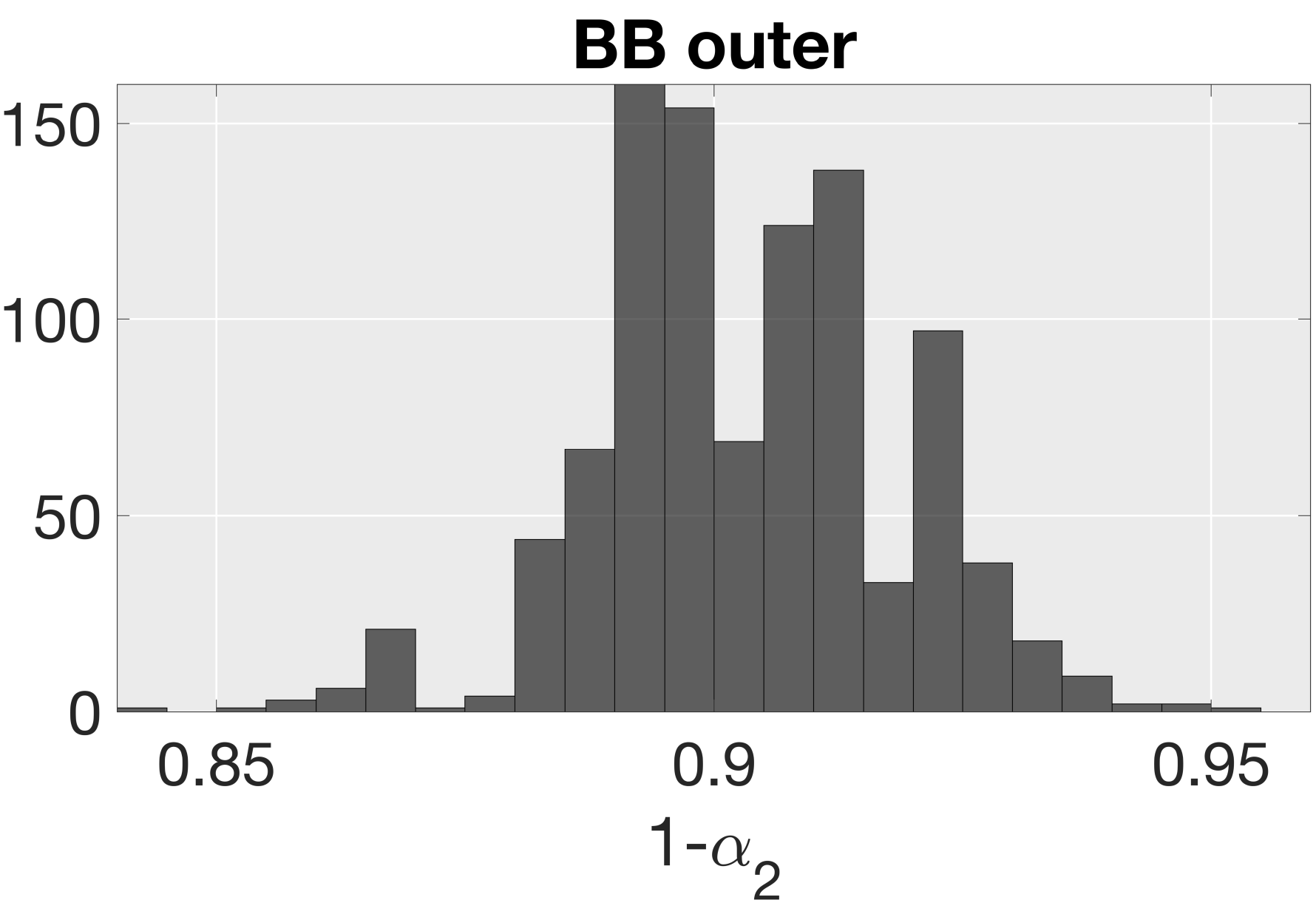}
	\end{center}
	\caption{Histograms of the coverage rates obtained across each of the validation resamples for 90\% inner and outer marginal confidence sets. We plot the results for the original scores, distance transformed scores (DT) and boundary box scores (BB) from left to right. The bounding box scores are discontinuous which is the cause of the discreteness of the rightmost histograms.}\label{fig:valhist}
\end{figure}

\end{document}